\documentclass{article}
\usepackage[preprint]{neurips_2026}
\usepackage{natbib}

\usepackage[utf8]{inputenc} 
\usepackage[T1]{fontenc}    
\usepackage{hyperref}       
\usepackage{url}            
\usepackage{booktabs}       
\usepackage{amsfonts}       
\usepackage{amsmath}
\usepackage{amssymb}
\usepackage{amsthm}
\usepackage{mathrsfs}
\usepackage{nicefrac}       
\usepackage{microtype}      
\usepackage{xcolor}         
\usepackage{bm}
\usepackage{bbm}
\usepackage{appendix}

\newtheorem{theorem}{Theorem}
\newtheorem{lemma}[theorem]{Lemma} 
\newtheorem{proposition}[theorem]{Proposition} 
\newtheorem{remark}[theorem]{Remark}
\newtheorem{corollary}[theorem]{Corollary}
\newtheorem{definition}[theorem]{Definition}

\newtheorem{assumption}{Assumption}
\newtheorem{criterion}{Criterion}

\usepackage{algorithm}
\usepackage{algpseudocode}

\newcommand{\R}{\mathbb{R}}

\newcommand{\E}{\mathbb{E}}

\title{Uniform Scaling Limits in AdamW-Trained Transformers}

\author{
  William ~Gibson\\
  Mathematical Institute\\
  University of Oxford\\
  Oxford, OX2 6GG, UK\\
  \texttt{william.gibson@maths.ox.ac.uk}
   \and
  Christoph ~Reisinger \\ Mathematical Institute \\
  University of Oxford \\ Oxford, OX2 6GG, UK\\
  \texttt{christoph.reisinger@maths.ox.ac.uk}
}

\begin{document}

\maketitle

\begin{abstract}
  We study the large-depth limit of transformers trained with AdamW, by modelling the hidden-state dynamics as an interacting particle system (IPS) coupled through the attention mechanism.
  Under appropriate scaling of the attention heads, we prove that the joint dynamics of the hidden states and backpropagated variables converge in $L^2$, uniformly over the initial condition, to the solution of a forward--backward system of ODEs at rate $\mathcal O(L^{-1}+L^{-1/3}H^{-1/2})$. Here, $L$ and $H$ denote the depth and number of heads of the transformer, respectively. The limiting system of ODEs can be identified with a McKean--Vlasov ODE (MVODE) when the attention heads do not incorporate causal masking. By using the flow maps associated with this MVODE and applying concentration of measure techniques, we obtain bounds on the difference between the discrete and continuous models that are uniform over compact sets of initial conditions. As this is achieved without resorting to a covering argument, the constants in our bounds are independent of the number of tokens. Furthermore, under a suitable adaptation to AdamW, the bounds become independent of the token embedding dimension.
\end{abstract}
\section{Introduction}\label{sec:Intro}
The empirical scaling laws for Large Language Models \cite{kaplan2020scalinglawsneurallanguage} drive an industry's pursuit to increase the number of parameters and training data samples. Therefore, a theoretical understanding of how model design choices affect generalizability, robustness, and related properties is crucial for guiding best practices as models continue to scale. Furthermore, infinite depth and width limits of neural networks offer a more tractable setting for analysis, as demonstrated by the growing literature on the convergence of gradient descent to global optimality \cite{barboni2025understandingtraininginfinitelydeep,chizat2018globalconvergencegradientdescent,isobe2024convergenceresultcontinuousmodel}. Transferring results from these limiting models to those used in practice requires convergence bounds to hold uniformly over the model's input. Accordingly, this paper studies the uniform convergence of the unmasked transformer \cite{NIPS2017_3f5ee243} as the depth $L$, the number of heads $H$ tend to infinity. The main result is given in Theorem \ref{theorem:WTS}, which demonstrates that the uniform convergence is independent of the number of tokens. This is surprising, as one expects uniform convergence bounds to scale with the effective dimension of the class over which the supremum is taken. Bounds of this form arise from covering arguments, such as Dudley's Entropy Integral Theorem \cite{Wainwright_2019}, and would therefore introduce dependency on the number of tokens.

\subsection{Setup}
 The transformer neural network \cite{NIPS2017_3f5ee243} is a ResNet formed from alternating multi-head attention \eqref{eqn:MHA} and multi-layer perceptron (MLP) layers. Each attention head consists of Query, Key, Value and Output matrices $Q_r^h,K_r^h,V_r^h,O_r^h\in \mathbb R^{k \times d}$, respectively, where $d$ is the token embedding dimension, $k$ is the head dimension, and $r,h$ indicate the layer and head number. Given a matrix of token embeddings $\mathcal X\in \R^{N\times d}$ whose $i^{th}$ row is denoted $x^i\in \mathbb R^d$, the unmasked multi-head attention layer is defined by
\begin{equation}\label{eqn:MHA}
    \mathrm{MHA}_{r}(x,\mathcal X) :=   \sum_{h=1}^H (O^h_r)^T\frac{\sum_{i=1}^N \exp(\beta\langle Q_r^hx, K_r^hx^{i}\rangle )\, V_r^hx^i}{\sum_{i=1}^N \exp(\beta\langle Q_r^hx,K_r^hx^{i}\rangle)},
\end{equation}
where $\beta > 0$ is the inverse temperature introduced in \cite{geshkovski2025mathematicalperspectivetransformers}, which controls the concentration of the attention matrix to individual tokens.
As seen in \cite{geshkovski2024emergenceclustersselfattentiondynamics,furuya2024transformersuniversalincontextlearners,geshkovski2025mathematicalperspectivetransformers,rigollet2025meanfielddynamicstransformers,sander2022sinkformerstransformersdoublystochastic}, the attention mechanism is well-defined on compactly supported measures by replacing the summation over the tokens $x^i$ in equation \eqref{eqn:MHA} with an integral against the empirical measure of tokens. We denote the corresponding function on measure $\mu \in \mathcal P_c(\R^d)$ by
\begin{equation}\label{eqn:gamma}
\gamma : \R^d\times \mathcal P_c(\R^d)\to \R^d, \qquad
    \gamma(z,\mu) = \frac{\int \exp( \left\langle z,  y \right\rangle)\,y\,d\mu(y)}{\int \exp( \left\langle z, y \right\rangle)\,d\mu(y)},
\end{equation}
where the token embedding $x^i$ and parameters $\beta K^TQ$ have been absorbed into one variable \\$z=\beta K^TQx$ so that Lipschitz continuity with respect to parameters and token embeddings can be studied simultaneously. We write $\theta\in \R^{k\times 4d}$ for the parameters in a layer and head, split as $\theta = (\theta_Q,\theta_K,\theta_V,\theta_O)$ with each $\theta_\cdot\in\R^{k\times d}$.  Multi-head attention averages $\gamma$ over the heads. Therefore, with parameters $\nu\in \mathcal P_c(\R^{k\times 4d})$, it is given by
\begin{equation}\label{eqn:Gamma}
\Gamma:\R^d\times \mathcal P_c(\R^d)\times \mathcal P_c(\R^{k\times 4d}) \mapsto \R^d, \qquad
    \Gamma(x,\mu,\nu) =  \int \theta_O^T\theta_V\,\gamma\left(\beta \theta_K^T\theta_Q\,x,\mu\right)d\nu(\theta).
\end{equation}
The embedding of the $i^{th}$ token propagates from layer $r$ to $r+1$ in the attention-only transformer according to
\begin{equation}\label{eqn:IPS}
    x^i_{r+1} = x_{r}^i+\frac{1}{L}\Gamma\left(x^i_r,m^N_r,\underline{\nu}^H_r\right),
\end{equation}
where $m^N_r, \underline{\nu}^H_r$ are the empirical measures
\begin{equation}\label{eqn:nu_H}
    m^N_r := \frac{1}{N}\sum_{i=1}^N\delta_{x^i_r}, \qquad \underline{\nu}^H_r:= \frac{1}{H}\sum_{h=1}^H \delta_{(Q^h_r,K^h_r,V^h_r,O^h_r)}.
    \end{equation}
 For simplicity, we restrict our analysis to the attention-only transformer without any normalisation layers, having depth-scaled the residual to ensure that the hidden states and backpropagated variables do not explode. In fact, incorporating a post-normalisation layer \cite{xiong2020layernormalizationtransformerarchitecture} improves the bounds given in Theorem \ref{theorem:WTS}. We use Gr\"onwall's Lemma to obtain exponential bounds on the norm of the hidden states. This exponential growth does not occur when normalisation layers are incorporated. The transformer given by \eqref{eqn:IPS} uses attention heads without causal masking. Therefore, our work is relevant to Vision Transformers \cite{dosovitskiy2021imageworth16x16words}, Diffusion Transformers \cite{peebles2023scalable}, and BERT-style language models \cite{devlin2019bert}.

Note that the residual in the transformer given by \eqref{eqn:IPS} has been explicitly scaled by $(LH)^{-1}$, where the $H^{-1}$ scaling is implicit to the integration against the empirical measure $\underline{\nu}^L$. Alternatively, transformers with $L^{-1/2}$-scaled residual have been studied in \cite{agazzi2026stochastic,fedorov2026clustering,koubbi2026homogenized} and results in the hidden states converging to the solution of a system of It\^o Stochastic Differential Equations (SDEs). However, these papers focus on the token clustering phenomena and do not consider the training dynamics. A complication that would arise when adapting the methodology in the present paper to the SDE scaling regime under training is that in the limit, the adjoint variable (see Section \ref{sec:adjoint}) is expected to satisfy a backward SDE, whose solution consists of a pair of adapted processes. As in the classical setting of forward-backward stochastic differential equations (see \cite{Ma2007}), the second process is required to ensure that the adjoint process is adapted to the same filtration as the forward process. A priori, it is unclear whether the adjoint variable of the transformer is adapted to the filtration generated by the parameter initialisation. 
Extending our analysis to this regime is left for future research.
\subsection{Related Work}
The large depth limit for ResNets whose residual has been scaled by $L^{-1}$ towards neural ODEs has previously been studied in \cite{avelin2020neuralodesdeeplimit,  marion2024implicit}. These works consider parameters that are either shared across layers \cite{avelin2020neuralodesdeeplimit} or dependently initialised across layers \cite{marion2024implicit}.
This does not reflect common practice where parameters are initialised $i.i.d.$ from a (sub-)Gaussian distribution. However, there is no meaningful continuous-time analogue to $i.i.d.$ parameter initialisation as we will elaborate in Section \ref{cont_time}. 
Instead, we consider the simultaneous infinite-depth and infinite-width limit, which is well-posed due to the continuity of the distribution of parameters across layers. This simultaneous large depth and width limit was established in \cite{ding2}, where the limiting hidden state dynamics are governed by a neural mean ODE with a velocity field defined by the expectation over the parameter distribution. They obtained an error bound between the hidden states of the mean ODE and the discrete model that was of order $\mathcal O(L^{-1}+ H^{-1/2})$. However, they still required parameter dependence across the layers at initialisation. This infinite depth-and-width limit was extended to the transformer and to include $i.i.d$ parameter initialisation by \cite{gao2024global}, yielding a convergence rate $\mathcal O(L^{-1}+\sqrt{\log(L+1)/H})$.
In contrast, \cite{chizat2025hiddenwidthdeepresnets}, and subsequently \cite{chaintron2026resnets}, views the ResNet with $i.i.d.$ parameter initialisation as a stochastic approximation, in the sense of \cite{doi:10.1137/S0363012993253534}, to the mean ODE. In this regime, the authors of \cite{chizat2025hiddenwidthdeepresnets} demonstrated that large-depth and width convergence occurs with high probability at an improved rate $\mathcal O(L^{-1}+(LH)^{-1/2})$. Thus, ResNets with $i.i.d.$ initialised parameters converge to the corresponding mean ODE without the requirement for the width to tend to infinity.

The aforementioned works consider ResNets trained using gradient descent or gradient flows. However, large neural networks, especially transformers \cite{zhao2025deconstructingmakesgoodoptimizer}, fail to train using SGD without careful hyperparameter tuning. Accordingly, the AdamW optimiser \cite{loshchilov2018decoupled} dominates the training of state-of-the-art Large Language Models, \cite{brown2020languagemodelsfewshotlearners,grattafiori2024llama3herdmodels,deepseekai2025deepseekv3technicalreport}. Also, a common assumption in the mean-field Neural ODE literature, used in \cite{ding2021globalconvergencegradientdescent,gao2024global,isobe2024convergenceresultcontinuousmodel} for example, is that the gradient of the residual with respect to the parameters grows (sub-)linearly with the parameters.  This is similar to the quadratic parameter growth assumption of \cite{barboni2025understandingtraininginfinitelydeep}. However, in the softmax self-attention proposed by \cite{NIPS2017_3f5ee243}, this gradient grows cubically, due to the products between the pairs of matrices $Q,K$ and $O,V$. This cubic growth dominates $\ell^2$ regularisation, causing theoretical guarantees for the Lipschitz constants to grow exponentially with training. Hence, a naive extension of \cite{chizat2018globalconvergencegradientdescent}, for ResNets trained with SGD, would produce convergence bounds that grow triply exponentially in the number of training steps. Instead, under weak assumptions, we show in Lemma \ref{lemma:GlobalWellDefined} that the decoupled weight-decay of AdamW forces the support of the parameter measures $\underline{\nu}_r^H$ to lie in a compact set that is common to all training steps. This provides us with Lipschitz constants that are global in training, see Remark \ref{remark:GlobalLip}.
\subsection{Our Contributions}
\begin{itemize}
    \item The main result is given in Theorem \ref{theorem:WTS}, where we study the unmasked attention-only transformer given by \eqref{eqn:IPS} with parameters initialised $i.i.d.$ from a distribution with compact support, eg. Xavier Uniform \cite{pmlr-v9-glorot10a}. We demonstrate that the transformer converges in $L^2$, uniformly over the initial conditions, to an interacting particle system of mean ODEs at rate $\mathcal O(L^{-1}+L^{-1/3}H^{-1/2})$. This extends the results of \cite{chizat2025hiddenwidthdeepresnets,chaintron2026resnets}\footnote{In \cite{chaintron2026resnets}, the dependence on the cardinality of the training data set $n$ in Theorem 2.5 is stated in Remark 4.}, who demonstrate uniform convergence for the classical ResNet over a finite training data set of cardinality $n$, with bounds that grow with $\sqrt{\log(n)}$. In contrast, we consider training data sampled $i.i.d.$ from an arbitrary distribution $\Xi$ supported on $(\bar B(R_0))^N$ and establish uniform convergence over $(\bar B(R_0))^N$ with bounds that are independent of the number of tokens $N$. Accordingly, our results hold for initial conditions that were not included in the training data.
    \item We extend the large-depth convergence results of \cite{avelin2020neuralodesdeeplimit,ding2021globalconvergencegradientdescent,gao2024global,marion2024implicit} to transformers trained using AdamW \cite{loshchilov2018decoupled}. This is crucial for the transformer as the decoupled weight-decay of AdamW can be used to show local Lipchitz continuity bounds for $\gamma$ and its derivatives that are global in training step, see Lemma \ref{lemma:GlobalWellDefined} and Remark \ref{remark:GlobalLip}.
    \item Using the Blockwise AdamW optimiser, introduced in \cite{xie2024adam} and outlined in Section \ref{sec:AdamW}, our bounds become independent of the token embedding dimension $d$ as well.
    \item  We explicitly compute Lions' derivative on the measure component of $\gamma$, given in Lemma \ref{lemma:Lderivative}. As a result, we find that $\gamma$ is locally Lipschitz continuous with respect to the $p$-Wasserstein distance (see Appendix \ref{App:A}) for every $p\in[1,\infty]$, where the Lipschitz constant is non-exponential when $p=\infty$. Furthermore, we provide explicit bounds for the local Lipschitz continuity of its derivatives $\nabla_x\Gamma$ and $\partial_\mu \Gamma$ as well, given in Lemma \ref{lemma:Lipderivs}. 
\end{itemize}
We refer the reader to Appendix \ref{App:A1} for notation and conventions.

\section{Mean-Field Attention and its Derivatives}\label{sec:MFA}
The generalised attention $\gamma$ defined on compactly supported measures was introduced in \eqref{eqn:gamma}.
For Lions' derivative (see Appendix \ref{App:A}) of $\gamma$ to be well-defined, the Fr\'echet derivative of its lifting $\hat {\gamma}$ must belong to $L^2$; latter requires the measure to have finite exponential moments to be well-defined. To circumvent this, fix some radius  $R>0$, which will then be chosen according to Lemma \ref{lemma:GlobalWellDefined}, and let $P_R: \R^d\mapsto \R^d$ be the projection map
\begin{equation}\label{eqn:proj_map}
    P_R(x) = \frac{x}{1+\frac{1}{8(R\vee 1)^2}(\Vert x\Vert-R)_+^2}, \qquad (r)_+=\mathrm{max}\{r,0\}.
\end{equation}
A direct computation yields that $\Vert P_R(x)\Vert \le \mathrm{min}\{2(R\vee 1), \Vert x\Vert\}$ and $\Vert DP_R(x)\Vert_\mathrm{op}\le 2$ for any $x \in \R^d$. Then, to recover a well-defined $L^2$ lifting, we define
\begin{equation}\label{eqn:gammaExt}
    \gamma^R(z,\mu) := \gamma(z,P_R{}_\#\mu),
\end{equation}
for any $z\in \R^d,\mu\in \mathcal P_2(\R^d)$. 
\begin{lemma}\label{lemma:Lderivative}
Denoting Lions' derivative by $\partial_{\mu}$, $\gamma^R$ is Lions differentiable at any $y,z\in \R^d,\mu \in \mathcal{P}_2(\mathbb{R}^d)$ with derivative given by
    \begin{align*}
        \partial_{\mu} \gamma^R(z,\mu)(y) = &\frac{\exp(\langle z,P_R(y)\rangle)}{\gamma^R_{2}(z,\mu)} \left[(P_R(y)-\gamma^R(z,\mu))z^T+I \right]D P_R(y),
    \end{align*}
    where $DP_R$ is the Fr\'echet derivative of $P_R$ and $\gamma_2^R$ is the normalisation constant
    \begin{equation}\label{eqn:Z}
    \gamma_{2}^R(z,\mu) = \int \exp(\langle z, P_R(y)\rangle) \,d\mu(y).
\end{equation}
\end{lemma}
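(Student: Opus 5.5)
We will use the lifting characterisation of Lions' derivative recalled in Appendix \ref{App:A}. Fix a probability space $(\Omega,\mathcal F,\mathbb P)$ rich enough that every $\mu\in\mathcal P_2(\R^d)$ is the law of some $X\in L^2(\Omega;\R^d)$. Define the lift $\hat\gamma^R(z,X):=\gamma^R(z,\mathrm{Law}(X))$. It suffices to show three things: $\hat\gamma^R$ is Fr\'echet differentiable at every $X$; its derivative acts on a direction $Y\in L^2$ as $\E[\Phi(X)Y]$ for a deterministic matrix-valued function $\Phi$; and $\Phi(X)\in L^2$. Then $\partial_\mu\gamma^R(z,\mu)(y)=\Phi(y)$.

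Writing $p=P_R(X)$, the lift is the ratio
\begin{equation*}
\hat\gamma^R(z,X)=\frac{F(X)}{G(X)},\qquad F(X)=\E\big[e^{\langle z,P_R(X)\rangle}P_R(X)\big],\qquad G(X)=\E\big[e^{\langle z,P_R(X)\rangle}\big]=\gamma^R_2(z,\mu).
\end{equation*}
The key observation is that $\Vert P_R\Vert\le 2(R\vee1)$. Hence $e^{\langle z,P_R(X)\rangle}$ is bounded above by $e^{2(R\vee1)\Vert z\Vert}$ and below by $e^{-2(R\vee1)\Vert z\Vert}>0$, uniformly in $\omega$. In particular $G$ is bounded away from zero, so differentiating the quotient is legitimate.

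For the Fr\'echet derivatives of $F$ and $G$, consider the pointwise maps $g(x)=e^{\langle z,P_R(x)\rangle}$ and $f(x)=g(x)P_R(x)$. These are $C^1$ on $\R^d$ with
\begin{equation*}
Dg(x)=g(x)\,z^TDP_R(x),\qquad Df(x)=g(x)\big[P_R(x)z^T+I\big]DP_R(x).
\end{equation*}
We also need $Df$ and $Dg$ to be bounded and uniformly continuous on $\R^d$. Boundedness follows from $\Vert DP_R\Vert_{\mathrm{op}}\le2$ together with the bounds above. For uniform continuity, $DP_R$ is continuous and tends to a limit behaviour at infinity: $P_R(x)\approx 8(R\vee1)^2x/\Vert x\Vert^2$, so $DP_R(x)=O(1/\Vert x\Vert)$. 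If that is awkward, a global Lipschitz bound on $DP_R$ can be computed directly from \eqref{eqn:proj_map}.

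With these properties, a mean-value/Taylor argument gives
\begin{equation*}
|G(X+Y)-G(X)-\E[Dg(X)Y]|\le \E\big[\omega_{Dg}(\Vert Y\Vert)\Vert Y\Vert\big],
\end{equation*}
where $\omega_{Dg}$ is a bounded modulus of continuity. By Cauchy--Schwarz and dominated convergence (or directly, if $Dg$ is Lipschitz), the right-hand side is $o(\Vert Y\Vert_{L^2})$; the same holds for $F$. This step, establishing uniform control of $DP_R$ so that the remainder is $o(\Vert Y\Vert_{L^2})$, is the main technical obstacle. The rest is algebra.

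Finally, apply the quotient rule:
\begin{equation*}
D\hat\gamma^R(X)[Y]=\frac{\E[Df(X)Y]}{G}-\frac{F}{G^2}\,\E[Dg(X)Y]
=\E\Big[\frac{g(X)}{G}\big[(P_R(X)-\gamma^R(z,\mu))z^T+I\big]DP_R(X)\,Y\Big],
\end{equation*}
using $F/G=\gamma^R(z,\mu)$. The integrand's matrix is a bounded deterministic function of $X$, hence lies in $L^2$. Reading off $\Phi$ gives exactly the formula stated in Lemma \ref{lemma:Lderivative}.
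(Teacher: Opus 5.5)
Your proposal is correct and follows the paper's route. Both lift the numerator $\gamma_1^R$ and the denominator $\gamma_2^R$ to $L^2$, differentiate them using the boundedness of $P_R$ and $DP_R$ (which also keeps $\gamma_2^R$ away from zero), and combine the results with the quotient rule using $\gamma_1^R/\gamma_2^R=\gamma^R$. The only difference is how Fr\'echet differentiability is justified. You prove it directly from a remainder estimate based on the uniform continuity of $Df$ and $Dg$. The paper instead computes the G\^ateaux derivative by dominated convergence and upgrades it via continuity of that derivative in the base point; your version makes this step more explicit.
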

Throughout the paper, we shall use the $p$-Wasserstein distance as our metric on the space of Borel probability measures $\mathcal P_p(\R^d)$ for $p\in[1,\infty]$, which is detailed in Appendix \ref{App:A}. Using Lions' derivative of $\gamma^R$, the Lipschitz continuity of $\gamma^R$ in its measure argument is established as follows.
\begin{corollary}\label{corr:LipMu} 
Take any $z \in \mathbb{R}^d$ and $p\in [1,\infty]$. Then, for any $\mu,\tilde{\mu}\in \mathcal{P}_p(\bar B(R))$, there exists $\Lambda_{\mu,p}: \mathbb R_{\ge0}\to \mathbb R_{\ge 0}$  such that
    \[ \Vert \gamma^R(z,\mu)-\gamma^R(z,\tilde{\mu})\Vert \le \Lambda_{\mu,p}(\Vert z \Vert) W_p(\mu, \tilde{\mu}), \]
    where $\Lambda_{\mu,p}$ is given by
$
\Lambda_{\mu,p}(\tilde R) =(1+2R\tilde R)\exp(2R\tilde Rp^{-1})
$.
\end{corollary}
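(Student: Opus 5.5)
We prove the bound by integrating Lions' derivative along a path between $\mu$ and $\tilde\mu$. Since $\mu,\tilde\mu$ are supported in $\bar B(R)$, we have $P_R(y)=y$ there and $DP_R(y)=I$ for $\Vert y\Vert< R$. Let $\pi$ be an optimal coupling for $W_p(\mu,\tilde\mu)$; for $p=\infty$ it exists by compactness of the supports. Take $(X,\tilde X)\sim\pi$ and set $X_t=(1-t)X+t\tilde X$ and $\mu_t=\mathrm{Law}(X_t)$. By convexity, $\mu_t$ is supported in $\bar B(R)$ for all $t$. Using the chain rule for Lions derivatives (equivalently, differentiating the lifting $\hat\gamma^R$ along $t\mapsto X_t$ in $L^2$), Lemma \ref{lemma:Lderivative} gives
\[
\gamma^R(z,\tilde\mu)-\gamma^R(z,\mu)=\int_0^1 \E\big[\partial_\mu\gamma^R(z,\mu_t)(X_t)\,(\tilde X-X)\big]\,dt .
\]
Alternatively, we can differentiate directly, since $\gamma^R(z,\mu_t)$ is a ratio of two expectations with smooth integrands. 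This avoids any subtlety about $DP_R$ at $\Vert y\Vert=R$, where in any case $\Vert DP_R\Vert_{\mathrm{op}}\le 2$ and the derivative is continuous.

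Next we bound the integrand pointwise. For $y$ in the support of $\mu_t$, we have $\Vert P_R(y)-\gamma^R(z,\mu_t)\Vert\le 2R$, because $\gamma^R(z,\mu_t)$ is a weighted average of points of $\bar B(R)$. Hence
\[
\Vert \partial_\mu\gamma^R(z,\mu_t)(y)\Vert_{\mathrm{op}}\le (1+2R\Vert z\Vert)\,w_t(y), \qquad w_t(y):=\frac{\exp(\langle z,y\rangle)}{\gamma_2^R(z,\mu_t)} ,
\]
where $w_t$ is a probability density with respect to $\mu_t$. By Hölder's inequality with exponents $p',p$,
\[
\Vert\cdot\Vert\le(1+2R\Vert z\Vert)\,\Vert w_t(X_t)\Vert_{L^{p'}}\,\Vert \tilde X-X\Vert_{L^p},
\]
and the last factor equals $W_p(\mu,\tilde\mu)$.

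The main obstacle is to bound $\Vert w_t(X_t)\Vert_{L^{p'}}$ by $\exp(2R\Vert z\Vert/p)$. The numerator of $w_t$ is at most $e^{R\Vert z\Vert}$ and the denominator is at least $e^{-R\Vert z\Vert}$, so $w_t\le e^{2R\Vert z\Vert}$. Also $\E[w_t(X_t)]=1$. Interpolation then gives
\[
\E[w_t^{p'}]\le \Vert w_t\Vert_\infty^{p'-1}\,\E[w_t]\le e^{2R\Vert z\Vert(p'-1)} .
\]
Since $(p'-1)/p'=1/p$, we obtain $\Vert w_t\Vert_{L^{p'}}\le e^{2R\Vert z\Vert/p}$. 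This argument also covers the edge cases: for $p=\infty$ ($p'=1$) the factor is $1$, which is the non-exponential constant, and for $p=1$ it is $e^{2R\Vert z\Vert}$.

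Integrating over $t\in[0,1]$ yields the stated $\Lambda_{\mu,p}(\Vert z\Vert)=(1+2R\Vert z\Vert)\exp(2R\Vert z\Vert p^{-1})$.
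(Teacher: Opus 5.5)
Your proof is correct and follows the paper's argument: differentiate $\gamma^R$ in the measure along the linear interpolation of an optimal $W_p$ coupling, bound $\|\partial_\mu\gamma^R\|_{\mathrm{op}}$ by $(1+2R\|z\|)$ times the Gibbs weight $w$, apply H\"older, and control the $L^{p'}$ norm of the weight via $\E[w^{p'}]\le \|w\|_\infty^{p'-1}\E[w]$. The differences are cosmetic. You use the integral (fundamental theorem of calculus) form, whereas the paper uses the measure-valued mean value theorem and scalarises by $u\in\mathbb S^{d-1}$. Your boundary worry about $DP_R$ is unnecessary, because $(r)_+^2$ has zero derivative at $r=0$ and hence $DP_R=I$ on all of $\bar B(R)$.
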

\begin{remark}
 The Lipschitz continuity of $\gamma$ with respect to the measure has been studied previously in \cite{geshkovski2024emergenceclustersselfattentiondynamics,how-smooth-is-attention} for $W_2$.
 Deviating from their approach, we use calculus on $\mathcal P_2(\R^d)$ in the present work, which allows to deduce strict bounds. The Lipschitz continuity with respect to $W_p$ is studied in Appendix B8.4 of \cite{castin2025unifiedperspectivedynamicsdeep}, where the Lipschitz constants derived are independent of $p$. Thus, they observe exponential behaviour.
\end{remark}
\begin{remark} The $z$ argument of $\gamma$ is taken to be $\beta \theta_K^T\theta_Q x$ in the definition of $\Gamma$ \eqref{eqn:Gamma} and the parameters $\theta$ are integrated with respect to $\nu$. Thus, for $p<\infty$,  $\Gamma$ is Lipschitz continuous in its measure argument with respect to the $p$-Wasserstein distance only if $\nu$ has exponential moments.
\end{remark}
In Lemma \ref{lemma:GlobalWellDefined}, we demonstrate that the hidden states and adjoint variables (see Section \ref{sec:adjoint}) in our infinitely deep and wide transformer can be respectively bounded by $ R_X,R_a$ uniformly in training. Therefore, by taking $R\ge R_X\vee R_a$, the projection map $P_R$ is the identity map over the domain in which all variables in our analysis are defined. Accordingly, the superscript $R$ and projection map $P_R$ are dropped for notational convenience.
\section{Uniform Scaling Limits in Transformers}
As seen in Section \ref{sec:MFA}, for $\Gamma(x,m,\nu)$ defined in \eqref{eqn:Gamma} to be locally Lipschitz continuous with respect to the $2$-Wasserstein distance, it is necessary that $\nu$ possess finite exponential moments. A sufficient and strictly stronger condition is that $\nu$ has compact support. Thus, we assume that the parameters of the transformer are suitably initialised to guarantee that they remain uniformly bounded throughout training, see Lemma \ref{lemma:GlobalWellDefined}.
\begin{assumption}\label{assm:Init}
\textbf{Parameter initialisation}\\
    Let $\pi \in \mathcal P_c(\R^{k\times 4d})$ and suppose that there exists $\lambda>0$ such that the support of $\pi$ is contained in $\{v \in \R^{k\times 4d}: \Vert \mathcal{R}(v) \Vert_\infty \le \lambda^{-1}\}$, where $\mathcal R$ is the map used in Algorithm \ref{alg:AdamW}. The parameters of the transformer are initialised according to
    \[
    \theta^h_r \overset{i.i.d.}{\sim} \pi, \qquad \text{for} \; 0 \le r < L,\;1\le h\le H.
    \]
    Further assume that the mean-field parameters of the continuous-time model, defined in \eqref{eqn:ansatz}, satisfy $\nu_{t,0}=\pi$ at initialisation.
\end{assumption}
This assumption holds for common transformer initialisation strategies, such as Xavier Uniform \cite{pmlr-v9-glorot10a} or sampling parameters from a truncated normal distribution. 
\subsection{The Adjoint Variable}\label{sec:adjoint}
 For the gradients of the loss function to be well-defined and so that we can bound the difference between the gradients of two models, we require the following assumption on the loss function  $\ell: \mathcal P_2(\R^d)\mapsto \R_{\ge 0}$.
\begin{assumption}\label{assm:Obj}
    Assume that $\ell: \mathcal P_2(\R^d)\mapsto \R_{\ge 0}$ is differentiable in the sense of Lions (see Appendix \ref{App:A}), and that its Lions derivative satisfies the following Lipschitz condition: for every $R>0$, there exists $\Lambda_\ell=\Lambda_\ell(R)$, such that for all $y_1,y_2\in \bar B(R)$, and $\mu_1,\mu_2\in \mathcal P(\bar B(R))$,    \[
    \Vert \partial_\mu \ell(\mu_1)(y_1)-  \partial_\mu \ell(\mu_2)(y_2)\Vert \le \Lambda_\ell(R)\left( \Vert y_1-y_2\Vert + W_2(\mu_1,\mu_2)\right).
    \]
    Then as a result of the continuity of $\partial_\mu\ell$, there also exists a constant $K=K(R)$ such that $\Vert \partial_\mu\ell(\mu)(y)\Vert \le K(R)$ for every $y\in \bar B(R)$, $\mu \in \mathcal P(\bar B(R))$.
\end{assumption}
\begin{remark}
    Let $(x^i_0)_{i=1}^N$ be the input token embeddings, with associated targets $(c_i)_{i=1}^N$ that lie in some space $\mathcal S$. Then, the supervised learning problem can be embedded into our framework by augmenting the state space of the IPS to $(x,c)\in \R^d\times \mathcal S$ and using $(x^i_0,c_i)_{i=1}^N$ as the initial condition. We define the augmented IPS dynamics so that the velocity field is independent of $(c_i)_{i=1}^N$ and acts only on the $\R^d$ component. Accordingly, the supervised learning objective can be written as  $\ell(\widetilde{m}^N_L)$, where $\widetilde{m}^N_L$ is the empirical measure on the terminal distribution of the augmented IPS. The above construction corresponds to taking $c\in \mathcal S$ to be the noise injected onto an image in a (single step) Diffusion Model\footnote{We do not consider the Adaptive Layer Norm that is used in DiTs \cite{peebles2023scalable}, which depends on the diffusion step. This modification to the architecture can be ignored in the single diffusion step setting \cite{SSDMs}.},  while $c\in S$ corresponds to the token that has been masked in BERT-style LLMs. Analogous constructions hold for Vision Transformers.
\end{remark}
Let $m^N_r$, $x^i_r$ be defined according to \eqref{eqn:IPS}. The key step in determining the gradients of the loss function is to find the derivative of the loss with respect to $x^i_r$, which we call the adjoint variable. By the chain rule, the adjoint variable satisfies a backward recursion, whose terminal condition is given by the derivative of $\ell(m^N_L)$ with respect to one of its atoms $x^i_L$. Proposition 5.35 in \cite{carmona2018probabilistic} states that this terminal condition coincides with Lions' derivative of $\ell$ evaluated at $(\mu^N_L,x^i_L)$ multiplied by $N^{-1}$. We can rescale the adjoint variable by $N$ so that the norm of the terminal condition is independent of $N$. Correspondingly, summations over tokens are replaced by integrals against the empirical measure of tokens. Therefore, a direct computation yields that the rescaled adjoint variable satisfies
\begin{align}\label{adjoint}
  \underline{a}^{i}_{r} &= \underline{a}^{i}_{r+1} +\frac{1}{L}\mathcal K(x^i_r,\underline{\rho}^N_r,\underline{\nu}^H_r,\underline{a}^i_{r+1}),\qquad \underline{\rho}^N_{r}= \frac{1}{N}\sum_{i=1}^N \delta_{x^i_{r}}\delta_{\underline{a}^{i}_{r+1}},\\
   \underline{a}^i_{L} &= \partial_\mu \ell(m^N_L)(x^i_L),\label{eqn:adjoint_TC}
\end{align}
 where, for ease of notation, we have introduced $\mathcal K$ defined by
\begin{align}\label{eqn:Hamil}
    \mathcal H(x,\mu,\nu,a) &= a^T\Gamma(x,\mu,\nu),\\
  \mathcal K(x,\rho,\nu,a)&= \nabla_x \mathcal H(x,\rho|_x,\nu,a)+ \int \partial_\mu \mathcal H(y,\rho|_x,\nu,p)(x)d\rho(y,p).    \label{eqn:mathcal_K}
\end{align}
Here, $\rho|_x$ denotes the $x$-marginal of $\rho$. Then, the gradient of the loss function with respect to the parameters in the $r^{th}$ layer and the $h^{th}$ head is computed via
\begin{align}
 \nabla_{\theta^h_r}\ell(m^N_L)&=\frac{1}{NLH}\sum_{i=1}^N \partial_\nu\mathcal H (x^i_{r}, m^N_{r},\underline{\nu}^H_r,\underline{a}^i_{r+1})(\theta^h_r).\label{eqn:grad}
\end{align}
 Proposition 5.35 in \cite{carmona2018probabilistic} is invoked to identify the derivative of $\mathcal H(x,m_r^N,\underline{\nu}_r^H,a)$, evaluated at the empirical measure $\underline{\nu}_r^H$, with respect to one of its atoms $\theta^h_r$ as the Lions derivative rescaled by $H^{-1}$. For Lions' derivative to be well-defined, the Fr\'echet derivative of its lifting $\hat {\mathcal H}$ must be in $L^2$. However, for a finite-width transformer, the gradient with respect to the $Q$ and $K$ parameters grows cubically in the parameters. Similarly to the construction of $\gamma^R$ in Section \ref{sec:MFA}, we can define an extension to $\mathcal H$  by pushing forward any $\nu \in \mathcal P_2(\R^{k\times 4d})$ onto a compactly supported measure. This projection can be chosen to be the identity map over the support of $\nu_{t,\tau},\bar \nu^{N,L}_{t,\tau}$ for any $t\in[0,1]$ and $\tau\ge 0$, and so it will not affect the dynamics of the hidden states and adjoint variables. Thus, we do not incorporate it explicitly and note that this is a method to extend $\mathcal H$ to a function whose Lions' derivative is well-defined. 

\subsection{Continuous Time Models}\label{cont_time}
Since we are interested in how the discrepancy between the infinitely wide and deep model and the discrete-time model evolves over training, a subscript $\tau$ is introduced to denote which training step a variable is taken at. Let $\bm{y} =(y^1,\ldots, y^N)\in (\R^d)^N$. Then, the continuous time extension (CTE) to the hidden states $(x^i_r)_{r=0,i=1}^{L-1,N}$, adjoint variables $(\underline{a}^i_r)_{r=0,i=1}^{L-1,N}$ and parameters $(\underline{\nu}^H_r)_{r=0}^{L-1}$ of the discrete time IPS defined in equations \eqref{eqn:IPS}, \eqref{eqn:nu_H} and \eqref{adjoint} with $x^i_0=y^i$ is constructed as follows
\begin{align}\label{eqn:CTE1}
    \bar X^{i,L,\bm{y}}_{t,\tau} &= \sum_{r=0}^L x^{i,L,\bm{y}}_{r,\tau} \mathbbm{1}_{Lt \in[r,r+1)}, \quad \bar \mu^{N,L,\bm{y}}_{t,\tau}= \frac{1}{N}\sum_{i=1}^N \delta_{\bar X^{i,L,\bm{y}}_{t,\tau}},\\
    \bar a^{i,L,\bm{y}}_{t,\tau} &= \sum_{r=0}^L \underline{a}^{i,L,\bm{y}}_{r+1,\tau} \mathbbm{1}_{Lt \in(r,r+1]}, \quad \bar \rho^{N,L,\bm{y}}_{t,\tau}= \frac{1}{N}\sum_{i=1}^N \delta_{\bar X^{i,L,\bm{y}}_{t,\tau}}\delta_{\bar a^{i,L,\bm{y}}_{t,\tau}},\\
    \bar \nu^{N,L}_{t,\tau}&=\frac{1}{H}\sum_{h=1}^H \sum_{r=0}^{L-1}\delta_{\theta_{r,\tau}^h}\mathbbm{1}_{Lt \in[r,r+1)}.\label{eqn:CTE2}
\end{align}
Since, there is no measurable stochastic process $\Theta_{t,0}$ with $\Theta_{t,0},\Theta_{s,0}$ independent and identically distributed for $t,s$ arbitrarily close, the large depth limit of \eqref{eqn:CTE1} cannot converge to a finite width model under Assumption \ref{assm:Init}. Therefore, we only consider the infinitely wide and deep model and initialise its parameters $\nu_{t,\tau}$  according to
\begin{equation*}\label{eqn:initContinuousModel}
     \nu_{t,0} = \pi, \qquad \text{for} \; t\in[0,1],
 \end{equation*}  
 where $\pi$ is the distribution from which the parameters of the discrete-time transformer are sampled at initialisation. Then, the proposed continuous time limit of \eqref{eqn:CTE1}-\eqref{eqn:CTE2} is 
\begin{align}\label{eqn:ansatz}
    \frac{dX^{i,\bm{y}}_{t,\tau}}{dt} &=  \Gamma(X^{i,\bm{y}}_{t,\tau},\mu^{N,\bm{y}}_{t,\tau},\nu_{t,\tau}),\qquad&\mu^{N,\bm{y}}_{t,\tau}= \frac{1}{N}\sum_{i=1}^N\delta_{X^{i,\bm{y}}_{t,\tau}},\\    \frac{da^{i,\bm{y}}_{t,\tau}}{dt} &= - \mathcal K(X^{i,\bm{y}}_{t,\tau},\rho^{N,\bm{y}}_{t,\tau}, \nu_{t,\tau},a^{i,\bm{y}}_{t,\tau}),\qquad & \rho^{N,\bm{y}}_{t,\tau} =\frac{1}{N} \sum_{i=1}^N \delta_{X^{i,\bm{y}}_{t,\tau}}\delta_{a^{i,\bm{y} }_{t,\tau}},
\end{align}
with initial and terminal conditions
\begin{equation}\label{eqn:anstaz_IC}
    X^{i,\bm{y}}_{0,\tau}=y^i,\qquad a^{i,\bm{y}}_{1,\tau} = \partial_\mu \ell(\mu^{N,\bm{y}}_{1,\tau})(X^{i,\bm{y}}_{1,\tau}).
\end{equation}
\subsection{Parameter Updates}
We now introduce the data that our transformer models are trained on. 
\begin{assumption}\label{assm:TD2}
    Let  $\Xi\in \mathcal P((\R^d)^N)$ be the law of the training data, and suppose that there exists a constant $R_0>0$ such that $\bm{y}\in (\bar B(R_0))^N$  $\Xi$-a.s. Assume that, at step $\tau\ge 0$, the training data consists of $i.i.d.$ samples $\{\bm{y}_{b,\tau}\}_{b=1}^B$ drawn from $\Xi$.
\end{assumption}
  We denote by $(X^{i,b}_{t,\tau},a^{i,b}_{t,\tau})_{i=1}^N$ the solution to \eqref{eqn:ansatz} with initial condition $\bm{y}=\bm{y}_{b,\tau}$. The same superscript notation is used for the empirical measures $\mu^{N,b}_{t,\tau},\rho^{N,b}_{t,\tau}$, and for the corresponding discrete-time variables. To compensate for explicitly scaling the residual by $(LH)^{-1}$ in \eqref{eqn:IPS}, the gradients used to update the parameters are rescaled by $LH$, as done in \cite{,chizat2018globalconvergencegradientdescent,chaintron2026resnets,marion2024implicit}. Then, the rescaled gradients of the loss given in equation \eqref{eqn:grad} at the $(\tau+1)^{th}$ training step can be obtained by evaluating the map
\begin{align}
      & g_{t,\tau+1}[\bm{\bar\nu}^{N,L}_\tau](\theta)= \frac{1}{B}\sum_{b=1}^B \int \partial_\nu\mathcal H(x,\bar \mu^{N,L,b}_{t,\tau},\bar \nu^{N,L}_{t,\tau},a)(\theta)d\bar \rho^{N,L,b}_{t,\tau}(x,a),  \label{eqn:UpdateG}
 \end{align}
at $\theta = \theta^h_r$ for any $ 1\le h\le H$ and $0 \le r < L$. 
The gradient map for the continuous-time transformer \eqref{eqn:ansatz} is defined similarly. It differs from \eqref{eqn:UpdateG} only in how $\rho^{N,b}_{t,\tau}$ and $\bar \rho^{N,L,b}_{t,\tau}$ are calculated. 

Let $\widetilde{\bm\nu} \in (C([0,1]; \mathcal P_c(\R^{k\times 4d})))^T$. We call $\Phi_{t,\tau}[\widetilde{\bm\nu}]: \R^{k\times 4d}\mapsto \R^{k\times 4d}$, defined in equation \eqref{eqn:AdamWFlowMap}, the AdamW flow map associated to the sequence of parameters $\widetilde{\bm\nu}$. It is constructed iteratively so that $\Phi_{t,\tau}[\widetilde{\bm\nu}](\theta)$ coincides with the result of applying $\tau $ AdamW updates to $\theta$, using gradients given by $g_{t,j+1}[\widetilde{\bm\nu}_j]$ evaluated at $\Phi_{t,j}(\theta)$ for $j<\tau$.
Therefore, at every training step $\tau\ge 0$, our mean-field parameters satisfy
\begin{align}\label{eqn:barnu}
\bar\nu^{N,L}_{t,\tau} &=\Phi_{t,\tau}[\bm{\bar\nu}^{N,L}_{< \tau}]_\#\bar\nu^{N,L}_{t,0},\qquad
    \bm{\bar\nu}^{N,L}_{< \tau}=(\bar\nu^{N,L}_{0},\ldots, \bar\nu^{N,L}_{\tau-1}),\\
    \nu_{t,\tau} &=\Phi_{t,\tau}[\bm\nu_{< \tau}]_\#\pi,
    \qquad\qquad\,\,\, \bm\nu_{< \tau}=(\bm\nu_{0},\ldots,\bm\nu_{\tau-1}).\label{eqn:nu}
\end{align}
The subscript ``$<\tau$'', indicating the length of the sequence of mean-field parameters, will be dropped, as it will be clear from the context. See Appendix \ref{sec:AdamW} for further details and variants of the AdamW algorithm.

\subsection{Main Results}\label{sec:MainTheorem}
Let $(\Omega^0,\mathcal F^0,\mathbb P^0)$ and $(\Omega^1,\mathcal F^1,\mathbb P^1)$ be complete probability spaces. We assume that the training data $((\bm{y}_{\tau,b})_{b=1}^B)_{\tau\ge 0}$ are random variables defined on $(\Omega^0,\mathcal F^0,\mathbb P^0)$, and the transformer parameters $((\theta^h_r)_{h=1}^H)_{r=0}^{L-1}$ at initialisation are random variables defined on $(\Omega^1,\mathcal F^1,\mathbb P^1)$. Therefore, all variables are defined on the product space $(\Omega,\mathcal F,\mathbb P)$, where $\Omega=\Omega^0\times \Omega^1$ and $(\mathcal F,\mathbb P)$ is the completion of $(\mathcal F^0\otimes\mathcal F^1,\mathbb P^0\otimes \mathbb P^1)$. The only source of randomness in the continuous-time model \eqref{eqn:ansatz} is the sampling of training data, and so any of its associated variables are defined on $(\Omega^0,\mathcal F^0,\mathbb P^0)$. Finally, we write $\E^1$ to denote the expectation with respect to the measure $\mathbb  P^1$. 

Consider parameters $\hat \nu_{t,\tau}$ which are initialised identically to the discrete model $\hat \nu_{t,0} =\bar \nu^{N,L}_{t,0}$ but trained using the gradients of the continuous model. That is, for any $t\in[0,1]$, $\tau \in[0:T]$  and $r_t= \lfloor Lt\rfloor /L$, define
\begin{equation}\label{eqn:hatnu}
    \hat \nu^{N,L}_{t,\tau}  = \Phi_{r_t,\tau}[\bm\nu_{<\tau}]_\# \bar \nu^{N,L}_{t,0}.
\end{equation}
The discrepancy between $ \hat \nu^{N,L}_{t,\tau}$ and $\bar \nu^{N,L}_{t,\tau}$ encapsulates the error between the training algorithm of the continuous and discrete time models. This source of error is bounded in the following Lemma.
\begin{lemma}\label{lemma:ParamDiv}
    Suppose that Assumptions \ref{assm:Init}-\ref{assm:TD2} hold and take $\hat\nu_{t,\tau}$ in \eqref{eqn:hatnu} and let $\bar\nu^{N,L}_{t,\tau}$ be the discrete time model's parameters \eqref{eqn:CTE1} at the $\tau^{th}$ training step. Then, after training the parameters for $T$ steps using AdamW, Algorithm \ref{alg:AdamW} with $\mathcal R$ equal to the identity map, $0<\beta_1 \le \beta_2<1$, weight decay $\lambda$ and step sizes satisfying $\eta_k\in(0,1/\lambda)$, there exists a constant $C_1=C_1(\beta_1,\beta_2,\Lambda_{\ell},R_0,\lambda,T, d,k,\varepsilon)$ such that for any $0\le \tau \le T$, 
    \begin{equation*}
        \mathbb E^1\left[\max_{0\le r < L}  W_2^2(\hat\nu_{\frac{r}{L},\tau},\bar \nu^{N,L}_{\frac{r}{L},\tau})\right]\le C_1\left(\frac{1}{L^{2}}+\frac{1}{ L^{\frac{2}{3}}H}\right), \qquad \mathbb P^0-a.s.
        \end{equation*}
 Furthermore, the constant $C_1$ can be taken independent of $d,k$ if the models are trained via \\Algorithm \ref{alg:AdamW} with $\mathcal R$ given by equation \eqref{eqn:R}.
\end{lemma}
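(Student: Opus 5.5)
Since $\hat\nu_{t,\tau}$ and $\bar\nu^{N,L}_{t,\tau}$ are push-forwards of the same atomic measure $\bar\nu^{N,L}_{t,0}$, the natural coupling is the synchronous one. It pairs the atoms $\Phi_{r_t,\tau}[\bm\nu](\theta^h_r)$ and $\Phi_{r/L,\tau}[\bar{\bm\nu}^{N,L}](\theta^h_r)$, which gives
\[
W_2^2\big(\hat\nu_{\frac rL,\tau},\bar\nu^{N,L}_{\frac rL,\tau}\big)\le \frac1H\sum_{h=1}^H \big\Vert \Phi_{\frac rL,\tau}[\bm\nu](\theta^h_r)-\Phi_{\frac rL,\tau}[\bar{\bm\nu}^{N,L}](\theta^h_r)\big\Vert^2 .
\]
The plan is to bound the divergence of the two AdamW flow maps started from the same point by induction on $\tau\le T$.

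\textbf{Step 1 (stability of one AdamW step).} By Lemma \ref{lemma:GlobalWellDefined}, all parameters, hidden states and adjoints stay in fixed balls uniformly in $\tau$. On these balls, Remark \ref{remark:GlobalLip} and Lemma \ref{lemma:Lipderivs} give global Lipschitz constants for $\partial_\nu\mathcal H$. The AdamW update maps $(m,v,\theta,g)\mapsto(m',v',\theta')$. With $\varepsilon>0$ in the denominator and bounded gradients, this map is Lipschitz in $(m,v,\theta,g)$ with a constant depending on $\beta_1,\beta_2,\varepsilon,\lambda$. Coordinatewise this costs a factor of $d,k$, and blockwise $\mathcal R$ removes it. Unrolling over $\tau\le T$ steps shows that the flow-map difference at $\theta^h_r$ is bounded by $C\sum_{j<\tau}\Vert g_{r/L,j+1}[\bm\nu_j](\Phi_j)-g_{r/L,j+1}[\bar{\bm\nu}_j](\bar\Phi_j)\Vert$. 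The point-argument part of this difference is absorbed by a discrete Gr\"onwall argument.

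\textbf{Step 2 (gradient discrepancy).} The gradient error splits into three terms: (a) the Lipschitz dependence on the parameter point, which is already handled; (b) the difference between $\bar\rho^{N,L,b}_{r/L,j}$, $\bar\mu^{N,L,b}$ and $\rho^{N,b}$, $\mu^{N,b}$, i.e. the forward–backward error of the hidden states and adjoints; and (c) the difference of the parameter measure inside $\partial_\nu\mathcal H$, namely $\bar\nu^{N,L}_{r/L,j}$ versus $\nu_{r/L,j}$. For (b) and (c) I route through $\hat\nu$: $W_2(\bar\nu,\nu)\le W_2(\bar\nu,\hat\nu)+W_2(\hat\nu,\nu)$. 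The first piece feeds back into the induction. The second piece is a law-of-large-numbers error, because $\hat\nu_{t,j}=\Phi_{r_t,j}[\bm\nu]_\#\bar\nu_{t,0}$ with $\Phi$ deterministic given the data ($\mathbb P^0$-a.s.) and the $\theta^h_r$ i.i.d. from $\pi$. For (b), I compare the discrete recursion \eqref{eqn:IPS}/\eqref{adjoint} with the ODE \eqref{eqn:ansatz} via a stochastic-approximation argument in the spirit of \cite{chizat2025hiddenwidthdeepresnets}. This yields an $O(L^{-1})$ Euler error plus the martingale-type sum $\frac1L\sum_r\big(\Gamma(x,\mu,\hat\nu_{r/L})-\Gamma(x,\mu,\nu_{r/L})\big)$, which has independent centred terms across layers. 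I also need the time-regularity of $t\mapsto\nu_{t,j}$, which is Lipschitz with an $O(L^{-1})$ discretisation gap, and this follows from Lipschitz dependence of $\Phi_{t,j}$ on $t$.

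\textbf{Step 3 (the $L^{-2/3}H^{-1}$ term and the maximum over $r$).} This is where I expect the main obstacle. The fluctuation must be controlled uniformly over all layers $r$ and over the state argument, and the maximum over $L$ layers of $H^{-1}$-variance quantities must not produce a $\log L$ factor. I would block the layers into groups of size $M$. Within a block, averaging over $MH$ independent heads gives variance $O((MH)^{-1})$. Replacing per-layer measures by block averages costs $O(M/L)$ in the drift through the Lipschitz-in-time bound, entering squared only after multiplication by the fluctuation. Balancing the block-averaged fluctuation against the drift cost (roughly $M^2/L^2\cdot$ versus $1/(MH)\cdot$ with the $\max$ handled by a Doob/maximal inequality for the cumulative sums) gives $M\sim L^{2/3}$ and the rate $L^{-2/3}H^{-1}$. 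Combining the Steps with Gr\"onwall over $j\le T$ and taking $\mathbb E^1[\max_r\cdot]$ gives the bound with $C_1$ depending on $T$ exponentially. Dimension-independence when $\mathcal R$ is \eqref{eqn:R} comes from the blockwise Lipschitz constant in Step 1 and from the Lemma \ref{lemma:GlobalWellDefined} radii being dimension-free.
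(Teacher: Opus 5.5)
Your Step 2 has a genuine gap. Term (c), and more generally routing the parameter comparison through $W_2(\bar\nu,\nu)\le W_2(\bar\nu,\hat\nu)+W_2(\hat\nu,\nu)$, cannot work. The quantity $W_2(\hat\nu^{N,L}_{r/L,j},\nu_{r/L,j})$ compares an $H$-atom empirical measure with the law $\Phi_{r/L,j}[\bm\nu]_\#\pi$. Already at $j=0$ it equals $W_2(\frac1H\sum_h\delta_{\theta^h_r},\pi)$. This does not decrease as $L\to\infty$, and for a diffuse $\pi$ such as Xavier uniform it decays in $H$ only like $H^{-1/(4dk)}$. So no bound of the form $C_1(L^{-2}+L^{-2/3}H^{-1})$ can pass through it, and your induction breaks at the first training step.

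The missing observation is that $\mathcal H$ is linear in $\nu$. Hence $\partial_\nu\mathcal H(x,\mu,\nu,a)(\theta)$ does not depend on $\nu$, and term (c) vanishes identically. This is exactly what Lemma~\ref{lemma:GradCont} uses: the gradient discrepancy is controlled by $\Vert\theta-\vartheta\Vert_\mathrm{F}$ and by the discrepancy between $\bar\rho^{N,L,b}$ and $\rho^{N,b}$ alone. The same applies in the forward--backward recursion. There, $\bar\nu$ is compared with $\hat\nu$ in $W_2$ via the synchronous coupling, which feeds the induction. By contrast, $\hat\nu$ is compared with $\nu$ only through integrals of $\mathcal F^0$-measurable test functions. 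By linearity of $\Gamma$ and $\mathcal K$ in $\nu$, that comparison is the centred martingale sum of your term (b), never a Wasserstein distance.

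Step 3 misidentifies the obstacle. Doob's $L^2$ inequality handles the maximum over layers with no $\log L$ and no blocking. In the paper the blocks of size $m\sim L^{2/3}$ serve a different purpose. The lemma is read off from the bound on $\mathscr L_j$ in \eqref{eqn:mathscr_L}, which takes a supremum over $\bm y\in(\bar B(R_0))^N$ and a maximum over tokens. To control that supremum without a covering, the paper freezes the trajectory at each block start. Then, after Rademacher symmetrisation and a contraction inequality removing the softmax ratio, the multilinear structure of Assumption~\ref{assm:MultiLinear} pulls the $(x,\zeta)$-dependence out of $\sum_{r,h}\xi_{r,h}\widetilde J(\theta^h_{r,\tau})$ (Corollary~\ref{corr:MultiLinearBound}, Lemma~\ref{lemma:TaylorApproxE}). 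Your ``block averages'' lack this factorisation, so they do not give uniformity over the state argument, although your balance $M\sim L^{2/3}$ is the right one.

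For this lemma alone, uniformity is not needed. Given $\omega^0$ the batches are fixed. If you use $W_2$ with the index coupling (averaging particle errors over tokens) instead of $W_\infty$ in the gradient-continuity step, a per-trajectory martingale with Doob gives $\mathcal O(L^{-2}+(LH)^{-1})$, which already implies the claim.

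Finally, the AdamW update is Lipschitz in Frobenius norm with a dimension-free constant for both choices of $\mathcal R$ (Lemma~\ref{lemma:UpdateStability}). The dependence on $d,k$ enters only through $R_\theta=c_{\mathcal R}B_\beta\lambda^{-1}$ with $c_{\mathrm{Id}}=\sqrt{dk}$.
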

Our uniform convergence result for the unmasked attention-only transformer is as follows.\\ 

\begin{theorem}\label{theorem:WTS}
    Suppose that we are in the same setting as Lemma \ref{lemma:ParamDiv} and that the same assumptions hold. Then, there exists a constant $C_2=C_2(\beta_1,\beta_2,\Lambda_\ell,R_0,\lambda,T, d,k,\varepsilon)$, such that for any training step $\tau  \in [0:T]$
    \[
    \mathbb E^1\left[\sup_{t \in [0,1]}\sup_{\bm{y}\in(\bar B(R_0))^N}\max_{i\in[1:N]}\Vert \bar X^{i,L,\bm{y}}_{t,\tau} - X^{i,\bm{y}}_{t,\tau}\Vert^2+ \Vert \bar a^{i,L,\bm{y}}_{t,\tau} - a^{i,\bm{y}}_{t,\tau}\Vert^2 \right] \le C_2 \left(\frac{1}{L^{2}}+\frac{1}{ L^{\frac{2}{3}}H}\right).
    \] 
    The above inequality holds $\mathbb P^0$-a.s. Furthermore, the constant $C_2$ can be taken independent of $d,k$ if Algorithm \ref{alg:AdamW} is performed with $\mathcal R$ given by equation \eqref{eqn:R}.
\end{theorem}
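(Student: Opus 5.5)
We argue at a fixed training step $\tau$, taking Lemma \ref{lemma:ParamDiv} as the control on the parameter discrepancy. We insert the intermediate parameters $\hat\nu^{N,L}_{t,\tau}$ from \eqref{eqn:hatnu} and split the error into two parts. The first part is between the discrete model driven by $\bar\nu^{N,L}$ and the discrete model driven by $\hat\nu^{N,L}$. The second part is between the discrete model driven by $\hat\nu^{N,L}$ and the continuous model \eqref{eqn:ansatz} driven by $\nu$. By Lemma \ref{lemma:GlobalWellDefined}, all hidden states, adjoints and parameters stay in a fixed ball with radius independent of $\tau\le T$, $N$, $L$ and $H$. This lets us drop the projections $P_R$ and use the local Lipschitz constants of $\Gamma$, $\nabla_x\Gamma$ and $\partial_\mu\Gamma$ from Corollary \ref{corr:LipMu} and Lemma \ref{lemma:Lipderivs} as global constants.

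For the first part we use a discrete Gr\"onwall argument on the forward recursion \eqref{eqn:IPS}. The per-step error is $L^{-1}$ times a Lipschitz bound in $W_2(\hat\nu_{r/L,\tau},\bar\nu^{N,L}_{r/L,\tau})$, plus the propagated error in $x$ and in $m^N_r$. The $W_2$-distance between two empirical measures of tokens is bounded by $\max_i$ of the particle errors, and this is exactly why no covering over $\bm y$ is needed and why $N$ does not enter. Since $\hat\nu$ and $\bar\nu$ share the same initial atoms, the bound is pathwise in $\omega$ and uniform in $\bm y$. Taking $\max_r$ and then $\mathbb E^1$, Lemma \ref{lemma:ParamDiv} gives the rate. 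The backward recursion \eqref{adjoint} is treated the same way, starting from the terminal condition and using Assumption \ref{assm:Obj} together with Lipschitz continuity of $\mathcal K$ in $(x,\rho,\nu,a)$.

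For the second part the parameters are $\hat\nu^{N,L}_{t,\tau}=\Phi_{r_t,\tau}[\bm\nu]_\#\bar\nu^{N,L}_{t,0}$. Within each layer $r$, these are the push-forward by a \emph{deterministic} map (given the data, which is $\mathcal F^0$-measurable) of $H$ i.i.d. samples from $\pi$. Hence $\Gamma(x,\mu,\hat\nu_{r/L,\tau})-\Gamma(x,\mu,\nu_{r/L,\tau})$ is an average of centred independent bounded terms, and these are independent across layers. We then run a stochastic-approximation argument in the spirit of \cite{chizat2025hiddenwidthdeepresnets}. We write the discrete-minus-continuous error as a sum of a time-discretisation term of order $L^{-1}$ (using Lipschitz-in-time regularity of $t\mapsto\nu_{t,\tau}$, inherited through $\Phi$ from the smooth dependence of the continuous gradients on $t$) and a fluctuation sum $\frac1L\sum_{r}\xi_r(\cdot)$, where $\xi_r$ are independent, mean-zero and of variance $O(H^{-1})$.

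The main obstacle is the uniformity over $\bm y\in(\bar B(R_0))^N$ without a covering argument. The fluctuations are evaluated along the trajectory, which itself depends on $\bm y$ and on past noise. To handle this, we would first try to use the MVODE flow maps of the continuous model. By a variation-of-constants (Alekseev--Gr\"obner) formula, the error can be written as $\frac1L\sum_r D\psi_{r/L\to t}\,\xi_r$ evaluated at a \emph{fixed} set of points, namely the continuous trajectory, plus a remainder that is quadratic in the error. The noise $\xi_r$ depends on the input only through the $d$-dimensional point $x$ and the measure $\mu$. We would control it as a random function on a compact set in $\R^d\times\mathcal P(\bar B(R))$ using its Lipschitz dependence and concentration of measure for the parameter average. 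The blocking trick is to group the layers into $L^{1/3}$-sized blocks: within a block the summed fluctuation has variance $\sim L^{-2/3}H^{-1}$ after averaging, while freezing the trajectory within a block costs $L^{-2/3}$ per block. Optimising this trade-off produces the $L^{-2/3}H^{-1}$ term. The adjoint then follows by the same backward argument. Combining both parts via $(a+b)^2\le2a^2+2b^2$ gives $C_2$. The constant depends on $d,k$ only through the bound on the parameter support, and that bound becomes dimension-free under \eqref{eqn:R}.
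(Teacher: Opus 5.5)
There is a genuine gap, and it sits exactly at the step you identify as the main obstacle.

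\textbf{The step that fails.} To make the fluctuation uniform in $\bm y$, you propose to treat it as a random field on a compact subset of $\R^d\times\mathcal P(\bar B(R))$ and to control its supremum through Lipschitz dependence plus concentration of measure. That is a covering/chaining argument, and it cannot deliver the theorem as stated. The metric entropy of $(\mathcal P(\bar B(R)),W_2)$ at scale $\epsilon$ grows at least like $\epsilon^{-d}$, so a net or chaining argument based only on Lipschitz continuity loses the $H^{-1/2}$ rate; Dudley's integral already diverges once $d\ge 2$. If you cover the $N$-atom configurations $(\bar B(R_0))^N$ instead, you pay a factor of order $\sqrt{Nd}$, which contradicts the $N$-independence of $C_2$. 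Even covering $\bar B(R)\subset\R^d$ alone introduces a $\sqrt d$ factor, contradicting your closing claim that $d,k$ enter only through the parameter support. Your observation that $W_2$ between token empirical measures is bounded by the maximal particle error only handles the pathwise Lipschitz terms. It says nothing about the supremum of the random fluctuation sum, which is precisely where $N$ would otherwise enter. (Concentration such as McDiarmid controls deviations of the supremum from its mean, not the mean itself.)

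\textbf{The missing idea.} The paper replaces covering with a structural reduction.
\begin{enumerate}
\item It places the fluctuation on the continuous flow $(x,\zeta)\mapsto Z^{x,\zeta}_t$. This flow is $\mathcal F^0$-measurable and hence independent of the initialised parameters, which yields the martingale $M^{x,\zeta}_{r,\tau}$ of \eqref{eqn:Martingale}. A plain Lipschitz splitting then suffices, so no Alekseev--Gr\"obner linearisation is needed, and you avoid its quadratic remainder, which your sketch never closes.
\item Lemma \ref{lemma:L1Mstar} uses that $\gamma$, $D_z\gamma$ and $\partial_\mu\gamma$ are ratios of integrals linear in $\mu$ or $\mu^{\otimes 2}$. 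After Rademacher symmetrisation, the multivariate Ledoux--Talagrand contraction strips the ratio and exponential, with Sudakov--Fernique used for the normaliser. The integral against $\zeta^{\otimes 2}$ is then bounded by a supremum over $w\in(\bar B(R_0))^2$. What remains are Gaussian sums of the functions $J_j$, which are multilinear in $(z,w,u)$ by Assumption \ref{assm:MultiLinear}.
\item In Lemma \ref{lemma:TaylorApproxE}, blocking serves to factor out the $(x,\zeta)$-dependence, not to balance a covering. Freezing $Z^{x,\zeta}$ at each block start lets Cauchy--Schwarz bound the supremum over all of $\mathscr D_2$ by block sums $\Vert\sum_{r,h}\xi_{r,h}\widetilde J(\theta^h_{r,\tau})\Vert_{\mathrm{HS}}$. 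Their second moments are explicit and free of $N$, and also of $d,k$ under \eqref{eqn:R}.
\item The resulting trade-off is $(mH)^{-1/2}$ against $m/(L\sqrt H)$, so each block must contain $m\asymp L^{2/3}$ layers. Blocks of $L^{1/3}$ layers, as you propose, give only $L^{-1/6}H^{-1/2}$.
\item Doob's and McDiarmid's inequalities (Lemma \ref{lemma:doobL2}) then turn this $L^1$ bound into the squared maximal bound over layers.
\end{enumerate}

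\textbf{A further issue with Lemma \ref{lemma:ParamDiv}.} You cannot use Lemma \ref{lemma:ParamDiv} as an independent black box. The paper derives it from \eqref{eqn:boundparams}, which bounds $W_2^2(\hat\nu,\bar\nu^{N,L})$ by $D_1\sum_{j<\tau}\kappa^0_{j,\tau}\mathscr L_j$ via Lemmas \ref{lemma:AdamLip} and \ref{lemma:GradCont}. It then closes Lemma \ref{lemma:ParamDiv} and Theorem \ref{theorem:WTS} together with a Gr\"onwall argument over training steps (Lemma \ref{lemma:RecursiveSum}). Your first part (the $\bar\nu$-driven versus $\hat\nu$-driven discrete models) is sound, but only inside such an induction on $\tau$.
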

\begin{remark}
The Euclidean norm of the input data to neural networks typically scales as $\sqrt{d}$. Our framework can be adapted to incorporate this by choosing $\beta=\mathcal O(d^{-1})$, and rescaling the output by $d^{-1/2}$ before applying the loss and rescaling the gradients by $d^{-1}$. Having done this, the bound in Theorem \ref{theorem:WTS} remains valid on the rescaled variables $\widetilde{X}^{i,\bm{y}}_{t,\tau}=\sqrt{d}X^{i,\bm{y}}_{t,\tau}$ and $\widetilde{a}^{i,\bm{y}}_{t,\tau}=d\,a^{i,\bm{y}}_{t,\tau}$, with the Euclidean norms replaced by the RMS norm. The adjoint variable is rescaled by $d$ so that $\widetilde{a}^{i,\bm{y}}_{t,\tau}$ has RMS norm of order one, with the additional factor of $\sqrt{d}$ arising from rescaling the output before applying the loss. For further details on this regime, see \cite{chaintron2026resnets} and Section 4 in \cite{chizat2025hiddenwidthdeepresnets}; Our rescalings differ due to alternative parameter initialisation assumptions. 
\end{remark}
\subsection{Sketch Proof}
 Firstly, we show that Adam updates are bounded (see Lemma \ref{lemma:BoundDelta}), and so the weight decay dominates for sufficiently large parameters. Therefore, there exists a compact set $\mathfrak{S}\in \R^{k\times 4d}$, that depends only on  $\beta_1,\beta_2$ $\lambda$, such that $\nu_{t,\tau}$ and $\bar \nu^{N,L}_{t,\tau}$ are supported in $\mathfrak{S}$ uniformly in $t\in[0,1]$ and training step $\tau\ge 0$. Under AdamW updates, $\mathfrak{S}$ depends on $d$ and $k$. However, this dependence is removed  when using Blockwise AdamW, Algorithm \ref{alg:AdamW}, with $\mathcal R$ given by \eqref{eqn:R}. This implies that the estimates of Appendix \ref{App:D} ensure that the velocity fields $\Gamma$ and $\mathcal K$ in \eqref{eqn:ansatz} have local Lipschitz continuity and exhibit linear growth.

We consider a forward-backwards system of McKean--Vlasov ODEs (MVODEs) \eqref{eqn:MVRODE1}-\eqref{eqn:MVRODE2} associated to the IPS \eqref{eqn:ansatz}. Given initial condition $\xi \in L^2(\Omega^1,\mathcal F^1,\mathbb P^1;\R^d)$ and by writing the solution $(Y^{\xi}_t,p^\xi_t)$ to the forward-backwards system of MVODEs as a fixed point of an operator, we demonstrate that the Picard iterates associated with this operator preserve measurability with respect to the noise induced by the training data sampling. These Picard iterates converge to $(Y^{\xi}_t,p^\xi_t)$ due to Theorem 4.21 in \cite{carmona2018probabilistic} Vol. I. By the stability of measurability under pointwise limits, it follows that $(Y^{\xi}_t,p^\xi_t)$ is $\mathcal F$-measurable, and that its conditional law $\mathcal L^1(Y^{\xi}_t,p^\xi_t)$ given  $\mathcal F^0$ (ie. the training data) is $\mathcal F^0$-measurable for every $t\in[0,1]$. The solution $(Y^{\xi}_t,p^\xi_t)$ conditioned on $\xi=x$ satisfies the ODEs \eqref{eqn:FMY}-\eqref{eqn:FMp}, whose solution we denote $(X^{x,\zeta}_t,a^{x,\zeta}_t)$ where $\zeta$ is the law of $\xi$. By comparing ODEs \eqref{eqn:FMY}-\eqref{eqn:FMp} to  \eqref{eqn:ansatz}, we obtain that the IPS \eqref{eqn:ansatz} has a unique $\mathcal F^0$-measurable solution $(X^{i,\bm{y}}_{t,\tau},a^{i,\bm{y}}_{t,\tau})$, which coincides with $(X^{x,\zeta}_t,a^{x,\zeta}_t)$ when $x=y_i$ and $\zeta$ is the empirical measure on the atoms $(y_i)_{i=1}^N$. This provides a framework to study the transformer that is independent of the number of tokens.

Next, define
    \[
    M^{x,\zeta}_{r,\tau}= \int_0^{\frac{r}{L}}\Gamma\left(X^{x,\zeta}_s, (X^{\cdot,\zeta}_s)_\#\zeta, \nu_{r_s,\tau}\right)- \Gamma(X^{x,\zeta}_s,(X^{\cdot,\zeta}_s)_\#\zeta,\hat\nu^{N,L}_{s,\tau}) ds,
    \]
for $\zeta\in \mathcal P(\bar B(R_0))$ and $x\in\bar B(R_0)$. A more general version of $M^{x,\zeta}_{r,\tau}$ is introduced in Appendix \ref{App:H} to study the forward and backward equations at the same time. In Lemma \ref{lemma:doobL2}, we show that $M^{x,\zeta}_{r,\tau}$ is a martingale with respect to the natural filtration generated by the initialised parameters. Accordingly, the supremum over $(x,\zeta)$ on $\Vert M^{x,\zeta}_{r,\tau}\Vert$, which we denote $M^*_{r,\tau}$, is a sub-martingale. Hence,  Doob's $L^2$ inequality \cite{revuz2013continuous} can be applied to bound the $L^2$ norm on the maximum value of $M^*_{r,\tau}$ over the layers $r\in[0:L]$ by the $L^2$ norm on $M^*_{L,\tau}$. 

Notice that $\gamma(z,\mu)$ and its derivatives, given in equations \eqref{eqn:gamma}, \eqref{eqn:Dzgamma} and \eqref{eqn:Dmugamma} respectively, can be expressed as ratios of functions that depend linearly on either the measure $\mu$ or the product measure $\mu\otimes \mu$. By exploiting the independence of the parameters at initialisation, concentration of measure techniques can be applied to bound the $L^1$ norm on $M^*_{L,\tau}$ above by an expression that depends linearly in $\zeta$ or $\zeta\otimes \zeta$. This argument is formalised in Lemma \ref{lemma:L1Mstar}. Linear functionals on probability measures $\zeta\in \mathcal P(\bar B(R_0))$ have extreme points at Dirac masses, which reduces the space over which the supremum is taken from the infinite-dimensional $\mathcal P (\bar B(R_0))$ to $\bar B(R_0)\times \bar B(R_0) $. Thus, the uniform bounds lose their dependence on the number of tokens.
    
In Lemma \ref{lemma:TaylorApproxE}, we split the layers into $\lceil L/m\rceil$ blocks, and approximate $X^{x,\zeta}_t$ within each block by a Taylor expansion in $t$. The $(x,\zeta)$ dependence in the bound on the $L^1$ norm of $M^*_{L,\tau}$ from Lemma \ref{lemma:L1Mstar} only involves terms that are multi-linear in $X^{x,\zeta}_t$. Furthermore, for each $l\in[0:\lceil L/m\rceil-1]$, the Taylor approximation on layers $r\in[lm:(l+1)m-1]$ only involves derivatives of $X^{x,\zeta}_t$ evaluated at $r=lm$. Therefore, we can factor out the dependency of $\partial^j_tX^{x,\zeta}_{t}$ from each block of $m$ layers. By bounding these derivatives uniformly in $(x,\zeta)$, we obtain a bound on the $L^1$ norm of $M^*_{L,\tau}$ without using a covering argument over the domain of the supremum. Having split the layers into blocks and factored out the $X_t^{x,\zeta}$ dependence, we observe averaging over $mH$ parameters in each block. Since the parameters are initialised independently and have been explicitly scaled by $(LH)^{-1}$, the $L^2$ norm on this average scales as $\sqrt{mL^{-2}H^{-1}}$. By bounding the error between $X^{x,\zeta}_t$ and its Taylor approximation in standard fashion, we optimise over $m$ to obtain our bound on the $L^1$ norm of $M^*_{L,\tau}$. Finally, noting that $\Gamma$ satisfies a bounded differences property whenever the measure $\nu$ on the parameters is empirical, McDiarmid's inequality is used in Lemma \ref{lemma:doobL2} to control the difference between the $L^1$ and $L^2$ norms.

 In Lemma \ref{lemma:StochasticApprox}, the error $\Vert X^{i,\bm{y}}_{(r+1)/L,\tau}-\bar X^{i,L,\bm{y}}_{(r+1)/L,\tau}\Vert$ is bounded by three terms. The first is $M^{x,\zeta}_{r+1,\tau}$ for $x,\zeta$ as above. The second arises from evaluating the parameters $\nu_{r_{s}}$ in $M^{x,\zeta}_{r,\tau}$ at the discrete time $r_s$, and is controlled using the Lipschitz continuity of $t\mapsto \nu_{t,\tau}$ established in Lemma \ref{lemma:Lipt}. The third term is bounded using the Lipschitz continuity of $\Gamma$, yielding the expression given in Lemma \ref{lemma:ParamDiv} together with terms involving $\Vert X^{i,\bm{y}}_{l/L,\tau}-\bar X^{i,L,\bm{y}}_{l/L,\tau}\Vert$ for $l\le r$, which are handled by Gr\"onwall's Lemma. By construction, the parameters $\bm{\hat \nu}_0^{N,L}$ and $\bm{\bar \nu}_0^{N,L}$ agree at initialisation. Hence, using the coupling induced by pushing forward $\bar \nu^{N,L}_{t,0}$ under $(\Phi_{t,\tau}[\bm \nu],\Phi_{t,\tau}[\bm{\bar {\nu}}^{N,L}])$, we find that
 \[
  W_2^2(\hat\nu_{t,\tau},\bar \nu^{N,L}_{t,\tau}) \le \int \Vert \Phi_{t,\tau}[\bm \nu](\theta)- \Phi_{t,\tau}[\bm{\bar\nu}^{N,L}](\theta)\Vert_\mathrm{F}^2\,d\bar\nu^{N,L}_{t,0}(\theta).
 \]
 By Lemma \ref{lemma:AdamLip}, the difference between $\theta_{t,\tau}:=\Phi_{t,\tau}[\bm\nu](\theta)$ and $\vartheta_{t,\tau}:=\Phi_{t,\tau}[\bm{\bar{\nu}}^{N,L}](\theta)$ is bounded by the differences in the gradients $g_{t,j+1}[\nu_j](\theta_{t,j})$ and $g_{t,j+1}[\bar\nu^{N,L}_j](\vartheta_{t,j})$, for $j<\tau$. In turn, by Lemma \ref{lemma:GradCont}, the squared Frobenius norm of the difference between the gradient maps $g_{t,j+1}[\bar\nu^{N,L}_j](\theta_{t,j})$ and $g_{t,j+1}[\bar\nu^{N,L}_j](\vartheta_{t,j})$ is bounded above by $\Vert \theta_{t,j}-\vartheta_{t,j}\Vert^2_\mathrm{F}+ \widetilde{\mathscr{L}}_j$, where $\widetilde{\mathscr{L}}_j$ is defined by
 \[
 \widetilde{\mathscr{L}}_j :=\sup_{t\in[0,1]}\sup_{\bm{y}\in (\bar B(R_0))^N}\max_{i\in[1:N]}\Vert X^{i,\bm{y}}_{t,j}-\bar X^{i,L,\bm{y}}_{t,j}\Vert^2+\Vert a^{i,\bm{y}}_{t,j}-\bar a^{i,L,\bm{y}}_{t,j}\Vert^2
 \]
 Therefore, the squared Frobenius norm between $\Phi_{t,\tau}[\bm\nu](\theta)$ and $\Phi_{t,\tau}[\bm{\bar\nu}^{N,L}](\theta)$ can be $\mathbb P$-a.s. bounded uniformly in $\theta\in \mathfrak{S}$ by a linear combination of $\mathscr{L}_j$ for $j<\tau$. Then, taking the supremum over the initial conditions, particle index and time in the three errors described above and then taking the expectation under $\mathbb P^1$, we obtain that $\E^1[\widetilde{\mathscr{L}}_\tau]$ is bounded by $C(L^{-2}+ L^{-2/3}H^{-1})$, for some $C>0$, plus a linear combination of $\E^1[\widetilde{\mathscr{L}}_j]$ for $j<\tau$. Hence, in Appendix \ref{app:J},  Lemma \ref{lemma:ParamDiv} and Theorem \ref{theorem:WTS} are concluded by a Gr\"onwall type argument over the layers and training steps.

\section{Conclusion and Future Work}
We demonstrated that the dynamics of the hidden state and adjoint variables of AdamW-trained transformers converge in $L^2$ to a forward--backward system of ODEs. The convergence holds uniformly over the initial conditions, with bounds independent of the number of tokens. If a Blockwise AdamW optimiser, introduced in \cite{xie2024adam}, is used instead, the bounds can be made independent of the token embedding dimension. We have had to sacrifice the order $\mathcal O(L^{-1}+(LH)^{-1/2})$ rate of convergence seen in \cite{chizat2025hiddenwidthdeepresnets} to  $\mathcal O(L^{-1}+L^{-1/3}H^{-1/2})$ to produce these uniform bounds. Future research will be done to close this gap. Furthermore, our bounds hold up to finite training time. We expect that, similarly to \cite{marion2024implicit}, they can be extended globally in training by exploiting the adjoint variable tending to zero as the parameters approach a local minimum of the loss.

\section{Acknowledgements}
The authors would also acknowledge support from His Majesty’s Government in the development of this research.

\bibliographystyle{plain}
\bibliography{Transformer}

\begin{thebibliography}{10}

\bibitem{agazzi2026stochastic}
Andrea Agazzi, Giuseppe Bruno, Eloy~Mosig Garc{\'\i}a, Samuele Saviozzi, and
  Marco Romito.
\newblock {Stochastic Scaling Limits and Synchronization by Noise in Deep
  Transformer Models}.
\newblock {\em arXiv preprint arXiv:2604.26898}, 2026.

\bibitem{NIPS2017_3f5ee243}
{Ashish Vaswani and Noam Shazeer and Niki Parmar and Jakob Uszkoreit and Llion
  Jones and Aidan N Gomez and \L{}ukasz Kaiser and Illia Polosukhin}.
\newblock Attention is {A}ll you {N}eed.
\newblock In I.~Guyon, U.~Von Luxburg, S.~Bengio, H.~Wallach, R.~Fergus,
  S.~Vishwanathan, and R.~Garnett, editors, {\em Advances in Neural Information
  Processing Systems}, volume~30. Curran Associates, Inc., 2017.

\bibitem{avelin2020neuralodesdeeplimit}
Benny Avelin and Kaj Nystr\"{o}m.
\newblock {Neural ODEs as the Deep Limit of ResNets with Constant Weights}.
\newblock {\em Analysis and Applications}, 19(03):397--437, 2021.

\bibitem{barboni2025understandingtraininginfinitelydeep}
Rapha\"el Barboni, Gabriel Peyr\'e, and Fran\c{c}ois-Xavier Vialard.
\newblock {Understanding the Training of Infinitely Deep and Wide ResNets with
  Conditional Optimal Transport}.
\newblock {\em Communications on Pure and Applied Mathematics},
  78(11):2149--2205, 2025.

\bibitem{doi:10.1137/S0363012993253534}
Michel Benaim.
\newblock {A Dynamical System Approach to Stochastic Approximations}.
\newblock {\em SIAM Journal on Control and Optimization}, 34(2):437--472, 1996.

\bibitem{geshkovski2024emergenceclustersselfattentiondynamics}
{Borjan Geshkovski and Cyril Letrouit and Yury Polyanskiy and Philippe
  Rigollet}.
\newblock {The {E}mergence of {C}lusters in {S}elf-{A}ttention {D}ynamics}.
\newblock In {\em {Advances in Neural Information Processing Systems}},
  volume~36, pages 57026--57037. Curran Associates, Inc., 2023.

\bibitem{brown2020languagemodelsfewshotlearners}
Tom Brown, Benjamin Mann, Nick Ryder, Melanie Subbiah, Jared~D Kaplan, Prafulla
  Dhariwal, Arvind Neelakantan, Pranav Shyam, Girish Sastry, Amanda Askell,
  Sandhini Agarwal, Ariel Herbert-Voss, Gretchen Krueger, Tom Henighan, Rewon
  Child, Aditya Ramesh, Daniel Ziegler, Jeffrey Wu, Clemens Winter, Chris
  Hesse, Mark Chen, Eric Sigler, Mateusz Litwin, Scott Gray, Benjamin Chess,
  Jack Clark, Christopher Berner, Sam McCandlish, Alec Radford, Ilya Sutskever,
  and Dario Amodei.
\newblock {Language Models are Few-Shot Learners}.
\newblock In H.~Larochelle, M.~Ranzato, R.~Hadsell, M.F. Balcan, and H.~Lin,
  editors, {\em Advances in Neural Information Processing Systems}, volume~33,
  pages 1877--1901. Curran Associates, Inc., 2020.

\bibitem{carmona2018probabilistic}
Ren{\'e} Carmona and Fran{\c{c}}ois Delarue.
\newblock {\em Probabilistic theory of {M}ean {F}ield {G}ames with
  {A}pplications I-II}, volume~3.
\newblock Springer, 2018.

\bibitem{castin2025unifiedperspectivedynamicsdeep}
Val{\'e}rie Castin, Pierre Ablin, Jos{\'e}~Antonio Carrillo, and Gabriel
  Peyr{\'e}.
\newblock A {U}nified {P}erspective on the {D}ynamics of {D}eep {T}ransformers.
\newblock {\em arXiv preprint arXiv:2501.18322}, 2025.

\bibitem{how-smooth-is-attention}
Val\'{e}rie Castin, Pierre Ablin, and Gabriel Peyr\'{e}.
\newblock How {S}mooth is {A}ttention?
\newblock In {\em Proceedings of the 41st International Conference on Machine
  Learning}, ICML'24, pages 5817 -- 5840. JMLR.org, 2024.

\bibitem{chizat2025hiddenwidthdeepresnets}
L{\'e}na{\"\i}c Chizat.
\newblock {The Hidden Width of Deep ResNets: Tight Error Bounds and Phase
  Diagrams}.
\newblock {\em arXiv preprint arXiv:2509.10167}, 2025.

\bibitem{chizat2018globalconvergencegradientdescent}
Lenaic Chizat and Francis Bach.
\newblock {On the Global Convergence of Gradient Descent for Over-parameterized
  Models using Optimal Transport}.
\newblock In {\em {Advances in Neural Information Processing Systems}},
  Proceedings of the 32nd International Conference on Neural Information
  Processing Systems, pages 3040--3050, Montr{\'e}al, Canada, December 2018.

\bibitem{ding2021globalconvergencegradientdescent}
Zhiyan Ding, Shi Chen, Qin Li, and Stephen Wright.
\newblock {On the Global Convergence of Gradient Descent for Multi-Layer
  ResNets in the Mean-Field Regime}.
\newblock {\em arXiv preprint arXiv:2110.02926}, 2021.

\bibitem{ding2}
Zhiyan Ding, Shi Chen, Qin Li, and Stephen Wright.
\newblock {Overparameterization of Deep ResNet: Zero Loss and Mean-Field
  Analysis}.
\newblock {\em J. Mach. Learn. Res.}, 23(1), January 2022.

\bibitem{dosovitskiy2021imageworth16x16words}
Alexey Dosovitskiy, Lucas Beyer, Alexander Kolesnikov, Dirk Weissenborn,
  Xiaohua Zhai, Thomas Unterthiner, Mostafa Dehghani, Matthias Minderer, Georg
  Heigold, Sylvain Gelly, Jakob Uszkoreit, and Neil Houlsby.
\newblock An {I}mage is worth 16x16 words: {T}ransformers for {I}mage
  {R}ecognition at {S}cale.
\newblock In {\em International Conference on Learning Representations}, 2021.

\bibitem{Durrett_2019}
Rick Durrett.
\newblock {\em {Probability: Theory and Examples}}.
\newblock {Cambridge Series in Statistical and Probabilistic Mathematics}.
  Cambridge University Press, 5 edition, 2019.

\bibitem{Eng89}
Ryszard Engelking.
\newblock {\em General Topology}, volume~6 of {\em Sigma Series in Pure
  Mathematics}.
\newblock Heldermann, Berlin, 1989.

\bibitem{fedorov2026clustering}
Lev Fedorov, Micha{\"e}l~E Sander, Romuald Elie, Pierre Marion, and Mathieu
  Lauri{\`e}re.
\newblock {Clustering in Deep Stochastic Transformers}.
\newblock {\em arXiv preprint arXiv:2601.21942}, 2026.

\bibitem{furuya2024transformersuniversalincontextlearners}
Takashi Furuya, Maarten~V. de~Hoop, and Gabriel Peyr{\'e}.
\newblock Transformers are {U}niversal {I}n-context {L}earners.
\newblock In {\em The Thirteenth International Conference on Learning
  Representations}, 2025.

\bibitem{gao2024global}
Cheng Gao, Yuan Cao, Zihao Li, Yihan He, Mengdi Wang, Han Liu, Jason~M.
  Klusowski, and Jianqing Fan.
\newblock {Global Convergence in Training Large-Scale Transformers}.
\newblock In {\em Advances in Neural Information Processing Systems},
  volume~37, pages 29213--29284. Curran Associates, Inc., 2024.

\bibitem{geshkovski2025mathematicalperspectivetransformers}
Borjan Geshkovski, Cyril Letrouit, Yury Polyanskiy, and Philippe Rigollet.
\newblock {A Mathematical Perspective on Transformers}.
\newblock {\em Bulletin of the American Mathematical Society}, 62(3):427--479,
  2025.

\bibitem{pmlr-v9-glorot10a}
Xavier Glorot and Yoshua Bengio.
\newblock {Understanding the Difficulty of Training Deep Feedforward Neural
  Networks}.
\newblock In Yee~Whye Teh and Mike Titterington, editors, {\em Proceedings of
  the Thirteenth International Conference on Artificial Intelligence and
  Statistics}, volume~9 of {\em Proceedings of Machine Learning Research},
  pages 249--256, Chia Laguna Resort, Sardinia, Italy, 13--15 May 2010. PMLR.

\bibitem{grattafiori2024llama3herdmodels}
Aaron Grattafiori, Abhimanyu Dubey, Abhinav Jauhri, Abhinav Pandey, Abhishek
  Kadian, Ahmad Al-Dahle, Aiesha Letman, Akhil Mathur, Alan Schelten, Alex
  Vaughan, et~al.
\newblock {The Llama 3 Herd of Models}.
\newblock {\em arXiv preprint arXiv:2407.21783}, 2024.

\bibitem{Han2017}
Xiaoying Han and Peter~E. Kloeden.
\newblock {\em Random Ordinary Differential Equations}, pages 15--27.
\newblock Springer Singapore, Singapore, 2017.

\bibitem{devlin2019bert}
{Jacob Devlin and Ming-Wei Chang and Kenton Lee and Kristina Toutanova}.
\newblock {Bert: Pre-training of Deep Bidirectional Transformers for Language
  Understanding}.
\newblock In {\em {Proceedings of the 2019 Conference of the North American
  Chapter of the Association for Computational Linguistics: Human Language
  Technologies, Volume 1 (long and short papers)}}, pages 4171--4186, 2019.

\bibitem{kaplan2020scalinglawsneurallanguage}
Jared Kaplan, Sam McCandlish, Tom Henighan, Tom~B Brown, Benjamin Chess, Rewon
  Child, Scott Gray, Alec Radford, Jeffrey Wu, and Dario Amodei.
\newblock Scaling {L}aws for {N}eural {L}anguage {M}odels.
\newblock {\em arXiv preprint arXiv:2001.08361}, 2020.

\bibitem{koubbi2026homogenized}
Hugo Koubbi, Borjan Geshkovski, and Philippe Rigollet.
\newblock {Homogenized Transformers}.
\newblock {\em arXiv preprint arXiv:2604.01978}, 2026.

\bibitem{Kumar_2021}
Chaman Kumar and Neelima.
\newblock {On Explicit Milstein-type scheme for McKean--Vlasov Stochastic
  Differential Equations with Super-Linear Drift Coefficient}.
\newblock {\em Electronic Journal of Probability}, 26(none), January 2021.

\bibitem{deepseekai2025deepseekv3technicalreport}
Aixin Liu, Bei Feng, Bing Xue, Bingxuan Wang, Bochao Wu, Chengda Lu, Chenggang
  Zhao, Chengqi Deng, Chenyu Zhang, Chong Ruan, et~al.
\newblock {Deepseek-v3 Technical Report}.
\newblock {\em arXiv preprint arXiv:2412.19437}, 2024.

\bibitem{liu2025muon}
Jingyuan Liu, Jianlin Su, Xingcheng Yao, Zhejun Jiang, Guokun Lai, Yulun Du,
  Yidao Qin, Weixin Xu, Enzhe Lu, Junjie Yan, et~al.
\newblock {Muon is Scalable for LLM Training}.
\newblock {\em arXiv preprint arXiv:2502.16982}, 2025.

\bibitem{loshchilov2018decoupled}
Ilya Loshchilov and Frank Hutter.
\newblock {Decoupled Weight Decay Regularization}.
\newblock In {\em 7th International Conference on Learning Representations,
  {ICLR} 2019}, 2019.

\bibitem{chaintron2026resnets}
{Louis-Pierre Chaintron and L{\'e}na{\"\i}c Chizat and Javier Maas}.
\newblock {Resnets of All Shapes and Sizes: Convergence of Training Dynamics in
  the Large-Scale Limit}.
\newblock {\em arXiv preprint arXiv:2603.18168}, 2026.

\bibitem{Ambrosio2008}
{Luigi Ambrosio and Nicola Gigli and Giuseppe Savar\'e}.
\newblock {\em {Absolutely Continuous Curves in $\mathcal P(X)$ and the
  Continuity Equation}}, pages 167--200.
\newblock {Birkh{\"a}user Basel}, {Basel}, 2008.

\bibitem{Ma2007}
Jin Ma and Jiongmin Yiong.
\newblock {\em {Forward-Backward Stochastic Differential Equations and their
  Applications}}, pages 1--24.
\newblock Springer Berlin Heidelberg, Berlin, Heidelberg, 2007.

\bibitem{marion2024implicit}
Pierre Marion, Yu-Han Wu, Michael~Eli Sander, and G{\'e}rard Biau.
\newblock Implicit {R}egularization of {D}eep {R}esidual {N}etworks towards
  {N}eural {ODE}s.
\newblock In {\em The Twelfth International Conference on Learning
  Representations}, 2024.

\bibitem{McDiarmid_1989}
Colin McDiarmid.
\newblock {\em {On the Method of Bounded Differences}}, page 148–188.
\newblock {London Mathematical Society Lecture Note Series}. {Cambridge
  University Press}, 1989.

\bibitem{Ledoux1991}
{Michel Ledoux and Michel Talagrand}.
\newblock {\em {Gaussian Random Variables}}, pages 54--88.
\newblock {Springer Berlin Heidelberg}, {Berlin, Heidelberg}, 1991.

\bibitem{isobe2024convergenceresultcontinuousmodel}
{Noboru Isobe}.
\newblock {A Convergence Result of a Continuous Model of Deep Learning via
  \L{}ojasiewicz-Simon inequality}.
\newblock {\em CoRR}, abs/2311.15365, 2023.

\bibitem{Kallenberg2021}
{Olav Kallenberg}.
\newblock {\em {Sets and Functions, Measures and Integration}}, pages 9--32.
\newblock Springer International Publishing, Cham, 2021.

\bibitem{SSDMs}
{Pascal Vincent}.
\newblock {A Connection between Score Matching and Denoising Autoencoders}.
\newblock {\em Neural Comput.}, 23(7):1661–1674, July 2011.

\bibitem{billingsley2013convergence}
{Patrick Billingsley}.
\newblock {\em {Convergence of Probability Measures}}.
\newblock {Wiley Series in Probability and Statistics}. Wiley, 2013.

\bibitem{peebles2023scalable}
William Peebles and Saining Xie.
\newblock {Scalable Diffusion Models with Transformers}.
\newblock In {\em {Proceedings of the IEEE/CVF International Conference on
  Computer Vision}}, pages 4195--4205, 2023.

\bibitem{peng}
{Rainer Buckdahn and Juan Li and Shige Peng and Catherine Rainer}.
\newblock {Mean-Field Stochastic Differential Equations and Associates PDEs}.
\newblock {\em {The Annals of Probability}}, 45(2):824--878, 2017.

\bibitem{revuz2013continuous}
Daniel Revuz and Marc Yor.
\newblock {\em Continuous Martingales and Brownian Motion}.
\newblock Grundlehren der Mathematischen Wissenschaften. Springer Berlin
  Heidelberg, 2013.

\bibitem{rigollet2025meanfielddynamicstransformers}
Philippe Rigollet.
\newblock {The Mean-Field Dynamics of Transformers}.
\newblock {\em arXiv preprint arXiv:2512.01868}, 2025.

\bibitem{sander2022sinkformerstransformersdoublystochastic}
Michael~E Sander, Pierre Ablin, Mathieu Blondel, and Gabriel Peyr{\'e}.
\newblock Sinkformers: {T}ransformers with {D}oubly {S}tochastic {A}ttention.
\newblock In {\em International Conference on Artificial Intelligence and
  Statistics}, pages 3515--3530. PMLR, 2022.

\bibitem{santambrogio2015optimal}
Filippo Santambrogio.
\newblock {\em {Optimal Transport for Applied Mathematicians: Calculus of
  Variations, PDEs, and Modeling}}.
\newblock Progress in Nonlinear Differential Equations and Their Applications.
  Springer International Publishing, 2015.

\bibitem{AZHMYAKOV201987}
{Vadim Azhmyakov}.
\newblock {Chapter 4 - Short Course in Continuous Time Dynamic Systems and
  Control}.
\newblock In {\em {A Relaxation-Based Approach to Optimal Control of Hybrid and
  Switched Systems}}, pages 87--126. {Butterworth-Heinemann}, 2019.

\bibitem{vandeGeer2016}
Sara van~de Geer.
\newblock {\em {Symmetrization, Contraction and Concentration}}, pages
  233--238.
\newblock {Springer International Publishing}, Cham, 2016.

\bibitem{vanderVaart1996}
Aad~W. van~der Vaart and Jon~A. Wellner.
\newblock {\em {Symmetrization and Measurability}}, pages 107--121.
\newblock {Springer New York}, New York, NY, 1996.

\bibitem{villani2008optimal}
C\'{e}dric Villani.
\newblock {\em {Optimal Transport: Old and New}}.
\newblock Grundlehren der Mathematischen Wissenschaften. Springer Berlin
  Heidelberg, 2008.

\bibitem{Wainwright_2019}
Martin~J. Wainwright.
\newblock {\em {Metric Entropy and its Uses}}, pages 121--158.
\newblock Cambridge Series in Statistical and Probabilistic Mathematics.
  Cambridge University Press, 2019.

\bibitem{pmlr-v235-xie24e}
Shuo Xie and Zhiyuan Li.
\newblock {Implicit Bias of {A}dam{W}: $\ell_\infty$-Norm Constrained
  Optimization}.
\newblock In {\em Proceedings of the 41st International Conference on Machine
  Learning}, volume 235 of {\em Proceedings of Machine Learning Research},
  pages 54488--54510. PMLR, 21--27 Jul 2024.

\bibitem{xie2024adam}
Shuo Xie, Mohamad~Amin Mohamadi, and Zhiyuan Li.
\newblock {Adam Exploits $\ell_\infty$-Geometry of Loss Landscape via
  Coordinate-wise Adaptivity}.
\newblock {\em arXiv preprint arXiv:2410.08198}, 2024.

\bibitem{xiong2020layernormalizationtransformerarchitecture}
Ruibin Xiong, Yunchang Yang, Di~He, Kai Zheng, Shuxin Zheng, Chen Xing,
  Huishuai Zhang, Yanyan Lan, Liwei Wang, and Tie-Yan Liu.
\newblock On {L}ayer {N}ormalization in the {T}ransformer {A}rchitecture.
\newblock In {\em Proceedings of the 37th International Conference on Machine
  Learning}, ICML'20. JMLR.org, 2020.

\bibitem{zhao2025deconstructingmakesgoodoptimizer}
Rosie Zhao, Depen Morwani, David Brandfonbrener, Nikhil Vyas, and Sham~M.
  Kakade.
\newblock {Deconstructing What Makes a Good Optimizer for Autoregressive
  Language Models}.
\newblock In {\em The Thirteenth International Conference on Learning
  Representations}, 2025.

\end{thebibliography}

\begin{appendices}
   \section{Background Material}\label{App:A}
\subsection{Notation \& Conventions}\label{App:A1}
We shall write $r_t$ for the time discretisation $r_t=\lfloor L t \rfloor/L$. The symbols $\vee,\wedge$ denote $a\wedge b =\mathrm{min}\{a,b\}$ and $a\vee b =\mathrm{max}\{a,b\}$ for $a,b\in\R$. For $n,m \in \mathbb Z_{\ge 0}$, we write $[n:m]=\{i \in \mathbb Z: n\le i \le m\}$.

For any radius $R>0$, let us write $\bar B_d(R)$ for the subset of $\R^d$ of elements with Euclidean norm less than or equal to $R$. The subscript $d$ will be dropped when it is the token embedding dimension. Let $\mathcal P_p(\R^d)$ denote the space of Borel probability measures on $\R^d$ with finite $p$-moment, $p\in [1,\infty)$, and write $\mathcal P_c(\R^d)$ for the space of compactly supported measures on $\R^d$. Also, for any compact subset $K$ of $\R^d$, denote the space of Borel probability measures with support contained in $K$ by $\mathcal P(K)$.

On a measurable space $(\widetilde{\Omega},\widetilde{\mathcal F})$ and for any separable metric space $E$, the map $Z:\widetilde{\Omega}\mapsto E$ is said to be $\widetilde{\mathcal F}$-measurable if it is $(\widetilde{\mathcal F},\mathcal B(E))$-measurable, where $\mathcal B(E)$ is the Borel $\sigma$-algebra on $E$. For any Borel measurable $T:E_1\mapsto E_2$, where $E_1,E_2$ are separable metric spaces, we denote the pushforward measure of $\mu\in \mathcal P(E_1)$ under $T$ by $T_\#\mu$ so that $T_\#\mu(D)=\mu(T^{-1}(D))$ for every $D\in \mathcal B(E_2)$.

Motivated by bounding each block of $\theta \in \R^{k\times 4d}$ separately, define, for any $R>0$, the set 
\begin{equation}\label{eqn:mathfrak_S}
    \mathfrak{S}(R) = \{ \theta \in \R^{k\times 4d}: \Vert \mathcal R_1(\theta)\Vert_\infty \le R\},
\end{equation}
where $\mathcal R_1$ is given by equation \eqref{eqn:R}.
\subsection{Optimal Transport}
Given $\mu,\nu \in \mathcal P_p(\R^d)$, the $p$-Wasserstein distance is given by
\[
 W_{p}(\mu,\nu) =\left(\inf_{\pi\in \Pi(\mu,\nu)}\int \Vert x-y\Vert^p d\pi(x,y)\right)^{1/p}.
\]
Where $\Pi(\mu,\nu)$ denotes the subset of $\mathcal P_p(\R^d\times \R^d)$ with marginals $\mu,\nu$.
The limit as $p\to \infty$ for $ W_{p}$ is well-defined for measures $\mu,\nu \in \mathcal P_c(\R^d)$, where the limiting value coincides with
\[
 W_{\infty}(\mu,\nu) = \underset{\pi\in \Pi(\mu,\nu)}{inf} \Vert x-y\Vert_{L^{\infty}(\pi)}.
\]
See \cite{villani2008optimal} for further details on the Wasserstein distance and Optimal Transport. Further details on $ W_{\infty}$ are given in \cite{santambrogio2015optimal}. 
\subsection{Lions' Derivative}
Let  $(\widetilde{\Omega}, \widetilde{\mathcal F},\widetilde{\mathbb P})$ be an atomless Polish probability space and take $\nu \in \mathcal P_2(\R^d)$ and $H:\mathcal P_2(\R^d)\mapsto \R$. Suppose that there exists a square integrable random variable $Y$ on $(\widetilde{\Omega}, \widetilde{\mathcal F},\widetilde{\mathbb P})$, and a lifting \\$\hat H:L^2(\widetilde{\Omega}, \widetilde{\mathcal F},\widetilde{\mathbb P})\to\R$ such that $\hat H(Y)=H(\nu)$ is Fr\'echet differentiable at $Y$.  Then, by the Reisz representation theorem, there exists a unique function $\partial_\mu H(\nu):\R^d\mapsto\R^d$ independent of the choice of the random variable $Y$ and lifting $\hat H$ such that for any $Z\in L^2(\widetilde{\Omega}, \widetilde{\mathcal F},\widetilde{\mathbb P};\R^d)$
\[
\widetilde{\mathbb E}[\langle D\hat H(Y),Z\rangle]= \widetilde{\mathbb E}[\langle\partial_{\mu}H(\nu)(Y),Z\rangle],
\]
where $\widetilde{\E}$ is the expectation with respect to the measure $\widetilde{\mathbb P}$.
The Lions derivative of $H$ at $\nu$ is given by $\partial_\mu H(\nu)$. See Chapter 5.2 in \cite{carmona2018probabilistic} Vol. I for further details.
With the Lions derivative machinery, the strategy for finding Lipschitz constant using the Mean Value Theorem can be extended to measures \cite{Kumar_2021}.
\begin{lemma}\label{ProbMeasureMVT}
    Suppose $f:\mathcal{P}_2(\mathbb{R}^d) \mapsto \mathbb{R}$ is a continuous function such that its Lions derivative $\partial_{\mu}f: \mathcal{P}_2(\mathbb{R}^d)\times \mathbb{R}^d \mapsto \mathbb{R}^d$ exists. Then there exists $t\in (0,1)$ such that 
    \[
    f(\tilde\mu)-f(\mu) = \widetilde{\mathbb{E}}\left[\left\langle\partial_{\mu}f\left(\mathcal{L}(Z + t(\tilde{Z}-Z)),Z + t(\tilde{Z}-Z)\right),\tilde{Z}-Z\right\rangle\right],
    \]
    for any $\mu,\tilde{\mu}\in \mathcal{P}_2(\mathbb{R}^d)$ and random variables $Z,\tilde{Z}$, defined on an atomless Polish probability space $(\widetilde{\Omega}, \widetilde{\mathcal{F}},\widetilde{\mathbb{P}})$, such that $\mathcal{L}(Z)=\mu,\mathcal{L}(\tilde{Z})=\tilde{\mu}$. Here, $\widetilde{\E}$ is the expectation with respect to the measure $\widetilde{\mathbb P}$.
\end{lemma}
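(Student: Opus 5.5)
We prove Lemma \ref{ProbMeasureMVT} by reducing it to the classical one-dimensional mean value theorem applied to the lifting of $f$ along a segment in $L^2$.

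\textbf{The scalar function.} Fix $Z,\tilde Z\in L^2(\widetilde\Omega,\widetilde{\mathcal F},\widetilde{\mathbb P};\R^d)$ with $\mathcal L(Z)=\mu$ and $\mathcal L(\tilde Z)=\tilde\mu$. Let $\hat f(X):=f(\mathcal L(X))$ be the canonical lifting on $L^2$, and define
\[
\varphi(s):=\hat f(Z+s(\tilde Z-Z)), \qquad s\in[0,1].
\]
Then $\varphi(0)=f(\mu)$ and $\varphi(1)=f(\tilde\mu)$.

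\textbf{Continuity on $[0,1]$.} The map $s\mapsto \mathcal L(Z+s(\tilde Z-Z))$ is continuous into $(\mathcal P_2(\R^d),W_2)$, because
\[
W_2\bigl(\mathcal L(X_s),\mathcal L(X_{s'})\bigr)\le \Vert X_s-X_{s'}\Vert_{L^2}=|s-s'|\,\Vert \tilde Z-Z\Vert_{L^2},
\]
where $X_s:=Z+s(\tilde Z-Z)$. Since $f$ is continuous, $\varphi$ is continuous on $[0,1]$.

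\textbf{Differentiability on $(0,1)$.} Lions differentiability of $f$ at $\mathcal L(X_s)$ means the lifting $\hat f$ is Fréchet differentiable at $X_s$. (Differentiability at one random variable with the given law transfers to every such variable on an atomless Polish space; see Carmona--Delarue Vol.\ I, Ch.\ 5.2.) Its Fréchet derivative is represented by $\partial_\mu f(\mathcal L(X_s))(X_s)$. Because $s\mapsto X_s$ is affine in $L^2$ with derivative $\tilde Z-Z$, the chain rule for Fréchet derivatives gives
\[
\varphi'(s)=\widetilde{\mathbb E}\bigl[\langle \partial_\mu f(\mathcal L(X_s))(X_s),\tilde Z-Z\rangle\bigr].
\]

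\textbf{Conclusion.} The classical mean value theorem yields some $t\in(0,1)$ with $\varphi(1)-\varphi(0)=\varphi'(t)$, which is exactly the claimed identity.

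\textbf{Main obstacle.} The delicate step is justifying that $\hat f$ is Fréchet differentiable at every point $X_s$, not merely at the single random variable $Y$ appearing in the definition. I would handle this by reading the hypothesis "the Lions derivative exists" as $L$-differentiability at every measure in $\mathcal P_2(\R^d)$. The law-invariance property (Carmona--Delarue, Vol.\ I, Prop.\ 5.24) then gives Fréchet differentiability at every $X$ with the prescribed law, with derivative $\partial_\mu f(\mathcal L(X))(X)$.

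Only differentiability along the segment, not continuity of the derivative, is needed for the mean value theorem. Nevertheless, I would note that the representative $\partial_\mu f(\mathcal L(X_s))(\cdot)$ must be chosen measurably so that the expectation is well defined. This holds because it is the $L^2$ Riesz representative, so it lies in $L^2$ and the expectation is finite.
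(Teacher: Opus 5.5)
Your proof is correct and matches the argument the paper relies on. The paper gives no proof of its own and defers to Lemma~5 of \cite{Kumar_2021}, which likewise restricts the lifting $\hat f(X)=f(\mathcal L(X))$ to the segment $s\mapsto Z+s(\tilde Z-Z)$ in $L^2$, differentiates it using Fr\'echet differentiability of the lift, and applies the scalar mean value theorem; you also handle the one delicate point correctly, namely transferring Fr\'echet differentiability to every random variable with the given law and representing $D\hat f(X_s)$ as $\partial_\mu f(\mathcal L(X_s))(X_s)$, and you rightly treat $t$ as depending on $(Z,\tilde Z)$.
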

The proof is given in Lemma 5 of \cite{Kumar_2021}.
\section{AdamW \& Variants}\label{sec:AdamW}
The training of Large Language Models without careful hyperparameter tuning is reliant on adaptive step size methods \cite{zhao2025deconstructingmakesgoodoptimizer}. Consequently, the AdamW \cite{loshchilov2018decoupled} optimisation algorithm dominates the training of Large Language Models, \cite{deepseekai2025deepseekv3technicalreport,  grattafiori2024llama3herdmodels,brown2020languagemodelsfewshotlearners}. The AdamW optimisation algorithm decouples the weight decay on the parameters from the Adam updates. However, since the weight decay is not incorporated into the loss function, the effective optimisation objective is not, a priori, clear. Theorem 1.1 in \cite{pmlr-v235-xie24e} states that if AdamW with weight decay $\lambda>0$ converges, then it converges to a KKT point of the constrained optimisation problem \eqref{eqn:CO}
 \begin{equation}\label{eqn:CO}
     \min_{\theta \in \mathbb R^d} f(\theta) \quad \text{subject to} \quad \Vert \theta \Vert_{\infty} \le \lambda^{-1},
 \end{equation}
 where $f:\mathbb R^d\to \mathbb R$ is any smooth function bounded below. 
The $z$ argument of $\gamma$ is taken to be $\beta \theta_K^T\theta_Q x$ in the definition of $\Gamma$ in \eqref{eqn:Gamma}. Therefore, the Lipschitz constants for attention scale with the operator norm on $\theta_K^T\theta_Q$. However, from equation \eqref{eqn:CO}, we see that AdamW will control each component of $\theta$ separately, and so we can only hope to bound $\Vert \theta_K^T\theta_Q\Vert_\mathrm{op}$ uniformly in training step using the infinity norm on the parameters. This introduces dimensional dependence into our bounds. Instead, Algorithm \ref{alg:AdamW} generalises AdamW \cite{loshchilov2018decoupled} to regularise with alternative norms on blocks of parameters.\\
\begin{algorithm}
\caption{\textbf{Blockwise AdamW Optimisers \cite{xie2024adam}}\\
Blockwise AdamW optimisation algorithm, where the variance accumulator is performed on $\mathcal R(g)$. Here, each gradient $g$ lies in a finite-dimensional vector space $\mathscr{V}$ and $\mathcal R:\mathscr{V} \mapsto \mathscr{V}$. AdamW corresponds to $\mathcal R$ being the identity map.}\label{alg:AdamW}
\begin{algorithmic}[1]
\Require $\theta_0 \in \mathscr{V}$, $(\eta_j)_{j=1}^T$, $\lambda$, $\beta_1,\beta_2 \in (0,1)$, $\beta_2 \ge \beta_1$, $ \varepsilon>0$
\State $m_0 ,v_0 \gets \bm{0} \in \mathscr{V}$
\For{$j = 1$ to $T$}
    \State $g_j\gets \nabla_{\theta}Loss(\theta_{j-1})$
    \State $m_j\gets \beta_1 m_{j-1}+(1-\beta_1)g_j$ \qquad $v_j \gets \beta_2 v_{j-1} + (1 -\beta_2)\mathcal R (g_j)\odot \mathcal R(g_j)$
\State $\hat m_j \gets m_j(1-\beta_1^j)^{-1}$\qquad $\hat v_j \gets v_j(1-\beta_2^j)^{-1}$
\State  $\theta_{j}\gets (1-\eta_j\lambda)\theta_{j-1}-\eta_j\frac{\hat m_j}{\sqrt{\hat v_j}+ \varepsilon}$
\EndFor
\State\Return $\theta_T$
\end{algorithmic}
\end{algorithm}\\
Take $\mathcal R: \R^{k\times 4d}\mapsto \R^{k\times 4d}$ and consider adapting the variance accumulator $\hat v$ of AdamW to be calculated on $\mathcal R(g)$ rather than $g$ per layer and head. To incentivise small Frobenius norm on each $Q,K,V,O$ matrix, consider
\begin{equation}\label{eqn:R}
    \mathcal R_1(g^{(1)},g^{(2)},g^{(3)},g^{(4)}) = \left(\Vert g^{(1)} \Vert_\mathrm{F}\bm{1},\Vert g^{(2)} \Vert_\mathrm{F}\bm{1},\Vert g^{(3)}\Vert_\mathrm{F}\bm{1},\Vert g^{(4)} \Vert_\mathrm{F}\bm{1}\right), 
\end{equation}
with each $g^{(j)}\in \R^{k\times d}$, $j=1,\ldots,4$. The brackets refer to column-wise concatenation of matrices, and $\bm{1}\in\R^{k\times d}$ is the matrix where every entry is one. This is a specific example of the Blockwise AdamW algorithm introduced in \cite{xie2024adam}. Furthermore, when $\mathcal R$ is given by \eqref{eqn:R}, the resulting optimisation algorithm is similar to the Muon algorithm with decoupled weight decay proposed in \cite{liu2025muon}. Muon is applied to blocks of weight matrices. Instead of normalising the momentum accumulator by the Frobenius norm on the variance accumulators, Muon approximately orthogonalises the matrix of momentum accumulators.

To facilitate analysis, let us now define the maps and quantities associated to Algorithm \ref{alg:AdamW}.
 Given a sequence of gradients $G=(g_j)_{k=1}^T$, $g_j\in \R^{k\times 4d}$, the momentum and variance accumulators $\hat m^\mathcal R_T,\hat v^\mathcal R_T: (\R^{k\times 4d})^T\to \R^{k\times 4d}$ are respectively defined by
\begin{equation}\label{eqn:AdamWalg}
    \hat m^\mathcal R_T(G) = \frac{1-\beta_1}{1-\beta_1^T}\sum_{j=1}^{T}\beta_1^{T-j}g_j,\qquad \hat v^\mathcal R_T(G) = \frac{1-\beta_2}{1-\beta_2^T}\sum_{j=1}^{T}\beta_2^{T-j}\mathcal R(g_j)\odot \mathcal R(g_j).
\end{equation}
Here, $\odot$ refers to the componentwise (Hadamard) product. Notice that the effect of weight decay applied at step $i$ on step $\tau$ is quantified by $\alpha_{i,\tau}=\prod_{j=i}^\tau(1-\eta_j\lambda)$, where we use the convention $\alpha_{\tau+1,\tau}=1$. 

We inductively define the AdamW flow map $\Phi_{t,\tau}[\widetilde{\bm\nu}]:\R^{k\times 4d}\mapsto \R^{k\times 4d}$ associated to the sequence of gradients $\widetilde{\bm\nu}\in (C([0,1],\mathcal{P}_c(\R^{k\times 4d})))^T$. Firstly, for every $\theta\in\R^{k\times 4d}$, define 
\[
 \Phi_{t,0}[\widetilde{\bm\nu}](\theta)=\theta.
\]
Now, suppose that $\Phi_{t,j}[\widetilde{\bm\nu}]$ is well-defined for every $j <\tau$. Then, we define $\Phi_{t,\tau}[\widetilde{\bm\nu}](\theta)$ according to
\begin{align}\label{eqn:AdamWFlowMap}
\Phi_{t,\tau}[\widetilde{\bm\nu}](\theta)=\alpha_{1,\tau}\theta - \sum_{i=1}^\tau\eta_i\alpha_{i+1,\tau}\frac{\hat m^\mathcal{R}_i(G_t[\widetilde{\bm\nu}](\theta))}{\sqrt{\hat v^\mathcal {R}_i(G_{t}[\widetilde{\bm\nu}](\theta))}+\varepsilon},
\end{align}
where division is performed componentwise.
Here, $G_t[\widetilde{\bm\nu}](\theta)\in (\R^{k\times 4d})^T$ is the sequence of gradients, whose $j$-th element is given by
\begin{align}
    \left(G_t[\widetilde{\bm\nu}](\theta) \right)_j = g_{t,j}[\widetilde{\bm\nu}_{j-1}](\Phi_{t,j-1}[\widetilde{\bm\nu}](\theta)),
\end{align}
where $ g_{t,j}[\widetilde{\bm\nu}_{j-1}]$ is the gradient map defined by \eqref{eqn:UpdateG} with hidden states and adjoint variables computed according to the mean-field parameters $\widetilde{\bm\nu}_{j-1}$.

    \section{Derivatives of $\gamma$}\label{App:C}

Notice that $\gamma^R$, as defined in \eqref{eqn:gammaExt}, is an expectation under a new measure $\mathcal A_{z,\mu}$ that is define through the Radon-Nikodym derivative
\begin{equation}\label{eqn:GammaMeasureChange}
      \frac{d \mathcal A_{z,\mu}}{d\mu}(y) = \frac{\exp( \langle z, P_R(y)\rangle)}{\gamma^R_{2}(z,\mu)},
\end{equation}
where $P_R$ and $\gamma^R_{2}$ are defined in \eqref{eqn:proj_map} and \eqref{eqn:Z}, respectively.
\begin{lemma}\label{lemma:GammaLipz}
    Take $R_1>0$ and $\gamma^R$ defined in \eqref{eqn:gammaExt}. Then, $\gamma^R$ is differentiable in $z$ at any $z \in \bar B(R_1)$ at fixed $\mu\in \mathcal P(\bar B(R_1))$. Furthermore, for every $\mu\in\mathcal P(\bar B(R_1))$  and $z,\tilde{z}\in \bar B(R_1)$, we have
    \[\Vert \gamma^R(z,\mu)-\gamma^R(\tilde{z},\mu)\Vert \leq 2 R_1^2\Vert z-\tilde{z}\Vert. \]
\end{lemma}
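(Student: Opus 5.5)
The plan is to differentiate $\gamma^R$ in $z$ directly, identify the Jacobian as a covariance matrix under the tilted measure $\mathcal A_{z,\mu}$ of \eqref{eqn:GammaMeasureChange}, bound its operator norm, and conclude with the mean value inequality along the segment joining $z$ and $\tilde z$.

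\textbf{Differentiability.} Fix $\mu\in\mathcal P(\bar B(R_1))$. Because $\Vert P_R(y)\Vert\le\Vert y\Vert\le R_1$ on the support of $\mu$, both integrands $\exp(\langle z,P_R(y)\rangle)$ and $\exp(\langle z,P_R(y)\rangle)P_R(y)$ are smooth in $z$. On any bounded set of $z$, their $z$-derivatives are bounded uniformly in $y\in\operatorname{supp}\mu$. Dominated convergence therefore lets us differentiate the numerator $\int e^{\langle z,P_R(y)\rangle}P_R(y)\,d\mu(y)$ and the normaliser $\gamma^R_2(z,\mu)$ under the integral sign. Moreover $\gamma^R_2(z,\mu)\ge e^{-\Vert z\Vert R_1}>0$, so the quotient $\gamma^R$ is differentiable in $z$ (in fact $C^1$).

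\textbf{Computing the Jacobian.} Write $Y=P_R(y)$ and $\mathbb E_{\mathcal A}$ for expectation under $y\sim\mathcal A_{z,\mu}$. The quotient rule gives
\[
D_z\gamma^R(z,\mu)=\mathbb E_{\mathcal A}[YY^T]-\mathbb E_{\mathcal A}[Y]\,\mathbb E_{\mathcal A}[Y]^T=\mathrm{Cov}_{\mathcal A_{z,\mu}}(Y).
\]
This matrix is symmetric and positive semidefinite. For any unit vector $u$,
\[
u^T\mathrm{Cov}_{\mathcal A_{z,\mu}}(Y)\,u=\mathrm{Var}_{\mathcal A}(\langle u,Y\rangle)\le\mathbb E_{\mathcal A}[\langle u,Y\rangle^2]\le R_1^2,
\]
since $\mathcal A_{z,\mu}\ll\mu$ is supported where $\Vert Y\Vert\le R_1$. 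Hence $\Vert D_z\gamma^R(z,\mu)\Vert_{\mathrm{op}}\le R_1^2\le 2R_1^2$. The same bound follows, with the stated factor, from the cruder estimate $\Vert\mathbb E[YY^T]\Vert+\Vert\mathbb E Y\Vert^2$.

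\textbf{Conclusion.} The ball $\bar B(R_1)$ is convex, so $z_s=z+s(\tilde z-z)$ stays in it for $s\in[0,1]$. The fundamental theorem of calculus gives
\[
\gamma^R(\tilde z,\mu)-\gamma^R(z,\mu)=\int_0^1 D_z\gamma^R(z_s,\mu)(\tilde z-z)\,ds,
\]
and taking norms yields the claimed Lipschitz bound. The Jacobian bound holds uniformly in $z$, so restricting $z$ to $\bar B(R_1)$ is only needed to place the segment where differentiability has been established.

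The only step that needs care is justifying differentiation under the integral. This is routine here because $\mu$ is compactly supported and $P_R$ is bounded.
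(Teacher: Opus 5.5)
Your proposal is correct and follows essentially the same route as the paper: you differentiate under the integral by dominated convergence, apply the quotient rule, bound the operator norm of the Jacobian, and conclude with the mean value theorem. The paper writes the Jacobian as $\iint y_1(y_1-y_2)^T\,d\mathcal A_{z,\mu}(y_1)\,d\mathcal A_{z,\mu}(y_2)$, which is exactly your covariance matrix. The only difference is in the final estimate: you use that this matrix is symmetric positive semidefinite to get the sharper constant $R_1^2$, while the paper's cruder bound $\Vert y_1-y_2\Vert\,\Vert y_1\Vert\le 2R_1^2$ gives the stated $2R_1^2$.
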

\begin{proof}
Write $\gamma^R_{1}=\gamma^R_{2}\gamma^R$ for the unnormalized counterpart to $\gamma^R$.
Differentiation and integration in $\gamma^R_1$, $\gamma^R_2$ can be interchanged as a result of the Dominated Convergence Theorem, where we dominate using the compact support of $(P_R)_{\#}\mu$. The respective derivatives are given by
\begin{align*}
    D_z\gamma^R_1(z,\mu) &= \int \exp( \langle z,  y \rangle) \,y y^T \, d(P_R)_{\#}\mu(y), \\ \nabla_z \gamma^R_2(z,\mu) &= \int exp( \langle z,  y \rangle)\,y \, d(P_R)_{\#}\mu(y).
\end{align*}
 As a result of the quotient rule, $\gamma^R$ in \eqref{eqn:gammaExt} is G\^ateaux differentiable with respect to $z$, since $\gamma^R_2$ is bounded away from zero by $exp(-2R_1(R\vee1))$ as $\Vert P_R(y)\Vert\le 2(R\vee 1)$ for every $y\in \R^d$, whose derivative is given by
\begin{align}\label{eqn:Dzgamma}
    D_z\gamma^R(z,\mu)
=  \iint y_1 (y_1 - y_2)^T\, d\mathcal{A}_{z, \mu}(y_1)d\mathcal{A}_{z, \mu}(y_2).
\end{align}
The Radon-Nikodym derivative that defines $\mathcal A$ is continuous and bounded for $z$ ranging over compact sets, and so $\gamma^R$ is Fr\'echet differentiable.
Therefore, by the Mean Value Theorem, it suffices to bound the operator norm of the total derivative $D_z\gamma^R$ to find the Lipschitz constant for $\gamma^R$ in $z$, which we do according to 
\begin{align*}
    \Vert \partial_z\gamma^R\Vert_{\mathrm{op}}
    & \leq \iint \Vert y_1 -y_2\Vert \Vert y_1\Vert \; d\mathcal{A}_{z, \mu}(y_1)d\mathcal{A}_{z, \mu}(y_2)  \le 2R_1^2.
\end{align*}
The final inequality uses the fact that $supp \;\mathcal{A}$ is contained in $\bar B(R_1)$ since that of $\mu$ is also.
\end{proof}
\begin{lemma}
Denoting Lions' derivative by $\partial_{\mu}$, $\gamma^R$ is Lions differentiable at any $y,z\in \R^d$, and $\mu \in \mathcal{P}_2(\mathbb{R}^d)$ with derivative given by
    \begin{align*}
        \partial_{\mu} \gamma^R(z,\mu)(y) = &\frac{\exp(\langle z,P_R(y)\rangle)}{\gamma^R_{2}(z,\mu)} \left[(P_R(y)-\gamma^R(z,\mu))z^T+I \right]D P_R(y),
    \end{align*}
    where $DP_R$ is the Fr\'echet derivative of $P_R$ and $\gamma_2^R$ is the normalisation constant
    \begin{equation}\label{eqn:Z}
    \gamma_{2}^R(z,\mu) = \int \exp(\langle z, P_R(y)\rangle) \,d\mu(y).
\end{equation}
\end{lemma}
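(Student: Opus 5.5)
We will compute the Lions derivative by lifting $\gamma^R(z,\cdot)$ to $L^2$ and differentiating the lifting in the Fréchet sense. Fix $z$ and an atomless Polish probability space $(\widetilde\Omega,\widetilde{\mathcal F},\widetilde{\mathbb P})$. For $Y\in L^2(\widetilde\Omega;\R^d)$ we set
\[
\hat\gamma_1(Y)=\widetilde{\E}\big[\exp(\langle z,P_R(Y)\rangle)P_R(Y)\big],\qquad \hat\gamma_2(Y)=\widetilde{\E}\big[\exp(\langle z,P_R(Y)\rangle)\big].
\]
Then $\hat\gamma:=\hat\gamma_1/\hat\gamma_2$ is a lifting of $\gamma^R(z,\cdot)$, since $\hat\gamma(Y)=\gamma^R(z,\mathcal L(Y))$. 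Because $\Vert P_R\Vert\le 2(R\vee1)$, both integrands are bounded, so the liftings are finite on all of $L^2$ and no exponential moments are needed. Moreover, $\hat\gamma_2\ge \exp(-2\Vert z\Vert (R\vee 1))>0$. Proving this finiteness and positivity on all of $L^2$ is the purpose of introducing $P_R$.

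\textbf{Step 1 (Fréchet derivatives of the numerator and denominator).} The maps $f_1(y)=e^{\langle z,P_R(y)\rangle}P_R(y)$ and $f_2(y)=e^{\langle z,P_R(y)\rangle}$ are $C^1$ with bounded derivatives:
\[
Df_1(y)=e^{\langle z,P_R(y)\rangle}\big[P_R(y)z^T+I\big]DP_R(y),\qquad \nabla f_2(y)^T=e^{\langle z,P_R(y)\rangle}z^TDP_R(y).
\]
Both are bounded because $\Vert DP_R\Vert_{\mathrm{op}}\le 2$ and $P_R$ is bounded. We also need these derivatives to be uniformly continuous. For this we would check that $DP_R$ is uniformly continuous: $P_R$ is smooth away from the sphere $\{\Vert x\Vert=R\}$, it is $C^1$ across that sphere since $(\cdot)_+^2$ is $C^1$, and $DP_R$ decays at infinity. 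Then, writing $f_j(Y+Z)-f_j(Y)-Df_j(Y)Z=\int_0^1(Df_j(Y+sZ)-Df_j(Y))Z\,ds$, we would show the $L^1$ norm of this remainder is $o(\Vert Z\Vert_{L^2})$. The bound comes from uniform continuity on the event where $\Vert Z\Vert$ is small. On the event where $\Vert Z\Vert$ is large we use boundedness of $Df_j$ together with Chebyshev's inequality. Hence $\hat\gamma_1,\hat\gamma_2$ are Fréchet differentiable with $D\hat\gamma_1(Y)Z=\widetilde\E[Df_1(Y)Z]$ and $D\hat\gamma_2(Y)Z=\widetilde\E[\nabla f_2(Y)^TZ]$. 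This remainder estimate is the main technical point, though it is routine.

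\textbf{Step 2 (quotient rule and identification).} Since $\hat\gamma_2$ is bounded away from zero, the quotient rule gives
\[
D\hat\gamma(Y)Z=\frac{1}{\hat\gamma_2}\widetilde\E[Df_1(Y)Z]-\frac{\hat\gamma_1}{\hat\gamma_2^2}\widetilde\E[\nabla f_2(Y)^TZ]
=\widetilde\E\Big[\frac{e^{\langle z,P_R(Y)\rangle}}{\gamma^R_2(z,\mu)}\big[(P_R(Y)-\gamma^R(z,\mu))z^T+I\big]DP_R(Y)Z\Big],
\]
where $\mu=\mathcal L(Y)$. We apply this componentwise: for each coordinate of $\gamma^R$, the Riesz representer in $L^2$ is a deterministic function of $Y$. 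By the definition of the Lions derivative recalled in Appendix \ref{App:A}, this deterministic function gives $\partial_\mu\gamma^R(z,\mu)(y)$, which is the stated formula. The formula holds for every $y$, not merely $\mu$-a.e., in the sense that the displayed expression is the continuous version of the derivative. Since $\mu\in\mathcal P_2(\R^d)$ was arbitrary, and $Y$ can always be realised with law $\mu$ on an atomless space, the result holds for all $\mu$.
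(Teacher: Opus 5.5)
Your proposal is correct and follows the paper's route: lift the numerator $\gamma_1^R$ and the denominator $\gamma_2^R$ to $L^2$, differentiate each, and combine them with the quotient rule using that $\gamma_2^R$ is bounded away from zero. The only difference is in how Fr\'echet differentiability is justified. You estimate the remainder directly, using uniform continuity of $Df_j$ together with a Chebyshev split. The paper instead computes the G\^ateaux derivative by dominated convergence and upgrades it using continuity of the derivative in $Y$; your version is, if anything, more explicit.
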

\begin{proof}
Let $(\widetilde{\Omega}, \widetilde{\mathcal{F}},\widetilde{\mathbb{P}})$ be an atomless Polish probability space where there exists a random variable $Y \in L^2(\widetilde{\Omega},\widetilde{\mathcal{F}},\widetilde{\mathbb{P}}; \mathbb{R}^d)$ such that $\mathcal{L}(Y) = \mu$. We lift $\gamma^R_{1}$, as defined in the proof of Lemma \ref{lemma:GammaLipz}, and $\gamma_{2}^R$ to $L^2(\widetilde{\Omega},\widetilde{\mathcal{F}},\widetilde{\mathbb{P}}; \mathbb{R}^d)$ by
\begin{align*}
    \hat{\gamma}_{1}^R(z,Y) &:= \widetilde{\mathbb{E}}[\exp(z^TP_R(Y))P_R(Y)],\\
  \hat \gamma_{2}^R(z,Y) &:= \widetilde{\mathbb{E}}[\exp(z^TP_R(Y))].
\end{align*}
Take $H \in L^2(\widetilde \Omega, \widetilde{\mathcal{F}}, \widetilde{\mathbb P};\R^d)$ and $\eta>0$. Consider the perturbation of $\hat \gamma_{1}^R$ at $Y$ in the direction of $H$, given by
\begin{align*}
    \frac{\hat\gamma_{1}^R(z,Y+\eta H)-\hat\gamma_{1}^R(z,Y)}{\eta}&= \widetilde{\mathbb{E}}\left[ \exp( z^TP_R(Y+\eta H) )\frac{P_R(Y+\eta H)-P_R(Y)}{\eta} \right]\\
    &+ \widetilde{\mathbb{E}} \left[ \exp( z^TP_R(Y))\frac{\exp(\langle z,P_R(Y+\eta H)-P_R(Y)\rangle)-1}{\eta}P_R(Y)\right].
\end{align*}
Letting $\eta \to 0$ and applying the Dominated Convergence Theorem to interchange the order of limit and expectation, we obtain
\begin{align}\label{eqn:DGamma^n_ext}
 \frac{\hat \gamma_{1}^R(z,Y+\eta H)-\hat \gamma_{1}^R(z,Y)}{\eta}\to \widetilde{\mathbb{E}}\left[ \exp( z^TP_R(Y) )\left\{ D P_R(Y)+P_R(Y)z^TDP_R(Y)\right\}H\right].
\end{align}
Therefore, we deduce that the G\^ateaux derivative $D\tilde \gamma_{1}^R(z,Y)$ is given by the right-hand side of the above expression. The G\^ateaux derivative coincides with the Fr\'echet derivative when $D\hat \gamma_{1}^R(z,Y)$, viewed as a linear map from $L^2(\widetilde \Omega)$ to $L^2(\widetilde \Omega)$, is continuous in $Y$. Clearly, $D\tilde \gamma_{1}^R(z,Y)$ given on the right hand side of equation \eqref{eqn:DGamma^n_ext} is continuous in $Y$, and so the Lions derivative of $\gamma_{1}^R$ is given by
\[
\partial_\mu \gamma_{1}^R(z,\mu)(y)= \exp(\langle z,P_R(y)\rangle) \left\{P_R(y)z^T+I \right\}DP_R(y)
.\]
In identical fashion, we find
\[
\partial_\mu \gamma_{2}^R(z,\mu)(y) = \exp(\langle z,P_R(y)\rangle)D P_R(y)^Tz.
\]
Since $\gamma_{2}$ is bounded away from zero, as seen in the previous proof of $D_z \gamma^R$, we can apply the quotient rule on $\gamma^R$ to find
\[
\partial_\mu \gamma^R(z,\mu)(y)= \frac{\exp(\langle z,P(y)\rangle )}{\gamma^R_2(z,\mu)} \left\{(P_R(y)-\gamma^R(z,\mu))z^T+I \right\}D P_R(y).
\]
\end{proof}
\begin{corollary} 
Take any $z \in \mathbb{R}^d$ and $p\in [1,\infty]$. Then, for any $\mu,\tilde{\mu}\in \mathcal{P}_p(\bar B(R))$, there exists $\Lambda_{\mu,p}: \mathbb R_{\ge0}\to \mathbb R_{\ge 0}$  such that
    \[ \Vert \gamma^R(z,\mu)-\gamma^R(z,\tilde{\mu})\Vert \le \Lambda_{\mu,p}(\Vert z \Vert) W_p(\mu, \tilde{\mu}), \]
    where $\Lambda_{\mu,p}$ is given by
$
\Lambda_{\mu,p}(\tilde R) =(1+2R\tilde R)\exp(2R\tilde Rp^{-1})
$.
\end{corollary}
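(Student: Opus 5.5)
We prove the bound with the mean value theorem on $\mathcal P_2$ (Lemma \ref{ProbMeasureMVT}), combined with the explicit Lions derivative from Lemma \ref{lemma:Lderivative}. Since $\mu,\tilde\mu$ are supported in $\bar B(R)$, the projection $P_R$ acts as the identity on every point we use, so $DP_R=I$ there. Writing $\tilde R=\Vert z\Vert$, the derivative reduces to
\[
\partial_\mu\gamma^R(z,\nu)(y)=\frac{\exp(\langle z,y\rangle)}{\gamma^R_2(z,\nu)}\big[(y-\gamma^R(z,\nu))z^T+I\big].
\]

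\textbf{Step 1 (mean value theorem).} We apply Lemma \ref{ProbMeasureMVT} componentwise, or to $\langle e,\gamma^R(z,\cdot)\rangle$ for a unit vector $e$. We take $Z,\tilde Z$ to be an optimal $W_p$ coupling of $\mu,\tilde\mu$ on an atomless space. Since $\bar B(R)$ is convex, the interpolant $Z_s=Z+s(\tilde Z-Z)$ stays in $\bar B(R)$, and so does its law $\mu_s$. This gives
\[
\langle e,\gamma^R(z,\tilde\mu)-\gamma^R(z,\mu)\rangle=\widetilde{\mathbb E}\big[\langle e,\partial_\mu\gamma^R(z,\mu_s)(Z_s)(\tilde Z-Z)\rangle\big].
\]

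\textbf{Step 2 (bound the bracket).} Because $\gamma^R(z,\mu_s)$ is an $\mathcal A$-average of points of $\bar B(R)$, we have $\Vert y-\gamma^R\Vert\le 2R$. Hence the operator norm of the bracket is at most $1+2R\tilde R$.

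\textbf{Step 3 (the density; main obstacle).} Here the $p$-dependence must come out. Set $f(y)=\exp(\langle z,y\rangle)/\gamma_2^R(z,\mu_s)$, so that $\widetilde{\mathbb E}[f(Z_s)]=1$, and $f$ takes values in $[e^{-2R\tilde R},e^{2R\tilde R}]$. We apply Hölder with exponents $q=p/(p-1)$ and $p$:
\[
\widetilde{\mathbb E}[f(Z_s)\Vert\tilde Z-Z\Vert]\le \widetilde{\mathbb E}[f^q]^{1/q}W_p(\mu,\tilde\mu).
\]
Since $f\le e^{2R\tilde R}$ and $\mathbb E f=1$, we get $\mathbb E f^q\le \Vert f\Vert_\infty^{q-1}\mathbb E f=e^{2R\tilde R(q-1)}$. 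Therefore
\[
\widetilde{\mathbb E}[f^q]^{1/q}\le e^{2R\tilde R(q-1)/q}=e^{2R\tilde R/p}.
\]
This is exactly the factor $\exp(2R\tilde Rp^{-1})$. The endpoint cases follow the same way: for $p=\infty$ we have $q=1$ and the factor is $1$, while for $p=1$ we have $q=\infty$ and the factor is $e^{2R\tilde R}$. Using the coupling realising $W_\infty$ at $p=\infty$, or a limit $p\to\infty$, handles that case.

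\textbf{Step 4 (combine).} Multiplying the bounds from Steps 2 and 3 and taking the supremum over unit vectors $e$ gives the Lipschitz constant $\Lambda_{\mu,p}(\Vert z\Vert)=(1+2R\tilde R)\exp(2R\tilde Rp^{-1})$.

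Two technical points need care. First, for $W_p$ with $p<2$ we need $Z,\tilde Z\in L^2$, which holds automatically here because the supports are bounded. Second, the mean value theorem requires continuity of $\gamma^R$ in the measure argument, which follows from the boundedness of the integrands.
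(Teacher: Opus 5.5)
Your proposal is correct and follows essentially the same route as the paper. Both arguments apply the measure-valued mean value theorem to $u^T\gamma^R$, bound the bracket by $1+2R\Vert z\Vert$ using $DP_R=I$ on $\bar B(R)$, and then use H\"older together with $\widetilde{\mathbb E}[f^q]\le \Vert f\Vert_\infty^{q-1}\widetilde{\mathbb E}[f]$ to extract the factor $\exp(2R\Vert z\Vert p^{-1})$. Fixing the optimal $W_p$ coupling before invoking the mean value theorem, as you do, is if anything a cleaner ordering of quantifiers than the paper's: the intermediate time then depends on an already-chosen coupling, and the derivative bound is uniform in that time.
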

\begin{proof}
The assumptions in Lemma \ref{ProbMeasureMVT} are satisfied by $\gamma^R$ as a result of Lemma \ref{lemma:Lderivative}. Hence, for every $u \in \mathbb S^{d-1}$, there exists $t\in(0,1)$ and random variables $Y,\tilde{Y}$ on some atomless Polish probability space $(\widetilde \Omega, \widetilde{\mathcal F}, \widetilde{\mathbb P})$ with laws $\mu,\tilde{\mu}$ respectively such that
\begin{align}\label{eqn:lipmu1}
u^T\left( \gamma^R(z,\mu)-\gamma^R(z,\tilde{\mu})\right)  &=  \widetilde{\mathbb{E}}[u^T\partial_{\mu}\gamma^R(z,\mathcal{L}(Y_t))(Y_t)(Y-\tilde{Y})]  \\
& \leq \Vert \Vert \partial_{\mu}\gamma^R(z,\mathcal{L}(Y_t))(Y_t)\Vert_{op}\Vert_{L^q(\widetilde \Omega)}\Vert Y-\tilde{Y}\Vert_{L^p(\widetilde \Omega)},\notag
\end{align}
where $Y_t=\tilde{Y} + t(Y-\tilde{Y})$ and we have used $\Vert u\Vert\le 1$ . Since $Y,\tilde{Y}$ can be any random variables with laws $\mu,\tilde{\mu}$ taking values in a Polish space, we can choose the coupling between $Y,\tilde Y $ that minimises $\Vert Y-\tilde{Y}\Vert_{L^p(\widetilde \Omega)}$. Under this coupling $\Vert Y-\tilde{Y}\Vert_{L^p(\widetilde \Omega)}$ attains the $p$-Wasserstein distance between $\mu,\tilde{\mu}$. Thus, it suffices to prove that $\Vert \partial_{\mu}\gamma^R\Vert_{op}$ is bounded in $L^q(\widetilde \Omega)$ to prove our claim. Since the supports of $\mu,\tilde \mu$ are contained in $\bar B(R)$, it follows that $Y_t\in \bar B(R)$ $ \widetilde{\mathbb P}$-a.s. Therefore, we have that $P_R(Y_t)=Y_t$ and $DP_R(Y_t)=I$, $\widetilde{\mathbb P}$-almost surely. Furthermore, $\gamma^R(x,\mathcal L(Y_t))$ is bounded $\widetilde{\mathbb P}$-a.s. by $R$, which  yields that
\begin{align*}
&\Vert\partial_{\mu}\gamma^R(z,\mathcal{L}(Y_t))(Y_t)\Vert^q_{op}\le \frac{\exp(\langle z,Y_t\rangle)}{\widetilde{\mathbb E}[\exp(\langle z,Y_t\rangle)]}(1 + 2R\Vert z\Vert)^q\cdot \left\Vert\frac{\exp(\langle z,Y_t\rangle)}{\widetilde{\mathbb E}[\exp(\langle z,Y_t\rangle)]}\right\Vert_{L^{\infty}(\widetilde \Omega)}^{q-1}\cdot \Vert DP_R(Y_t)\Vert^q_{\mathrm{op}}.
\end{align*}
holds $\widetilde{\mathbb P}$-almost surely. By taking the expectation with resepect to $\widetilde{\mathbb P}$ and applying $\Vert Y_t\Vert \le R$ $\widetilde{\mathbb P}$-a.s., we deduce that
\begin{align*}
\Vert\Vert\partial_{\mu}\gamma\Vert_{op}\Vert_{L^q(\widetilde \Omega)} \le (1+2R\Vert z\Vert)exp\left[2(1-q^{-1})R\Vert z\Vert\right]
\end{align*}
The result concludes by applying the bound on $\Vert\partial_\mu\gamma\Vert_\mathrm{op}$ derived above to \eqref{eqn:lipmu1}, then taking the supremum over $u\in \mathbb S^{d-1}$, and noting that $1-q^{-1}=p^{-1}$. 
\end{proof}
    \section{Bounds \& Lipschitz continuity of $\Gamma,\mathcal K$}\label{App:D}
Let $R_1>0$. Throughout this section, we write $\partial_\mu\gamma(z,\mu)(y)$ for the Lions derivative of $\gamma^R$ at $(\mu,y)\in \mathcal P(\bar B(R_1))\times \bar B(R_1)$ with $R>R_1$ so that the map $P_R$ in the definition of $\gamma^R$ acts trivially.

\begin{lemma}\label{lemma:2ndderivatives}
Given $\gamma: \R^d\times \mathcal P_2(\R^d)\to \R^d$ defined in \eqref{eqn:gamma} and radius $R_1>0$, for any\\ $y\in \bar B(R_1), \mu \in\mathcal P(\bar B(R_1))$ and $z,\Delta z\in \R^d$, we have
    \begin{align*}
                \left\Vert D_{zz}\gamma(z,\mu)\right\Vert_{\mathrm{HS}} &\le 8 R_1^3, \\ 
        \left\Vert \sum_{k=1}^d\partial_{z_k}\partial_\mu\gamma(z,\mu)(y)(\Delta z)_k\right\Vert_\mathrm{op}&\le 2R_1(2+3R_1\Vert z\Vert)\frac{\exp(z^Ty)}{\gamma_2(z,\mu)}\Vert \Delta z \Vert.
    \end{align*}
\end{lemma}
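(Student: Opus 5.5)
We differentiate the explicit formulas \eqref{eqn:Dzgamma} and the Lions derivative of Lemma \ref{lemma:Lderivative} once more in $z$, using the tilted measure $\mathcal A_{z,\mu}$ from \eqref{eqn:GammaMeasureChange}. Since $\mu$ is supported in $\bar B(R_1)$ and $R>R_1$, we have $P_R=\mathrm{id}$ and $DP_R=I$ on the relevant sets. Differentiation under the integral is justified by dominated convergence, exactly as in the proof of Lemma \ref{lemma:GammaLipz}. The key identity is that for any $f$ independent of $z$,
\[
\partial_{z_k}\int f\,d\mathcal A_{z,\mu}=\int f(y)\,(y-\gamma(z,\mu))_k\,d\mathcal A_{z,\mu}(y),
\]
so each $z$-derivative produces a centred factor $(y-\gamma)$ of norm at most $2R_1$.

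For the first bound, write \eqref{eqn:Dzgamma} as the covariance $\int y y^T d\mathcal A-\gamma\gamma^T$. Differentiating in direction $\Delta z$ yields the third cumulant tensor
\[
D_{zz}\gamma(z,\mu)[\Delta z]=\int (y-\gamma)(y-\gamma)^T\langle y-\gamma,\Delta z\rangle\,d\mathcal A_{z,\mu}(y).
\]
This holds because the derivative of $\gamma\gamma^T$ cancels the cross terms once everything is centred. To get the Hilbert--Schmidt norm of the 3-tensor $T_{ijk}=\int (y-\gamma)_i(y-\gamma)_j(y-\gamma)_k\,d\mathcal A$, we use convexity of the HS norm: $\Vert T\Vert_{\mathrm{HS}}\le\int\Vert y-\gamma\Vert^3 d\mathcal A\le (2R_1)^3=8R_1^3$, since both $y$ and $\gamma$ lie in $\bar B(R_1)$. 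Alternatively, if one prefers not to center, the cruder expansion into $\iint y_1(y_1-y_2)^T(y_1-y_2)^T\ldots$ terms would need more care, so we will center.

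For the second bound, set $w(y)=\exp(z^Ty)/\gamma_2(z,\mu)$, so that $\partial_\mu\gamma(z,\mu)(y)=w(y)[(y-\gamma)z^T+I]$. The product rule in direction $\Delta z$ gives three contributions:
\[
\partial_{\Delta z}w(y)=w(y)\langle y-\gamma,\Delta z\rangle,\qquad \partial_{\Delta z}\gamma=D_z\gamma\,\Delta z,\qquad \partial_{\Delta z}(z^T)=\Delta z^T.
\]
Collecting terms,
\[
\sum_k\partial_{z_k}\partial_\mu\gamma(y)(\Delta z)_k=w(y)\Big[\langle y-\gamma,\Delta z\rangle\big((y-\gamma)z^T+I\big)-(D_z\gamma\,\Delta z)z^T+(y-\gamma)\Delta z^T\Big].
\]
We bound the operator norms term by term, using $\Vert y-\gamma\Vert\le 2R_1$ and $\Vert D_z\gamma\Vert_{\mathrm{op}}\le 2R_1^2$ from Lemma \ref{lemma:GammaLipz}:
\[
\Vert\cdot\Vert_{\mathrm{op}}\le w(y)\Vert\Delta z\Vert\big[2R_1(2R_1\Vert z\Vert+1)+2R_1^2\Vert z\Vert+2R_1\big]=2R_1(2+3R_1\Vert z\Vert)\,w(y)\Vert\Delta z\Vert,
\]
which matches the claimed constant exactly.

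The main obstacle is bookkeeping rather than analysis. It lies in correctly differentiating the normaliser $\gamma_2$ inside $w$, where $\partial_z\log\gamma_2=\gamma$ yields the centring, and in verifying the cancellation that produces the centred third moment in the first bound. A secondary point is justifying the interchange of derivative and integral together with the Fréchet (not merely Gâteaux) differentiability. Both follow from the bounded, continuous density of $\mathcal A_{z,\mu}$ on compacts, as argued previously.
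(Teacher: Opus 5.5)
Your proposal is correct and follows the paper's proof. For the second bound you reach the same contracted expression $w(y)\bigl[\langle y-\gamma,\Delta z\rangle((y-\gamma)z^T+I)-(D_z\gamma\,\Delta z)z^T+(y-\gamma)\Delta z^T\bigr]$ and use the same term-by-term estimates with $\Vert D_z\gamma\Vert_{\mathrm{op}}\le 2R_1^2$. For the first bound the difference is only cosmetic: you write $D_{zz}\gamma$ as the centred third moment $\int (y-\gamma)^{\otimes 3}\,d\mathcal A_{z,\mu}$, while the paper applies the quotient rule to \eqref{eqn:Dzgamma2} to get $\iiint y_1\otimes(y_1-y_2)\otimes(y_1+y_2-2y_3)\,d\mathcal A_{z,\mu}^{\otimes 3}$ and bounds it by $R_1\cdot 2R_1\cdot 4R_1$; both give $8R_1^3$, and your centred form even yields $2R_1^3$ because $\int\Vert y-\gamma\Vert^2\,d\mathcal A_{z,\mu}\le R_1^2$.
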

\begin{remark}
    We contract the third order tensor $\partial_z\partial_\mu\gamma$ in its $z$-derivative component against a test vector $\Delta z\in \R^d$ to make the operator norm on a third order tensor explicit.
\end{remark}
\begin{proof}
Recall the expression for $D_z\gamma$ given in equation \eqref{eqn:Dzgamma} and expand the definition of measure $\mathcal A$ to give
\begin{equation}\label{eqn:Dzgamma2}
    D_z\gamma(z,\mu) = \frac{\iint \exp(\langle z,y_1+y_2\rangle)\, y_1(y_1-y_2)^T\, d\mu(y_1)d\mu(y_2)}{(\gamma_2(z,\mu))^2}.
\end{equation}
Integration and differentiation can be interchanged by the Dominated Convergence Theorem. Thus, the quotient rule gives
\[
(D_{zz}\gamma(z,\mu))_{i,j,k} =\iiint (y_1)_i(y_1-y_2)_j\left[y_1+y_2 -2y_3\right]_k\,d\mathcal A_{z,\mu}(y_1)d\mathcal A_{z,\mu}(y_2)d\mathcal A_{z,\mu}(y_3),
\]
after applying the identity $\partial_z\gamma_2(z,\mu) = \gamma(z,\mu)\gamma_2(z,\mu)$ 
to the term that results from taking the derivative to the denominator of $D_z\gamma$. The bound on the Hilbert-Schmidt norm of $D_{zz}\gamma$ holds since the support of the probability measure $\mathcal A_{z,\mu}$ is contained in $\bar B(R_1)$.\\

 Recall from Lemma \ref{lemma:Lderivative} that, for any  $y \in \bar B(R_1)$ and $\mu\in \mathcal P(\bar B(R_1))$,  $\partial_\mu\gamma$ is given by
 \begin{equation}\label{eqn:Dmugamma}
               \partial_\mu \gamma(z,\mu)(y) = \frac{\exp(z^Ty)}{\gamma_2(z,\mu)}\left[  \{y-\gamma(z,\mu)\} z^T+I\right]. 
 \end{equation}
By the quotient and product rule for differentiation,  we get
\begin{align*}
    \partial_{z_k}(\partial_{\mu}\gamma(z,\mu)(y))_{i,j}& = \frac{\exp(z^Ty)}{\gamma_2(z,\mu)}\left\{(y-\gamma(z,\mu))_k\left[(y-\gamma(z,\mu))_iz_j+\delta_{ij}\right] \right\}\\&-\frac{\exp(z^Ty)}{\gamma_2(z,\mu)}\left\{(D_z\gamma(z,\mu))_{i,k}z_j+\delta_{jk}(\gamma(z,\mu)-y)_i\right\},
\end{align*}
where $\delta_{ij}$ takes the value one if $i$ equals $j$ and is otherwise zero. Consequently, contract the third-order tensor $\partial_z\partial_\mu\gamma$ against $\Delta z$ to give the matrix
\begin{align*}
    \sum_{k=1}^d\partial_{z_k}\partial_{\mu}\gamma(z,\mu)(y)(\Delta z)_k&= \frac{\exp(z^Ty)}{\gamma_2(z,\mu)}\left( (y-\gamma(z,\mu))z^T+I\right)(\Delta z^T(y-\gamma(z,\mu)))\\&-\frac{\exp(z^Ty)}{\gamma_2(z,\mu)}\left( D_z\gamma(z,\mu)\Delta z\, z^T +(\gamma(z,\mu)-y)(\Delta z)^T\right).
\end{align*}
The claim on the operator norm of $\partial_z\partial_\mu\gamma$ holds since $y,\gamma(z,\mu)$ are bounded by $R_1$, and $\Vert D_z\gamma\Vert_\mathrm{op}$ is bounded above by $2R_1^2$ according to Lemma \ref{lemma:GammaLipz}.
\end{proof}
\begin{corollary}\label{cor:LipGammaDerivs}
Given $R_1,R_z>0$, there exists a constant $\Lambda_{D\gamma,p}=\Lambda_{D\gamma,p}(R_1,R_z)$ such that for any $y_i\in \bar B(R_1), z_i\in \bar B(R_z)$ and $\mathfrak{m}_i\in \mathcal P(\bar B(R_1))$ for $i=1,2$, we have that
\begin{align*}
    \Vert D_z\gamma(z_1,\mathfrak{m}_1)-D_z\gamma(z_2,\mathfrak{m}_2)\Vert_\mathrm{op}&\le \Lambda_{D\gamma,p} \left(\Vert z_1-z_2\Vert +W_p(\mathfrak{m}_1,\mathfrak{m}_2) \right),\\
    \Vert \partial_\mu\gamma(z_1,\mathfrak{m}_1)(y_2)-\partial_\mu\gamma(z_2,\mathfrak{m}_2)(y_2)\Vert_\mathrm{op}&\le\Lambda_{D\gamma,p} \left(\Vert y_1-y_2\Vert+\Vert z_1-z_2\Vert +W_p(\mathfrak{m}_1,\mathfrak{m}_2) \right).
\end{align*}
\end{corollary}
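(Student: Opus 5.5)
We prove both inequalities by the triangle inequality, splitting each difference into a change in $z$ at a fixed measure and a change in the measure at a fixed $z$. Throughout, $R$ is chosen larger than $R_1$, so that $P_R$ is the identity on $\bar B(R_1)$ and $DP_R=I$ there. The explicit formulas \eqref{eqn:Dzgamma2} and \eqref{eqn:Dmugamma} then hold verbatim. The constant $\Lambda_{D\gamma,p}$ will be built from $R_1$, $R_z$ and factors of the form $\exp(cR_1R_z)$. We use throughout that on these balls the density $\exp(z^Ty)/\gamma_2(z,\mu)$ lies in $[e^{-2R_1R_z},e^{2R_1R_z}]$.

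\textbf{Lipschitz in $z$.} We estimate the $z$-increment by the mean value theorem along the segment $z_2+s(z_1-z_2)$, which stays in $\bar B(R_z)$.
\begin{itemize}
\item For $D_z\gamma$, Lemma \ref{lemma:2ndderivatives} gives $\Vert D_{zz}\gamma\Vert_{\mathrm{HS}}\le 8R_1^3$, and the HS norm dominates the operator norm of the contraction. Hence $\Vert D_z\gamma(z_1,\mathfrak m)-D_z\gamma(z_2,\mathfrak m)\Vert_\mathrm{op}\le 8R_1^3\Vert z_1-z_2\Vert$.
\item For $\partial_\mu\gamma$, the second bound of Lemma \ref{lemma:2ndderivatives} combined with the density bound gives the constant $2R_1(2+3R_1R_z)e^{2R_1R_z}$.
\end{itemize}

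\textbf{Lipschitz in $y$.} For the $y$-increment of $\partial_\mu\gamma(z,\mathfrak m)(\cdot)$, the map $y\mapsto \exp(z^Ty)[(y-\gamma)z^T+I]/\gamma_2$ is smooth on $\bar B(R_1)$. Its $y$-gradient is bounded by $e^{2R_1R_z}\bigl(R_z(1+2R_1R_z)+R_z\bigr)$, which gives a Lipschitz constant directly.

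\textbf{Lipschitz in the measure.} This is the main step. Both $D_z\gamma$ and $\partial_\mu\gamma$ are built from three ingredients:
\begin{itemize}
\item the normaliser $\gamma_2(z,\mathfrak m)$;
\item $\gamma(z,\mathfrak m)$;
\item the ratio $\iint e^{\langle z,y_1+y_2\rangle}y_1(y_1-y_2)^T\,d\mathfrak m^{\otimes2}$ divided by $\gamma_2^2$.
\end{itemize}

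$\gamma$ itself is handled by Corollary \ref{corr:LipMu}. For $\gamma_2$ and the double integral, which are linear in $\mathfrak m$ and in $\mathfrak m\otimes\mathfrak m$ respectively, take an optimal $W_p$-coupling $(Y,\tilde Y)$ and, for the double integral, an independent copy $(Y',\tilde Y')$. The difference of integrands is Lipschitz on $\bar B(R_1)^2$ with constant $O((1+R_1R_z)R_1e^{2R_1R_z})$. Taking expectations gives a bound by $\E\Vert Y-\tilde Y\Vert+\E\Vert Y'-\tilde Y'\Vert\le 2W_1\le 2W_p$, where the last inequality uses Jensen/Hölder on the probability space. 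Since $\gamma_2\ge e^{-R_1R_z}$, the quotient rule then transfers these bounds to $D_z\gamma$ via \eqref{eqn:Dzgamma2}. It also transfers them to $\partial_\mu\gamma(z,\cdot)(y)$ via \eqref{eqn:Dmugamma}, whose measure dependence enters only through $1/\gamma_2$ and $\gamma$. The resulting constants carry products of bounded quantities, for instance $\Vert(y-\gamma)z^T+I\Vert\le 1+2R_1R_z$.

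\textbf{Main obstacle.} The only delicate point is the product measure in $D_z\gamma$. It is resolved by coupling the two factors independently with the same optimal coupling, so the error doubles but remains linear in $W_p$. The other potential issue is that the statement allows any $p\in[1,\infty]$. This is harmless because $W_1\le W_p$ for every such $p$, and we accept a $p$-independent, exponential-in-$R_1R_z$ constant rather than the sharper $p$-dependent one of Corollary \ref{corr:LipMu}.

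Summing the three increments and taking $\Lambda_{D\gamma,p}$ as the maximum of the constants obtained concludes the proof.
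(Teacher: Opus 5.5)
Your proof is correct. For $\partial_\mu\gamma$ it matches the paper almost step for step. You take the $z$-increment via Lemma \ref{lemma:2ndderivatives}, and the $y$- and measure-increments via the explicit formula, a coupling bound on $1/\gamma_2$, and Corollary \ref{corr:LipMu} for $\gamma$. The paper splits \eqref{eqn:LipDmugamma} into four terms rather than bounding a $y$-gradient, which is immaterial.

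The genuine difference is the measure increment of $D_z\gamma$. The paper applies the Lions-derivative mean value theorem, Lemma \ref{ProbMeasureMVT}, to $u^TD_z\gamma(z_2,\cdot)v$, and expresses the integrand through $\partial_{z_i}\partial_\mu\gamma$. It bounds this by Lemma \ref{lemma:2ndderivatives} and uses H\"older, controlling the $L^q$ norm of the tilted density $e^{z^TY_t}/\gamma_2$ by $e^{2R_1R_z/p}$. This yields a constant of order $R_1(1+R_1R_z)e^{2R_1R_z/p}$, which is polynomial in $R_1R_z$ when $p=\infty$, in line with the paper's emphasis on $W_\infty$.

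You instead use that the numerator of \eqref{eqn:Dzgamma2} is linear in $\mathfrak m^{\otimes 2}$. You couple the product measures by two independent copies of one coupling and finish with the quotient rule. This is more elementary and self-contained: it needs neither the measure-valued mean value theorem nor the tacit identification of $\partial_\mu D_z\gamma$ with $\partial_z\partial_\mu\gamma$. The cost is a $p$-independent constant that is exponential in $R_1R_z$. Nothing is lost, since the corollary only asserts existence, and the paper's $\Lambda_{D\gamma,p}$ is in any case a maximum that includes exponential constants from the $\partial_\mu\gamma$ part.

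One small slip: a $W_p$-optimal coupling gives $\E\Vert Y-\tilde Y\Vert\le W_p$ directly by Jensen, but not $\le W_1$. Either use a $W_1$-optimal coupling and then $W_1\le W_p$, or drop the middle step.
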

\begin{proof}
Let $(\widetilde{\Omega},\widetilde{\mathcal F},\widetilde{\mathbb P})$ be an atomless Polish probability space. Then, for any $u,v \in \mathbb S^{d-1}$, $t\in(0,1)$ and $Y_i\in L^2(\widetilde{\Omega},\widetilde{\mathcal F},\widetilde{\mathbb P})$ with $\mathcal L(Y_i)=\mathfrak{m}_i$ for $i=1,2$, the Mean Value Theorem given in Lemma \ref{ProbMeasureMVT} implies that 
\[
u^T\left(D_{z}\gamma(z_2,\mathfrak{m}_1)-D_{z}\gamma(z_2,\mathfrak{m}_2)\right)v = \sum_{i=1}^d\widetilde{\mathbb E} \left[u_i\, v^T\partial_{z_i}\partial_\mu \gamma(z_2,\mathcal L(Y_t))(Y_t)(Y_1-Y_2) \right] ,
\]
where $Y_t=Y_2+t(Y_1-Y_2)$. Therefore, H\"older's inequality implies that for any $p,q\in [1,\infty]$ such that $p^{-1}+q^{-1}=1$, we have 
\begin{align*}
   u^T\left( D_{z}\gamma(z_2,\mathfrak{m}_1)-D_{z}\gamma(z_2,\mathfrak{m}_2)\right)v = \left\Vert \left\Vert \sum_{i=1}^du_i \partial_{z_i}\partial_\mu \gamma(z_2,\mathcal L(Y_t))(Y_t)\right\Vert_\mathrm{op}\right\Vert_{L^q(\widetilde{\Omega})} \Vert Y_1-Y_2\Vert_{L^p(\widetilde{\Omega})}.
\end{align*}
Hence, by Lemma \ref{lemma:2ndderivatives} and since $\Vert u \Vert =1$,
\begin{align*}
    u^T\left( D_{z}\gamma(z_2,\mathfrak{m}_1)-D_{z}\gamma(z_2,\mathfrak{m}_2)\right)v = 2R_1(2+3R_1R_z)\left\Vert \frac{\exp(z^TY_t)}{\gamma_2\left(z,\mathcal L(Y_t)\right)}\right\Vert_{L^q(\widetilde{\Omega})} \Vert  Y_2-Y_1\Vert_{L^p(\widetilde{\Omega})}.
\end{align*}
Evaluating the above expression at the coupling between $Y_1,Y_2$ that attains the $p$-Wasserstein distance,  we obtain
\begin{equation*}
    u^T\left(D_z\gamma(z_2,\mathfrak{m}_1)-D_{z}\gamma(z_2,\mathfrak{m}_2)\right)v\le 3R_1(1+2R_1R_z) \exp(2R_zR_1p^{-1}) W_p(\mathfrak{m}_1,\mathfrak{m}_2),
\end{equation*}
 where we have used $\Vert Y_t\Vert \le R_1 $ $\widetilde{\mathbb P}$-a.s. and  $1-q^{-1}=p^{-1}$ to produce the bound $$\left\Vert \frac{\exp(z^TY_t)}{\gamma_2(z,\mathcal L(Y_t))}\right\Vert_{L^q(\widetilde{\Omega})}=\left(\widetilde{\mathbb E}\left[\frac{\exp(z^TY_t)}{\widetilde{\mathbb E}\left[\exp(z^TY_t)\right]}\cdot \frac{\exp((q-1)z^TY_t)}{(\gamma_2(z,\mathcal L(Y_t)))^{q-1}}\right]\right)^{1/q}\le \exp\left(2R_1R_zp^{-1}\right).$$ 
 The preceding bound on the difference between $D_z\gamma(z_2,\mathfrak{m}_1)$ and $D_z\gamma(z_2,\mathfrak{m}_2)$ holds for every $u,v\in\mathbb S^{d-1}$ and thus it holds for the supremum over $u,v\in\mathbb S^{d-1}$. This supremum attains the operator norm, which implies that
\begin{equation}\label{eqn:LipDzgamma}
   \Vert D_z\gamma(z_2,\mathfrak{m}_1)-D_{z}\gamma(z_2,\mathfrak{m}_2)\Vert_\mathrm{op}\le 3R_1(1+2R_1R_z) \exp(2R_zR_1p^{-1}) W_p(\mathfrak{m}_1,\mathfrak{m}_2).
\end{equation}
 Automatically from Lemma \ref{lemma:2ndderivatives}, $D_z\gamma$ is $8R_1^3$-Lipschitz continuous in $z$ on $\bar B(R_1)\times \mathcal P(\bar B(R_1))$. Therefore, $D_z\gamma$ is Lipschitz continuous on $\bar B(R_1)\times \mathcal P(\bar B(R_1))$ with Lipschitz constant that is the maximum of $8R_1^3$ and the constant in equation \eqref{eqn:LipDzgamma}.

Now, consider finding the local Lipschitz constant of $\partial_\mu\gamma$. By the Mean Value Theorem on $\R$, for every $u,v\in \mathbb S^{d-1}$, there exists $t\in(0,1)$ with $z_t=z_1+t(z_2-z_1)$ such that
\[
u^T\left( \partial_\mu\gamma(z_1,\mathfrak{m}_1)(y_1)-\partial_\mu\gamma(z_2,\mathfrak{m}_1)(y_1)\right)v\le \left\Vert \sum_{i=1}^d (z_1-z_2)_i \partial_{z_i}\partial_\mu \gamma(z_t,\mathfrak{m}_1)(y_1) \right\Vert_\mathrm{op}.
\]
Hence, applying Lemma \ref{lemma:2ndderivatives} and then taking the supremum over $u,v \in \mathbb S^{d-1}$, we obtain
\begin{equation}\label{eqn:LipDmugamma3}
  \Vert \partial_\mu\gamma(z_1,\mathfrak{m}_1)(y_1)-\partial_\mu\gamma(z_2,\mathfrak{m}_1)(y_1)\Vert_\mathrm{op}\le 2R_1(2+3R_1R_z) \exp(2R_zR_1) \Vert z_1-z_2\Vert,   
\end{equation}
since for any $z_1,z_2\in \bar B(R_z)$ so too is $z_t\in \bar B(R_z)$. For the Lipschitz continuity of $\partial_\mu \gamma$ in measure and $y$, we use the expression for $\partial_\mu \gamma$ given in  \eqref{eqn:Dmugamma} and add and subtract suitable terms to get 
\begin{align}
    \Vert \partial_\mu\gamma(z_2,\mathfrak{m}_1)(y_1)-\partial_\mu\gamma(z_2,\mathfrak{m}_2)(y_2)\Vert_\mathrm{op}&\le \frac{\Vert z_2\Vert}{\gamma_2(z_2,\mathfrak{m}_1)}\left\Vert \exp(z_2^Ty_1)\,y_1-\exp(z_2^Ty_2)\,y_2\right\Vert\notag\\&+ \frac{\vert \exp(z_2^Ty_1)-\exp(z_2^Ty_2)\vert}{\gamma_2(z_2,\mathfrak{m}_1)}\notag\\&+\left\vert \frac{1}{\gamma_2(z_2,\mathfrak{m}_1)}-\frac{1}{\gamma_2(z_2,\mathfrak{m}_2)}\right\vert\exp(z_2^Ty_2)\left(1+2R_1R_z \right)\notag\\
    &+\frac{\exp(z_2^Ty_2)}{\gamma_2(z_2,\mathfrak{m}_2)}\Vert \gamma(z_2,\mathfrak{m}_1)-\gamma(z_2,\mathfrak{m}_2)\Vert \Vert z_2\Vert,\label{eqn:LipDmugamma}
\end{align}
Since $y_1,y_2\in \bar B(R_1)$, $z_2\in \bar B(R_z)$ and $\mathfrak{m}_1,\mathfrak{m}_2 \in \mathcal P(\bar B(R_1))$, the first term in the expression above is bounded via
\[
\frac{\Vert z_2\Vert}{\gamma_2(z_2,\mathfrak{m}_1)}\left\Vert \exp(z_2^Ty_1)\,y_1-\exp(z_2^Ty_2)\,y_2\right\Vert\le R_z\exp(2R_1R_z)\left(1+R_1R_z \right)\Vert y_1-y_2\Vert.
\]
Similarly, the second term is bounded by
\[
\frac{\vert \exp(z_2^Ty_1)-\exp(z_2^Ty_2)\vert}{\gamma_2(z_2,\mathfrak{m}_1)}\le \exp(2 R_1R_z)R_z \Vert y_1-y_2\Vert.
\]
For the third term, take any coupling $\bm{m}\in \Pi(\mathfrak{m}_1,\mathfrak{m}_2)$. By applying H\"older's inequality for any $p,q \in [1,\infty]$ with $p^{-1}+q^{-1}=1$, we obtain
\begin{align*}
    \left\vert \frac{1}{\gamma_2(z_2,\mathfrak{m}_1)}-\frac{1}{\gamma_2(z_2,\mathfrak{m}_2)}\right\vert&\le \frac{\int \exp(z_2^Ty_1)\vert 1-\exp(z_2^T(y_2-y_1))\vert\,d \bm{m}(y_1,y_2)}{\gamma_2(z_2,\mathfrak{m}_1)\gamma_2(z_2,\mathfrak{m}_2)}\\&\le\frac{\left(\int \exp(q z_2^Ty_1)d\mathfrak{m}_1\right)^{1/q}}{\gamma_2(z_2,\mathfrak{m}_1)} \frac{\exp(2R_zR_1)R_z}{\gamma(z_2,\mathfrak{m}_2)}\left( \int \Vert y_1-y_2\Vert^p d\bm{m}(y_1,y_2)\right)^{1/p}.
\end{align*}
Therefore, by taking the coupling $\bm{m}\in \Pi(\mathfrak{m}_1,\mathfrak{m}_2)$ that attains $W_p(\mathfrak{m}_1,\mathfrak{m}_2)$ yields
\begin{equation}\label{eqn:Lipinversegamma2}
    \left\vert \frac{1}{\gamma_2(z_2,\mathfrak{m}_1)}-\frac{1}{\gamma_2(z_2,\mathfrak{m}_2)}\right\vert\le R_z\exp((3+2p^{-1}) R_zR_1)W_p(\mathfrak{m}_1,\mathfrak{m}_2),
\end{equation}
Finally, by applying $\Lambda_{\mu,p}$-Lipschitz continuous of $\gamma$ with respect to its measure argument, Corollary \ref{corr:LipMu}, to the final term of \eqref{eqn:LipDmugamma} and the preceding estimates to bound the other terms of \eqref{eqn:LipDmugamma}, we deduce that there exists a constant $\Lambda_{\gamma\mu,p}'=\Lambda_{\gamma\mu,p}'(R_1,R_z)$ such that
\begin{equation}\label{eqn:LipDmugamma2}
    \Vert \partial_\mu\gamma(z_2,\mathfrak{m}_1)(y_1)-\partial_\mu\gamma(z_2,\mathfrak{m}_2)(y_2)\Vert_\mathrm{op}\le\Lambda_{\gamma\mu,p}' \left(\Vert y_1-y_2 \Vert + W_p(\mathfrak{m}_1,\mathfrak{m}_2)\right).
\end{equation}
Hence, combining bounds \eqref{eqn:LipDmugamma3} and \eqref{eqn:LipDmugamma2} yields that $\partial_\mu \gamma$ is jointly Lipschitz continuous on \\$\bar B(R_z)\times \mathcal P(\bar B(R_1)) \times \bar B(R_1)$.
\end{proof}
\begin{lemma}\label{lemma:BoundedGamma}
Let $\mathfrak{S}(R)$ be defined by \ref{eqn:mathfrak_S}. Then, for $x,y \in \bar B(R_1)$, $\theta\in \mathfrak{S}(R_2)$, $a\in \bar B(R_3)$, $\mathfrak{m}\in \mathcal P(\bar B(R_1)), \mathfrak{r}\in \mathcal P(\bar B(R_1)\times \bar B(R_3))$ and $\mathfrak{n}\in \mathcal P(\mathfrak{S}(R_2))$, we have that
    \begin{align*}
    \Vert \Gamma(x,\mathfrak{m},\mathfrak{n})\Vert &\le R_1R_2^2,\qquad
\Vert \mathcal K(x,\mathfrak{r},\mathfrak{n},a)\Vert\le R_3\tilde{B}_\mathcal{K} .
    \end{align*}
where $\tilde{B}_\mathcal K$ is independent of $R_3$ and defined as
\[
 \tilde{B}_\mathcal K:=R_2^2\left(2\beta R_1^2R_2^2+(1+2\beta R_2^2R_1^2)\exp(2\beta R_1^2R_2^2) \right).
\] 
\end{lemma}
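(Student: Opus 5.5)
The plan is to bound each quantity directly from its definition, using the explicit derivative formulas already derived for $\gamma$. Throughout, $R$ is taken larger than $R_1$, so $P_R$ is the identity on $\bar B(R_1)$ and $DP_R=I$ there. Two preliminary facts are used repeatedly. First, for $\theta\in\mathfrak{S}(R_2)$ each block satisfies $\Vert\theta_\cdot\Vert_{\mathrm F}\le R_2$. Hence
\[
\Vert\theta_O^T\theta_V\Vert_{\mathrm{op}}\le \Vert\theta_O\Vert_{\mathrm F}\Vert\theta_V\Vert_{\mathrm F}\le R_2^2,
\qquad
\Vert\beta\theta_K^T\theta_Q\Vert_{\mathrm{op}}\le\beta R_2^2 .
\]
In particular, for $x\in\bar B(R_1)$ the argument $z=\beta\theta_K^T\theta_Q x$ satisfies $\Vert z\Vert\le \beta R_1R_2^2$. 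Second, $\gamma(z,\mathfrak m)$ is the mean of the probability measure $\mathcal A_{z,\mathfrak m}$ from \eqref{eqn:GammaMeasureChange}, whose support lies in $\bar B(R_1)$. Therefore $\Vert\gamma(z,\mathfrak m)\Vert\le R_1$.

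\textbf{Bound on $\Gamma$.} Take the norm inside the $\mathfrak n$-integral in \eqref{eqn:Gamma}. Each integrand is bounded by $\Vert\theta_O^T\theta_V\Vert_{\mathrm{op}}\,\Vert\gamma\Vert\le R_2^2R_1$. Since $\mathfrak n$ is a probability measure, this gives $\Vert\Gamma(x,\mathfrak m,\mathfrak n)\Vert\le R_1R_2^2$.

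\textbf{Bound on $\mathcal K$.} Split $\mathcal K$ according to \eqref{eqn:mathcal_K} into $\nabla_x\mathcal H$ and the Lions-derivative term.

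For the first term, $\nabla_x\mathcal H=(D_x\Gamma)^Ta$, where by the chain rule (differentiating under the integral, justified by the compact support of $\mathfrak n$)
\[
D_x\Gamma=\int\theta_O^T\theta_V\,D_z\gamma(z,\mathfrak m)\,\beta\theta_K^T\theta_Q\,d\mathfrak n(\theta).
\]
Lemma \ref{lemma:GammaLipz} gives $\Vert D_z\gamma\Vert_{\mathrm{op}}\le 2R_1^2$; this follows from \eqref{eqn:Dzgamma}, since the support of $\mathcal A_{z,\mathfrak m}$ lies in $\bar B(R_1)$. Hence
\[
\Vert\nabla_x\mathcal H\Vert\le R_3\cdot R_2^2\cdot 2R_1^2\cdot\beta R_2^2=R_3R_2^2\,\bigl(2\beta R_1^2R_2^2\bigr),
\]
which is the first summand of $R_3\tilde B_{\mathcal K}$.

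For the second term, the integrand is
\[
\partial_\mu\mathcal H(y,\mathfrak m,\mathfrak n,p)(x)=\int\bigl(\theta_O^T\theta_V\,\partial_\mu\gamma(z_y,\mathfrak m)(x)\bigr)^Tp\,d\mathfrak n(\theta),
\qquad z_y=\beta\theta_K^T\theta_Q\,y .
\]
By \eqref{eqn:Dmugamma},
\[
\Vert\partial_\mu\gamma(z,\mathfrak m)(x)\Vert_{\mathrm{op}}\le \frac{\exp(z^Tx)}{\gamma_2(z,\mathfrak m)}\bigl(1+2R_1\Vert z\Vert\bigr).
\]
The step that needs care is controlling the exponential ratio uniformly. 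Because $\mathfrak m$ is supported in $\bar B(R_1)$, we have $\gamma_2(z,\mathfrak m)\ge\exp(-R_1\Vert z\Vert)$ and $\exp(z^Tx)\le\exp(R_1\Vert z\Vert)$. The ratio is therefore at most $\exp(2R_1\Vert z\Vert)\le\exp(2\beta R_1^2R_2^2)$, while the prefactor is at most $1+2\beta R_1^2R_2^2$. Multiplying by $\Vert\theta_O^T\theta_V\Vert_{\mathrm{op}}\le R_2^2$ and by $\Vert p\Vert\le R_3$ bounds each integrand by
\[
R_3R_2^2\,(1+2\beta R_1^2R_2^2)\exp(2\beta R_1^2R_2^2).
\]
Integrating against the probability measures $\mathfrak n$ and $\mathfrak r$ preserves this bound. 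It is the second summand of $R_3\tilde B_{\mathcal K}$.

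Adding the two contributions yields $\Vert\mathcal K(x,\mathfrak r,\mathfrak n,a)\Vert\le R_3\tilde B_{\mathcal K}$, and $\tilde B_{\mathcal K}$ is independent of $R_3$ as claimed.
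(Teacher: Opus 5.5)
Your proposal is correct and follows the paper's proof essentially step for step. It uses the same triangle-inequality bound on $\Gamma$, and the same split of $\mathcal K$ into $\nabla_x\mathcal H$ (bounded via $\Vert D_z\gamma\Vert_{\mathrm{op}}\le 2R_1^2$) and the Lions-derivative term (bounded via \eqref{eqn:Dmugamma} and $\exp(z^Tx)/\gamma_2\le\exp(2\beta R_1^2R_2^2)$), which gives exactly $R_3\tilde B_{\mathcal K}$. Your justification of $\Vert D_z\gamma\Vert_{\mathrm{op}}\le 2R_1^2$ directly from the support of $\mathcal A_{z,\mathfrak m}$ is slightly cleaner than citing Lemma \ref{lemma:GammaLipz}: that lemma is stated only for $z\in\bar B(R_1)$, whereas here $\Vert z\Vert$ can be as large as $\beta R_1R_2^2$.
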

\begin{proof}
The integral triangle inequality gives
\begin{align*}
    \Vert \Gamma(x,\mathfrak{m}, \mathfrak{n})\Vert &\le \iint \frac{\exp(\beta\langle  \theta_Qx, \theta_K y \rangle)}{\gamma_2(\beta \theta_K^T\theta_Qx,\mathfrak{m})}\Vert \theta_O^T\theta_Vy\Vert d\mathfrak{m}(y)d\mathfrak{n}(\theta) \le R_2^2R_1.
\end{align*}
Recall that $\mathcal K$, defined in equation \eqref{eqn:mathcal_K}, can be decomposed into two term. For the first term $\nabla_x \mathcal H$, we have 
\begin{align}
    \Vert \nabla_x \mathcal H(x,\mathfrak{m}, \mathfrak{n},a)\Vert&\le \beta \Vert a\Vert \int \Vert \theta_O^T\theta_VD_x\gamma(\beta \theta_K^T\theta_Qx,\mathfrak{m})\theta_K^T\theta_Q\Vert_\mathrm{op} d\mathfrak{n}(\theta)\le 2\beta R_2^4 R_1^2R_3,\label{eqn:boundmathcal_K1}
\end{align}
where we have used the bound on the operator norm of $D_z\gamma$ given in Lemma \ref{lemma:GammaLipz}. Then, for the second term of $\mathcal K$, we have
\begin{align*}
    \left\Vert \int \partial_\mu \mathcal H(y,\mathfrak{m}, \mathfrak{n},p)(x)d\mathfrak{r}(y,p)\right\Vert&\le \iint \Vert \theta_O^T\theta_V\partial_\mu\gamma(\beta \theta_K^T\theta_Qy,\mathfrak{m})(x)\Vert_\mathrm{op}\Vert p\Vert d\mathfrak{n}(\theta)d\mathfrak{r}(y,p).
\end{align*}
Recall the expression for $\partial_\mu \gamma$ given in Lemma \ref{lemma:Lderivative}, which is bounded via
\begin{align}\label{eqn:bound_gamma_mu}
 \Vert \partial_\mu\gamma(\beta \theta_K^T\theta_Q y,\mathfrak{m})(x)\Vert_\mathrm{op}&\le  \frac{\exp(\beta \langle \theta_Qy,\theta_Kx\rangle)}{\gamma_2(\beta  \theta_K^T\theta_Qy,\mathfrak{m})}\{\Vert x-\gamma(\beta \theta_K^T\theta_Qy,\mathfrak{m})\Vert \Vert \beta\theta_K^T\theta_Qy\Vert+1\}  \\
 &\le (1+2\beta R_2^2R_1^2)\exp(2\beta R_1^2R_2^2)\notag
\end{align}
By combining the two preceding estimates, we get
\begin{align}
     \bigg\Vert \int \partial_\mu \mathcal H(y,\mathfrak{m}, \mathfrak{n},p)(x)d\mathfrak{r}(y,p)\bigg\Vert
    & \le  R_3 R_2^2(1+2\beta R_2^2R_1^2)\exp(2\beta R_1^2R_2^2)\label{eqn:boundmathcal_k2}
\end{align}
The result concludes by noting that $\mathcal K$ is bounded above by the sum of the terms on the left-hand sides of equations \eqref{eqn:boundmathcal_K1} and \eqref{eqn:boundmathcal_k2}, and so $\mathcal K$ is bounded above by the sum of the right-hand sides of these equations.
\end{proof}
\begin{lemma}\label{lemma:Lipderivs}
Let $\mathfrak{S}(R)$ be defined by \ref{eqn:mathfrak_S} and, for $i=1,2$, take $x_i\in \bar B(R_1)$, $\mathfrak{m}_i \in \mathcal P(\bar B(R_1))$ and $\mathfrak{n}_i\in \mathcal P(\mathfrak{S}(R_2))$ . Then, for every $p\ge 1$, there exists a constant $\Lambda_p= \Lambda_p(R_1,R_2)$ such that
    \begin{align}
        \Vert\Gamma(x_1,\mathfrak{m}_1,\mathfrak{n}_1)&-\Gamma(x_2,\mathfrak{m}_2,\mathfrak{n}_2)\Vert \le \Lambda_{p} \left(\Vert x_1-x_2\Vert + W_p(\mathfrak{m}_1,\mathfrak{m}_2)+ W_1(\mathfrak{n}_1,\mathfrak{n}_2) \right),\label{eqn:LipGamma}\\
        \Vert D_x\Gamma(x_1,\mathfrak{m}_1,\mathfrak{n}_1)&-D_x\Gamma (x_2,\mathfrak{m}_2,\mathfrak{n}_2)\Vert_\mathrm{F}  \label{eqn:LipDxGamma}\\&\le \Lambda_{p} \left(\Vert x_1-x_2\Vert + W_p(\mathfrak{m}_1,\mathfrak{m}_2)+ W_1(\mathfrak{n}_1,\mathfrak{n}_2) \right),\notag\\
        \Vert \partial_\mu\Gamma(x_1,\mathfrak{m}_1,\mathfrak{n}_1)(y_1)&-\partial_\mu\Gamma (x_2,\mathfrak{m}_2,\mathfrak{n}_2)(y_2)\Vert_\mathrm{F}\label{eqn:LipDmuGamma}\\&\le\Lambda_{p} \left(\Vert x_1-x_2\Vert_\mathrm{F} + \Vert y_1-y_2\Vert+ W_p(\mathfrak{m}_1,\mathfrak{m}_2)+ W_1(\mathfrak{n}_1,\mathfrak{n}_2) \right).\notag
    \end{align}
\end{lemma}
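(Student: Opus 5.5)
The three estimates will be proved by triangle inequalities, moving one argument at a time: first $x$, then $\mathfrak{m}$, then $\mathfrak{n}$, and for \eqref{eqn:LipDmuGamma} also $y$.

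\textbf{Reduction to the integrand.} Write
\[
\Gamma(x,\mathfrak{m},\mathfrak{n})=\int F(\theta,x,\mathfrak{m})\,d\mathfrak{n}(\theta),\qquad F(\theta,x,\mathfrak{m})=\theta_O^T\theta_V\,\gamma(\beta\theta_K^T\theta_Q x,\mathfrak{m}).
\]
Similarly,
\[
D_x\Gamma=\int \beta\,\theta_O^T\theta_V\,D_z\gamma(\beta\theta_K^T\theta_Qx,\mathfrak{m})\,\theta_K^T\theta_Q\,d\mathfrak{n},\qquad
\partial_\mu\Gamma(x,\mathfrak{m},\mathfrak{n})(y)=\int\theta_O^T\theta_V\,\partial_\mu\gamma(\beta\theta_K^T\theta_Qx,\mathfrak{m})(y)\,d\mathfrak{n}.
\]
Since $\theta\in\mathfrak{S}(R_2)$, each block $\theta_\cdot$ has Frobenius norm at most $R_2$. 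Hence $z=\beta\theta_K^T\theta_Qx$ lies in $\bar B(R_z)$ with $R_z=\beta R_2^2R_1$, and the prefactors are bounded by $R_2^2$ (or $\beta R_2^4$). This gives a uniform compact domain, so we may apply Lemma \ref{lemma:GammaLipz}, Corollary \ref{corr:LipMu}, Corollary \ref{cor:LipGammaDerivs}, and the bounds in Lemma \ref{lemma:2ndderivatives} and Lemma \ref{lemma:BoundedGamma} with radii $R_1$ and $R_z$.

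\textbf{The $x$, $y$ and $\mathfrak{m}$ directions at fixed $\mathfrak{n}$.} Fix $\mathfrak{n}=\mathfrak{n}_1$ and take differences pointwise in $\theta$. Then:
\begin{itemize}
\item $\|z_1-z_2\|\le\beta R_2^2\|x_1-x_2\|$, so Lemma \ref{lemma:GammaLipz} handles the $x$-variation of $\gamma$.
\item Corollary \ref{corr:LipMu} gives $W_p$-Lipschitzness of $\gamma$ in $\mathfrak{m}$, with constant $\Lambda_{\mu,p}(R_z)$.
\item Corollary \ref{cor:LipGammaDerivs} gives the corresponding joint Lipschitz bounds for $D_z\gamma$ and $\partial_\mu\gamma$ in $(z,y,\mathfrak{m})$.
\end{itemize}
Integrating these pointwise bounds against $\mathfrak{n}_1$ and multiplying by the bounded prefactors yields the $\|x_1-x_2\|$, $\|y_1-y_2\|$ and $W_p(\mathfrak{m}_1,\mathfrak{m}_2)$ terms in all three inequalities. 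The Frobenius norms in \eqref{eqn:LipDxGamma} and \eqref{eqn:LipDmuGamma} differ from operator norms by at most a factor $\sqrt d$. I will absorb this into $\Lambda_p$; if $d$-independence is wanted, it should instead be controlled directly from the explicit formulas.

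\textbf{The $\mathfrak{n}$ direction.} This is the step requiring care. At fixed $(x_2,\mathfrak{m}_2,y_2)$, take any coupling $\bm{n}\in\Pi(\mathfrak{n}_1,\mathfrak{n}_2)$. Then
\[
\int G\,d\mathfrak{n}_1-\int G\,d\mathfrak{n}_2=\int \big(G(\theta)-G(\theta')\big)\,d\bm{n}(\theta,\theta').
\]
So it suffices to show each integrand $G$ is Lipschitz in $\theta$ on $\mathfrak{S}(R_2)$; choosing the $W_1$-optimal coupling then gives the $W_1(\mathfrak{n}_1,\mathfrak{n}_2)$ term. The integrands are products of:
\begin{itemize}
\item factors polynomial in $\theta$: $\theta_O^T\theta_V$ and $\theta_K^T\theta_Q$, which are Lipschitz on this compact set with constant of order $R_2$ (e.g. $\|\theta_O^T\theta_V-\theta_O'^T\theta_V'\|\le R_2(\|\theta_O-\theta_O'\|+\|\theta_V-\theta_V'\|)$);
\item compositions of $\gamma$, $D_z\gamma$ or $\partial_\mu\gamma$ with $z=\beta\theta_K^T\theta_Qx$, which are Lipschitz in $z$ on $\bar B(R_z)$ by Lemma \ref{lemma:GammaLipz}, Lemma \ref{lemma:2ndderivatives} and Corollary \ref{cor:LipGammaDerivs}.
\end{itemize}
All factors are bounded: by Lemma \ref{lemma:BoundedGamma}, by $\|D_z\gamma\|_{\mathrm{op}}\le2R_1^2$, and by \eqref{eqn:bound_gamma_mu}. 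The product rule for Lipschitz functions therefore gives a Lipschitz constant depending only on $(R_1,R_2,\beta)$.

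The main obstacle is precisely this Lipschitzness in $\theta$ of the exponential weights hidden in $\partial_\mu\gamma$. Because $\theta$ enters $z$ quadratically, the constant picks up factors like $\exp(2R_1R_z)$. These are finite only because $\mathfrak{n}$ is supported in the compact set $\mathfrak{S}(R_2)$, consistent with the remark that exponential moments of $\nu$ are needed. Summing the three contributions and taking the maximum of the constants defines $\Lambda_p(R_1,R_2)$.
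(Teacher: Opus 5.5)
Your proposal follows the paper's proof essentially step by step. The $(x,y,\mathfrak{m})$-increments at fixed $\mathfrak{n}_1$ come from Lemma~\ref{lemma:GammaLipz}, Corollary~\ref{corr:LipMu} and Corollary~\ref{cor:LipGammaDerivs} integrated against $\mathfrak{n}_1$. The $\mathfrak{n}$-increment comes from a $W_1$-optimal coupling together with Lipschitzness of the integrand in $\theta$ via the product rule; this is the paper's computation with $U_i=\theta_O^T\theta_V$ and $A_i=\beta\theta_K^T\theta_Q$.

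The one thing to fix is the $\sqrt d$ you absorb into $\Lambda_p$. The statement asks for $\Lambda_p=\Lambda_p(R_1,R_2)$, and the paper later relies on this dimension-free dependence for its $d$-independent bounds under Blockwise AdamW. No such factor arises if you put the Frobenius norm on the parameter factor rather than on the $\gamma$-derivative. Since $\mathfrak{S}(R_2)$ bounds each block in Frobenius norm, you have $\Vert\theta_O^T\theta_V\Vert_\mathrm{F}\le\Vert\theta_O\Vert_\mathrm{op}\Vert\theta_V\Vert_\mathrm{F}\le R_2^2$. Hence
\[
\Vert UMA\Vert_\mathrm{F}\le\Vert U\Vert_\mathrm{F}\Vert M\Vert_\mathrm{op}\Vert A\Vert_\mathrm{op}
\]
(with $A$ omitted for $\partial_\mu\Gamma$), where $M$ is $D_z\gamma$, $\partial_\mu\gamma$, or one of their increments. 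Lemma~\ref{lemma:GammaLipz}, Lemma~\ref{lemma:2ndderivatives}, Corollary~\ref{cor:LipGammaDerivs} and \eqref{eqn:bound_gamma_mu} control all of these in operator (or Hilbert--Schmidt) norm with $d$-free constants. In the $\theta$-direction, split $U_1M_1A_1-U_2M_2A_2$ the same way: always take the Frobenius norm of $U_1-U_2$ or of $U_2$, and operator norms of the remaining factors.
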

\begin{proof}
Applying the $2R_1^2$ and $\Lambda_{\mu,p}$-Lipchitz continuity of $\gamma$ with respect to the $z$ and $m$ variables, respectively, from Lemma \ref{lemma:GammaLipz} and Corollary \ref{corr:LipMu}, we deduce that
   \begin{align}\label{eqn:LipGamma1}
       \Vert\Gamma(x_1,\mathfrak{m}_1,\mathfrak{n}_1)&-\Gamma(x_2,\mathfrak{m}_2,\mathfrak{n}_1)\Vert\\& \le \int \Vert \theta_O^T\theta_V (\gamma(\beta \theta_K^T\theta_Qx_1,\mathfrak{m}_1)-\gamma(\beta \theta_K^T\theta_Qx_2,\mathfrak{m}_2))\Vert d\mathfrak{n}_1(\theta)\notag\\
       & \le R_2^2\left(2\beta R_2^2R_1^2\Vert x_1-x_2\Vert+ \Lambda_{\mu,p}(\beta R_1R_2^2) W_p(\mathfrak m_1,\mathfrak m_2)\right).\notag
   \end{align}
   Take any $\theta,\vartheta \in \mathfrak{S}(R_2)$ and write $A_1=\beta \theta_K^T\theta_Q,A_2=\beta \vartheta_K^T\vartheta_Q$,  $U_1=\theta_O^T\theta_V, U_2=\vartheta_O^T\vartheta_V$. Then, the $2R^2_1$-Lipschitz continuity of $\gamma$ in $z$ and boundedness of $\gamma$ implies that
    \begin{align}\label{eqn:LipGamma3}
      \left\Vert U_1 \gamma(A_1 x_2,\mathfrak{m}_2)-U_2\gamma(A_2x_2,\mathfrak{m}_2)\right\Vert 
      & \le R_1 \Vert U_1-U_2 \Vert_\mathrm{F}+2R^3_1R_2^2\Vert A_1-A_2\Vert_\mathrm{F}\\
      & \le 2R_1R_2(1\vee 2\beta R_1^2R_2^2) \Vert \theta-\vartheta\Vert_\mathrm{F}.\notag
    \end{align}
    The final inequality results from the Cauchy-Schwarz inequality, which gives \\$\Vert U_1-U_2\Vert_\mathrm{F}\le R_2(\Vert \theta_O-\vartheta_U\Vert_\mathrm{F}+\Vert \theta_V-\vartheta_V\Vert_\mathrm{F})$, similalry for $\Vert A_1-A_2\Vert_\mathrm{F}$. Then, applying $v_1+\ldots+ v_4\le 2\sqrt{v_1^2+\ldots+v_4^2}$ to write the bound in terms of $\vartheta,\theta$ rather than its blocks. By integrating equation \eqref{eqn:LipGamma3} with respect to a $\bm \nu\in \Pi(\mathfrak{n}_1,\mathfrak{n}_2)$ that attains the $1$-Wasserstein distance between $\mathfrak{n}_1,\mathfrak{n}_2$, we get
    \begin{align}\label{eqn:LipGamma2}
\Vert\Gamma(x_2,\mathfrak{m}_2,\mathfrak{n}_1)-\Gamma(x_2,\mathfrak{m}_2,\mathfrak{n}_2)\Vert& \le \Lambda_\Gamma'W_1 (\mathfrak{n}_1,\mathfrak{n}_2),
    \end{align}
    where $\Lambda_\Gamma'=2R_1R_2(1\vee 2\beta R_1^2R_2^2)$.
    Therefore, claim \eqref{eqn:LipGamma} is deduced by adding the estimates given in equations \eqref{eqn:LipGamma1} and \eqref{eqn:LipGamma2}. In Corollary \ref{cor:LipGammaDerivs}, we deduced that $D_z\gamma$ is $\Lambda_{D\gamma,p}$-locally Lipchitz continuous with respect to both variables. Therefore, by applying this estimate with $R_z=\beta R_1R_2^2$, we obtain
    \begin{align}\label{eqn:LipDzGamma1}
        \Vert D_x\Gamma&(x_1,\mathfrak{m}_1,\mathfrak{n}_1)- D_x\Gamma(x_2,\mathfrak{m}_2,\mathfrak{n}_1) \Vert_\mathrm{F}\\&\le\beta \int \left \Vert\theta_O^T\theta_V\left(D_z\gamma(\beta \theta_K^T\theta_Q x_1,\mathfrak{m}_1)-(D_z\gamma(\beta \theta_K^T\theta_Q x_2,\mathfrak{m}_2) \right)\theta_K^T\theta_Q\right\Vert_\mathrm{F}d\mathfrak{n}_1(\theta)\notag\\
        & \le \beta R_2^4 \left( \beta R_2^2 \Lambda_{D\gamma,p}\Vert x_1-x_2\Vert+ \Lambda_{D\gamma,p}W_p(\mathfrak{m}_1,\mathfrak{m}_2)\right).\notag
    \end{align}
    Recall $U_1,U_2,A_1,A_2$ defined  in the argument for the Lipschitz continuity of $\Gamma$. After applying the $8R_1^3$ local Lipschitz continuity of $D_z\gamma$ in $z$,  Lemma \ref{lemma:2ndderivatives}, and the fact that $D_z\gamma$ is bounded by $2R_1^2$, see Lemma \ref{lemma:GammaLipz}, we get
    \begin{align}\label{eqn:LipDzgammanu}
        \Vert U_1D_z\gamma(A_1x_2,\mathfrak{m}_2)A_1&-U_2D_z\gamma(A_2x_2,\mathfrak{m}_2)A_2\Vert_\mathrm{F} \\
          & \le 2 R^2_1R_2^2 \left(\beta\Vert U_1-U_2\Vert_\mathrm{F}+\Vert A_1-A_2\Vert_\mathrm{F}\right)+8\beta R_1^4R_2^4\Vert A_1-A_2\Vert_\mathrm{F}\notag\\&\le 2 \beta R^2_1R_2^3 \left(\Vert \theta_O-\vartheta_U\Vert_\mathrm{F}+ \Vert \theta_V-\vartheta_V\Vert_\mathrm{F}+\Vert \theta_Q-\vartheta_Q\Vert_\mathrm{F}+\Vert \theta_K-\vartheta_K\Vert_\mathrm{F}\right)\notag\\&+8\beta^2R_1^4R_2^5\left(\Vert \theta_Q-\vartheta_Q\Vert_\mathrm{F}+ \Vert \theta_K-\vartheta_K\Vert_\mathrm{F}\right).\notag
    \end{align}
    The final inequality results from the Cauchy-Schwarz inequality and expanding the definition of $U_i,A_i$. Take any $\bm\nu\in \Pi(\mathfrak{n}_1,\mathfrak{n}_2)$ that attains the $1$-Wasserstein distance $W_1(\mathfrak{n}_1,\mathfrak{n}_2)$.  Since the sum of the Fr\"obenius norms on the blocks of parameters corresponding to the $Q,K,V,O$ matrices is bounded above by twice the Fr\"obenius norm between $\theta,\vartheta\in \R^{k\times4d}$, integrating the expression given on the left-hand side of equation \eqref{eqn:LipDzgammanu} with respect to $\bm{\nu}$ gives
    \[
    \Vert D_x\Gamma(x_2,\mathfrak{m}_2,\mathfrak{n}_1)-D_x\Gamma(x_2,\mathfrak{m}_2,\mathfrak{n}_2)\Vert_\mathrm{F}\le 4R_1^2R_2^3\beta(1+4\beta R_1^2R_2^2) W_1(\mathfrak{n}_1,\mathfrak{n}_2).
    \]
    Hence,  Claim \eqref{eqn:LipDxGamma} is deduced by combining the above equation with \eqref{eqn:LipDzGamma1}.

    The final claim on the Lipschitz continuity of $\partial_\mu \Gamma$ follows identically, by replacing $D_z\gamma$ by $\partial_\mu\gamma$ in the above estimates and using the second result of Corollary \ref{cor:LipGammaDerivs}.
\end{proof}
\begin{lemma}\label{lemma:LipmathcalHnu}
Let $\mathfrak{S}(R)$ be defined by \ref{eqn:mathfrak_S} and, for $i=1,2$, suppose $x_i\in \bar B(R_1), a_i \in \bar B(R_3),$\\$ \mathfrak{m}_i\in \mathcal P(\bar B(R_1)), \mathfrak{n}_i\in \mathcal P(\mathfrak{S}(R_2))$ and  $\theta,\vartheta\in \mathfrak{S}(R_2)$. Then, for every $p\ge 1$, there exists a constant $\Lambda_{\mathcal H\nu,p}=\Lambda_{\mathcal H\nu,p}(\beta,R_1,R_2,R_3)$ such that 
    \begin{align*}
            \Vert \partial_\nu\mathcal H(x_1,\mathfrak{m}_1,\mathfrak{n}_1,a_1)(\theta)&-\partial_\nu\mathcal H(x_2,\mathfrak{m}_2,\mathfrak{n}_2,a_2)(\vartheta)\Vert_\mathrm{F}\\&\le \Lambda_{\mathcal H\nu,p}\left(\Vert x_1-x_2\Vert+ \Vert a_1-a_2\Vert+ W_p(\mathfrak{m}_1,\mathfrak{m}_2)+ \Vert \theta-\vartheta\Vert_\mathrm{F}\right).
    \end{align*}

\end{lemma}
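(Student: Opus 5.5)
The first step is to compute $\partial_\nu\mathcal H$ explicitly. Since $\Gamma(x,\mu,\nu)=\int \theta_O^T\theta_V\gamma(\beta\theta_K^T\theta_Q x,\mu)\,d\nu(\theta)$ is linear in $\nu$, the Lions derivative of $\mathcal H(x,\mu,\nu,a)=a^T\Gamma(x,\mu,\nu)$ at $\theta$ is the Euclidean gradient of the integrand,
\[
f(\theta):=a^T\theta_O^T\theta_V\,\gamma(\beta\theta_K^T\theta_Qx,\mu).
\]
In particular $\partial_\nu\mathcal H(x,\mathfrak m,\mathfrak n,a)(\theta)$ does not depend on $\mathfrak n$. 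This explains why no $W_1(\mathfrak n_1,\mathfrak n_2)$ term appears in the statement. Writing $g=\gamma(\beta\theta_K^T\theta_Qx,\mu)$ and $J=D_z\gamma(\beta\theta_K^T\theta_Qx,\mu)$, the four blocks are:
\begin{align*}
\nabla_{\theta_O}f&=\theta_V g\,a^T,\qquad \nabla_{\theta_V}f=\theta_O a\,g^T,\\
\nabla_{\theta_Q}f&=\beta\,\theta_K w\,x^T,\qquad \nabla_{\theta_K}f=\beta\,\theta_Q x\,w^T,
\end{align*}
where $w=J^T\theta_V^T\theta_O a$. Each block is therefore a product of a bounded number of factors drawn from: entries of $\theta$ (each block bounded in Frobenius norm by a constant depending on $R_2$ on $\mathfrak S(R_2)$), $x$ and $a$ (bounded by $R_1,R_3$), $\gamma$ (bounded by $R_1$), and $D_z\gamma$ (bounded in operator norm by $2R_1^2$ via Lemma \ref{lemma:GammaLipz}).

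Next, I would apply the elementary telescoping identity for products: if $\Vert u_j\Vert,\Vert v_j\Vert\le C_j$, then
\[
\Vert u_1\cdots u_n-v_1\cdots v_n\Vert\le \sum_j \Big(\prod_{i\ne j}C_i\Big)\Vert u_j-v_j\Vert.
\]
This reduces the claim to Lipschitz bounds on each factor. The factors $\theta$, $x$, $a$ are trivially $1$-Lipschitz. For $\gamma$, Lemma \ref{lemma:GammaLipz} and Corollary \ref{corr:LipMu} are applied with $z_i=\beta(\theta_i)_K^T(\theta_i)_Qx_i$, so that $\Vert z_i\Vert\le R_z:=\beta R_2^2R_1$. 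The difference $\Vert z_1-z_2\Vert$ is bounded by $C(\Vert\theta-\vartheta\Vert_\mathrm F+\Vert x_1-x_2\Vert)$ by the same Cauchy--Schwarz step used in \eqref{eqn:LipGamma3}, and the measure difference contributes $\Lambda_{\mu,p}(R_z)W_p(\mathfrak m_1,\mathfrak m_2)$. For $D_z\gamma$, Corollary \ref{cor:LipGammaDerivs} gives a Lipschitz bound in $(z,\mathfrak m)$ with constant $\Lambda_{D\gamma,p}(R_1\vee R_z, R_z)$. Summing over the four blocks, and using $\sum_j\Vert\cdot\Vert\le 2(\sum_j\Vert\cdot\Vert^2)^{1/2}$ to pass from block norms to $\Vert\cdot\Vert_\mathrm F$ on $\R^{k\times 4d}$, yields the constant $\Lambda_{\mathcal H\nu,p}(\beta,R_1,R_2,R_3)$.

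The main obstacle is the justification of the first step. A priori, the Lions derivative of $\mathcal H$ in $\nu$ requires an $L^2$ lifting, and $f$ grows cubically in $\theta$. I would resolve this as the paper indicates at the end of Section \ref{sec:adjoint}: replace $\nu$ by its pushforward under a projection that is the identity on $\mathfrak S(R_2)$. Because $\nu\mapsto\int f\,d\nu$ is linear, the Fréchet derivative of the lift $Y\mapsto\widetilde{\mathbb E}[f(Y)]$ is $\nabla f(Y)$ whenever $\nabla f$ is bounded and continuous, so $\partial_\nu\mathcal H(\cdot)(\theta)=\nabla_\theta f(\theta)$ on $\mathfrak S(R_2)$. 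The remaining care point is that $R_1$ must dominate the supports used in Corollary \ref{cor:LipGammaDerivs}, so that $P_R$ acts trivially. This holds because $x_i$, $\mathfrak m_i$ and the $\mathcal A$-measures all live in $\bar B(R_1)$.
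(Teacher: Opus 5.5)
Your proposal is correct and follows essentially the same route as the paper. You identify $\partial_\nu\mathcal H(\cdot)(\theta)$ with the $\theta$-gradient of $a^T\theta_O^T\theta_V\gamma(\beta\theta_K^T\theta_Q x,\mu)$; the paper cites Carmona--Delarue's result that the Lions derivative is the gradient of the first variation, while you argue directly from linearity of the lift. You then bound the $O,V$ blocks via the Lipschitz estimates for $\gamma$ and the $Q,K$ blocks via boundedness and Lipschitz continuity of $D_z\gamma$ (Corollary \ref{cor:LipGammaDerivs}) before recombining the blocks, as the paper does, and your explicit block formulas match it (e.g.\ $\nabla_{\theta_K}f=\beta\,\theta_Q x\,(\theta_O a)^T\theta_V D_z\gamma$).
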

\begin{proof}
    Denote the first variation (Definition 5.43 in \cite{carmona2018probabilistic} Vol. I) of $\mathcal H$ in $\nu$ by $\delta \mathcal H/\delta \nu$, which is given by
    \[
    \frac{\delta \mathcal H}{\delta \nu}(x,\mu,\nu,a)(\theta)=a^T\theta_O^T\theta_V\gamma(\beta\theta_K^T\theta_Qx,\mu).
    \]
   Lions' derivative $\partial_\nu\mathcal H$ coincides with the gradient of the first variation of $\mathcal H$, see Proposition 5.48 in \cite{carmona2018probabilistic} Volume I. Therefore, writing $A_1=\beta \theta_K^T\theta_Q$ and $A_2=\beta\vartheta_K^T\vartheta_Q$, the difference between the gradients corresponding to the $O$ parameters is bounded by
    \begin{align*}
        \bigg\Vert \nabla_{\theta_O}\frac{\delta\mathcal H}{\delta \nu}(x_1,\mathfrak{m}_1,\mathfrak{n}_1&,a_1)(\theta)- \nabla_{\theta_O}\frac{\delta\mathcal H}{\delta \nu}\mathcal H(x_2,\mathfrak{m}_2,\mathfrak{n}_2,a_2)(\vartheta)\bigg\Vert_\mathrm{F}\\& \le \Vert (\theta_{V}\gamma(A_1x_1,\mathfrak{m}_1))a_1^T-(\vartheta_{V}\gamma(A_2x_2,\mathfrak{m}_2))a_2^T\Vert_\mathrm{F}\\
        & \le R_1R_2 \Vert a_1-a_2\Vert+R_1R_3\Vert \theta_V-\vartheta_V\Vert_\mathrm{F}+ R_2R_3 \Lambda_{\mu,p}(\beta R_1R_2^2)W_p(\mathfrak{m}_1,\mathfrak{m}_2)\\&+2\beta R_3R_1^2\Vert \theta_K^T\theta_Q x_1-\vartheta_K^T\vartheta_Q x_2\Vert,
    \end{align*}
    having applied the Lipschitz estimates of $\gamma$ deduced in Appendix \ref{App:C}.
    Therefore, there exists a constant $\Lambda_{\mathcal H\nu,p}'$ that depends only on $\beta,R_1,R_2,R_3$ such that
   \begin{align*}
       \bigg\Vert \nabla_{\theta_O}\frac{\delta\mathcal H}{\delta \nu}(x_1,\mathfrak{m}_1,\mathfrak{n}_1,a_1)(\theta)&- \nabla_{\theta_O}\frac{\delta\mathcal H}{\delta \nu}\mathcal H(x_2,\mathfrak{m}_2,\mathfrak{n}_2,a_2)(\vartheta)\bigg\Vert_\mathrm{F}\\& \le\Lambda_{\mathcal H\nu,p}'\left(\Vert a_1-a_2\Vert +\Vert x_1-x_2\Vert +W_p(\mathfrak{m}_1,\mathfrak{m}_2)+ \Vert \theta-\vartheta\Vert_\mathrm{F}  \right).\notag
   \end{align*}
   Here, we have used the fact that the sum of $\Vert \theta_V-\vartheta_V\Vert_\mathrm{F}$ and that on the $Q,K$ parameters is bounded above by twice $\Vert \theta-\vartheta\Vert_\mathrm{F}$. The Lipschitz continuity of the gradient with respect to the $V$ parameters can be deduced and bounded identically. Moreover, the difference between the gradients corresponding to the $K$ parameters is bounded by
    \begin{align*}
        \bigg\Vert \nabla_{\theta_K}&\frac{\delta\mathcal H}{\delta \nu}(x_1,\mathfrak{m}_1,\mathfrak{n}_1,a_1)(\theta)- \nabla_{\theta_K}\frac{\delta\mathcal H}{\delta \nu}\mathcal H(x_2,\mathfrak{m}_2,\mathfrak{n}_2,a_2)(\vartheta)\bigg\Vert_\mathrm{F}\\& \le \beta\Vert\theta_{Q}x_1\, (\theta_{O}a_1)^T \theta_{V}D_z\gamma(A_1x_1,\mathfrak{m}_1)- \vartheta_{Q}x_2(\vartheta_{O}a_2)^T\vartheta_{V}D_z\gamma(A_2x_2,\mathfrak{m}_2)\Vert_\mathrm{F}.    
    \end{align*}
    Therefore, by using the boundedness of $D_z\gamma$ given in Lemma \ref{lemma:GammaLipz} and the Lipschitz continuity of $D_z\gamma$ shown in Corollary \ref{cor:LipGammaDerivs}, there exists a constant $\Lambda_{\mathcal H\nu,p}''$ that depends only on $\beta,R_1,R_2,R_3$ such that
    \begin{align*}
        \bigg\Vert \nabla_{\theta_K}\frac{\delta\mathcal H}{\delta \nu}(x_1,\mathfrak{m}_1,\mathfrak{n}_1,a_1)(\theta)&- \nabla_{\theta_K}\frac{\delta\mathcal H}{\delta \nu}\mathcal H(x_2,\mathfrak{m}_2,\mathfrak{n}_2,a_2)(\vartheta)\bigg\Vert_\mathrm{F}\\& \le \Lambda_{\mathcal H\nu,p}''\left(\Vert \theta-\vartheta\Vert_\mathrm{F}+\Vert x_1-x_2\Vert +\Vert a_1-a_2\Vert+W_p(\mathfrak{m}_1,\mathfrak{m}_2)\right).\notag
    \end{align*}
    The Lipschitz continuity of the gradient with respect to the $Q$ parameters can be deduced and bounded identically. Since the squared Fr\"obenius norm on $\R^{k\times 4d}$ is bounded above by the squared Fr\"obenius norm on each $\R^{k\times d}$ block corresponding to the $Q,K,V,O$ parameters separately, $\partial_\nu\mathcal H$ is locally Lipschitz continuous with Lipschitz constant $\sqrt{2}\sqrt{(\lambda_{\mathcal H\nu,p}')^2+(\lambda_{\mathcal H\nu,p}'')^2}
    $ that depends only on $\beta,R_1,R_2,R_3$.
\end{proof}
\begin{corollary}\label{cor:LipmathcalK}
Let $\mathfrak{S}(R)$ be defined by \ref{eqn:mathfrak_S} and, for $i=1,2$, suppose $x_i\in \bar B(R_1), a_i \in \bar B(R_3), \mathfrak{n}_i\in \mathcal P(\mathfrak{S}(R_2))$ and $\rho_i\in \mathcal P(\bar B(R_1)\times \bar B(R_3))$. Then, for every $p\ge 1$, there exists a constant \\$\Lambda_{\mathcal K,p}=\Lambda_{\mathcal K,p}(\beta,R_1,R_2,R_3)$ such that 
    \begin{align*}
        \Vert \mathcal K(x_1,\rho_1,\mathfrak{n}_1,a_1)-\mathcal K(x_2,\rho_2,\mathfrak{n}_2,a_2)\Vert \le \Lambda_{\mathcal{K},p}\left(\Vert x_1-x_2\Vert+\Vert a_1-a_2\Vert +W_p(\rho_1,\rho_2)+W_1(\mathfrak{n}_1,\mathfrak{n}_2)\right).
    \end{align*}
\end{corollary}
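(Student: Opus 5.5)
The plan is to split $\mathcal K$ into its two constituent terms from \eqref{eqn:mathcal_K}, $\mathcal K = \nabla_x\mathcal H + \int \partial_\mu\mathcal H\,d\rho$, and bound each difference separately. Throughout, I write $\mathfrak{m}_i=\rho_i|_x$. Since the $x$-marginal map is $1$-Lipschitz, $W_p(\mathfrak{m}_1,\mathfrak{m}_2)\le W_p(\rho_1,\rho_2)$, which can be seen by projecting an optimal coupling of $\rho_1,\rho_2$.

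\textbf{First term.} Here $\nabla_x\mathcal H(x,\mathfrak{m},\mathfrak{n},a)=D_x\Gamma(x,\mathfrak{m},\mathfrak{n})^Ta$. I add and subtract $D_x\Gamma(x_2,\mathfrak{m}_2,\mathfrak{n}_2)^Ta_1$, which gives
\[
\Vert \nabla_x\mathcal H_1-\nabla_x\mathcal H_2\Vert\le R_3\Vert D_x\Gamma(x_1,\mathfrak{m}_1,\mathfrak{n}_1)-D_x\Gamma(x_2,\mathfrak{m}_2,\mathfrak{n}_2)\Vert_\mathrm{F}+\Vert D_x\Gamma(x_2,\mathfrak{m}_2,\mathfrak{n}_2)\Vert_\mathrm{op}\Vert a_1-a_2\Vert.
\]
The first piece is controlled by \eqref{eqn:LipDxGamma} of Lemma \ref{lemma:Lipderivs}. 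The operator norm in the second piece is bounded by $2\beta R_1^2R_2^4$, exactly as in \eqref{eqn:boundmathcal_K1} of Lemma \ref{lemma:BoundedGamma}.

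\textbf{Second term.} I write $F_i(y,p):=\partial_\mu\Gamma(y,\mathfrak{m}_i,\mathfrak{n}_i)(x_i)^Tp$, so that the second term equals $\int F_i\,d\rho_i$, and split it as
\[
\int F_1\,d\rho_1-\int F_2\,d\rho_2=\int (F_1-F_2)\,d\rho_1+\Big(\int F_2\,d\rho_1-\int F_2\,d\rho_2\Big).
\]
For the first piece, $\Vert F_1-F_2\Vert\le R_3\Vert\partial_\mu\Gamma(y,\mathfrak{m}_1,\mathfrak{n}_1)(x_1)-\partial_\mu\Gamma(y,\mathfrak{m}_2,\mathfrak{n}_2)(x_2)\Vert_\mathrm{F}$, uniformly in $(y,p)$ in the support of $\rho_1$. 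This is bounded by \eqref{eqn:LipDmuGamma} with the same first argument $y$, giving $R_3\Lambda_p(\Vert x_1-x_2\Vert+W_p(\mathfrak{m}_1,\mathfrak{m}_2)+W_1(\mathfrak{n}_1,\mathfrak{n}_2))$.

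For the second piece, I take an optimal coupling $\bm\rho\in\Pi(\rho_1,\rho_2)$ for $W_p$ and bound the difference by $\int \Vert F_2(y,p)-F_2(y',p')\Vert\,d\bm\rho$. The map $F_2$ is Lipschitz on $\bar B(R_1)\times\bar B(R_3)$. In $p$, its constant is the bound $\sup\Vert\partial_\mu\Gamma\Vert_\mathrm{op}\le R_2^2(1+2\beta R_1^2R_2^2)e^{2\beta R_1^2R_2^2}$ coming from \eqref{eqn:bound_gamma_mu}. In $y$, its constant is $R_3\Lambda_p$, again by \eqref{eqn:LipDmuGamma}. The resulting integral is therefore bounded by a constant times $\int(\Vert y-y'\Vert+\Vert p-p'\Vert)\,d\bm\rho\le \sqrt2\,W_1(\rho_1,\rho_2)\le\sqrt2\,W_p(\rho_1,\rho_2)$, by Jensen's inequality.

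Summing the bounds from both terms gives the claim, with $\Lambda_{\mathcal K,p}$ depending only on $\beta,R_1,R_2,R_3$.

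\textbf{Main obstacle.} The delicate point is making sure \eqref{eqn:LipDmuGamma} applies with the first argument $y$ ranging over the support of $\rho_i$, and that all arguments stay in the stated balls. For instance, $\mathfrak{m}_i$ must be supported in $\bar B(R_1)$, and this holds because $\rho_i$ is supported in $\bar B(R_1)\times\bar B(R_3)$. Beyond this bookkeeping, the argument is routine add-and-subtract.
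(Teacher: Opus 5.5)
Your proposal is correct and follows the paper's proof essentially step for step. It uses the same split of $\mathcal K$ into its $\nabla_x\mathcal H$ and $\partial_\mu\mathcal H$ parts, the same add-and-subtract argument with Lemma \ref{lemma:Lipderivs} and the bounds of Lemma \ref{lemma:BoundedGamma}, and the same split of the second term into an integral of $F_1-F_2$ against $\rho_1$ plus an integral of $F_2$ against $\rho_1-\rho_2$. The one difference is that your explicit coupling replaces the paper's appeal to Kantorovich--Rubinstein, and it handles the vector-valued integrand directly.

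There is one harmless slip. For a $W_p$-optimal coupling $\bm\rho$, Jensen gives $\int(\Vert y-y'\Vert+\Vert p-p'\Vert)\,d\bm\rho\le\sqrt2\,W_p(\rho_1,\rho_2)$ directly. Your intermediate bound by $\sqrt2\,W_1(\rho_1,\rho_2)$ would instead need a $W_1$-optimal coupling.
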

\begin{proof}
Since $a_1,a_2$ are bounded by $R_3$, and $D_x\Gamma$ is bounded by $2\beta R_1^4R_1^2$, immediate from equation \eqref{eqn:boundmathcal_K1}, and locally Lipschitz continuous, see Lemma \ref{lemma:Lipderivs}, there exists a constant $\Lambda_{\mathcal K,p}'=\Lambda_{\mathcal K,p}'(\beta,R_1,R_2,R_3)>0$ such that 
\begin{align*}
    \Vert \nabla_x \mathcal H&(x_1, \rho_1|_x,\mathfrak{n}_1,a_1)- \nabla_x \mathcal H(x_2, \rho_2|_x,\mathfrak{n}_2,a_2)\Vert\\ &\le  \Vert a_1-a_2\Vert \Vert D_x\Gamma(x_1,\mathfrak{m}_1,\mathfrak{n}_1)\Vert_\mathrm{F} +\Vert a_2\Vert \Vert D_x\Gamma(x_1,\rho_1|_x,\mathfrak{n}_1)-D_x\Gamma(x_2,\rho_2|_x,\mathfrak{n}_2)\Vert_\mathrm{F}\\
    & \le \Lambda_{\mathcal K,p}'\left(\Vert x_1-x_2\Vert+\Vert a_1-a_2\Vert +W_p(\rho_1|_x,\rho_2|_x)+W_1(\mathfrak{n}_1,\mathfrak{n}_2)\right).
\end{align*}
Notice that the discrepancies between the second terms in $\mathcal K$ can be decomposed as
\begin{align*}
    \bigg\Vert \int \partial_\mu \mathcal H&(y,\rho_1|_x,\mathfrak{n}_1,p)(x_1)d\rho_1(y,p)- \int \partial_\mu \mathcal H(y,\rho_2|_x,\mathfrak{n}_2,p)(x_2)d\rho_2(y,p) \bigg\Vert\\
    & \le \int \underbrace{\Vert p\Vert\left\Vert \partial_\mu \Gamma(y,\rho_1|_x,\mathfrak{n}_1)(x_1)- \partial_\mu \Gamma(y,\rho_2|_x,\mathfrak{n}_2)(x_2)\right\Vert_\mathrm{F}}_{(\dagger_1)} d\rho_1(y,p)\\
    & +\underbrace{\left\Vert \int \partial_\mu\mathcal H(y,\rho_2|_x,\mathfrak{n}_2,p)(x_2)d (\rho_1-\rho_2)(y,p)) \right\Vert}_{(\dagger_2)}.
\end{align*}
For the first term $(\dagger_1)$: by the local Lipschitz continuity of $\partial_\mu\Gamma$, Lemma \ref{lemma:Lipderivs}, and the assumed support of $\rho_1$ in $\bar B(R_1)\times \bar B(R_3)$, there exists a constant $\Lambda_{\mathcal K,p}''=\Lambda_{\mathcal K,p}''(\beta, R_1,R_2,R_3)$ such that $\rho_1$-a.s. we have
\[
(\dagger_1)\le \Lambda_{\mathcal K,p}''\left( \Vert x_1-x_2\Vert +W_{p}(\rho_1|_x,\rho_2|_x)+ W_1(\mathfrak{n}_1,\mathfrak{n}_2)\right).
\]
For the second term $(\dagger_2)$:
By combining the Lipschitz continuity of $\partial_\mu\Gamma$, Lemma \ref{lemma:Lipderivs}, and the boundedness of $\partial_\mu\gamma$, equation \eqref{eqn:bound_gamma_mu},
the map $(y,p)\mapsto \partial_\mu\mathcal H(y,\rho_2|_x,\mathfrak{n}_2,p)(x_2)$ is Lipchitz on $\bar B(R_1)\times\bar B(R_3)$. Therefore, the Kantorovich-Rubinstein Theorem, see \cite{villani2008optimal}, implies that there exists a constant $\Lambda_{\mathcal K}'''$ such that
\[
(\dagger_2)\le \Lambda_{\kappa}'''W_1(\rho_1,\rho_2).
\]
Recall the definition of $\mathcal K$ in \eqref{eqn:mathcal_K}, which decomposes into the sum of a term involving $\nabla_x\mathcal H$ and a term involving $\partial_\mu\mathcal H$. Its Lipschitz constant can be estimated by controlling each term separately to give 
\begin{align*}
  \Vert \mathcal K(x_1,\rho_1,\mathfrak{n}_1,a_1)&-\mathcal K(x_2,\rho_2,\mathfrak{n}_2,a_2)\Vert \\
  &\le \Vert \nabla_x \mathcal H(x_1, \mathfrak{m}_1,\mathfrak{n}_1,a_1)- \nabla_x \mathcal H(x_2, \mathfrak{m}_2,\mathfrak{n}_2,a_2)\Vert\\&+\left\Vert \int \partial_\mu \mathcal H(y,\rho_1|_x,\mathfrak{n}_1,p)(x_1)d\rho_1(y,p)- \int \partial_\mu \mathcal H(y,\rho_2|_x,\mathfrak{n}_2,p)(x_2)d\rho_2(y,p) \right\Vert\\
  & \le \Lambda_{\mathcal K,p}'\left(\Vert x_1-x_2\Vert+\Vert a_1-a_2\Vert +W_p(\rho_1|_x,\rho_2|_x)+W_1(\mathfrak{n}_1,\mathfrak{n}_2)\right) \\&+  \Lambda_{\mathcal K,p}''\left( \Vert x_1-x_2\Vert +W_{p}(\rho_1|_x,\rho_2|_x)+ W_1(\mathfrak{n}_1,\mathfrak{n}_2)\right)+\Lambda_{\kappa}'''W_1(\rho_1,\rho_2).
\end{align*}
Then, the desired result holds with $\Lambda_{\mathcal K}=(\Lambda_{\mathcal K,p}'+\Lambda_{\mathcal K,p}''+ \Lambda_{\mathcal K}''')$, since the $p$-Wasserstein distance is increasing in $p$ by Jensen's inequality and $W_p(\rho_1|_x,\rho_2|_x)\le W_p(\rho_1,\rho_2)$ as the projection map onto the $x$ marginal in $1$-Lipschitz.
\end{proof}

    \section{The Global Well-Posedness of the Continuous-Time Model}\label{App:E}
Let $R_2>0$ and $\bm\nu\in C([0,1],\mathfrak{S}(R_2))$, and consider the McKean--Vlasov ODE, with initial condition $\xi\in L^\infty(\Omega^1,\mathcal F^1,\mathbb P^1; \R^d)$ with $\Vert \xi\Vert \le R_0$ $\mathbb P^1$-a.s., associated to the IPS \eqref{eqn:ansatz} given by
\begin{align}
    \frac{dY^{\xi}_t}{dt} &= \Gamma(Y^\xi_t,\mathcal L^1(Y^\xi_t),\nu_t), \qquad &Y^\xi_0 =\xi,\label{eqn:MVODE1}\\
    \frac{d p^{\xi}_t}{dt} &= - \mathcal K(Y^\xi_t,\mathcal L^1(Y^\xi_t,p^\xi_t),\nu_t,p^\xi_t), \qquad &p^\xi_1 = \partial_\mu\ell(\mathcal L^1(Y^\xi_1))(Y^\xi_1),\label{eqn:MVODE2}
\end{align}
where $\mathcal L^1(Z):=\mathbb P^1\circ Z^{-1}$ denotes the law of a random variable $Z$ defined on $(\Omega^1,\mathcal F^1,\mathbb P^1)$. We search for solutions $(\bm{Y}^\xi,\bm{p}^\xi)$ taking values in $$\mathscr{C}^{1,2,2d}:=L^2(\Omega^1,\mathcal F^1,\mathbb P^1; C([0,1]; \R^{2d})),$$
equipped with the norm
\[
\Vert \bm{Z}\Vert^2_{\mathscr{C}^{1,2,d}}=\mathbb E^1\left[ \sup_{t\in[0,1]} \Vert Z_t\Vert^2 \right], \qquad \bm{Z}\in\mathscr{C}^{1,2,d} .
\]
Here, the superscript one on $\mathscr{C}^{1,2,d}$ indicates that the random variables are defined on $(\Omega^1, \mathcal F^1,\mathbb P^1)$. When this superscript is zero or omitted, $\mathscr{C}^{0,2,2d}$ and $\mathscr{C}^{2,2d}$ denote the corresponding spaces defined on $(\Omega^0,\mathcal F^0,\mathbb P^0)$ and $(\Omega,\mathcal F,\mathbb P)$ instead.

Since the velocity fields $\Gamma,\mathcal K$ are only locally Lipschitz continuous in $x$ and $\rho$, we truncate the velocity fields to restore global Lipschitz continuity. With this in mind, let us introduce
\[
\Gamma^R(x,\mu,\nu) = \Gamma(P_R(x), P_R{}_\#\mu,\nu), \qquad \mathcal K^R(x,\rho,\nu,a) = \mathcal K(P_R(x), P_{R,R}{}_\#\rho,\nu,P_R(a)),
\]
where $P_R$ is defined in equation \eqref{eqn:proj_map} and
\[
P_{R,R}((x,a))= (P_R(x),P_R(a)).
\]
Recall that $P_R$ is $2$-Lipschitz continuous and is bounded by the minimum of $2(R\vee 1)$ and $\Vert x\Vert$ for every $x\in\R^d$. Hence, with $x_i,a_i\in \R^d, \mu_i\in \mathcal P_2(\R^d)$, $\rho_i\in \mathcal P_2(\R^{2d})$ and $\nu \in\mathcal P(\mathfrak{S}(R_2))$ for $i=1,2$, the following Lipschitz bounds hold:
\begin{align}
    \Vert \Gamma^R(x_1,\mu_1,\nu)&-\Gamma^R(x_2,\mu_2,\nu)\Vert \le 2\Lambda^R_\Gamma \left(\Vert x_1-x_2\Vert + W_2(\mu_1,\mu_2) \right),\notag \\
    \Vert \mathcal K^R(x_1,\rho_1,\nu,a_1)&-\mathcal K^R(x_2,\rho_2,\nu,a_2)\Vert \le 2\Lambda^R_\Gamma \left(\Vert x_1-x_2\Vert+\Vert a_1-a_2\Vert + W_2(\rho_1,\rho_2) \right),\notag
\end{align}
where $\Lambda^R_\Gamma,\Lambda^R_{\mathcal K}$ are the constants from Lemma \ref{lemma:Lipderivs} and Corollary \ref{cor:LipmathcalK} evaluated at $p=2$, $R_2$ as specified and $R_1=R_3=2(R\vee1)$. We will see that the solution to equations \eqref{eqn:MVODE1}-\eqref{eqn:MVODE2} is $\mathbb P^1$-a.s  bounded, and so the truncation maps $P_R$ have no effect, if $R$ is taken sufficiently large. 
\begin{lemma}\label{lemma:MVODE_WellDEfined}
    Let $R_2>0$ and $\bm\nu\in C([0,1],\mathcal P(\mathfrak{S}(R_2)))$. Then, for every $\xi \in L^\infty(\Omega^1,\mathcal F^1,\mathbb P^1;\R^d)$ such that $\Vert \xi \Vert \le R_0$ $\mathbb P^1$-a.s., there exists a solution $(\bm Y^\xi,\bm p^\xi) \in\mathscr{C}^{1,2,2d}$ unique up to $\mathbb P^1$-a.s. equality, to the forward-backward system of McKean--Vlasov ODEs \eqref{eqn:MVODE1}-\eqref{eqn:MVODE2}. Furthermore, there exist constants $R_X=R_X(R_0,R_2)$ and $R_a=R_a(\beta,R_0,R_2,K)$ such that $Y^{\xi}_t\in \bar B(R_X)$ and \\$p^\xi_t\in \bar B(R_a)$  for every $t\in[0,1]$, $\mathbb P^1$-almost surely.
\end{lemma}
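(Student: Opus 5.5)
We work with the truncated system, replacing $\Gamma,\mathcal K$ by $\Gamma^R,\mathcal K^R$ for a radius $R$ fixed at the end. We then show a posteriori that solutions never leave $\bar B(R)$, so the truncated solution solves \eqref{eqn:MVODE1}-\eqref{eqn:MVODE2}.

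\textbf{A priori bounds.} Consider any solution of the truncated system. Lemma \ref{lemma:BoundedGamma} with $R_1=2(R\vee 1)$ gives no $R$-independent bound, so instead we use the projection estimate $\Vert P_R(x)\Vert\le \Vert x\Vert$ together with the linear structure $\Gamma(x,\mu,\nu)=\int\theta_O^T\theta_V\gamma(\cdot)\,d\nu$. Since $\gamma(z,\mu)$ is a convex combination of points in the support of $P_R{}_\#\mu$, we have $\Vert\gamma\Vert\le \sup_{y\in\mathrm{supp}\,\mu}\Vert y\Vert$. Let $S_t=\Vert\,\Vert Y_t\Vert\,\Vert_{L^\infty(\mathbb P^1)}$. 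Then $\frac{d}{dt}\Vert Y_t\Vert\le c(R_2)S_t$, where $c(R_2)$ bounds $\Vert\theta_O^T\theta_V\Vert_{\mathrm{op}}$ on $\mathfrak S(R_2)$. Hence $S_t\le S_0+c\int_0^tS_s\,ds$, and Gr\"onwall gives $S_t\le R_0e^{c(R_2)}=:R_X$, which is independent of $R$.

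For the adjoint variable, Assumption \ref{assm:Obj} gives $\Vert p_1\Vert\le K(R_X)$. Lemma \ref{lemma:BoundedGamma}, applied with $R_1=R_X$ (legitimate once $R\ge R_X$, so that $P_R$ acts as the identity on the $x$-components), gives $\Vert\mathcal K(x,\rho,\nu,a)\Vert\le\tilde B_{\mathcal K}\cdot\sup\Vert a\Vert$ over $a$ and $\mathrm{supp}\,\rho|_a$. This bound is again linear in the adjoint, since $\mathcal K$ is linear in $(a,\rho|_a)$. A backward Gr\"onwall argument on $\Vert\,\Vert p_t\Vert\,\Vert_{L^\infty}$ then yields $R_a=K(R_X)e^{\tilde B_{\mathcal K}}$, which depends only on $\beta,R_0,R_2,K$. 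We fix $R\ge R_X\vee R_a$.

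\textbf{Existence and uniqueness.} With this choice of $R$, the truncated coefficients are globally Lipschitz in $(x,a,W_2)$, uniformly in $t$, by Lemma \ref{lemma:Lipderivs} and Corollary \ref{cor:LipmathcalK} (with $p=2$). Continuity of $t\mapsto\nu_t$ in $W_1$ gives measurability and continuity in time. The forward equation \eqref{eqn:MVODE1} is then a standard McKean--Vlasov ODE with Lipschitz coefficients. Picard iteration on $\mathscr C^{1,2,d}$ contracts, using $W_2(\mathcal L^1(Y),\mathcal L^1(Y'))\le \Vert Y-Y'\Vert_{L^2}$ and an exponentially weighted sup-norm, which produces a unique $\bm Y^\xi$.

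Given $\bm Y^\xi$, the terminal value $p_1=\partial_\mu\ell(\mathcal L^1(Y_1))(Y_1)$ is a fixed bounded random variable. The backward equation is decoupled from future $Y$, because $\mathcal K$ depends on $Y$ only through the already determined process. It is therefore a McKean--Vlasov ODE in $p$ alone, run backward from $t=1$ with Lipschitz dependence on $(p,\mathcal L^1(Y_t,p_t))$, and the same Picard argument gives a unique $\bm p^\xi$. The forward--backward coupling is therefore only one-way, which avoids the usual small-time restriction of FBSDEs. Uniqueness up to $\mathbb P^1$-a.s. equality follows from Gr\"onwall applied to the difference of two solutions in the $\mathscr C^{1,2,2d}$ norm.

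Finally, by the a priori bounds the solutions satisfy $Y_t\in\bar B(R_X)$ and $p_t\in\bar B(R_a)$ a.s., so $P_R$ is the identity along them. They therefore solve the untruncated system, and any untruncated solution, being bounded by the same argument, solves the truncated one, which transfers uniqueness.

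\textbf{Main obstacle.} The key step is obtaining bounds independent of $R$. Naive use of Lemma \ref{lemma:BoundedGamma} with $R_1=2(R\vee1)$ is circular, since the bound grows with $R$. The fix is to use the convex-combination structure of $\gamma$ and the linearity of $\mathcal K$ in the adjoint, run Gr\"onwall on $L^\infty(\mathbb P^1)$ norms, and choose $R$ only afterwards.
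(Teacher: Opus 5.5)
Your proposal is correct and follows essentially the paper's route. You truncate with $P_R$ and obtain $R$-independent bounds by Gr\"onwall on $\mathrm{ess\,sup}\,\Vert Y_t\Vert$, which is the paper's $W_\infty(\delta_0,\mathcal L^1(Y_t))$ (reached there via $W_q$ and $q\to\infty$), and then on $p_t$ backward from $K(R_X)$. You then solve the truncated forward equation and the decoupled backward McKean--Vlasov ODE by a Picard contraction, which the paper imports from Theorem 4.21 and Remark 4.22 of Carmona--Delarue after time reversal; the remaining differences are cosmetic (you fix $R$ after the a priori bounds and add a remark transferring uniqueness to bounded untruncated solutions).
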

\begin{proof}
\textit{Step One: Existence and Uniqueness of $\bm{Y}^{\xi}$}\\
The mapping $(x,\mu,t)\mapsto \Gamma^R(x,\mu,\nu_t)$ satisfies Assumption \textbf{(MKV SDE)} in Section 4.2.1 of \cite{carmona2018probabilistic} Vol. I, with identically zero diffusion coefficient $\Sigma$. Therefore, Theorem 4.21 in \cite{carmona2018probabilistic} Vol. I applies and proves that there exists a unique solution $\bm{Y}^{\xi,R}=(Y^{\xi,R}_t)_{t\in[0,1]}\in \mathscr{C}^{1,2,d}$ to \eqref{eqn:MVODE1}, when $\Gamma$ is replaced by $\Gamma^R$. Then, applying Lemma \ref{lemma:BoundedGamma} to bound $\Gamma$ and the fact that $\Vert \xi\Vert \le R_0$ $\mathbb P^1$-a.s., we obtain that
\begin{align*}
    \Vert Y^{\xi,R}_t\Vert &\le \Vert \xi\Vert+ \int_0^t\Vert \Gamma(P_R(Y^{\xi,R}_s), P_R{}_\#\mathcal L^1(Y^{\xi,R}_s), \nu_s)\Vert ds
    \\&\le R_0+ R_2^2\int_0^tW_\infty(\delta_0, P_R{}_\#\mathcal L^1(Y^{\xi,R}_s))ds, \qquad \mathbb P^1-a.s.
\end{align*}
The right-hand side is deterministic, and so for every $q\ge 1$
\begin{align*}
      W_q(\delta_0,\mathcal L^1(Y^{\xi,R}_t))& \le \left(\mathbb E^1[\Vert Y^{\xi,R}_t\Vert^q]\right)^{1/q}\le R_0+ R_2^2\int_0^t W_\infty(\delta_0, P_R{}_\#\mathcal L^1(Y^{\xi,R}_s))ds.
\end{align*}
Since $\Vert P_R(x)\Vert \le \Vert x\Vert$ for every $x\in \R^d$, for every $s\in[0,1]$, we have that
\[
W_\infty(\delta_0,P_R{}_\#\mathcal L^1(Y^{\xi,R}_s)))\le W_\infty(\delta_0,\mathcal L^1(Y^{\xi,R}_s))).
\]
By substituting this into the above inequality and passing to the limit as $q \to \infty$ yields
\[
W_\infty(\delta_0,\mathcal L^1(Y^{\xi,R}_t))  
    \le R_0+ R_2^2\int_0^tW_\infty(\delta_0,\mathcal L^1(Y^{\xi,R}_s))ds.
\]
By applying Gr\"onwall's lemma, we get
\[
\sup_{t\in[0,1]}W_\infty(\delta_0,\mathcal L^1(Y^{\xi,R}_t)) \le R_0\exp(R_2^2)=:R_X(R_0,R_2). 
\]
 As a result, $Y^{\xi,R}_t\in\bar B(R_X)$ for every $t\in[0,1]$ $\mathbb P^1$-a.s., which implies that $P_R$ acts trivially on $\bm{Y}^{\xi,R}$, whenever $R>R_X$. Therefore, $\bm{Y}^{\xi,R}$ satisfies \eqref{eqn:MVODE1} $\mathbb P^1$-a.s., whenever $R>R_X$.\\

\textit{Step Two: Existence and Uniqueness of $\bm{p}^{\xi}$}\\
Let $\bm{Y}^{\xi}$ be the unique solution to \eqref{eqn:MVODE1}. Since $Y^{\xi}_1\in \bar B(R_X) $ $\mathbb P^1$-a.s., the terminal condition
\[
p_1^{\xi} = \partial_\mu\ell(\mathcal L^1(Y^{\xi}_1))(Y^{\xi}_1)\in L^2(\Omega^1,\mathcal F^1,\mathbb P^1;\R^d).
\]
 In fact, Assumption \ref{assm:Obj} yields that $p_1^\xi\in \bar B(K(R_X))$ $\mathbb P^1$-a.s. Furthermore, the mapping
 \[
 \Omega^1\times [0,1]\times \mathcal P_2(\R^{2d})\times \R^d\ni(\omega^1,t,\rho,p)\mapsto - \mathcal K^R(Y^{\xi}_t(\omega^1),\rho,\nu_t,p)
 \]
is Lipschitz in variables $(\rho,p)$ with respect to  the $W_2$-metric and the Euclidean metric, respectively. Furthermore, $\bm Y^\xi\in \mathscr{C}^{1,2,d}$ implies that $Y^\xi_t$ is continuous in $t$ for $\mathbb P^1$-a.e. $\omega^1$ and is $\mathcal F^1$-measurable for fixed $t$. After modifying $\bm Y^\xi$ on a $\mathbb P^1$-null set, if necessary, to make it continuous in $t$ for all $\omega^1\in\Omega^1$, $Y^\xi_t$ is a Carath\'eodory function, Definition 4.1 in \cite{AZHMYAKOV201987}, thus it is $(\mathcal F^1\otimes \mathcal B([0,1]))$-measurable by Theorem 4.1 in \cite{AZHMYAKOV201987}. Since $\mathcal K^R$ is continuous in its first and third variable, the above mapping is $(\mathcal F^1\otimes \mathcal B([0,1]))$-measurable, for fixed $(\rho,p)\in \mathcal P_2(\R^{2d})\times \R^d$.
 Accordingly, after interchanging time $t \leftrightarrow 1-t$, the terminal condition and velocity field $\mathcal K^R$ satisfy Assumption \textbf{(MKV SDE)} in Section 4.2.1 of \cite{carmona2018probabilistic} Vol. I, with identically zero diffusion coefficient.  Note that this time reversal is only valid when the forward process has no diffusion term. More care is required when diffusion is present; See Theorem 4.23 in \cite{carmona2018probabilistic} Vol. I. Hence, by Theorem 4.21 together with Remark 4.22 in \cite{carmona2018probabilistic} Vol. I., with $Y^\xi_t$ in place of their $\zeta_t$, we deduce that there exists a unique solution $\bm{p}^{\xi,R}$ to \eqref{eqn:MVODE2}, when $\mathcal K$ is replaced by $\mathcal K^R$. 

By applying $p_1^\xi\in \bar B(K(R_X))$, as discussed at the start of \textit{Step Two}, we have that
 \begin{align*}
      \Vert p_t^{\xi,R} \Vert &\le \Vert \partial_\mu\ell(\mathcal L(Y^\xi_1))(Y^\xi_1)\Vert + \int^1_t \Vert \mathcal K^R(Y^\xi_s,\mathcal L(Y^\xi_s,p^{\xi,R}_s),\nu_s, p_s^{\xi,R})ds \\& \le K(R_X)+\tilde{B}_\mathcal{K}\int_t^1 W_\infty(\delta_0,P_{R}{}_\#\mathcal L^1(p^{\xi,R}_s))ds, \qquad\mathbb P^1-a.s.,
 \end{align*}
where $\tilde{B}_\mathcal K$ in the final inequality is the constant from Lemma \ref{lemma:BoundedGamma}. By the same Gr\"onwall argument used in \textit{Step One}, we deduce that
\[
\sup_{t\in[0,1]}W_\infty(\delta_0,p^{\xi,R}_t)\le K(R_X)\exp(\tilde{B}_\mathcal{K})=:R_a.
\]
Therefore, $p^{\xi,R}_t\in \bar B(R_a)$ for all $t\in[0,1]$ $\mathbb  P^1$-a.s., which implies that $P_R$ acts trivially on $\bm{p}^{\xi,R}$ whenever $R>R_a$. As a result, the pair $(\bm{Y}^{\xi,R},\bm{p}^{\xi,R})$ satisfies \eqref{eqn:MVODE1}-\eqref{eqn:MVODE2} $\mathbb P^1$-a.s. when $R$ is taken greater than the maximum of $R_X$ and $R_a$. 
\end{proof}
 Lemma \ref{lemma:MVODE_WellDEfined} establishes the existence and uniqueness for the McKean--Vlasov ODEs associated with the transformer in the case where the mean-field parameters $\bm{\nu}_\tau$ are deterministic. However, $\bm{\nu}_\tau$ depends on the sampling of training data, and so we model $\bm{\nu}_\tau$ as a random curve of measures that takes values in $C([0,1]; \mathcal P_2(\R^{k\times 4d}))$.
    
    Let $p\in[1,\infty)$ and $(\widetilde{\Omega},\widetilde{\mathcal F})$ be a measurable space. Proposition 5.7 in \cite{carmona2018probabilistic} Vol. I states that $\mathcal B(\mathcal P_p(\R^d))$ is generated by the evaluation maps $(\mathcal P_p(\R^d)\ni \mu \mapsto \mu(D))$ for all $D\in \mathcal B(\R^d)$.  Therefore, a map $M:\widetilde{\Omega}\to \mathcal P_p(\R^d)$ is $\widetilde{\mathcal F}$-measurable if
    \[
    \{\tilde{\omega}\in \tilde{\Omega}\,\colon\,M(\tilde{\omega})(D)\in B \}=M^{-1}(\{\mu\in \mathcal P_p(\R^d)\colon \mu(D)\in B\})\in \widetilde{\mathcal F},
    \]
 for every $D\in \mathcal B(\R^d)$, and $B\in \mathcal B(\R)$. This occurs if the following criterion is satisfied.
    
\begin{criterion} \label{criterion:measure}
A map $M: \widetilde{\Omega}\mapsto\mathcal P_p(\R^d)$ is $(\widetilde{\mathcal F}, \mathcal B(\mathcal P_p(\R^d)))$-measurable if $\widetilde{\Omega}\ni\tilde{\omega}\mapsto M(\tilde\omega)(D)$ is $(\widetilde{\mathcal F},\mathcal B(\R))$-measurable for every $D \in \mathcal B(\R^d)$.
\end{criterion}
Let $\mathcal C=C([0,1],\mathcal P_2(\R^d))$ and write $e_t:\mathcal C\to \mathcal P_2(\R^d)$ for the evaluation map $e_t \bm{\mathfrak{m}}= \mathfrak{m}_t$. We equip $\mathcal C$ with the metric
\begin{equation}\label{eqn:UnifromMetric}
    \mathcal D(\bm{\mu}^1, \bm{\mu}^2)= \sup_{t\in[0,1]}W_2(e_t\bm\mu^1,e_t\bm\mu^2),
\end{equation}
which makes $(\mathcal C,\mathcal D)$
a separable metric space, Corollary 4.2.18 in \cite{Eng89}. Accordingly, there exists a countably dense subset $\widetilde{\mathcal C}\subseteq \mathcal C$ such that any open set in $(\mathcal C,\mathcal D)$ can be expressed as a countable union of sets of the form
\begin{align*}
\left\{\bm{\mathfrak{\mu}}\in \mathcal C\,|\mathcal D( \bm{\mathfrak{m}},\bm{\mathfrak{\mu}})<q\right\}  
& = \bigcap_{t\in \mathbb Q \cap [0,1]}\left\{\bm{\mu}\in \mathcal C \,|\,W_2(e_t\bm{\mathfrak{m}},e_t\bm{\mathfrak{\mu}})<q \right\},
\end{align*}
for $q\in \mathbb Q \cap [0,\infty)=: \mathbb Q_{\ge 0}$ and $\bm{\mathfrak{m}}\in \widetilde{\mathcal C}$, see Appendix M3 in \cite{billingsley2013convergence} for further details. The equality uses the continuity of the map $t\mapsto\mathfrak{m}_t$ to replace the supremum over $[0,1]$ by the supremum over any dense subset. Therefore, any open set of $(\mathcal C,\mathcal D)$ can be expressed as a countable union and intersection of sets of the form
\[
\left\{\bm{\mu}\in \mathcal C \,|\,W_2(e_t\bm{\mathfrak{m}},e_t\bm{\mathfrak{\mu}})<q \right\},
\]
for $t\in\mathbb Q\cap [0,1]$, $q\in \mathbb Q_{\ge0}$ and $\bm{\mathfrak{m}}\in \widetilde{\mathcal C}$. This implies that
\[
  \mathcal B(\mathcal C)\subseteq\sigma \left(\{\bm{\mathfrak{m}}\in\mathcal C\colon W_2(e_t\bm{\mathfrak{m}},e_t\bm\mu)<q \} : t\in \mathbb Q \cap [0,1], q\in \mathbb Q_{\ge 0}, \bm\mu\in \widetilde{\mathcal C} \right).
\] 
Take any $\widetilde{\bm{\mathfrak{m}}}: \widetilde{\Omega}\to \mathcal C$. The above demonstrates that $\widetilde{\bm{\mathfrak{m}}}$ is $(\widetilde{\mathcal F},\mathcal B(\mathcal C))$-measurable if
\[
\left\{ \tilde\omega \in \widetilde{\Omega}:W_2(\widetilde{\mathfrak{m}}_t(\tilde{\omega}),e_t\bm\mu)< q\right\}=\widetilde{\bm{\mathfrak{m}}}^{-1}(\{\bm{\mathfrak{m}}\in \mathcal C: W_2(e_t\bm{\mathfrak{m}},e_t\bm\mu)<q \})\in \widetilde{\mathcal F},
\]
for every $t\in[0,1]$, $q\in \mathbb Q_{\ge 0}$ and $\bm{\mu}\in \widetilde{\mathcal C}$. This occurs whenever $\tilde\omega\mapsto\widetilde{\mathfrak{m}_t}(\tilde\omega)$ is $(\widetilde{\mathcal F},\mathcal B(\mathcal P_2(\R^d)))$-measurable for every $t\in [0,1]$.
By combining this with Criterion \ref{criterion:measure} for measurability with respect to $\mathcal B(\mathcal P_2(\R^d))$, we deduce the following.
\begin{criterion}\label{criterion}
$\widetilde{\bm{\mathfrak{m}}}: \widetilde{\Omega}\mapsto \mathcal C$  is $(\widetilde{\mathcal F}, \mathcal B(\mathcal C))$-measurable if the map
\[
\widetilde{\Omega}\ni \tilde{\omega}\mapsto \tilde{\mathfrak{m}}_t(\tilde{\omega})(D)
\]
is $(\widetilde{\mathcal F},\mathcal B(\R))$-measurable for every $t\in[0,1]$ and $D \in \mathcal B(\R^d)$.
\end{criterion}
Let us generalise Lemma \ref{lemma:MVODE_WellDEfined} to include random mean-field parameters $\bm{\nu}$. We do this by working on the product probability space $(\Omega,\mathcal F,\mathbb P)$, defined in Section \ref{sec:MainTheorem}, so that the randomness due to $\bm\nu$ and initial condition $\xi$ are defined on separate spaces $(\Omega^0,\mathcal F^0)$ and $(\Omega^1,\mathcal F^1)$, respectively, which are then canonically embedded into $(\Omega,\mathcal F)$. As discussed in Section 2.1.3 in \cite{carmona2018probabilistic} Vol. II, the section $X(\omega^0,\cdot)$ of a random variable $X$ defined on $(\Omega,\mathcal F,\mathbb P)$, where $\mathcal F$ is the completion of $\mathcal F^0\otimes \mathcal F^1$, is only a random variable on $(\Omega^1,\mathcal F^1)$ for $\mathbb P^0$-a.e. $\omega^0\in \Omega^0$. Accordingly, the law of $X(\omega^0,\cdot)$ is only well-defined $\mathbb P^0$-a.s.. However, we choose the initial condition $\xi$ and $\bm\nu$ to be independent so that all variables that follow can be defined on $(\Omega, \mathcal F^0\otimes \mathcal F^1)$, which makes their section a random variable defined on $(\Omega^1,\mathcal F^1)$ for all $\omega^0 \in \Omega^0$. For a random variable that is $(\mathcal F^0\otimes \mathcal F^1)$-measurable, it is well-defined to denote the conditional law of $X$ on $(\Omega^1,\mathcal F^1,\mathbb P^1)$ by $\mathcal L^1(X)(\omega^0)= \mathbb P^1\circ X(\omega^0,\cdot)^{-1}$, Section 2.1.3 in \cite{carmona2018probabilistic} Vol. II.
\begin{lemma}\label{lemma:MVRODE_wd}
  Let $R_2>0$ and $\bm\nu=(\nu_t)_{t\in[0,1]}$ be a stochastic process on the probability space $(\Omega^0,\mathcal F^0,\mathbb P^0)$ such that:
\begin{itemize}
    \item The map $(\omega^0,t) \mapsto \nu_t(\omega^0)$ is $(\mathcal F^0\otimes \mathcal B([0,1]), \mathcal B(\mathcal P(\R^{k\times 4d}))$-measurable.
    \item There exists a subset $\Omega^0_\mathrm{w.d.}\subseteq\Omega^0$ with $\mathbb P^0(\Omega^0_\mathrm{w.d.})=1$ so that $\bm\nu(\omega^0)\in C([0,1],\mathcal P(\mathfrak{S}(R_2)) )$ for every $\omega^0\in \Omega^0_\mathrm{w.d.}$, where $\mathcal P(\mathfrak S(R_2))$ is endowed by the topology induced by the $W_2$ metric.
\end{itemize}
Then, for any initial condition $\xi\in L^\infty(\Omega^1,\mathcal F^1,\mathbb P^1;\R^d)$ with $\Vert \xi\Vert \le R_0$ $\mathbb P^0$-a.s., there exist a solution $(\bm{Y}^\xi,\bm{p}^\xi)\in \mathscr{C}^{2,2d}$, unique up to $\mathbb P$-a.s. equality, to the forward-backward system of McKean--Vlasov ODEs
\begin{align}
    \frac{dY^{\xi}_t}{dt} &= \Gamma(Y^\xi_t,\mathcal L^1(Y^\xi_t),\nu_t), \qquad &Y^\xi_0 =\xi,\label{eqn:MVRODE1}\\
    \frac{d p^{\xi}_t}{dt} &= - \mathcal K(Y^\xi_t,\mathcal L^1(Y^\xi_t,p^\xi_t),\nu_t,p^\xi_t), \qquad &p^\xi_1 = \partial_\mu\ell(\mathcal L^1(Y^\xi_1))(Y^\xi_1).\label{eqn:MVRODE2}
\end{align}
Furthermore, the constants $R_X,R_a$, defined in Lemma \ref{lemma:MVODE_WellDEfined}, are such that $Y_t^\xi\in \bar B(R_X)$ and $p^\xi_t\in \bar B(R_a)$ for every $t\in[0,1]$ $\mathbb P$-almost surely.
\end{lemma}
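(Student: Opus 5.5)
The plan is to reduce to Lemma \ref{lemma:MVODE_WellDEfined} pathwise in $\omega^0$, and then recover joint measurability on $(\Omega,\mathcal F^0\otimes\mathcal F^1)$ through Picard iteration, as outlined in the sketch proof.

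First, for each fixed $\omega^0\in\Omega^0_{\mathrm{w.d.}}$, the curve $\bm\nu(\omega^0)$ is a deterministic element of $C([0,1],\mathcal P(\mathfrak S(R_2)))$. Lemma \ref{lemma:MVODE_WellDEfined} then gives a solution $(\bm Y^{\xi}(\omega^0,\cdot),\bm p^\xi(\omega^0,\cdot))\in\mathscr C^{1,2,2d}$. It is unique up to $\mathbb P^1$-a.s. equality and bounded by $R_X,R_a$, and these radii do not depend on $\omega^0$. On the null set $\Omega^0\setminus\Omega^0_{\mathrm{w.d.}}$ we set the solution to be, for example, $(\xi,0)$. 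Uniqueness up to $\mathbb P$-a.s. equality then follows from Fubini: two solutions agree $\mathbb P^1$-a.s. for $\mathbb P^0$-a.e. $\omega^0$. The bounds $Y^\xi_t\in\bar B(R_X)$ and $p^\xi_t\in\bar B(R_a)$ also transfer to $\mathbb P$-a.s. statements, provided the pasted object is jointly measurable.

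The real work is therefore measurability. I would build the solution as the limit of Picard iterates for the truncated system, with $\Gamma^R$ and $\mathcal K^R$ and $R>R_X\vee R_a$, and run the two steps separately as in the proof of Lemma \ref{lemma:MVODE_WellDEfined}. For the forward part, set $Y^{(0)}_t=\xi$ and
\[
Y^{(n+1)}_t=\xi+\int_0^t\Gamma^R\bigl(Y^{(n)}_s,\mathcal L^1(Y^{(n)}_s),\nu_s\bigr)\,ds .
\]
The induction hypothesis is that $Y^{(n)}$ is $(\mathcal F^0\otimes\mathcal F^1\otimes\mathcal B([0,1]))$-measurable and continuous in $t$. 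Under this hypothesis, $(\omega^0,s)\mapsto\mathcal L^1(Y^{(n)}_s)(\omega^0)$ is measurable by Criterion \ref{criterion:measure}, because $\omega^0\mapsto\mathbb P^1(Y^{(n)}_s(\omega^0,\cdot)\in D)$ is measurable by Fubini--Tonelli. The map $\nu_s$ is jointly measurable by hypothesis. Since $\Gamma^R$ is continuous in all its arguments (Lemma \ref{lemma:Lipderivs}), the integrand is a jointly measurable composition, so the integral is measurable and continuous in $t$.

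Theorem 4.21 of \cite{carmona2018probabilistic} Vol. I (its contraction argument, applied with the global Lipschitz constants for $\Gamma^R$) gives convergence of $Y^{(n)}(\omega^0,\cdot)$ to $Y^\xi(\omega^0,\cdot)$ in $\mathscr C^{1,2,d}$ for each good $\omega^0$. Passing to a subsequence, this convergence becomes $\mathbb P^1$-a.s. uniform in $t$. To keep things measurable, I would use a subsequence chosen uniformly in $\omega^0$. This is possible because the contraction rate, coming from the standard factorial bound $C^n/n!$, is deterministic and independent of $\omega^0$. Hence $\sum_n\mathbb E[\sup_t\|Y^{(n+1)}_t-Y^{(n)}_t\|]<\infty$ under the product measure, the full sequence converges $\mathbb P$-a.s., and the limit is $\mathcal F^0\otimes\mathcal F^1$-measurable, after modifying on a product-measurable null set.

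The backward equation is handled the same way, after the time reversal $t\leftrightarrow1-t$, with $Y^\xi$ now a fixed measurable input. The terminal condition $\partial_\mu\ell(\mathcal L^1(Y^\xi_1))(Y^\xi_1)$ is measurable by the continuity of $\partial_\mu\ell$ in Assumption \ref{assm:Obj} together with the measurability of $\omega^0\mapsto\mathcal L^1(Y^\xi_1)$. The iterates use $\mathcal L^1(Y^\xi_s,p^{(n)}_s)$ and $\mathcal K^R$, which is continuous by Corollary \ref{cor:LipmathcalK}.

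I expect the main obstacle to be exactly this step: making the measurability of the conditional law $\omega^0\mapsto\mathcal L^1(\cdot)$, as a $\mathcal P_2$-valued map, rigorous. That means combining Criterion \ref{criterion:measure} with the requirement that the continuous integrands be Borel on $\mathcal P_2$ with the $W_2$ topology. The other delicate point is ensuring that the limit is taken without $\omega^0$-dependent subsequences; the uniform Lipschitz constants are what make this work.
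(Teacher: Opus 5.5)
Your proposal is correct, but it runs the Picard iteration at a different level from the paper.

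\textbf{The paper's route.} The paper iterates on the flow of conditional laws $\bm{\mathfrak m}^n\in\mathcal C$, which is the fixed-point map used in the proof of Theorem 4.21 of \cite{carmona2018probabilistic}. At each step it freezes $\bm{\mathfrak m}^n$ and solves the ordinary random ODE $\dot Y=\Gamma^R(Y,\mathfrak m^n_t,\nu_t)$ from a \emph{deterministic} starting point $x$. It obtains $\mathcal F^0$-measurability from Lemma 2.2 of \cite{Han2017}, and joint measurability in $(\omega^0,x)$ from continuity in $x$ (Carath\'eodory). Only then does it substitute $x=\xi$ and take the law. Because $\bm{\mathfrak m}^n(\omega^0)$ converges in $\mathcal D$ for every $\omega^0$, the limit law is measurable simply as a pointwise limit, and no uniformity of the convergence rate in $\omega^0$ is needed.

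\textbf{Your route.} You iterate on the random variables themselves. Each iterate is then an explicit integral, so Fubini--Tonelli does all the measurability work and no random-ODE measurability lemma is needed. The price is that you must upgrade $\mathscr C^{1,2,d}$-convergence for each $\omega^0$ to $\mathbb P$-a.s.\ convergence on the product space. That is exactly where you need your observation that the factorial rate depends only on the global Lipschitz constants of $\Gamma^R$ and $\mathcal K^R$, and not on $\omega^0$.

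Two small corrections. First, this rate comes from your own Gr\"onwall estimate using $W_2^2(\mathcal L^1(Y^{(n)}_s),\mathcal L^1(Y^{(n-1)}_s))\le\E^1\Vert Y^{(n)}_s-Y^{(n-1)}_s\Vert^2$. It does not come literally from Theorem 4.21, whose proof iterates on measure flows. Second, as in the paper, the iterates themselves should be set to $\xi$ on $\Omega^0\setminus\Omega^0_{\mathrm{w.d.}}$, where $\Gamma^R(\cdot,\cdot,\nu_s(\omega^0))$ need not be defined.

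\textbf{What each buys.} Your argument is more self-contained. The paper's gives, as a by-product, the $\mathcal F^0$-measurable flow map $x\mapsto Y^{x,\xi}$, continuous in $x$ for every $\omega^0$. Corollary~\ref{cor:FlowMap_wd} and Remark~\ref{remark:flow} reuse this to identify the particle system with $X^{y_i,\zeta^N}$. With your construction, that flow map would need a separate argument afterwards: freeze the now-measurable law curve and solve from deterministic $x$.
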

\begin{remark}
We require that $\xi$ and $(\nu_t)_{t\in[0,1]}$ are independent. This occurs in practice since we assume that the training data at step $\tau$ is sampled independently of the past training data. However,
    we take $\xi\in L^2(\Omega^1,\mathcal F^1,\mathbb P^1)$, despite having introduced $(\Omega^1,\mathcal F^1,\mathbb P^1)$ in Section \ref{sec:MainTheorem} as the source of randomness from parameter initialisation. This is done so that an additional probability space does not have to be introduced, whilst keeping  $\xi$ and $\bm\nu$ independent. 
\end{remark}
\begin{proof}
Throughout the proof, we shall take $R> R_X \vee R_a$, where $R_X,R_a$ are the constants defined in Lemma \ref{lemma:MVODE_WellDEfined}. This ensures that the solution $(\bm Y^\xi,\bm p^\xi)$ to \eqref{eqn:MVRODE1}-\eqref{eqn:MVRODE2} coincides $\mathbb P$-a.s. with the corresponding solution $(\bm Y^{\xi,R},\bm p^{\xi,R})$ when $\Gamma,\mathcal K$ are replaced by $\Gamma^R,\mathcal K^R$.\\

\textit{Step One: Existence of $(\bm{Y}^{\xi}, \bm{p}^\xi)$}\\

Given $\xi \in L^\infty(\Omega^1,\mathcal F^1,\mathbb P^1; \bar B(R_0))$ that is canonically embedded as a random variable on \\$(\Omega,\mathcal F^0\otimes \mathcal F^1)$, the assumptions of Lemma \ref{lemma:MVODE_WellDEfined} are satisfied by $\xi$ and the mean-field parameters $(\nu_t(\omega^0))_{t\in[0,1]}$ whenever $\omega^0\in \Omega^0_\mathrm{w.d.}$. Thus, there exists $(\bm{Y}^\xi,\bm{p}^\xi)(\omega^0,\cdot)\in \mathscr{C}^{1,2,2d}$ such that \eqref{eqn:MVRODE1}-\eqref{eqn:MVRODE2} is satisfied for $\mathbb P^0$-a.e. $\omega^0$. Since we cannot guarantee the existence of the solution to  \eqref{eqn:MVRODE1}-\eqref{eqn:MVRODE2} on $(\Omega^0_\mathrm{w.d.})^c$, define $$(\bm{Y}^\xi,\bm{p}^\xi)(\omega^0,\cdot)=\left((\xi,\partial_\mu\ell(\mathcal L^1(\xi))(\xi))\right)_{t\in[0,1]}, \qquad \omega^0\notin\Omega^0_\mathrm{w.d.}.$$ This guarantees that $(\bm{Y}^\xi,\bm{p}^\xi)(\omega^0,\cdot)\in\mathscr{C}^{1,2,2d}$ for every $\omega^0\in \Omega^0$, as Assumption \ref{assm:Obj} implies that  for $\partial_\mu\ell(\mathcal L^1(\xi))(\xi)\in L^\infty(\Omega^1)$.\\

The uniqueness and boundedness claims are immediate consequences of Lemma \ref{lemma:MVODE_WellDEfined}.

\textit{Step Two: Measurability of $(\bm{Y}^\xi,\bm{p}^\xi)$}

Fix $\bm{\mu}\in \mathcal C$, $x\in\R^d$, and consider the ODE
\begin{equation}\label{eqn:FlowMapY}
    \dot{Y}^{x,\bm{\mu},R}_t = \Gamma^R(Y^{x,\bm{\mu},R}_t,\mu_t,\nu_t), \qquad Y_0^{x,\bm{\mu},R}=x,
\end{equation}
which has a unique solution $\bm{Y}^{x,\bm{\mu},R}=(Y_t^{x,\bm{\mu},R})_{t\in[0,1]}$, by  Lemma 8.1.4 in \cite{Ambrosio2008} whenever \\$\omega^0\in \Omega^0_\mathrm{w.d.}$. To remain consistent with the convention for $\bm{Y}^\xi$ on $(\Omega^0_\mathrm{w.d.})^c$, we set $\bm{Y}^{x,\bm{\mu},R}=x$ for $\omega^0\notin \Omega^0_\mathrm{w.d.}$. Now, let us define the operator
\begin{align*}
&\mathcal T^{R}: \Omega^0\times L^\infty(\Omega^1; \bar B(R_0))\times \mathcal C\mapsto \mathcal C,\qquad\mathcal T^R(\omega^0,\xi,\bm{\mathfrak{m}})\colon= 
   (\mathcal L^1(Y^{\xi,\bm{\mathfrak{m}},R}_t)(\omega^0))_{t\in[0,1]},
\end{align*}
where $Y^{\xi,\bm{\mathfrak{m}},R}_t$ refers to $Y^{x,\bm{\mathfrak{m}},R}_t$ evaluated at $x=\xi$. Notice that $\mathcal T^R(\omega^0,\xi,\bm{\mathfrak{m}})= \mathcal L^1(\xi)$ for \\$\omega^0\notin \Omega^0_\mathrm{w.d.}$.  As shown in the proof of Theorem 4.21 in \cite{carmona2018probabilistic} Vol. I, $\mathcal L^1(\bm{Y}^{\xi}_t)(\omega^0)$ coincides with $\bm{\mathfrak{m}}_t^\infty(\omega^0)$  for every $\omega^0\in \Omega^0_\mathrm{w.d.}$ and $t\in[0,1]$, where $\bm{\mathfrak{m}}^\infty$ is obtained as the limit, with respect to metric $\mathcal D$, of the Picard iterates
\[
\bm{\mathfrak{m}}^{n+1} = \mathcal T^R(\omega^0,\xi, \bm{\mathfrak{m}^n}), \qquad \bm{\mathfrak{m}}^0= (\mathcal L^1(\xi))_{t\in[0,1]}.
\]
 On $(\Omega^{0}_\mathrm{w.d.})^c$, $\mathcal T^R_{\omega^0,\xi}$ is a constant map and $\bm{Y}^{\xi}(\omega^0)=\xi$. Therefore, $\bm{\mathfrak{m}}_t^n(\omega^0)$ converges to $\mathcal L^1(Y^\xi_t)(\omega^0)$ for every $t\in[0,1]$ and $\omega^0\in \Omega^0$.

Let us show by induction that $\bm{\mathfrak{m}}^n$ belongs to
\[
\mathfrak{M} = \{ \bm{\mathfrak{m}}: \Omega^0\mapsto \mathcal C \,| \,\bm{\mathfrak{m}} \text{ is } (\mathcal F^0, \mathcal B(\mathcal C))\text{-measurable} \},
\]
for every $n\ge 0$. The base case $\bm{\mathfrak{m}}^0$ is deterministic, and hence $\bm{\mathfrak{m}}^0\in \mathfrak{M}$. Since measurability is stable under pointwise limits, if $\mathcal T^R$ preserves measurability, then $\mathfrak{m}^\infty\in \mathfrak{M}$ by induction. Let us use the short hand notation $Y^{x,n,R}_t$ to denote $Y^{x,\bm{\mu},R}_t$ evaluated at $\bm{\mu}=\bm{\mathfrak{m}}^n$, and suppose that $Y^{\xi,n,R}_t$ is $(\mathcal F^0\otimes\mathcal F^1)$-measurable. Then, for any $D \in \mathcal B(\R^d)$ the function
\[
\Omega\ni \omega\mapsto \mathbbm{1}_{D}(Y^{\xi,n,R}_t(\omega))
\]
is $(\mathcal F^0\otimes \mathcal F^1)$-measurable. Therefore, applying Lemma 1.28 in \cite{Kallenberg2021}, yields that
\[
\Omega^0\ni\omega^0\mapsto\mathfrak{m}^{n+1}_t(D) = \mathbb P^1\left[ Y^{\xi,n,R}_t(\omega^0,\cdot)\in D\right]
\]
is $\mathcal F^0$-measurable for every $D\in\mathcal B(\R^d)$, $t\in[0,1]$. By Criterion \ref{criterion}, this is sufficient to conclude that $\bm{\mathfrak{m}}^{n+1}\in \mathfrak{M}$. Therefore, to complete the inductive step, we are left to show that $Y^{\xi,n,R}_t$ is $(\mathcal F^0\otimes \mathcal F^1)$-measurable given that $\bm{\mathfrak{m}}^n\in \mathfrak{M}$. Consider the $\sigma$-algebra defined by
\[
\mathcal F^0_\mathrm{w.d.}:=\{\Omega^{0}_\mathrm{w.d.}\cap A: A \in \mathcal F^0\}.
\]
Since $(\Omega^0,\mathcal F^0,\mathbb P^0)$ is a complete probability space and $(\Omega^{0}_{\mathrm{w.d.}})^c$ is a $\mathbb P^0$ null set, $(\Omega^{0}_{\mathrm{w.d.}})^c\in\mathcal F^0$. This implies that $A\cap \Omega^0_\mathrm{w.d.}\in \mathcal F^0$ for every $A\in \mathcal F^0$, hence $\mathcal F^0_\mathrm{w.d.}$ is a sub-$\sigma$-algebra of $\mathcal F^0$. Let $Z$ be any random variable  defined on $(\Omega^0,\mathcal F^0,\mathbb P^0)$ that takes values in $\R^d$, and write $Z_\mathrm{w.d.}$ for its restriction to $\Omega^0_\mathrm{w.d.}$. Then, for every $S\in \mathcal B(\R^d)$
\[
Z_\mathrm{w.d.}^{-1}(S) = \Omega^0_\mathrm{w.d.}\cap \{\omega^0\in \Omega^0\colon Z(\omega^0)\in S\}\in \mathcal F^0_\mathrm{w.d.}.
\]
Thus, the restriction of any random variable defined on $(\Omega^0,\mathcal F^0)$ to $\Omega^0_\mathrm{w.d.}$ that takes values in $\R^d$ is $\mathcal F^{0}_\mathrm{w.d.}$-measurable. In particular, if $\bm{\mathfrak{m}}^n \in \mathfrak{M}$, the velocity field
\[
\Omega^0_\mathrm{w.d.}\times \R^d\times[0,1]\ni(\omega^0,x,t)\mapsto\Gamma^R(x,\mathfrak{m}^n_t(\omega^0),\nu_t(\omega^0))
\]
is continuous in $(x,t)$ for fixed $\omega^0\in \Omega^0_\mathrm{w.d.}$, and $\mathcal F^0_\mathrm{w.d.}$-measurable for fixed $(x,t)$. Hence, Lemma 2.2 in \cite{Han2017} applies and yields that $ Y^{x,n,R}_t$ is $\mathcal F^0_\mathrm{w.d.}$-measurable for each $t\in[0,1]$.

Notice that for any $S \in \mathcal B(\R^d)$
\[
(Y^{x,n,R}_t)^{-1}(S) = \underbrace{\{ \omega^0\in \Omega^{0}_\mathrm{w.d.}\colon Y^{x,n,R}_t(\omega^0)\in S\}}_{=:S_1}\cup \underbrace{\{\omega^0\notin \Omega^0_\mathrm{w.d.}\colon Y^{x,n,R}_t(\omega^0)=x\in S\}}_{=:S_2}.
\]
The set $S_1$ is an element of $\mathcal F^0_{\mathrm{w.d.}}\subseteq \mathcal F^0$, since the restriction of $Y^{x,n,R}_t$ to $\Omega^0_\mathrm{w.d.}$ is $\mathcal F^0_\mathrm{w.d.}$-measurable. 
The second set $B_2\subseteq (\Omega^0_\mathrm{w.d.})^c$ is a $\mathbb P^0$-null set. Therefore, $B_2\in \mathcal F^0$ as $(\Omega^0,\mathcal F^0,\mathbb P^0)$ is complete. Hence, $B_1\cup B_2\in \mathcal F^0$, which implies that $\Omega^0\ni\omega^0\mapsto Y^{x,n,R}_t(\omega^0)$ is $\mathcal F^0$-measurable. Furthermore, Lemma 8.1.4 in \cite{Ambrosio2008} also yields that the mapping $x\mapsto \bm{Y}^{x,n,R}(\omega^0)$ is continuous, with respect to the supremum norm on $C([0,1];\R^d)$, for each fixed $\omega^0\in \Omega^0$. Therefore, \[\R^d\times \Omega^0\ni(x,\omega^0)\mapsto Y^{x,n,R}_t(\omega^0)\]
 is a Carath\'eodory function, Definition 4.1 in \cite{AZHMYAKOV201987}, and is thus jointly measurable in $(\omega^0,x)$ by Theorem 4.1 in \cite{AZHMYAKOV201987}. Note that the composition of measurable functions is measurable when the $\sigma$-algebra on the co-domain of the first map agrees with the $\sigma$-algebra on the domain of the second map. Accordingly, $Y^{\xi,n,R}_t$ is $(\mathcal F^0\otimes\mathcal F^1)$-measurable for every $t\in[0,1]$, having canonically embedded the random variable $Y^{\xi,n,R}_t$ defined on $(\Omega^0,\mathcal F^0)$ into $(\Omega,\mathcal F^0\otimes \mathcal F^1)$. This completes the inductive step.\\

Let $\bm Y^{x,\xi}$ denote $\bm Y^{x,\bm\mu,R}$ evaluated at $\bm\mu=\bm{\mathfrak{m}}^\infty$. The preceding analysis shows that $Y^{x,\xi}_t$ is $\mathcal F^0$-measurable and $Y^{x,\xi}_t|_{x=\xi}$ is $(\mathcal F^0\otimes \mathcal F^1)$-measurable for every $t\in[0,1]$. Furthermore, notice that $\bm Y^{x,\xi}|_{x=\xi}$ satisfies \eqref{eqn:MVRODE1} $\mathbb P$-a.s., and thus provides a version of the solution to \eqref{eqn:MVRODE1} that is $(\mathcal F^0\otimes \mathcal F^1)$-measurable for every $t\in[0,1]$. \\

\textit{Step Three: Measurability of $\bm{p}^{\xi,R}$} \\

By Assumption \ref{assm:Obj}, $\partial_\mu\ell$ is jointly continuous in each argument. By combining this assumption with the analysis of \textit{Step Two}, we find that the terminal condition $\partial_\mu\ell(\mathcal L^1(Y^\xi_1))(Y^{x,\xi}_1)$ is $\mathcal F^0$-measurable. Furthermore, as seen in \textit{Step Two} of the proof of Lemma \ref{lemma:MVODE_WellDEfined}, the terminal condition is bounded $\mathbb P^0$-a.s. by $K(R_X)$ for every $\omega^0\in \Omega^0$. Since we use the convention that $Y_1^{x,\xi}=x$ on $(\Omega^0_\mathrm{w.d.})^c$ and $R_0\le R_X$, it follows that $\partial_\mu\ell(\mathcal L^1(Y^\xi_1))(Y^{x,\xi}_1)\in \bar B(K(R_X))$ for all $\omega^0\in\Omega^0$. 

Let $\bm{q}\in \mathscr{C}^{2,d}$ such that $q_t$ is $(\mathcal F^0\otimes \mathcal F^1)$-measurable for every $t\in[0,1]$. Then, $(Y_t^\xi,q_t)$ is \\$(\mathcal F^0\otimes \mathcal F^1)$-measurable for every $t\in[0,1]$, which implies that $\mathcal L^1(Y^{\xi}_t,q_t)(\omega^0)$ is well-defined for every $\omega^0\in \Omega^0$. Consider, for
$\omega^0\in \Omega^0_\mathrm{w.d.}$, the velocity field given by
\[
(t,p,\rho)\mapsto- \mathcal K^R(Y^{x,\xi}_t(\omega^0),\rho, \nu_t(\omega^0),p),
\]
which is Lipschitz in variables $(p,\rho)$ and continuous in $t$ for fixed $(p,\rho)$. Therefore, the above velocity field satisfies the assumptions of Lemma 8.1.4 in \cite{Ambrosio2008}. Thus, by applying this Lemma for $\omega^0\in\Omega^0_\mathrm{w.d.}$, 
the ODE
\begin{equation}\label{eqn:FlowMapP}
    \dot{p}^{x,\xi,\bm{q}, R}_t = -\mathcal K^R(Y^{x,\xi}_t,\mathcal L^1(Y^{\xi}_t,q_t), \nu_t,p^{x,\xi,\bm{q}, R}_t), \qquad p^{x,\xi,\bm{q},R}_1 = \partial_\mu\ell(\mathcal L^1(Y^{\xi}_1))(Y^{x,\xi}_1).
\end{equation}
has a unique solution in $C([0,1];\R^d)$ for every $x\in \R^d$, $\xi \in L^2(\Omega^1)$. For $\omega\notin \Omega^0_\mathrm{w.d.}$, we define $p^{x,\xi,\bm{q},R}_t=\partial_\mu\ell(\mathcal L^1(\xi))(x)$ so that $t\mapsto p_t^{x,\xi,\bm{q},R}$ is continuous in $t$ for every $\omega^0\in \Omega^0$. Furthermore, Lemma 8.1.4 in \cite{Ambrosio2008} states that $p^{x,\xi,\bm{q},R}_t$ is continuous in its terminal value. In \textit{Step Two}, we deduced that $x\mapsto Y^{x,\xi}_1$ is continuous for every $\omega^0\in \Omega^0$. By Assumption \ref{assm:Obj}, the terminal condition for $p_1^{x,\xi,\bm q,R}$ is a continuous function of $Y^{x,\xi}_1$, and thus it is continuous in $x$. Accordingly, the map $x\mapsto p^{x,\xi,\bm{q},R}_t$ is continuous, which implies that $p_t^{x,\xi,\bm q,R}(\omega^0)|_{x=\xi}$ is $\mathcal F^1$-measurable for every $t\in[0,1]$, $\omega^0\in \Omega^0$ and $\xi \in L^\infty(\Omega^1;\bar B(R_0))$. Then, the continuity of $t\mapsto p_t^{x,\xi,\bm q,R}(\omega^0)|_{x=\xi}$ seen earlier yields that $\bm p^{x,\xi,\bm q,R}(\omega^0)|_{x=\xi}\in \mathscr{C}^{1,2,2d}$ for any $\xi \in L^\infty(\Omega^1; \bar B(R_0))$ and $\omega^0\in \Omega^0$. \\

Consider the operator,
\[
\widetilde{\mathcal T}^R:\Omega^0\times L^\infty(\Omega^1;\bar B(R_0))\times \mathscr{C}^{1,2,d}\to \mathscr{C}^{1,2,d}, \qquad \widetilde{\mathcal T}^R(\omega^0,\xi,\bm{q})\mapsto \bm{p}^{x,\xi,\bm{q},R}(\omega^0)|_{x=\xi}.
\]
Theorem 4.23 in \cite{carmona2018probabilistic} Vol. I shows that the Picard iterates defined by
\[
\bm{p}^{\xi,n+1,R} = \widetilde{\mathcal T}^R(\cdot,\xi,\bm{p}^{\xi,n,R}) , \qquad \bm{p}^{\xi,0,R}=\partial_\mu\ell(\mathcal L^1(\xi))(\xi).
\]
converge, with respect to the metric induced by $\Vert_\cdot\Vert_{\mathscr{C}^{1,2,d}}$, to $\bm p^\xi$ for every $\omega^0\in \Omega^0_\mathrm{w.d.}$. Since $\bm{p}^{x,\xi,\bm{q},R}$ is independent of $\bm{q}$ for $\omega^0\notin \Omega^0_\mathrm{w.d.}$, this convergence holds for every $\omega^0\in\Omega^0$. Note that Theorem 4.23 in \cite{carmona2018probabilistic} Vol. I concerns backward McKean--Vlasov SDEs, where the solution is a pair of adapted processes. A second process for the diffusion term in the BSDE is required to ensure that the first process is adapted to the forward filtration. In our setting, we can take the filtration to be independent of time $t$ and apply Remark 4.22 in \cite{carmona2018probabilistic} Vol. I, with $Y^\xi_t$ in place of $\zeta_t$, to show that the solution to the backward equation for $\bm{p}^{x,\xi,\bm{q},R}$ has identically zero diffusion coefficient. Thus, the convergence of the Picard iterates $\bm p^{\xi,n,R}$ follows from Theorem 4.23 in \cite{carmona2018probabilistic} Vol. I, by applying their argument with all diffusion terms set to zero.

 Since $Y^\xi_t$ is $(\mathcal F^0\otimes \mathcal F^1)$-measurable and $\xi $ is $\mathcal F^1$-measurable, the map 
\[
\Omega\ni (\omega^0,\omega^1)\mapsto \mathbbm{1}_{D_1}(Y^{\xi}_t(\omega^0,\omega^1))\mathbbm{1}_{D_2}(\xi(\omega^1))
\]
is $(\mathcal F^0\otimes \mathcal F^1)$-measurable for every $D_1,D_2\in \mathcal B(\R^d)$. By Lemma 1.28 in \cite{Kallenberg2021}, the map \[\Omega^0\ni\omega^0\mapsto \mathcal L^1(Y^\xi_t,p^{\xi,0,R}_t)(\omega^0)(D_1\times D_2)= \int \mathbbm{1}_{D_1}(Y^{\xi}_t(\omega^0,\omega^1))\cdot\mathbbm{1}_{D_2}(\xi(\omega^1))d\mathbb P^1(\omega^1)\] is $\mathcal F^0$-measurable. Hence, since $\mathcal B(\R^{2d})=\mathcal B(\R^d)\otimes \mathcal B(\R^d)$, Criterion \ref{criterion} applies to guarantee that $\bm{\mathfrak{r}}^0\in \mathfrak{M}$.
By following the same argument as \textit{Step Two}, if $p^{\xi,n,R}_t$ is $(\mathcal F^0\otimes\mathcal F^1)$-measurable for every $t\in[0,1]$, then $\bm{\mathfrak{r}}^n:=(\mathcal L^1(Y^\xi_t,p^{\xi,n,R}_t))_{t\in[0,1]}$ is well-defined and belongs to $\mathfrak{M}$. Then, with $\bm{\mathfrak{m}}^n$ and $\bm Y^{\xi,n,R}$ replaced by $\bm{\mathfrak{r}}^n$ and $\bm{p}^{\xi,n,R}$, respectively, in the argument of \textit{Step Two}, we deduce that $p_t^{\xi,n+1,R}$ is $(\mathcal F^0\otimes\mathcal F^1)$-measurable for every $t\in[0,1]$ whenever $\bm{\mathfrak{r}}^n\in\mathfrak{M}$. Thus, we have by induction that $p^{\xi,n}_t$ is $(\mathcal F^0\otimes \mathcal F^1)$-measurable for every $t\in[0,1], n\ge 0$. 

By taking the coupling $\mathcal L^1((Y^{\xi}_t,p^{\xi,n,R}_t), (Y^\xi_t,p^\xi_t)) $, we obtain
\[
W^2_2(\mathfrak{r}^n_t, \mathcal L^1(Y^\xi_t,p^\xi_t))\le \E^1\left[\Vert p^{\xi,n,R}_t-p^{\xi}_t\Vert^2\right]\le \Vert \bm p^{\xi,n,R}-\bm p^{\xi}\Vert^2_{\mathscr{C}^{1,2,d}}
\]
holds for every $t\in[0,1]$ and $\omega^0\in \Omega^0$. Hence $\bm{\mathfrak{r}}^n$ converges to $( \mathcal L^1(Y^\xi_t,p^\xi_t))_{t\in[0,1]}$ with respect to $\mathcal D$. The preceding analysis shows that $\bm{\mathfrak{r}}^n\in\mathfrak{M}$ for every $n\ge0$. Therefore, $(\mathcal L^1(Y^\xi_t,p^\xi_t))_{t\in[0,1]}\in \mathfrak{M}$ by the stability of measurability under pointwise limits.
\end{proof}

\begin{corollary}\label{cor:FlowMap_wd}
Let $R_2>0$ and $\bm{\nu}$ satisfy the same conditions as in Lemma \ref{lemma:MVRODE_wd} and denote the solution to \eqref{eqn:MVRODE1}-\eqref{eqn:MVRODE2} by $(\bm{Y}^\xi,\bm p^\xi)\in \mathscr{C}^{2,2d}$. Then, there exists a solution $(\bm Y^{x,\xi}, \bm p^{x,\xi})\in \mathscr{C}^{0,2,2d}$, unique up to $\mathbb P^0$-a.s. equality, to the following system of ODEs
    \begin{align}
    \dot{Y}^{x,\xi}_t &= \Gamma(Y^{x,\xi}_t, \mathcal L^1(Y^\xi_t),\nu_t), \qquad & Y_0^{x,\xi}=x,\label{eqn:FMY}\\
    \dot{p}^{x,\xi}_t &= -\mathcal K(Y^{x,\xi}_t, \mathcal L^1(Y^\xi_t, p^\xi_t),\nu_t, p^{x,\xi}_t), \qquad & p_1^{x,\xi}=\partial_\mu\ell(\mathcal L^1(Y^\xi_1))(Y^{x,\xi}_1).\label{eqn:FMp}
    \end{align}
Furthermore, there exists a constant $\Lambda_F=\Lambda_F(R_0,R_2,\beta)$ such that
\[
\Vert Y^{x_1,\xi_1}_t-Y^{x_2,\xi_2}_t\Vert + \Vert p^{x_1,\xi_1}_t-p^{x_2,\xi_2}_t\Vert \le \Lambda_F \left( \Vert x_1-x_2\Vert + W_2(\mathcal L^1(\xi_1),\mathcal L^1(\xi_2))\right),
\]
$\mathbb P^0$-almost surely.
\end{corollary}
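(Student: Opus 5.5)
Because the measure arguments $\mathcal L^1(Y^\xi_t)$ and $\mathcal L^1(Y^\xi_t,p^\xi_t)$ are already fixed by Lemma \ref{lemma:MVRODE_wd}, the system \eqref{eqn:FMY}--\eqref{eqn:FMp} decouples into two ordinary (non-McKean--Vlasov) ODEs for each fixed $\omega^0$ and $x$. I would therefore build on that lemma throughout. For existence and uniqueness, fix $\omega^0\in\Omega^0_\mathrm{w.d.}$. The forward equation \eqref{eqn:FMY} is exactly the ODE \eqref{eqn:FlowMapY} with $\bm\mu=(\mathcal L^1(Y^\xi_t))_t$. Its velocity field $\Gamma^R$ is globally Lipschitz in $x$ and continuous in $t$, so Lemma 8.1.4 in \cite{Ambrosio2008} gives a unique solution. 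The same Gr\"onwall bound as in Lemma \ref{lemma:MVODE_WellDEfined}, now with $\Vert\Gamma\Vert\le R_2^2R_X$ from Lemma \ref{lemma:BoundedGamma}, shows that $Y^{x,\xi}_t$ stays in a ball independent of $R$ whenever $\Vert x\Vert\le R_0$. Hence the truncation is inactive once $R$ is large. I would then solve the backward equation \eqref{eqn:FMp} in reversed time in the same way, using Corollary \ref{cor:LipmathcalK} and the bound on $\mathcal K$. Measurability in $\omega^0$ is what Steps Two and Three of the proof of Lemma \ref{lemma:MVRODE_wd} already establish via Lemma 2.2 in \cite{Han2017}, and the off-$\Omega^0_\mathrm{w.d.}$ convention handles the null set, so the solution lies in $\mathscr{C}^{0,2,2d}$.

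For the stability estimate I would first control the law dependence. Fix a coupling of $\xi_1,\xi_2$ on $\Omega^1$ that attains $W_2(\mathcal L^1(\xi_1),\mathcal L^1(\xi_2))$; this is allowed because the solutions of Lemma \ref{lemma:MVRODE_wd} depend on $\xi_i$ only through its law and through $\xi_i(\omega^1)$. Set $\delta_t=\E^1[\Vert Y^{\xi_1}_t-Y^{\xi_2}_t\Vert^2]^{1/2}$. Since $W_2(\mathcal L^1(Y^{\xi_1}_t),\mathcal L^1(Y^{\xi_2}_t))\le\delta_t$, Lemma \ref{lemma:Lipderivs} with $p=2$, Minkowski and Gr\"onwall give $\delta_t\le e^{2\Lambda_2 t}W_2(\mathcal L^1(\xi_1),\mathcal L^1(\xi_2))$. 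For the adjoint, let $\epsilon_t=\E^1[\Vert p^{\xi_1}_t-p^{\xi_2}_t\Vert^2]^{1/2}$. At the terminal time, Assumption \ref{assm:Obj} gives $\epsilon_1\le 2\Lambda_\ell(R_X)\delta_1$. Corollary \ref{cor:LipmathcalK} then yields $\epsilon_t\le \epsilon_1+C\int_t^1(\delta_s+\epsilon_s)\,ds$, and a backward Gr\"onwall argument gives $\sup_t\epsilon_t\lesssim W_2(\mathcal L^1(\xi_1),\mathcal L^1(\xi_2))$. It follows that both $W_2(\mathcal L^1(Y^{\xi_1}_t),\mathcal L^1(Y^{\xi_2}_t))$ and $W_2(\mathcal L^1(Y^{\xi_1}_t,p^{\xi_1}_t),\mathcal L^1(Y^{\xi_2}_t,p^{\xi_2}_t))$ are bounded by $C\,W_2(\mathcal L^1(\xi_1),\mathcal L^1(\xi_2))$.

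With this in hand, the pathwise difference follows by the same two Gr\"onwall steps. Lemma \ref{lemma:Lipderivs} gives
\[
\Vert Y^{x_1,\xi_1}_t-Y^{x_2,\xi_2}_t\Vert\le\Vert x_1-x_2\Vert+\Lambda_2\int_0^t\big(\Vert Y^{x_1,\xi_1}_s-Y^{x_2,\xi_2}_s\Vert+CW_2(\mathcal L^1(\xi_1),\mathcal L^1(\xi_2))\big)\,ds,
\]
and Gr\"onwall bounds the forward difference. The backward difference is handled the same way, starting from the terminal bound $\Lambda_\ell(\Vert Y^{x_1,\xi_1}_1-Y^{x_2,\xi_2}_1\Vert+W_2)$ and applying Corollary \ref{cor:LipmathcalK}. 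All constants depend only on $\beta,R_0,R_2$ through $R_X,R_a$, together with $\Lambda_\ell,K$, and the resulting bounds hold for every $\omega^0\in\Omega^0_\mathrm{w.d.}$, hence $\mathbb P^0$-a.s.

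The main obstacle is the coupling step. It requires that $Y^{\xi_i}$ and $p^{\xi_i}$ can be realised jointly on $\Omega^1$ with $\xi_1,\xi_2$ optimally coupled. If $\Omega^1$ does not carry such a coupling, I would work through the flow-map representation $Y^{\xi}_t=Y^{x,\xi}_t|_{x=\xi}$: any coupling of the initial laws pushes forward to a coupling of the time-$t$ laws, and the argument above goes through without changing the underlying space.
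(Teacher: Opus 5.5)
Your proposal is correct and follows essentially the same route as the paper. Existence, uniqueness and measurability are read off from Steps Two and Three of Lemma \ref{lemma:MVRODE_wd}; the law-level bounds come from coupling the initial conditions and running forward and backward Gr\"onwall arguments, with the terminal Lipschitz bound from Assumption \ref{assm:Obj}; and the pathwise bounds then follow by Gr\"onwall.

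The differences are minor. The paper cites Lemma 3.1 of \cite{peng} for the forward flow estimate rather than deriving it, and it works with arbitrary $\xi_1',\xi_2'\in L^2(\Omega^1)$ with the prescribed laws, taking the infimum at the end. Your fallback of pushing a coupling of the initial laws through the flow maps is a slightly cleaner way to avoid assuming that $\Omega^1$ can realise near-optimal couplings.
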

\begin{remark}\label{remark:flow}
   The Lipschitz estimate of $(\bm Y^{x,\xi},\bm p^{x,\xi})$ in $\xi\in L^2(\Omega^1)$ of Corollary \ref{cor:FlowMap_wd} shows that $(\bm{Y}^{x,\xi}, \bm p^{x,\xi})$ depends on $\xi \in L^2(\Omega^1)$ only through its law. Therefore, with $\zeta = \mathcal L^1(\xi)\in \mathcal P(\bar B(R_0))$, the paths
$\bm{X}^{x,\zeta},\bm{a}^{x,\zeta}\in C([0,1]; \R^d)$ given by $(\bm{X}^{x,\zeta},\bm{a}^{x,\zeta})=(\bm{Y}^{x,\xi},\bm{p}^{x,\xi})$ are well-defined such that $(X_t^{x,\zeta},a_t^{x,\zeta})$ are $\mathbb P^0$-a.s. Lipschitz in $(t,x,\zeta)$ with respect to the Euclidean and $W_2$-metric, respectively. The Lipschitz continuity in $t$ results immediately from the boundedness of $(X^{x,\zeta}_t,a^{x,\zeta}_t)$ and the Linear growth of $\mathcal K$ and $\Gamma$, Lemma \ref{lemma:BoundedGamma}. Moreover, $(X_t^{x,\zeta},a_t^{x,\zeta})$ is $\mathcal F^0$-measurable for any $t\in[0,1], x\in \R^d$ and $\zeta \in \mathcal P(\bar B(R_0))$. 
\end{remark}
\begin{proof}
Notice that $\bm Y^{x,\xi},\bm p^{x,\xi}$ were introduced earlier in \eqref{eqn:FlowMapY} and \eqref{eqn:FlowMapP} with $\bm \mu$ and $\bm q$ evaluated at $\mathcal L^1(Y^\xi_t)$ and $\mathcal L^1(Y^\xi_t,p^{\xi}_t)$, respectively. The existence of $(\bm Y^\xi,\bm p^\xi)\in\mathscr{C}^{2,2d}$ was established in Lemma \ref{lemma:MVRODE_wd}. As a result, the existence, uniqueness and measurability of $\bm Y^{x,\xi},\bm p^{x,\xi}$ follows directly from \textit{Steps Two} and \textit{Three} of the proof of Lemma \ref{lemma:MVRODE_wd}. Therefore, only the Lipschitz continuity with respect to the initial condition requires further study. Lemma 3.1 in \cite{peng}, with identically zero diffusion terms, establishes that there exists a constant $\Lambda_Y$ depending only on the Lipschitz constant $\Lambda_\Gamma^R$ of $\gamma^R$ such that
\[
\sup_{s\in[0,1]}\Vert Y^{x_1,\xi_1}_t-Y^{x_2,\xi_2}_t\Vert \le \Lambda_Y\left(\Vert x_1-x_2\Vert+W_2(\mathcal L^1(\xi_1),\mathcal L^1(\xi_2)) \right),\qquad \mathbb P^0-a.s.
\]
Let us argue similarly to Lemma 3.1 in \cite{peng} to prove that the same holds for the adjoint variable. Suppose $\xi_1',\xi_2'\in L^\infty(\Omega^1;\R^d)$ such that $\mathcal L^1(\xi_i')=\mathcal L^1(\xi_i)$ for $i=1,2$. 
By the uniqueness of the solution to \eqref{eqn:MVRODE1}, the law of $(Y^{\xi_i',\xi_i}_s,p^{\xi_i',\xi_i}_s)$ conditioned on $\mathcal F^0$ coincides with $\mathcal L^1( Y^{\xi_i}_s,p^{\xi_i}_s)$ $\mathbb P^0$-a.s. for $i=1,2$ and $s\in[0,1]$. Then, the Lipschitz continuity of $\bm Y^{x,\xi}$ implies that 
\begin{align}\label{eqn:EYxixi}
    \sup_{s\in[0,1]} W^2_2(\mathcal L^1(Y^{\xi_1}_s),\mathcal L^1(Y^{\xi_2}_s))&\le \sup_{s\in[0,1]}\E^1[ \Vert Y^{\xi_1',\xi_1}_s-Y^{\xi_2',\xi_2}_s\Vert^2]\\& \le 2\Lambda_Y^2\left( W_2^2(\mathcal L^1(\xi_1),\mathcal L^1(\xi_2)) + \E^1\left[ \Vert \xi_1'-\xi_2'\Vert^2\right]\right).\notag
\end{align}
The final term becomes the squared $2$-Wasserstein distance between $\mathcal L^1(\xi_1)$ and $\mathcal L^1(\xi_2)$ after taking the infimum over all couplings between $\xi_1',\xi_2'$.
Accordingly, the difference in terminal conditions for $p$ can be bounded using Assumption \ref{assm:Obj} to get
\begin{align*}
    \Vert p^{x_1,\xi_1}_1-p_1^{x_2,\xi_2}\Vert^2 &\le \Lambda^2_\ell(R_X)\left( W_2(\mathcal L^1(Y^{\xi_1}_1), \mathcal L^1(Y^{\xi_2}_1))+\Vert Y^{x_1,\xi_1}_1-Y^{x_2,\xi_2}_1\Vert\right)^2\\
    &\le 4\Lambda^2_\ell(R_X)\Lambda^2_Y\left(\Vert x_1-x_2\Vert^2 + 3W^2_2(\mathcal L^1(\xi_1),\mathcal L^1(\xi_2))\right).
\end{align*}
Then, by applying the above estimate, Young's and Cauchy-Schwarz inequalities and the Lipschitz continuity of $\mathcal K^R$, we obtain, for $\omega^0\in \Omega^0_\mathrm{w.d.}$, that
\begin{align}\label{eqn:differeince_in_p}
    \E^1\left[\left\Vert p^{\xi_1',\xi_1}_t-p_t^{\xi_2',\xi_2} \right\Vert^2\right]&\le 24\Lambda_Y^2\Lambda_\ell^2(R_X)\left(\E^1\left[\Vert\xi_1'-\xi_2'\Vert^2\right]+W_2^2(\mathcal L^1(\xi_1),\mathcal L^1(\xi_2)) \right)\notag\\
    & + 6\Lambda^2_{\mathcal K}(1-t)\int_t^1\E^1\left[\Vert Y^{\xi_1',\xi_1}_s-Y^{\xi_2,\xi_2'}_s \Vert^2+\Vert p^{\xi_1',\xi_1}_s-p^{\xi_2',\xi_2}_s\Vert^2 \right]ds\notag\\
    &+ 6\Lambda^2_{\mathcal K}(1-t)\int_t^1 W_2^2(\mathcal L^1(Y^{\xi_1}_s,p^{\xi_1}_s),\mathcal L^1(Y^{\xi_2}_s,p^{\xi_2}_s))ds,
\end{align}
where $\Lambda_\mathcal K$ is the Lipschitz constant of $\mathcal K^R$ with $R=R_X\vee R_a$.
The final term can be bounded using the Lipschitz continuity of $(x,\xi)\mapsto Y_t^{x,\xi}$ according to
\begin{align*}
    W_2^2(\mathcal L^1(Y^{\xi_1}_s,p^{\xi_1}_s),&\mathcal L^1(Y^{\xi_2}_s,p^{\xi_2}_s)) \le \E^1 \left[ \Vert Y_s^{\xi_1',\xi_1}-Y^{\xi_2',\xi_2}_s\Vert^2+\Vert p^{\xi_1',\xi_1}_s-p^{\xi_2',\xi_2}_s \Vert^2\right]\\&\le2\Lambda_Y^2W^2_2(\mathcal L^1(\xi_1),\mathcal L^1(\xi_2))+\E^1 \left[ 2\Lambda_Y^2\Vert\xi_1'-\xi_2' \Vert^2+\Vert p^{\xi_1',\xi_2}_s-p^{\xi_2',\xi_2}_s \Vert^2\right].
\end{align*}
Combining this with the preceding bound  and applying the Lipschitz continuity of $(x,\xi)\mapsto\bm Y^{x,\xi}$ implies that there exists some constant $C'$ depending only on $\Lambda^{R_X\vee R_a}_{\mathcal K},\Lambda_Y,\Lambda_\ell(R_X)$ such that 
\begin{align*}
    \E^1\left[\left\Vert p^{\xi_1',\xi_1}_t-p_t^{\xi_2',\xi_2} \right\Vert^2\right]&\le C' \left( W^2_2(\mathcal L^1(\xi_1),\mathcal L^1(\xi_2))+ \E^1[\Vert \xi_1'-\xi_2' \Vert^2]+\int_t^1 \E^1\left[\left\Vert p^{\xi_1',\xi_1}_s-p_s^{\xi_2',\xi_2} \right\Vert^2 \right]ds\right).
\end{align*}
By applying Gr\"onwall's lemma, we get
\[
\sup_{t\in[0,1]}\E^1\left[\left\Vert p^{\xi_1',\xi_1}_t-p_t^{\xi_2',\xi_2} \right\Vert^2 \right]\le C'e^{C'}\left( W_2^2(\mathcal L^1(\xi_1),\mathcal L^1(\xi_2))+ \E^1\left[\Vert \xi_1'-\xi_2'\Vert^2\right])\right).
\]
Combining this with the bound on $\E^1\left[\Vert Y^{\xi_1',\xi_1}_t-Y^{\xi_2',\xi_2}\Vert^2\right] $ seen in \eqref{eqn:EYxixi}, we obtain
\begin{align*}
   W_2^2\left(\mathcal L^1(Y^{\xi_1}_t,p^{\xi_2}_t), \mathcal L^1(Y^{\xi_2}_t,p^{\xi_2}_t) \right)&\le  \E^1\left[ \Vert Y^{\xi_1',\xi_1}_t-Y^{\xi_2',\xi_2}_t \Vert^2 +\Vert p^{\xi_1',\xi_1}_t-p^{\xi_2',\xi_2}_t \Vert^2 \right] \\& \le (2\Lambda_Y^2+C'e^{C'})\left(W_2^2(\mathcal L^1(\xi_1),\mathcal L^1(\xi_2))+ \E^1\left[\Vert \xi_1'-\xi_2'\Vert^2\right] \right).
\end{align*} Again, by taking the infimum over the couplings $\xi_1',\xi_2'\in L^2(\Omega^1)$ with $\mathcal L^1(\xi_i')=\mathcal L^1(\xi_i)$, $i=1,2$, we find that 
\[
\sup_{t\in[0,1]}W_2\left(\mathcal L^1(Y^{\xi_1}_t,p^{\xi_2}_t), \mathcal L^1(Y^{\xi_2}_t,p^{\xi_2}_t) \right)\le (4\Lambda_Y^2+2C'e^{C'})^{1/2}W_2(\mathcal L^1(\xi_1),\mathcal L^1(\xi_2)),
\]
holds $\mathbb P^0$-a.s. for any $\xi_1,\xi_2\in L^\infty(\Omega^1;\bar B(R_0))$. Note that \eqref{eqn:differeince_in_p} holds with $x_1,x_2$ in place of $\xi_1',\xi_2'$, without requiring the expectation with respect to $\mathbb P^1$. Therefore, the above estimates yield that there exists some constant $C''$  depending only on $\Lambda^{R_X\vee R_a}_{\mathcal K},\Lambda_Y,\Lambda_\ell(R_X)$ such that
\[
\left\Vert p^{x_1,\xi_1}_t-p^{x_2,\xi_2}_t\right\Vert^2\le C'' \left( \Vert x_1-x_2\Vert^2+ W^2_2(\mathcal L^1(\xi_1),\mathcal L^1(\xi_2))+\int_t^1\Vert p^{x_1,\xi_1}_s-p^{x_2,\xi_2}\Vert^2ds\right).
\]
The desired result now holds after applying Gr\"onwall's lemma.
\end{proof}
Let $\bm{y}=(y_1,\ldots,y_N)$ with  $y_i\in \bar B(R_0)$ for every $1\le i \le N$.
Suppose that $\xi^N\in L^\infty(\Omega^1;\R^d)$ is a random variable that takes the values $y_1,\ldots, y_N$ each with probability $N^{-1}$, so that its law is given by
\[
\zeta^N:=\mathcal L^1(\xi^N) = \frac{1}{N}\sum_{i=1}^N\delta_{y_i}.
\]
Accordingly, the law of $(X^{x,\zeta^N}_t,a^{x,\zeta^N}_t)_{x=\xi^N}$ conditioned on $\mathcal F^0$ is the empirical measure
\[
\mathfrak{r}^N_t:= \frac{1}{N}\sum_{i=1}^N \delta_{\left(X^{y_i,\zeta^N}_t,\,a^{y_i,\zeta^N}_t\right)}.\]
Then, by the uniqueness of the solutions to \eqref{eqn:MVODE1}-\eqref{eqn:MVODE2} established in Lemma \ref{lemma:MVRODE_wd}, $(\bm X^{x,\zeta^N},\bm a^{x,\zeta^N})|_{x=\xi^N}$ equals to $(\bm Y^{\xi^N},\bm p^{\xi^N})$ $\mathbb P$-almost surely. By Lemma 2.4 in \cite{carmona2018probabilistic} Vol. II, the conditional laws $\mathfrak{r}^N_t$ and $\mathcal L^1(Y^{\xi^N}_t,p^{\xi^N}_t)$
 coincide  $\mathbb P^0$-a.s., and so we can replace $\mathcal L^1(Y^{\xi^N}_t,p^{\xi^N}_t)$ by $\mathfrak{r}^N_t$ in \eqref{eqn:FMY}-\eqref{eqn:FMp}. This yields our continuous-time model \eqref{eqn:ansatz}-\eqref{eqn:anstaz_IC}, when $\bm\nu$ in Corollary \eqref{cor:FlowMap_wd} is taken to be the mean-field parameters $\bm\nu_\tau$. Hence, there exists a solution to \eqref{eqn:ansatz}-\eqref{eqn:anstaz_IC} provided that $\bm\nu_\tau$ satisfies the assumptions of Corollary \eqref{cor:FlowMap_wd}.

 Furthermore, the uniqueness of the solution to \eqref{eqn:ansatz}-\eqref{eqn:anstaz_IC} follows by a standard Gr\"onwall argument from the local Lipchitz continuity and linear growth of $\Gamma,\mathcal K$, Lemmas \ref{lemma:Lipderivs} and \ref{lemma:BoundedGamma}, and the local Lipschitz continuity of $\partial_\mu\ell$, Assumption \ref{assm:Obj}. Therefore, we obtain that
\begin{equation}\label{eqn:IPSvsFM}
\left(X^{i,N}_{t,\tau},a^{i,N}_{t,\tau}\right) = \left(X^{y_i,\zeta^N}_{t}, a^{y_i,\zeta^N}_t \right), \qquad \mathbb P^0-a.s.,
\end{equation}
for every $t\in[0,1]$ when $\bm{\nu}$ in \eqref{eqn:MVRODE1}-\eqref{eqn:MVODE2} is taken as $\bm{\nu}_\tau$.\\

This is made precise and extended as follows.
\begin{lemma}\label{lemma:InductiveStep}
    For training step $\tau \ge 0$, suppose that mean-field parameters $\bm{\nu}_\tau$ satisfy the conditions of Lemma \ref{lemma:MVRODE_wd}, for some $R_2>0$. Then, for any initial condition $\bm{y}\in (\bar B(R_0))^N$, there exists a solution $(X^{i,\bm{y}}_{t,\tau},a^{i,\bm{y}}_{t,\tau})$ to \eqref{eqn:ansatz}-\eqref{eqn:anstaz_IC}, unique up to $\mathbb P^0$-a.s. equality, such that $(X^{i,\bm{y}}_{t,\tau},a^{i,\bm{y}}_{t,\tau})$ are $\mathcal F^0$-measurable for every $t\in[0,1]$. Furthermore, these solutions satisfy
    $X^{i,\bm{y}}_{t,\tau}\in\bar B(R_X)$ and $a^{i,\bm{y}}_{t,\tau}\in \bar B(R_a)$ $\mathbb P^0$-a.s., where $R_X,R_a$ are the constants from Lemma \ref{lemma:MVODE_WellDEfined}. Furthermore, given training data $(\bm{y}_{b,\tau})_{b=1}^B$ that is $\mathcal F^0$-measurable, the gradient map $g_{t,\tau+1}[\bm\nu_\tau](\theta)$, given by \eqref{eqn:UpdateG}, is $\mathcal F^0$-measurable for any $\theta \in \R^{k\times 4d}$ and $t\in[0,1]$.
\end{lemma}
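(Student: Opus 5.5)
The plan is to realise the particle system \eqref{eqn:ansatz}--\eqref{eqn:anstaz_IC} as the flow map of the McKean--Vlasov system evaluated at the atoms, as in the discussion preceding the statement. Fix $\bm{y}\in(\bar B(R_0))^N$ and let $\xi^N\in L^\infty(\Omega^1;\R^d)$ be uniform on $\{y_1,\ldots,y_N\}$; this is possible because $(\Omega^1,\mathcal F^1,\mathbb P^1)$ carries the i.i.d. parameter initialisation and so is atomless. Since $\bm\nu_\tau$ satisfies the hypotheses of Lemma \ref{lemma:MVRODE_wd}, that lemma gives $(\bm Y^{\xi^N},\bm p^{\xi^N})$, and Corollary \ref{cor:FlowMap_wd} together with Remark \ref{remark:flow} gives flow maps $(X^{x,\zeta^N}_t,a^{x,\zeta^N}_t)$, $\zeta^N=\mathcal L^1(\xi^N)$, which are $\mathcal F^0$-measurable for each $(t,x)$. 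I then set $(X^{i,\bm{y}}_{t,\tau},a^{i,\bm{y}}_{t,\tau}):=(X^{y_i,\zeta^N}_t,a^{y_i,\zeta^N}_t)$. Measurability in $\omega^0$ is inherited directly, and the bounds $X\in\bar B(R_X)$, $a\in\bar B(R_a)$ follow from those for $(\bm Y^{\xi^N},\bm p^{\xi^N})$ in Lemma \ref{lemma:MVRODE_wd}. The reason is that $\{\xi^N=y_i\}$ has probability $N^{-1}>0$, so a $\mathbb P$-a.s. bound on $Y^{\xi^N}_t=X^{\xi^N,\zeta^N}_t$ transfers to each atom $\mathbb P^0$-a.s.; continuity in $t$ then upgrades this to hold for all $t$ simultaneously.

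To check that this is a solution, I use Lemma 2.4 of \cite{carmona2018probabilistic} Vol. II to identify $\mathcal L^1(Y^{\xi^N}_t,p^{\xi^N}_t)$ with the empirical measure $\mathfrak r^N_t$ of the atoms, $\mathbb P^0$-a.s. Substituting this into \eqref{eqn:FMY}--\eqref{eqn:FMp} gives exactly \eqref{eqn:ansatz}--\eqref{eqn:anstaz_IC}. For uniqueness, I take two solutions of the particle system for a fixed $\omega^0$. Because the empirical measures are bounded, both stay in a ball: a Gr\"onwall argument using the linear growth in Lemma \ref{lemma:BoundedGamma} bounds the forward part, and the boundedness of $\partial_\mu\ell$ bounds the backward part. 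I first compare the forward components, using Lemma \ref{lemma:Lipderivs} and $W_2(\mu^N,\tilde\mu^N)^2\le \frac1N\sum_i\Vert X^i-\tilde X^i\Vert^2$; Gr\"onwall makes them equal. I then compare the backward components backward in time, using Corollary \ref{cor:LipmathcalK} and Assumption \ref{assm:Obj}.

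For the gradient map, \eqref{eqn:UpdateG} is a finite average over $b$ and over the atoms of $\rho^{N,b}_{t,\tau}$ of $\partial_\nu\mathcal H(X^{i,b}_{t,\tau},\mu^{N,b}_{t,\tau},\nu_{t,\tau},a^{i,b}_{t,\tau})(\theta)$. The main obstacle is measurability when the initial condition is itself random, $\bm{y}=\bm{y}_{b,\tau}(\omega^0)$. For this I will argue that $\bm{y}\mapsto(X^{i,\bm{y}}_{t,\tau},a^{i,\bm{y}}_{t,\tau})(\omega^0)$ is continuous: Corollary \ref{cor:FlowMap_wd} gives Lipschitz dependence on $(x,\zeta)$, and $W_2(\zeta^N_{\bm y},\zeta^N_{\bm y'})\le\max_i\Vert y_i-y_i'\Vert$. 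The map $(\omega^0,\bm{y})\mapsto(X,a)$ is therefore Carath\'eodory and hence jointly measurable by Theorem 4.1 in \cite{AZHMYAKOV201987}. Composing with the $\mathcal F^0$-measurable $\bm{y}_{b,\tau}$ gives $\mathcal F^0$-measurability.

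Finally, $\partial_\nu\mathcal H$ is jointly continuous in $(x,\mu,\nu,a,\theta)$ by Lemma \ref{lemma:LipmathcalHnu} together with its $W_1$ continuity in $\nu$ (the argument of Lemma \ref{lemma:Lipderivs}). The empirical measures $\mu^{N,b}_{t,\tau}$ are continuous functions of the atoms, and $\omega^0\mapsto\nu_{t,\tau}(\omega^0)$ is measurable by the first bullet of Lemma \ref{lemma:MVRODE_wd}. The integrand is therefore a continuous function of measurable inputs, and the finite average is $\mathcal F^0$-measurable for every $\theta$ and $t$. On the null set $(\Omega^0_{\mathrm{w.d.}})^c$, the conventions of Lemma \ref{lemma:MVRODE_wd} together with completeness of $\mathcal F^0$ handle measurability.
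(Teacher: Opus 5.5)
Your proposal is correct and follows the paper's route. Like the paper, you identify the particle system with the flow map of Corollary \ref{cor:FlowMap_wd} evaluated at the atoms $x=y_i$, $\zeta=\zeta^N$, using Lemma 2.4 of \cite{carmona2018probabilistic} Vol. II; you obtain uniqueness by a Gr\"onwall argument, and you get $\mathcal F^0$-measurability of $g_{t,\tau+1}[\bm\nu_\tau](\theta)$ by composing the continuous flow and $\partial_\nu\mathcal H$ with $\mathcal F^0$-measurable inputs. You are slightly more explicit than the paper in two places: you transfer the $\mathbb P$-a.s. bounds to each atom via $\mathbb P^1(\xi^N=y_i)\ge N^{-1}$, and you invoke Carath\'eodory joint measurability in $(\omega^0,\bm y)$. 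One aside does not hold in general: $\Omega^1$ being atomless does not follow merely from carrying finitely many parameters drawn from $\pi$ when $\pi$ has atoms, but the paper likewise just posits such a $\xi^N$ on $\Omega^1$.
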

\begin{proof}
Let $(X_{t,\tau}^{x,\zeta},a^{x,\zeta}_{t,\tau})$ be the solution to \eqref{eqn:FMY}-\eqref{eqn:FMp} with mean-field parameters $\bm\nu_\tau$. This solution exists by Remark \ref{remark:flow} and Corollary \ref{cor:FlowMap_wd}, whose conditions are satisfied by $\bm\nu_\tau$ by assumption. 
Let $\zeta^N$ be the empirical measure on $(y^i)_{i=1}^N$
Then, \eqref{eqn:IPSvsFM} identifies $X^{i,\bm{y}}_{t,\tau},a^{i,\bm{y}}_{t,\tau}$ with $X_t^{x,\zeta},a^{x,\zeta}_t$ evaluated at $x=y^i$ and $\zeta=\zeta^N$. Therefore, the claim on the boundedness, uniqueness and measurability of $X^{i,\bm{y}}_{t,\tau},a^{i,\bm{y}}_{t,\tau}$ now follows immediately from Corollary \ref{cor:FlowMap_wd}. \\

Let $\zeta^N_{b,\tau}$ be the empirical measure on the training data $y_{b,\tau}^1,\ldots,y_{b,\tau}^N$. Since $(x,\zeta)\mapsto (X_t^{x,\zeta}, a^{x,\zeta}_t)$ is jointly continuous, after modifying it on a $\mathbb P^0$-null set if necessary, and $(y^i_{b,\tau})_{i=1}^N$ and $ \zeta^N_{b,\tau}$ are $\mathcal F^0$-measurable, it follows by the composition of continuous and measurable maps that $X^{i,b}_{t,\tau},a^{i,b}_{t,\tau}$ are $\mathcal F^0$-measurable for every $t\in[0,1]$.
Similarly, we find that, for any deterministic $\theta\in \R^{k\times 4d}$, $\partial_\nu\mathcal H(X^{i,b}_{t,\tau},\mu^{N,b}_{t,\tau},\nu_{t,\tau},a^{i,b}_{t,\tau})(\theta)$ is $\mathcal F^0$-measurable as $\partial_\nu\mathcal H$ is jointly continuous in each argument, see Lemma \ref{lemma:LipmathcalHnu}, and $X^{i,b}_{t,\tau},\mu^{N,b}_{t,\tau},a^{i,b}_{t,\tau}$ and $\nu_{t,\tau}$ are $\mathcal F^0$-measurable. The gradient $g_{t,\tau+1}[\bm\nu_\tau](\theta)$, given in equation \eqref{eqn:UpdateG}, is the average of $\partial_\nu\mathcal H(X^{i,b}_{t,\tau},\mu^{N,b}_{t,\tau},\nu_{t,\tau},a^{i,b}_{t,\tau})(\theta)$ over particles $1\le i\le N$ and batches $1\le b\le B$. Hence, $g_{t,\tau+1}[\bm\nu_\tau](\theta)$ is $\mathcal F^0$-measurable for any $\theta\in \R^{k\times 4d}$ and $t\in[0,1]$.

\end{proof}

 Lemma \ref{lemma:BoundDelta} states that the infinity norm on $\mathcal R$ applied to the updates of Algorithm \ref{alg:AdamW} are bounded by some constant $B_\beta=B_\beta(\beta_1,\beta_2)$. As a result, when $\Vert \mathcal R(\theta^h_{r,\tau})\Vert_\infty$ exceeds $B_\beta\lambda^{-1}$, the weight-decay term dominates and forces $\Vert \mathcal R(\theta^{h}_{r,\tau+1})\Vert_\infty$ to be smaller than $\Vert \mathcal R(\theta^{h}_{r,\tau})\Vert_\infty$. In this way, we can find a compact set with which the support of $\nu_{t,\tau}$ is common to all $t\in[0,1], \tau\ge 0$. To compensate for the different choice of $\mathcal R$ in Algorithm \ref{alg:AdamW}, introduce
\[
c_\mathcal R=\begin{cases}1 &\mathcal R=\mathcal R_1,\\
\sqrt{dk} &\mathcal R=\mathrm{Id},
\end{cases}
\]
where $\mathrm{Id}$ is the identity map. This constant is chosen so that any $\theta \in \R^{k\times 4d}$ that satisfies \\$\Vert \mathcal R(\theta)\Vert_\infty\le R$ is an element of $\mathfrak{S}(c_\mathcal RR)$ for each of our choices of $\mathcal R$.
\begin{lemma}\label{lemma:GlobalWellDefined}
Suppose that Assumptions \ref{assm:Init}-\ref{assm:TD2} hold and that the parameters $\nu_{t,\tau},\bar\nu^{N,L}_{t,\tau}$ are updated according to Algorithm \ref{alg:AdamW} with $\mathcal R$ either the identity map or given by equation \eqref{eqn:R}. Suppose that Algorithm \ref{alg:AdamW} is performed with weight decay $\lambda>0$, and step sizes satisfying $\eta_k \in (0,\lambda^{-1})$. Then, there exists a constant $R_\theta>0$ such that the supports of $\nu_{t,\tau}$ and $\bar\nu^{N,L}_{t,\tau}$  are contained in $\mathfrak{S}(R_\theta)$ for any training step $\tau$, $\mathbb P^0$-a.s. Specifically $R_\theta=c_\mathcal{R} B_\beta\lambda^{-1}$, where $B_\beta$ is defined in Lemma \ref{lemma:BoundDelta}. Also, the map $(\omega^0,t)\mapsto\nu_{t,\tau}(\omega^0)$ is continuous for $\mathbb P^0$-a.e. $\omega^0$, and $\mathcal F^0$-measurable for $t\in[0,1]$. Similarly, $\Omega\ni\omega\mapsto \bar \nu^{N,L}_{t}$ is $\mathcal F$-measurable for every $t\in[0,1]$.

Furthermore, there exists a solution to the system of equations \eqref{eqn:ansatz}-\eqref{eqn:anstaz_IC}, unique up to $\mathbb P^0$-a.s. equality, given any initial condition $\bm y\in (\bar B(R_0))^N$. For every $t\in[0,1]$ and $\tau\ge 0$, these solutions $(X^{i,\bm{y}}_{t,\tau},a^{i,\bm{y}}_{t,\tau})$ are $\mathcal F^0$-measurable and $\mathbb P^0$-a.s. satisfy $X^{i,\bm{y}}_{t,\tau}\in \bar B(R_X)$, $a^{i,\bm{y}}_{t,\tau}\in \bar B(R_a)$ for every $1\le i \le N$, where $R_X,R_a$ are the constants of Lemma \ref{lemma:MVODE_WellDEfined} with $R_2=R_\theta$. Finally, the solutions to the discrete-time IPS \eqref{eqn:CTE1}-\eqref{eqn:CTE2}, denoted by $\bar X^{i,L,\bm{y}}_{t,\tau}, \bar a^{i,L,\bm{y}}_{t,\tau}$ are $\mathcal F$-measurable and satisfy the same bounds $\mathbb P$-a.s.
\end{lemma}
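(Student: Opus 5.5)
The plan is an induction on the training step $\tau$, carrying as hypothesis $(\mathcal H_\tau)$ that $\bm\nu_\tau$ satisfies the two conditions of Lemma \ref{lemma:MVRODE_wd} with $R_2=R_\theta$. The hypothesis should also include that $\bar\nu^{N,L}_{t,\tau}$ is supported in $\mathfrak S(R_\theta)$ and is $\mathcal F$-measurable.

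\emph{Base case.} At $\tau=0$ we have $\nu_{t,0}=\pi$, which is constant in $t$, hence continuous and trivially measurable. By Assumption \ref{assm:Init}, $\operatorname{supp}\pi\subseteq\{\Vert\mathcal R(v)\Vert_\infty\le\lambda^{-1}\}\subseteq\mathfrak S(c_\mathcal R\lambda^{-1})\subseteq\mathfrak S(R_\theta)$, assuming $B_\beta\ge 1$ (which can be arranged by enlarging $B_\beta$). The discrete parameters are i.i.d.\ draws from $\pi$ and are $\mathcal F^1$-measurable, so $\bar\nu^{N,L}_{t,0}$ is measurable by Criterion \ref{criterion:measure}.

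\emph{Support bound.} The key estimate comes from Lemma \ref{lemma:BoundDelta}: the normalised AdamW step $u_j=\hat m_j/(\sqrt{\hat v_j}+\varepsilon)$ satisfies $\Vert\mathcal R(u_j)\Vert_\infty\le B_\beta$, uniformly in the gradients. Writing $\theta_j=(1-\eta_j\lambda)\theta_{j-1}-\eta_ju_j$ and using that $\mathcal R$ is either the identity or blockwise Frobenius, the triangle inequality gives
\[
\Vert\mathcal R(\theta_j)\Vert_\infty\le(1-\eta_j\lambda)\Vert\mathcal R(\theta_{j-1})\Vert_\infty+\eta_jB_\beta .
\]
This uses $\eta_j\lambda<1$ so that $1-\eta_j\lambda>0$. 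The right-hand side is a convex combination of $\Vert\mathcal R(\theta_{j-1})\Vert_\infty$ and $B_\beta\lambda^{-1}$, so the set $\{\Vert\mathcal R(\theta)\Vert_\infty\le B_\beta\lambda^{-1}\}$ is forward invariant. By the definition of $c_\mathcal R$ this set lies inside $\mathfrak S(R_\theta)$. The bound holds pointwise in $\theta$ whatever gradients are fed in, so it applies to $\Phi_{t,\tau}[\bm\nu]$ and to $\Phi_{t,\tau}[\bar{\bm\nu}^{N,L}]$. The support claims for $\nu_{t,\tau+1}=\Phi_{t,\tau+1\#}\pi$ and for $\bar\nu^{N,L}_{t,\tau+1}$ then follow, and they hold for every $\omega$.

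\emph{Measurability and continuity.} Given $(\mathcal H_\tau)$, Lemma \ref{lemma:InductiveStep} provides the solutions of \eqref{eqn:ansatz}, their bounds by $R_X,R_a$, and the $\mathcal F^0$-measurability of $g_{t,\tau+1}[\bm\nu_\tau](\theta)$. Joint continuity of $g$ in $(t,\theta)$ comes from Lemma \ref{lemma:LipmathcalHnu} together with the Lipschitz dependence of $X,a$ on $t$ in Remark \ref{remark:flow}, and continuity of $t\mapsto\nu_{t,\tau}$ in $W_2$. Hence $g$ is Carath\'eodory and therefore jointly measurable in $(\omega^0,\theta)$. Composing through \eqref{eqn:AdamWFlowMap} shows that $(\omega^0,\theta)\mapsto\Phi_{t,\tau+1}(\theta)$ is jointly measurable and continuous in $(t,\theta)$. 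This needs the denominators $\sqrt{\hat v}+\varepsilon\ge\varepsilon$, so the map is continuous. Lemma 1.28 of \cite{Kallenberg2021} then gives measurability of $\omega^0\mapsto\nu_{t,\tau+1}(D)$, and Criterion \ref{criterion} upgrades this to measurability of the whole curve.

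\emph{Continuity in $t$.} Since every $\Phi_{t,\tau+1}(\theta)$ ranges in the compact set $\mathfrak S(R_\theta)$, $W_2(\nu_{t,\tau+1},\nu_{s,\tau+1})^2\le\int\Vert\Phi_t(\theta)-\Phi_s(\theta)\Vert^2d\pi(\theta)$, and this tends to $0$ by dominated convergence. For the discrete model, $\bar X,\bar a$ are finite compositions of continuous maps of the $\mathcal F$-measurable parameters and data, which gives their measurability. Their bounds follow from the discrete Gr\"onwall analogue of Step One of Lemma \ref{lemma:MVODE_WellDEfined}: $\Vert x_{r+1}\Vert\le(1+R_\theta^2/L)\max_i\Vert x_r^i\Vert$, and the resulting bound $(1+R_\theta^2/L)^L\le e^{R_\theta^2}$ yields $R_X$; the same argument runs backwards for $\bar a$. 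This closes the induction.

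\emph{Expected obstacle.} The step I expect to be delicate is the measurability and continuity bookkeeping: joint measurability of the flow map in $(\omega^0,\theta)$, and handling the $\mathbb P^0$-null set where $\bm\nu$ fails to be continuous. I would treat that null set exactly as $\Omega^0_{\mathrm{w.d.}}$ is treated in Lemma \ref{lemma:MVRODE_wd}.
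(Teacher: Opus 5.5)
Your proposal is correct and follows essentially the same route as the paper: induction on $\tau$, forward invariance of $\{\Vert\mathcal R(\theta)\Vert_\infty\le B_\beta\lambda^{-1}\}$ from Lemma \ref{lemma:BoundDelta} and the weight decay (your per-step convex-combination recursion is equivalent to the paper's telescoped sum), Lemma \ref{lemma:InductiveStep} with Carath\'eodory joint measurability and Lemma 1.28 of \cite{Kallenberg2021} for measurability, and a discrete Gr\"onwall bound for $\bar X,\bar a$. The deviations are minor. You obtain continuity of $t\mapsto\nu_{t,\tau+1}$ qualitatively by dominated convergence, whereas the paper invokes Lemma \ref{lemma:Lipt}, whose Lipschitz constant $D_2$ is needed later. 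Also, $B_\beta=\sqrt{(1-\beta_1)/(1-\beta_2)}\ge 1$ holds automatically because $\beta_1\le\beta_2$, so you should not enlarge $B_\beta$: doing so would change the stated value $R_\theta=c_\mathcal R B_\beta\lambda^{-1}$.
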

\begin{proof}
Let us first consider the continuous-time model.\\

\underline{Base Case}: The support of $\nu_{t,0}=\pi$ is contained in $\{\theta: \Vert \mathcal R(\theta)\Vert_\infty\le \lambda^{-1}\}$ by Assumption \ref{assm:Init}. The constant $c_\mathcal{R}$ is chosen so that $\Vert \mathcal R(\theta)\Vert_\infty\le \lambda^{-1}$ implies $\theta\in \mathfrak{S}(c_\mathcal R\lambda^{-1})$. As a result,  $supp\, \pi\subseteq\mathfrak{S}(c_\mathcal R\lambda^{-1})\subseteq \mathfrak{S}(R_\theta)$, where $R_\theta =B_\beta c_\mathcal R\lambda^{-1}$, since $B_\beta>1$, as seen in Lemma \ref{lemma:BoundDelta}.  Also, $\nu_{t,0}=\pi$ is deterministic and constant in $t$, and so satisfies the measurability and continuity requirements. Finally, $\Phi_{t,0}[\bm\nu](\theta)=\theta$ is deterministic and thus measurable.

\underline{Inductive Hypothesis}: Suppose for some training step $\tau\ge 0$ that $\bm\nu_j$ satisfies the Assumptions of Lemma \ref{lemma:MVRODE_wd} with $R_2=R_\theta$ and that $ \Phi_{t,j}[\bm\nu](\theta)$ is $\mathcal F^0$-measurable for every $j\le\tau, \theta \in \R^{k\times 4d}$ and $t\in[0,1]$. \\

Lemma \ref{lemma:InductiveStep} applies, with $R_2=R_\theta$, and guarantees that the desired results for $\bm X^{i,\bm{y}}_{\tau},\bm a^{i,\bm{y}}_{\tau}$ hold. Furthermore, it states that $g_{t,j+1}[\bm{\nu}_j](\theta)$ is $\mathcal F^0$-measurable for every $\theta \in \R^{k\times 4d}$ and $j\le \tau$. Since the map $\theta\mapsto g_{t,j+1}[\bm{\nu}_j](\theta)$ is continuous, by replacing $\bm{\bar\nu}^{N,L}$ by $\bm\nu$ in Lemma \ref{lemma:GradCont},  $g_{t,j+1}[\bm{\nu}_j](\vartheta)$ is $\mathcal F^0$-measurable for every $\mathcal F^0$-measurable $\vartheta$. Specifically, $g_{t,j+1}[\bm{\nu}_j](\Phi_{t,j}[\bm\nu](\theta))$ is $\mathcal F^0$-measurable for every $j\le \tau$ and $t\in[0,1]$. The variance and momentum accumulators are linear combinations of these gradients and are therefore $\mathcal F^0$-measurable. The denominators of the Adam updates are bounded away from zero, and so  $\hat m_j/(\sqrt{\hat v_j}+\varepsilon)$ are $\mathcal F^0$-measurable for every $j\le\tau+1$. As a result, $\Phi_{t,\tau+1}[\bm\nu](\theta)$ is $\mathcal F^0$-measurable. Moreover, Lemma \ref{lemma:AdamLip}, whose conditions are satisfied according to Lemma \ref{lemma:GradCont}, implies that $\theta\mapsto\Phi_{t,\tau+1}[\bm\nu](\theta)$ is continuous, and thus $(\omega^0,\theta)\mapsto \Phi_{t,\tau+1}[\bm\nu](\omega^0)(\theta)$ is $(\mathcal F^0\otimes \mathcal B(\R^{k\times 4d}))$-measurable by Theorem 4.1 in \cite{AZHMYAKOV201987}. Then, consider the map given by
\[
\omega^0\mapsto \nu_{t,\tau+1}(\omega^0)(D) = \int \mathbbm{1}_{D}(\Phi_{t,\tau+1}[\bm\nu](\omega^0)(\theta))\,d\pi(\theta).
\]
It is $\mathcal F^0$-measurable for every $D \in \mathcal B(\R^{k\times 4d})$ by Lemma 1.28 in \cite{Kallenberg2021}. Therefore, $\omega^0\mapsto \nu_{t,\tau+1}(\omega^0)$ is $\mathcal F^0$-measurable for every $t\in[0,1]$ by Criterion \ref{criterion:measure}.
 Lemma \ref{lemma:BoundDelta} in Appendix \ref{App:K2} states that the infinity norm on $\mathcal R$ applied to the Adam updates is bounded by the constant $B_\beta$, which only depends on $\beta_1,\beta_2$. Furthermore, if $\mathcal R$ is the identity map or given by equation \eqref{eqn:R}, then $\mathcal R$ is $1$-homogeneous and sub-additive in each component. Applying these properties gives
\begin{align*}
   \Vert \mathcal R( \Phi_{t,\tau+1}[\nu_{<\tau+1}](\theta))\Vert_\infty&\le \alpha_{1,\tau}\Vert \mathcal R(\theta)\Vert_\infty +\sum_{i=1}^{k+1} \eta_{i}\alpha_{i+1,\tau+1}\left\Vert \mathcal R\left( \frac{\hat m^\mathcal{R}_i}{\sqrt{\hat v^\mathcal R_i}+\varepsilon}\right)\right\Vert_\infty\\
   & \le\alpha_{1,\tau}\Vert \mathcal R(\theta)\Vert_\infty +B_\beta\sum_{i=1}^{k+1}\eta_{i}\alpha_{i+1,\tau+1}.
\end{align*}
Notice that $(1-\eta_i\lambda)\alpha_{i+1,\tau+1}=\alpha_{i,\tau+1}$, which applied to the above bound produces the telescoping sum 
\begin{align*}
   \Vert \mathcal R( \Phi_{t,\tau+1}[\bm\nu_{<\tau+1}](\theta))\Vert_\infty& \le \alpha_{1,\tau}\Vert \mathcal R(\theta)\Vert_\infty +\lambda^{-1}B_\beta\sum_{i=1}^{k+1} \alpha_{i+1,\tau+1}-\alpha_{i,\tau+1}\\
   & \le\alpha_{1,\tau}\left(\Vert \mathcal R(\theta)\Vert_\infty -\lambda^{-1}B_\beta\right)+\lambda^{-1}B_\beta.
\end{align*}
As a result, if $\theta$ belongs to the closed set $S$ defined by
\[
S=\{\theta \in \R^{k\times 4d}:\Vert \mathcal R(\theta)\Vert_\infty \le B_\beta\lambda^{-1}\},
\]
then $\Phi_{t,\tau+1}[\bm\nu](\theta)\in S$. This implies that $\Phi^{-1}_{t,\tau+1}[\bm\nu](S^c)\subseteq S^c$. Therefore, $S^c$ is an open set such that $$\nu_{t,\tau+1}(S^c)=\nu_{t,0}(\Phi^{-1}_{t,\tau+1}[\bm\nu_{<\tau+1}](S^c))\le \nu_{t,0}(S^c)=0, $$ since the support of $\nu_{t,0}$ is contained within $S$ by Assumption \ref{assm:Init}. Hence, it must be that the support of $\nu_{t,\tau+1}$ is contained within $S \subseteq\mathfrak{S}(R_\theta)$. The inductive step is completed by applying Lemma \ref{lemma:Lipt} to show that $t\mapsto \nu_{t,\tau+1}$ is continuous $\mathbb P^0$-a.s.\\

Note that the above analysis bounding $\Phi[\bm\nu]$ requires only that the sequence of mean-field parameters $\bm\nu_\tau$
is generated according to Algorithm \ref{alg:AdamW}. Accordingly, $supp\,\bar \nu^{N,L}_{t,\tau}\subseteq \mathfrak{S}(R_\theta)$ for every $t\in[0,1]$, and $\tau\ge 0$. Furthermore, $(\bar X^{i,L,\bm{y}}_{t,\tau},\bar a^{i,L,\bm{y}}_{t,\tau})\in \bar B(R_X)\times \bar B(R_a)$ for every $t\in[0,1],\tau\ge 0$ and $\bm{y}\in (\bar B(R_0))^N$, $\mathbb P$-almost surely, by adapting the Gr\"onwall argument in the proof of Lemma \ref{lemma:MVODE_WellDEfined} to the discrete-time setting. If the parameters $\bm{\bar\nu}^{N,L}_j$ are $\mathcal F$-measurable for every $j \le \tau$, then the $(\bar X^{i,L,b}_{t,j},\bar a^{i,L,b}_{t,j})$ are $\mathcal F$-measurable for every $t\in[0,1]$ and $j\le \tau$. This is because the updates defining $(\bar X^{i,L,b}_{t,j},\bar a^{i,L,b}_{t,j})$ are continuous functions of the $\mathcal F$-measurable parameters and the previous states. Accordingly, $g_{j+1}[\bm{\nu}^{N,L}_j](\theta)$ is $\mathcal F$-measurable for every $\theta\in \R^{k\times 4d}$ and $j\le \tau$, by the same argument as Lemma \ref{lemma:InductiveStep}. By Lemma \ref{lemma:LipmathcalHnu}, the gradient map $g_{j+1}[\bm{\nu}^{N,L}_j](\theta)$  is continuous in $\theta$, as it is a linear combination of $\partial_\nu\mathcal H$. Therefore, by applying the same argument as above, $\bar{\nu}^{N,L}_{t,\tau+1}$ is $\mathcal F$-measurable for every $t\in[0,1]$. Hence, by induction, $\bar{\nu}^{N,L}_{t,\tau}$ is $\mathcal F$-measurable  for every $t\in[0,1]$ and $\tau\ge 0$. The base case holds since the initialised parameters are $\mathcal F$-measurable.
\end{proof}
\begin{remark}\label{remark:GlobalLip}
Lemma \ref{lemma:GlobalWellDefined} establishes that the trajectories of $X^{i,\bm{y}}_{t,\tau}, a^{i,\bm{y}}_{t,\tau}$, and the parameters $\nu_{t,\tau}$ remain confined to a compact set that is uniform across all training steps. The same is true for the corresponding discrete-time variables. Consequently, the local Lipschitz constants and bounds obtained in Appendix \ref{App:D} apply globally on the relevant domain when evaluated at $R_1=R_X, R_2=R_\theta$ and $R_3=R_a$. For the remainder of the paper, we take all of these constants to be evaluated at these radii and suppress their explicit dependence on $R_X,R_\theta,R_a$ for notational convenience. Similarly, we shall write $\mathfrak{S}$ to denote $\mathfrak{S}(R_\theta)$.
\end{remark}
    \section{Regularity of the AdamW Flow Map $\Phi[\nu]$}\label{App:F}
\begin{lemma}\label{lemma:AdamLip}
Suppose that there exists a constant $D>0$ such that for any sequence of mean-field parameters $\bm\nu=(\nu_j)_{j=0}^T$ and $\bm{\bar \nu}=(\bar\nu_j)_{j=0}^T$, and  $\theta, \vartheta\in \mathfrak{S}$, the gradient map defined in \eqref{eqn:UpdateG} satisfies 
\[
 \Vert g_{t,\tau+1}[\nu_\tau](\theta) -  g_{t,\tau+1}[\bar \nu_\tau]( \vartheta)\Vert^2_\mathrm{F}  \le D\left(A_{t,\tau}+ \Vert \theta-\vartheta \Vert^2_\mathrm{F} \right),
\]
 for some non-negative $A_{t,\tau}$ at every training step $\tau \in[0:T-1]$. Then, there exists a constant $D_1=D_1(D,\beta_1,\beta_2,\varepsilon,\lambda,T)$ such that for any $\tau\in [1: T]$
        \[
 \Vert\Phi_{t,\tau}[\bm\nu](\theta)-\Phi_{t,\tau}[\bm{\bar\nu}](\vartheta)\Vert^2_\mathrm{F} \le  D_1\left(\Vert \theta-\vartheta\Vert^2_\mathrm{F}+\sum_{j=0}^{T-1}\kappa^0_{j,T}A_{t,j}\right),
    \]
where $\kappa^0_{i,T}>0$ is given by
\begin{align}\label{eqn:kappa}
    \kappa^0_{i,T} &=\sum_{\tau=i+1}^T\frac{(1-\beta_1)\eta_\tau}{1-\beta_1^\tau}(\beta_1^{\tau-i-1}+2\beta_2^{\tau-i-1}).
\end{align}
\end{lemma}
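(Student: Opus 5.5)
The plan is a discrete Gr\"onwall argument over training steps, driven by the explicit formula \eqref{eqn:AdamWFlowMap}. Write $\theta_j=\Phi_{t,j}[\bm\nu](\theta)$ and $\vartheta_j=\Phi_{t,j}[\bm{\bar\nu}](\vartheta)$. Let $g_j=g_{t,j}[\nu_{j-1}](\theta_{j-1})$ and $\bar g_j=g_{t,j}[\bar\nu_{j-1}](\vartheta_{j-1})$ denote the corresponding gradient sequences. By the hypothesis, and since $\theta_{j-1},\vartheta_{j-1}\in\mathfrak S$ by Lemma \ref{lemma:GlobalWellDefined}, we have $\Vert g_j-\bar g_j\Vert_\mathrm F^2\le D(A_{t,j-1}+\Vert\theta_{j-1}-\vartheta_{j-1}\Vert_\mathrm F^2)$. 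Subtracting the two copies of \eqref{eqn:AdamWFlowMap} gives
\[
\theta_\tau-\vartheta_\tau=\alpha_{1,\tau}(\theta-\vartheta)-\sum_{i=1}^\tau\eta_i\alpha_{i+1,\tau}\left(\frac{\hat m_i(G)}{\sqrt{\hat v_i(G)}+\varepsilon}-\frac{\hat m_i(\bar G)}{\sqrt{\hat v_i(\bar G)}+\varepsilon}\right).
\]
Since $\eta_j<\lambda^{-1}$, we have $0<\alpha_{i,\tau}\le1$.

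The key step is a componentwise Lipschitz estimate for the Adam update. For each entry, split the difference as
\[
\frac{\hat m_i-\hat{\bar m}_i}{\sqrt{\hat v_i}+\varepsilon}+\hat{\bar m}_i\,\frac{\sqrt{\hat{\bar v}_i}-\sqrt{\hat v_i}}{(\sqrt{\hat v_i}+\varepsilon)(\sqrt{\hat{\bar v}_i}+\varepsilon)}.
\]
The first term is at most $\varepsilon^{-1}|\hat m_i-\hat{\bar m}_i|\le\varepsilon^{-1}\frac{1-\beta_1}{1-\beta_1^i}\sum_j\beta_1^{i-j}|g_j-\bar g_j|$. For the second term, I would reuse the mechanism behind Lemma \ref{lemma:BoundDelta}: the bound $|\hat{\bar m}_i|/(\sqrt{\hat{\bar v}_i}+\varepsilon)\le B_\beta$, obtained by Cauchy--Schwarz with $\beta_1^2\le\beta_2$ or directly from $\beta_1\le\beta_2$ as in that lemma. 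This leaves $\varepsilon^{-1}|\sqrt{\hat v_i}-\sqrt{\hat{\bar v}_i}|$. Viewing $\sqrt{\hat v_i}$ as a weighted $\ell^2$ norm of the sequence $(\mathcal R(g_j))_j$ with weights $\frac{1-\beta_2}{1-\beta_2^i}\beta_2^{i-j}$, the reverse triangle inequality bounds this difference by the weighted $\ell^2$ norm of $\mathcal R(g_j)-\mathcal R(\bar g_j)$. Each component of this is at most $\Vert g_j-\bar g_j\Vert_\mathrm F$, using $1$-Lipschitzness of $\mathcal R$ (for $\mathcal R_1$ because $|\Vert a\Vert_\mathrm F-\Vert b\Vert_\mathrm F|\le\Vert a-b\Vert_\mathrm F$). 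Bounding the $\ell^2$ sum crudely by the $\ell^1$ sum then yields geometric weights of the form $\beta_2^{i-j}$.

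This componentwise step is where I expect the main obstacle. For $\mathcal R_1$ one has to check that the same Frobenius-level control transfers to the whole block, so the resulting constants stay free of $d,k$. One also has to arrange the weights so that, after summing over $i$ with the factors $\eta_i\frac{1-\beta_1}{1-\beta_1^i}$, they collapse exactly into $\kappa^0_{j,T}$ as defined in \eqref{eqn:kappa}. The factor $2$ in front of $\beta_2^{\tau-i-1}$ there presumably absorbs the square root in $\hat v$, so some care with the normalisations $(1-\beta_2^i)^{-1}$ versus $(1-\beta_1^i)^{-1}$ will be needed. Constants depending on $\beta_1,\beta_2,\varepsilon$ can be absorbed into $D_1$.

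Combining these estimates gives
\[
\Vert\theta_\tau-\vartheta_\tau\Vert_\mathrm F\le\Vert\theta-\vartheta\Vert_\mathrm F+C\sum_{j=1}^{\tau}w_{j,\tau}\bigl(\sqrt{A_{t,j-1}}+\Vert\theta_{j-1}-\vartheta_{j-1}\Vert_\mathrm F\bigr),
\]
with weights $w_{j,\tau}\le\kappa^0_{j-1,T}$. Squaring with Cauchy--Schwarz, using $\sum_j w_{j,\tau}\le C(T)$ (the $\eta_i$ are bounded by $\lambda^{-1}$), gives a bound on $\Vert\theta_\tau-\vartheta_\tau\Vert_\mathrm F^2$ by $C(\Vert\theta-\vartheta\Vert_\mathrm F^2+\sum_j\kappa^0_{j,T}A_{t,j}+\sum_{j<\tau}\Vert\theta_j-\vartheta_j\Vert_\mathrm F^2)$. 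Note that $(\sum_j w_j\sqrt{A_j})^2\le(\sum_j w_j)\sum_j w_jA_j$. A discrete Gr\"onwall inequality over $\tau\le T$ then yields the claim with $D_1$ depending on $D,\beta_1,\beta_2,\varepsilon,\lambda,T$.
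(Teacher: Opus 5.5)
Your proof has the same skeleton as the paper's: subtract two copies of \eqref{eqn:AdamWFlowMap}, prove a Lipschitz bound for the Adam update in the gradient sequence, insert the hypothesis on $g$, and close with a discrete Gr\"onwall inequality. The execution differs in three places.

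\emph{Squaring first.} The paper applies Cauchy--Schwarz to the sum in \eqref{eqn:AdamWFlowMap} at the outset, using the telescoping bound $\sum_\tau\eta_\tau\alpha_{\tau+1,T}\le\lambda^{-1}$, and bounds the update through Lemma~\ref{lemma:UpdateStability}, a difference-of-squares estimate for $\hat v$.

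\emph{Gr\"onwall.} The paper closes with Lemma~\ref{lemma:RecursiveSum}, which uses the recursion of $\kappa^\lambda_{i,T}$ in $T$ to get growth $\exp(\widetilde D\sum_j\eta_j)$. Your frozen-coefficient Gr\"onwall gives $(1+C)^T$ instead. The statement lets $D_1$ depend on $T$, so this suffices.

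\emph{The variance term.} Your reverse triangle inequality, viewing $\sqrt{\hat v_i}$ as a weighted $\ell^2$ norm with weights $\omega_{i,j}=\frac{1-\beta_2}{1-\beta_2^i}\beta_2^{i-j}$, is cleaner than the paper's route. However, square it rather than passing to $\ell^1$. The $\ell^1$ step gives weights $\sqrt{\omega_{i,j}}\propto\beta_2^{(i-j)/2}$, not $\beta_2^{i-j}$, which is absorbable only through a factor $\beta_2^{-T/2}$ in $D_1$. Kept squared, and combined with the step-dependent bound of Lemma~\ref{lemma:BoundDelta} (whose square times $\omega_{i,j}$ is exactly $\frac{1-\beta_1}{1-\beta_1^i}\beta_2^{i-j}$), it yields
\[
\left\Vert \frac{\hat m_i}{\sqrt{\hat v_i}+\varepsilon}-\frac{\hat{\bar m}_i}{\sqrt{\hat{\bar v}_i}+\varepsilon}\right\Vert_\mathrm{F}^2\le \frac{2(1-\beta_1)}{\varepsilon^2(1-\beta_1^i)}\sum_{j=1}^i\left(\beta_1^{i-j}+\beta_2^{i-j}\right)\Vert g_j-\bar g_j\Vert_\mathrm{F}^2 .
\]
This is slightly sharper than Lemma~\ref{lemma:UpdateStability} and fits directly under $\kappa^0$. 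The factor $2$ in \eqref{eqn:kappa} comes from the Young step in \eqref{eqn:bound_v}, not from the square root.

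The step you flagged as the main obstacle is where your write-up, as stated, loses the dimension-independence of $D_1$. Suppose you bound each entry of $\mathcal R(g_j)-\mathcal R(\bar g_j)$ by $\Vert g_j-\bar g_j\Vert_\mathrm{F}$, and each entry of $\hat{\bar m}_i/(\sqrt{\hat{\bar v}_i}+\varepsilon)$ by $B_\beta$, and then sum over all $4kd$ entries. That costs a factor $\sqrt{4kd}$, because for $\mathcal R_1$ the difference $\sqrt{\hat v_i}-\sqrt{\hat{\bar v}_i}$ is constant on each $k\times d$ block.

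Sum squares blockwise instead, as the paper does in \eqref{eqn:bound_v1}. On block $b$, the squared Frobenius norm of your second term is
\[
\frac{\Vert\hat{\bar m}^{(b)}_i\Vert_\mathrm{F}^2}{\left(\sqrt{\hat{\bar v}_{i,b}}+\varepsilon\right)^2}\cdot\frac{\left(\sqrt{\hat v_{i,b}}-\sqrt{\hat{\bar v}_{i,b}}\right)^2}{\left(\sqrt{\hat v_{i,b}}+\varepsilon\right)^2}.
\]
The first factor is the square of the value of $\mathcal R_1\bigl(\hat{\bar m}_i/(\sqrt{\hat{\bar v}_i}+\varepsilon)\bigr)$ on block $b$, so Lemma~\ref{lemma:BoundDelta} controls it. For the second factor, Minkowski's inequality together with $|\Vert a\Vert_\mathrm{F}-\Vert c\Vert_\mathrm{F}|\le\Vert a-c\Vert_\mathrm{F}$ gives $(\sqrt{\hat v_{i,b}}-\sqrt{\hat{\bar v}_{i,b}})^2\le\sum_j\omega_{i,j}\Vert g^{(b)}_j-\bar g^{(b)}_j\Vert_\mathrm{F}^2$. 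Summing over the four blocks reassembles $\Vert g_j-\bar g_j\Vert_\mathrm{F}^2$ with no dimensional factor. For $\mathcal R=\mathrm{Id}$ the same computation works with singleton blocks. With this fix your argument goes through.
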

\begin{proof}
For ease of notation, write
\[
\theta_{t,T} = \Phi_{t,T}[\bm\nu](\theta), \qquad  \vartheta_{t,T}=\Phi_{t,T}[\bm{\bar\nu}](\vartheta).
\]
By applying the Cauchy-Schwarz inequality on the definition of the flow map \eqref{eqn:AdamWFlowMap}, we get
\begin{align*}
     \Vert\theta_{t,T}-\vartheta_{t,T}\Vert^2_\mathrm{F}&\le2\alpha^2_{1,T}\Vert \theta-\vartheta\Vert^2_\mathrm{F} \\& +2\left(\sum_{\tau=1}^T \eta_\tau \alpha_{\tau+1,T} \right)\left(\sum_{\tau=1}^T \eta_\tau \alpha_{\tau+1,T} \left\Vert\frac{\hat m^\mathcal{R}_\tau(G_t[\bm\nu])}{\sqrt{\hat v^\mathcal{R}_\tau(G_t[\bm\nu])}+\varepsilon}-\frac{\hat m^\mathcal{R}_\tau(G_t[\bm{\bar\nu}])}{\sqrt{\hat v^\mathcal{R}_\tau(G_t[\bm{\bar\nu}])}+\varepsilon}\right\Vert^2_\mathrm{F}\right).
\end{align*}
Notice that $\lambda \eta_\tau \alpha_{\tau+1,T}=\alpha_{\tau+1,T}-\alpha_{\tau,T}$, and so the first term becomes the telescoping sum
\[
\sum_{\tau=1}^T \eta_\tau \alpha_{\tau+1,T} = \lambda^{-1}\sum_{\tau=1}^T\alpha_{\tau+1,T}-\alpha_{\tau,T}\le \lambda^{-1}.
\]
Lemma \ref{lemma:UpdateStability} establishes the Lipschitz continuity of the Adam updates with respect to the gradient sequence $G_t$, which states that
\begin{align*}
    \left\Vert\frac{\hat m^{\mathcal R}_\tau(G_t[\nu])}{\sqrt{\hat v^{\mathcal R}_\tau(G_t[\nu])}+\varepsilon}-\frac{\hat m^{\mathcal R}_\tau(G_t[\bar\nu])}{\sqrt{\hat v^{\mathcal R}_\tau(G_t[\bar\nu])}+\varepsilon}\right\Vert_F^2 
\le \frac{2(1-\beta_1)}{\varepsilon^2(1-\beta_1^\tau)}\sum_{j=1}^\tau(\beta_1^{\tau-j}+2\beta_2^{\tau-j})\Vert g_{t,j}-\bar g_{t,j}\Vert_\mathrm{F}^2,
\end{align*}
where we use the short hand notation $g_{t,j}=g_{t,j}[\nu_{j-1}](\theta_{t,j-1})$ and $\bar g_{i,j}=g_{t,j}[\bar\nu_{j-1}](\vartheta_{t,j-1})$. Therefore, after interchanging the order of summation, we obtain
\begin{align*}
     \Vert\theta_{t,T}-\vartheta_{t,T}\Vert^2_\mathrm{F}  &\le 2\alpha^2_{1,T}\Vert \theta-\vartheta\Vert^2_\mathrm{F}+\sum_{\tau=1}^T \frac{4\eta_\tau\alpha_{\tau+1,T} (1-\beta_1)}{\lambda \varepsilon^2 (1-\beta_1^\tau)}\sum_{i=1}^\tau (\beta_1^{\tau-i}+2\beta_2^{\tau-i})\Vert g_{t,i}-\bar g_{t,i}\Vert^2_\mathrm{F}\\
     & \le 2\alpha^2_{1,T}\Vert \theta-\vartheta\Vert^2_\mathrm{F}+\frac{4}{\lambda \varepsilon^2}\sum_{i=1}^T\left(\sum_{\tau=i}^T  \frac{\eta_\tau\alpha_{\tau+1,T} (1-\beta_1)}{ (1-\beta_1^\tau)}(\beta_1^{\tau-i}+2\beta_2^{\tau-i})\right) \Vert g_{t,i}-\bar g_{t,i}\Vert^2_\mathrm{F}.
\end{align*}
Accordingly, define $\kappa^\lambda_{i,T}$, for $0\le i < T$, by
\begin{align}\label{eqn:kappa_}
    \kappa^\lambda_{i,T} &=\sum_{\tau=i+1}^T\frac{(1-\beta_1)\eta_\tau\alpha_{\tau+1,T}}{1-\beta_1^\tau}(\beta_1^{\tau-i-1}+2\beta_2^{\tau-i-1}),
\end{align}
so that 
\begin{align}\label{eqn:DeltaTheta}
     \Vert\theta_{t,T}-\vartheta_{t,T}\Vert^2_\mathrm{F}  &\le 2\alpha^2_{1,T}\Vert \theta-\vartheta\Vert^2_\mathrm{F}+\frac{4}{\lambda\varepsilon^2} \sum_{i=1}^{T} \kappa^\lambda_{i-1,T} \Vert g_{t,i}[\nu_{i-1}](\theta_{t,i-1})- g_{t,i}[\bar \nu_{i-1}](\vartheta_{t,i-1})\Vert^2_\mathrm{F}.
\end{align}
Then, the assumed continuity of $g$ implies that
\begin{align*}
     \Vert\Phi_{t,T}[\nu](\theta)-\Phi_{t,T}[\bar\nu](\vartheta)\Vert^2_\mathrm{F} &\le 2\alpha_{1,T}\Vert \theta-\vartheta\Vert^2_\mathrm{F}+ \frac{4D}{\lambda\varepsilon^2}\sum_{j=0}^{T-1}\kappa^\lambda_{j,T}\left( A_{t,j} + \Vert\Phi_{t,j}[\bm\nu](\theta)-\Phi_{t,j}[\bm{\bar\nu}](\theta)\Vert^2_\mathrm{F}\right),
\end{align*}
 where we have applied $\alpha_{1,T}^2\le \alpha_{1,T}$ as each $(1-\eta_j\lambda)\in(0,1)$. Using the convention $\sum_{j=0}^{-1}\kappa^\lambda_{0,0}A_{t,0}=0$, let us define the sequence
\[
c_T = \alpha_{1,T}\Vert \theta-\vartheta\Vert^2_\mathrm{F}+\sum_{j=0}^{T-1}\kappa^\lambda_{j,T} A_{t,j},
\]
which is non-negative and $c_0=\Vert \theta-\vartheta\Vert_\mathrm{F}^2$. Since $\kappa^\lambda_{t,T}$ satisfies 
\[
\kappa^\lambda_{i,T}=(1-\eta_T\lambda)\kappa^\lambda_{i,T-1}+ \frac{(1-\beta_1)\eta_T}{1-\beta_1^T}(\beta_1^{T-i-1}+2\beta_2^{T-i-1}), \qquad \kappa^\lambda_{T-1,T}=\frac{3(1-\beta_1)\eta_T}{1-\beta_1^T},
\]
for $0\le i \le T-2$, the sequence $c_T$ satisfies
\[
c_T=(1-\eta_{T}\lambda)c_{T-1}+\sum_{j=0}^{T-1}\frac{1-\beta_1}{1-\beta_1^T}\eta_T(\beta_1^{T-j-1}+2\beta_2^{T-j-1}) A_{t,j}.
\]
As a result, the conditions for Lemma \ref{lemma:RecursiveSum} hold since $c_j\ge (1-\eta_T\lambda)c_{j-1}$ for $j\ge 1$, and so there exists a constant $D_1=D_1(\lambda,\beta_1,\beta_2,T)$ such that
\[
\Vert\Phi_{t,T}[\nu](\theta)-\Phi_{t,T}[\bar\nu](\vartheta)\Vert^2_\mathrm{F}\le D_1\left( \Vert \theta-\vartheta\Vert_\mathrm{F}^2+\sum_{i=1}^T\sum_{j=0}^{i-1}\frac{1-\beta_1}{1-\beta_1^i}\eta_i(\beta_1^{i-j-1}+2\beta_2^{i-j-1}) A_{t,j}\right).
\]
The Lemma concludes by interchanging the order of summation and substituting in the definition of $\kappa^{0}_{j,T}$ given in \eqref{eqn:kappa}.
\end{proof}
\begin{lemma}\label{lemma:Lipt}
\textbf{Continuity of $\nu_t$ in time}\\
Let $\nu_{t,\tau}$ be the parameters of \eqref{eqn:nu} at training step $\tau$.
Then, there exists a constant \\$D_2=D_2(R_X,R_a,R_\theta,\varepsilon,\beta_1,\beta_2,\lambda,T)$ such that for every training step $\tau \in[0:T]$ and $s,t\in[0,1]$
\[
 W_2(\nu_{t,\tau}, \nu_{s,\tau})\le D_2|t-s|, \qquad \mathbb P^0-a.s.
\]
\end{lemma}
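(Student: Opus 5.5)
Since $\nu_{t,\tau}=\Phi_{t,\tau}[\bm\nu]_\#\pi$ and $\nu_{s,\tau}=\Phi_{s,\tau}[\bm\nu]_\#\pi$ are push-forwards of the same measure, I will use the coupling $(\Phi_{t,\tau}[\bm\nu],\Phi_{s,\tau}[\bm\nu])_\#\pi$. This gives
\[
W_2^2(\nu_{t,\tau},\nu_{s,\tau})\le \int \Vert \Phi_{t,\tau}[\bm\nu](\theta)-\Phi_{s,\tau}[\bm\nu](\theta)\Vert_\mathrm{F}^2\,d\pi(\theta).
\]
It therefore suffices to bound $\Vert \Phi_{t,\tau}(\theta)-\Phi_{s,\tau}(\theta)\Vert_\mathrm{F}$ by $C|t-s|$, uniformly in $\theta\in\mathrm{supp}\,\pi\subseteq\mathfrak{S}$, for $\mathbb P^0$-a.e. $\omega^0$. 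I will argue by induction on $\tau$. The case $\tau=0$ is trivial because $\Phi_{t,0}=\mathrm{Id}$. Assume $W_2(\nu_{t,j},\nu_{s,j})\le D_2^{(j)}|t-s|$ for all $j<\tau$.

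\textbf{Step 1 (gradient regularity in $t$).} Fix $j<\tau$. I want
\[
\Vert g_{t,j+1}[\bm\nu_j](\theta)-g_{s,j+1}[\bm\nu_j](\vartheta)\Vert_\mathrm{F}^2\le D\bigl(|t-s|^2+\Vert\theta-\vartheta\Vert^2_\mathrm{F}\bigr).
\]
By \eqref{eqn:UpdateG}, the difference is an average over batches and particles of differences of $\partial_\nu\mathcal H$. Lemma \ref{lemma:LipmathcalHnu} controls each of these by $\Vert X^{i,b}_{t,j}-X^{i,b}_{s,j}\Vert+\Vert a^{i,b}_{t,j}-a^{i,b}_{s,j}\Vert+W_2(\mu^{N,b}_{t,j},\mu^{N,b}_{s,j})+\Vert\theta-\vartheta\Vert_\mathrm{F}$. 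Because everything lives in the compact sets of Lemma \ref{lemma:GlobalWellDefined}, Lemma \ref{lemma:BoundedGamma} shows that $\Gamma$ and $\mathcal K$ are bounded by constants depending only on $R_X,R_a,R_\theta$. Hence $t\mapsto(X_{t,j},a_{t,j})$ is Lipschitz with such a constant, and the empirical measures inherit this through the obvious particle coupling. This gives $D$ depending only on $R_X,R_a,R_\theta,\beta,\Lambda_\ell$. The $\nu$-dependence of $\partial_\nu\mathcal H$ disappears, since its first variation does not depend on $\nu$. So at this step the inductive hypothesis is not even needed; it enters only through the existence and boundedness statements of Lemma \ref{lemma:InductiveStep}.

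\textbf{Step 2 (propagate through AdamW).} I will apply Lemma \ref{lemma:AdamLip}, or rather rerun its proof with $t$ on one side and $s$ on the other. Its only input is the gradient estimate of Step 1, with $A_{t,j}=|t-s|^2$, and the Lipschitz property of the Adam update map, Lemma \ref{lemma:UpdateStability}, which carries over verbatim because it concerns arbitrary gradient sequences. With $\theta=\vartheta$ this yields
\[
\Vert\Phi_{t,\tau}(\theta)-\Phi_{s,\tau}(\theta)\Vert_\mathrm{F}^2\le D_1\sum_j\kappa^0_{j,T}|t-s|^2\le C(\beta_1,\beta_2,\varepsilon,\lambda,T)\,|t-s|^2.
\]
Integrating against $\pi$ and taking square roots gives the claim, with $D_2$ taken as the maximum over $\tau\le T$.

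\textbf{Main obstacle.} The delicate point is Step 1's claim that $t\mapsto (X_{t,j},a_{t,j})$ is Lipschitz, and that the gradients are well-defined, for every $t$. This needs the MVODE solution at step $j$ to exist, which in turn requires $t\mapsto\nu_{t,j}$ to be continuous; that is exactly why the argument is an induction interleaved with Lemma \ref{lemma:GlobalWellDefined}. The second subtlety is that Lemma \ref{lemma:AdamLip} is stated for a common $t$. I must check that its proof only uses the pointwise gradient bound along the two trajectories, so it applies with $(t,\bm\nu)$ versus $(s,\bm\nu)$. Everything must hold off a single $\mathbb P^0$-null set. This is fine because only finitely many training steps are involved and the bounds are deterministic on $\Omega^0_{\mathrm{w.d.}}$.
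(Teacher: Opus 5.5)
Your proof is correct and follows the paper's argument. You use the same push-forward coupling of $\pi$, and you obtain a gradient estimate in $(t,\theta)$ from Lemma \ref{lemma:LipmathcalHnu} together with the boundedness of $\Gamma$ and $\mathcal K$; you then rerun the proof of Lemma \ref{lemma:AdamLip} with $\theta=\vartheta$. Two differences are minor: you pair particles directly where the paper splits off a Kantorovich--Rubinstein term, which is equivalent here. Your explicit interleaving with the induction of Lemma \ref{lemma:GlobalWellDefined} spells out a dependency the paper leaves implicit.
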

\begin{proof}
For any $\theta ,\vartheta\in \R^{k\times 4d}$, expanding the gradient map of  \eqref{eqn:UpdateG} by the triangle inequality yields
\begin{align*}
    \Vert g_{t,\tau+1}&[\bm\nu_\tau](\theta)-g_{s,\tau+1}[\bm{\nu}_\tau](\vartheta)\Vert_\mathrm{F} \\
    & \le \frac{1}{B}\sum_{b=1}^B\int \left\Vert \partial_\nu\mathcal H(x,\mu^{N,b}_{t,\tau},\nu_{t,\tau},a)( \theta)-\partial_\nu\mathcal H(x,\mu^{N,b}_{s,\tau},\nu_{s,\tau},a)( \vartheta)\right\Vert_\mathrm{F}d\rho^{N,b}_{t,\tau}(x,a)\\
    & + \frac{1}{B}\sum_{b=1}^B\left\Vert \int \partial_\nu\mathcal H(x,\mu^{N,b}_{s,\tau},\nu_{s,\tau},a)(\vartheta)d(\rho^{N,b}_{t,\tau}-\rho^{N,b}_{s,\tau})(x,a)\right\Vert_\mathrm{F}, \qquad \mathbb P^0-a.s.
\end{align*}
Lemma \ref{lemma:LipmathcalHnu} states that $\partial_\nu\mathcal H$ is $\Lambda_{\mathcal H\nu,\infty}$-Lipschitz continuous in $\theta$ and $\mu$, with respect to the Frobenius norm and the $W_\infty$-metric, respectively. Since $\mathcal H$ is linear in $\nu$, its derivative $\partial_\nu\mathcal H$ with respect to $\nu$ is independent of $\nu$. Hence, the first term is bounded by \[
\Lambda_{\mathcal H\nu,\infty} \left(\Vert \theta-\vartheta\Vert_\mathrm{F}+W_\infty(\mu^{N,b}_{t,\tau},\mu^{N,b}_{s,\tau})\right).\]
Note that for any fixed $\mu,\theta,\nu$ with the support of $\nu$ and $\mu$ contained in $\mathfrak{S},\bar B(R_X)$ respectively and $\theta \in \mathfrak{S}$, the map $(x,a)\to \partial_\nu\mathcal H(x,\mu,\nu,a)(\theta)$ is $\Lambda_{\mathcal H\nu,\infty}$-Lipschitz continuous on $\bar B(R_X)\times \bar B(R_a)$, Lemma \ref{lemma:LipmathcalHnu}. Therefore, applying the Kantorovich-Rubinstein Theorem \cite{villani2008optimal} to the second term is bounded above by $\Lambda_{\mathcal H\nu,\infty}W_1(\rho^{N,b}_{t,\tau},\rho^{N,b}_{s,\tau})$. Combining the contributions from each term yields that
\begin{align*}
    \Vert g_{t,\tau+1}[\bm\nu_\tau](\theta)-g_{s,\tau+1}[\bm\nu_\tau](\vartheta)\Vert_\mathrm{F} 
    \le\frac{\Lambda_{\mathcal H\nu,\infty}}{B}\sum_{b=1}^B  W_\infty(\mu^{N,b}_{t,\tau}, \mu^{N,b}_{s,\tau})+\Vert \theta- \vartheta\Vert_\mathrm{F} +  W_1(\rho^{N,b}_{t,\tau},\rho^{N,b}_{s,\tau}).
\end{align*}
Immediately from the boundedness of $\Gamma$ and its derivatives, see Remark \ref{remark:GlobalLip} and Lemma \ref{lemma:BoundedGamma}, we get
\[
W_\infty(\mu^{N,b}_{t,\tau}, \mu^{N,b}_{s,\tau}) \le |t-s|R_XR_\theta^2, \qquad  W_1(\rho^{N,b}_{t,\tau},\rho^{N,b}_{s,\tau}) \le |t-s| (R_XR_\theta^2+R_a \tilde{B}_\mathcal{K}),
\]
where $\tilde{B}_\mathcal{K}$ is the constant from Lemma \ref{lemma:BoundedGamma} with $R_1,R_2$ evaluated at $R_X,R_\theta$, respectively. Therefore, for every $j<T$, the gradient map $\mathbb P^0$-a.s. satisfies the Lipschitz estimate
\begin{equation}\label{eqn:LipGt}
    \Vert g_{t,j+1}[\bm\nu_j](\theta)-g_{s,j+1}[\bm\nu_j](\bar\theta)\Vert_\mathrm{F}\le\Lambda_{\mathcal H\nu,\infty}\left(\Vert \theta-\vartheta\Vert_F+(R_a\tilde{B}_\mathcal{K}+2R_XR_\theta^2)||t-s|\right).
\end{equation}
Recall the notation $\Phi_{t,T}[\bm\nu](\theta)=\theta_{t,T}$ from Lemma \ref{lemma:AdamLip}. Then, applying the same argument as seen in the proof  of Lemma \ref{lemma:AdamLip} until equation \eqref{eqn:DeltaTheta}, with $\theta=\vartheta$, results in
    \begin{align*}
            \Vert\theta_{t,T}-\theta_{s,T}\Vert^2_\mathrm{F} &\le \frac{2}{\lambda\varepsilon^2}\sum_{j=1}^T \kappa^\lambda_{j-1,T} \Vert g_{t,j}[\nu](\theta_{t,j-1})-g_{s,j}[\nu](\theta_{s,j-1})\Vert^2_\mathrm{F}\\
 &\le \frac{4\Lambda_{\mathcal H\nu,\infty}^2}{\lambda\varepsilon^2}\sum_{j=0}^{T-1}\kappa^\lambda_{j,T} \left( (\tilde{B}_\mathcal{K} R_a+2R_XR_\theta^2)^2|t-s|^2+\Vert \theta_{t,j}-\theta_{s,j}\Vert^2_\mathrm{F}\right),
    \end{align*}
where $\kappa^\lambda_{j,T}$ is defined in \eqref{eqn:kappa_}.
The final step is deduced by Young's inequality and the bound given in \eqref{eqn:LipGt}. By applying Lemma \ref{prop:KappaSum} to bound the summation of $\kappa^\lambda_{j,T}$ in the first and then applying Lemma \ref{lemma:RecursiveSum}, there exists a constant $D_2$ such that
\[
\Vert \Phi_{t,T}[\bm\nu](\theta)-\Phi_{s,T}[\bm\nu](\theta)\Vert^2_\mathrm{F} \le D^2_2|t-s|^{2}.
\]
Hence, the result holds by integrating over $\theta$ with respect to $\pi$.
\end{proof}

\begin{lemma}\label{lemma:GradCont}
Let $g:\R^{k\times 4d}\to \R^{k\times 4d}$ be the gradient map defined in \eqref{eqn:UpdateG}. Then, there exists a constant $ D= D(\Lambda,R_a,R_X, R_\theta)$ such that for any $\tau \ge 0$ and $\theta ,\vartheta \in \mathfrak S$
\begin{align*}
    \Vert g_{t,\tau+1}[\bm{\bar\nu}^{N,L}_\tau](\vartheta)-g_{t,\tau+1}[\bm\nu_\tau](\theta)\Vert^2_\mathrm{F} \le  D\left(\Vert \theta- \vartheta\Vert^2_\mathrm{F} +\frac{1}{B}\sum_{b=1}^B W^2_{\infty}(\bar\rho^{N,L,b}_{t,\tau},\rho^{N,b}_{t,\tau})\right),
\end{align*}
$\mathbb P$-almost surely.
\end{lemma}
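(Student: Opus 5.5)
The plan is to write the difference of gradient maps directly from \eqref{eqn:UpdateG} and split it, batch by batch, into a pointwise Lipschitz term and a measure-discrepancy term. By Lemma \ref{lemma:GlobalWellDefined} and Remark \ref{remark:GlobalLip}, all hidden states lie in $\bar B(R_X)$, all adjoints in $\bar B(R_a)$, and all parameter measures are supported in $\mathfrak S$, $\mathbb P$-a.s. for both models. The constants of Appendix \ref{App:D} therefore apply globally.

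For each $b$ we add and subtract $\int \partial_\nu\mathcal H(x,\mu^{N,b}_{t,\tau},\nu_{t,\tau},a)(\theta)\,d\bar\rho^{N,L,b}_{t,\tau}(x,a)$. This gives two terms:
\begin{align*}
(\mathrm{I}_b)&=\int \big\Vert \partial_\nu\mathcal H(x,\bar\mu^{N,L,b}_{t,\tau},\bar\nu^{N,L}_{t,\tau},a)(\vartheta)-\partial_\nu\mathcal H(x,\mu^{N,b}_{t,\tau},\nu_{t,\tau},a)(\theta)\big\Vert_\mathrm{F}\,d\bar\rho^{N,L,b}_{t,\tau}(x,a),\\
(\mathrm{II}_b)&=\Big\Vert \int \partial_\nu\mathcal H(x,\mu^{N,b}_{t,\tau},\nu_{t,\tau},a)(\theta)\,d(\bar\rho^{N,L,b}_{t,\tau}-\rho^{N,b}_{t,\tau})(x,a)\Big\Vert_\mathrm{F}.
\end{align*}

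\emph{Term $(\mathrm{I}_b)$.} As observed in the proof of Lemma \ref{lemma:Lipt}, $\mathcal H$ is linear in $\nu$, so $\partial_\nu\mathcal H$ does not depend on the $\nu$-argument. The differing parameter measures $\bar\nu^{N,L}_{t,\tau}$ and $\nu_{t,\tau}$ therefore contribute nothing. Lemma \ref{lemma:LipmathcalHnu} with $p=\infty$ (or $p=2$ followed by $W_2\le W_\infty$), taking the same $(x,a)$ in both arguments, gives
\[
(\mathrm{I}_b)\le \Lambda_{\mathcal H\nu,\infty}\big(\Vert\theta-\vartheta\Vert_\mathrm F+W_\infty(\bar\mu^{N,L,b}_{t,\tau},\mu^{N,b}_{t,\tau})\big).
\]
The $x$-marginal projection is $1$-Lipschitz, so $W_\infty(\bar\mu^{N,L,b}_{t,\tau},\mu^{N,b}_{t,\tau})\le W_\infty(\bar\rho^{N,L,b}_{t,\tau},\rho^{N,b}_{t,\tau})$.

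\emph{Term $(\mathrm{II}_b)$.} With $\mu$, $\theta$ fixed, the map $(x,a)\mapsto\partial_\nu\mathcal H(x,\mu,\nu,a)(\theta)$ is $\Lambda_{\mathcal H\nu,\infty}$-Lipschitz on $\bar B(R_X)\times\bar B(R_a)$ by Lemma \ref{lemma:LipmathcalHnu}, and both measures are supported there. Kantorovich--Rubinstein, applied componentwise (or by testing against unit matrices $U$ and taking the supremum), bounds $(\mathrm{II}_b)$ by a constant times $W_1\le W_\infty$ of $\bar\rho^{N,L,b}_{t,\tau}$ and $\rho^{N,b}_{t,\tau}$.

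\emph{Combining.} Averaging over $b$, squaring, and applying Cauchy--Schwarz/Young, namely $(\frac1B\sum_b c_b)^2\le\frac1B\sum_b c_b^2$ and $(u+v)^2\le 2u^2+2v^2$, yields the claimed bound with $D$ a multiple of $\Lambda_{\mathcal H\nu,\infty}^2$.

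The main subtlety is the vector-valued Kantorovich--Rubinstein step. A componentwise application picks up a dimension factor, so to keep $D$ dimension-free it is better to avoid the duality altogether: take an optimal $W_\infty$ coupling $\pi_b$ of $\bar\rho^{N,L,b}_{t,\tau}$ and $\rho^{N,b}_{t,\tau}$ and integrate the pointwise Lipschitz bound of Lemma \ref{lemma:LipmathcalHnu} against it. This handles $(\mathrm I_b)$ and $(\mathrm{II}_b)$ together, giving directly
\[
\Lambda_{\mathcal H\nu,\infty}\big(\Vert\theta-\vartheta\Vert_\mathrm F+2W_\infty(\bar\rho^{N,L,b}_{t,\tau},\rho^{N,b}_{t,\tau})\big).
\]
Finally, since the statement is $\mathbb P$-a.s., one must check that the support bounds hold on a common full-measure set for all $\tau\le T$, which is what Lemma \ref{lemma:GlobalWellDefined} provides.
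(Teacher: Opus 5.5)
Your proposal is correct and follows the paper's proof essentially verbatim. You use the same add-and-subtract split into a pointwise term, controlled by Lemma \ref{lemma:LipmathcalHnu} (using that $\partial_\nu\mathcal H$ does not depend on $\nu$ and that the $x$-marginal projection is $1$-Lipschitz), and a measure-discrepancy term, controlled by Kantorovich--Rubinstein with $W_1\le W_\infty$, followed by Young/Cauchy--Schwarz over the batch. Your coupling variant is a harmless tidy-up, not a necessity: testing against unit matrices $U$ already gives a dimension-free Kantorovich--Rubinstein bound, and with the Euclidean norm on $\R^{2d}$ the coupling constant is $1+\sqrt{2}$ rather than $2$, which is absorbed into $D$.
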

\begin{proof}
Take $\theta, \vartheta\in \mathfrak S$. By expanding the expression for the gradient map $g$ defined in \eqref{eqn:UpdateG} via Young's Inequality, we obtain
\begin{align}\label{eqn:Deltag2}
    &\Vert \bar g_{t,\tau+1}[\bm{\bar\nu}^{N,L}_\tau](\vartheta)-g_{t,\tau+1}[\bm\nu_\tau](\theta)\Vert^2_\mathrm{F} \\ &\le \frac{2}{B}\sum_{b=1}^B \underbrace{ \left\Vert\int \partial_\nu\mathcal H(x,\bar \mu^{N,L,b}_{t,\tau}, \bar\nu^{N,L}_{t,\tau},a)(\vartheta) - \partial_\nu\mathcal H(x,\mu^{N,b}_{t,\tau}, \nu_{t,\tau},a)(\theta) d \bar\rho^{N,L,b}_{t,\tau}(x,a) \right\Vert^2_\mathrm{F}}_{(a)}\notag\\
    &+\frac{2}{B}\sum_{b=1}^B \underbrace{\left\Vert\int \partial_\nu\mathcal H(x,\mu^{N,b}_{t,\tau}, \nu_{t,\tau},a)(\theta) d (\bar\rho^{N,L,b}_{t,\tau}-\rho^{N,b}_{t,\tau})(x,a) \right\Vert^2_\mathrm{F}}_{(b)}.\notag
\end{align} 
Notice that $\partial_\nu\mathcal H$ is independent of $\nu$, since $\mathcal H$ is linear in $\nu$.
Then, the $\Lambda_{\mathcal H\nu,\infty}$-Lipchitz continuity of $\partial_\nu\mathcal H$ deduced in Lemma \ref{lemma:LipmathcalHnu} and Remark \ref{remark:GlobalLip} implies that 
\[
(a) \le 2\Lambda_{\mathcal H\nu,\infty}^2 \left( W_\infty^2(\bar \mu^{N,L,b}_{t,\tau}, \mu^{N,b}_{t,\tau})) + \Vert \theta- \vartheta\Vert^2_\mathrm{F}\right)\le 2\Lambda_{\mathcal H\nu,\infty}^2 \left( W_\infty^2(\bar \rho^{N,L,b}_{t,\tau}, \rho^{N,b}_{t,\tau})) + \Vert \theta- \vartheta\Vert^2_\mathrm{F}\right).
\]
The second inequality comes from the fact that the projection map onto the $x$-coordinate is $1$-Lipschitz.
Furthermore,
 the Kantorovich-Rubinstein Theorem, which applies since $\partial_\nu\mathcal H$ is $\Lambda_{\mathcal H\nu,\infty}$- Lipschitz in $(x,a)$, implies that term $(b)$ is bounded according to
\[
(b) \le\Lambda_{\mathcal H\nu,\infty}^2 W_1^2(\bar\rho^{N,L,b}_{t,\tau},\rho^{N,b}_{t,\tau}) \le \Lambda_{\mathcal H\nu,\infty}^2  W_\infty^2(\bar\rho^{N,L,b}_{t,\tau},\rho^{N,b}_{t,\tau}).
\]
The Lemma results from combining the above two bounds according to \eqref{eqn:Deltag2}.
\end{proof}
    \section{Framework \& Notation}\label{App:G}
\textit{The assumptions introduced in this section are not required for Theorem \ref{theorem:WTS}, but are stated for intermediate Lemmas. We later justify that these assumptions hold for the transformer.}\\

Our goal is to prove uniform convergence of the transformer to the flow maps defined in Corollary \ref{cor:FlowMap_wd}, which take an initial point $x\in\R^d$ and an initial distribution $\zeta\in \mathcal P_c(\R^d)$ as input. Accordingly, we introduce the domain 
\begin{equation}\label{eqn:Domain}
    \mathscr{D} = \bar B(R_0)\times \mathcal P(\bar B(R_0)).
\end{equation}
With $(X^{x,\zeta}_t,a^{x,\zeta}_t)$ from Remark \ref{remark:flow} in mind, consider the map
$$Z:\Omega_0\times [0,1]\times\R^{d}\times \mathcal P_c(\R^d)\mapsto \R^{2d},$$
that satisfies
\begin{assumption}\label{assm:cont}
For $t\in[0,1],x \in \R^d$ and $\zeta\in \mathcal P(\bar B(R_0))$, the map $\Omega^0\ni\omega^0\mapsto Z(\omega^0,t,x,\zeta)$ is $\mathcal F_0$-measurable. Furthermore, for every $\omega^0\in \Omega^0$, the map $(t,x,\zeta)\mapsto Z(\omega^0,t,x,\zeta)$ is Lipschitz continuous on $[0,1]\times \R^d\times \mathcal P(\bar B(R_0))$.
\end{assumption}
This Assumption requires continuity in $(t,x,\zeta)$ for every $\omega^0\in\Omega^0$, rather than $\mathbb P^0$-almost surely. However, the version of the solution to \eqref{eqn:MVRODE1}-\eqref{eqn:MVRODE2} used in the proof of Lemma \ref{lemma:MVRODE_wd} is constructed such that it is continuous in $(t,x,\zeta)$ for every $\omega^0\in\Omega^0$.

We extend this map canonically to $(\Omega,\mathcal F,\mathbb P)$ via
\[
\widetilde{Z}((\omega_0,\omega_1),t,x,\zeta) = Z(\omega_0,t,x,\zeta).\]
 For notational ease, we drop the dependence on $\omega$ and write
\begin{equation}\label{eqn:mathfrak_r}
Z^{x,\zeta}_t:= \widetilde{Z}(\omega,t,x,\zeta), \qquad\mathfrak{r}^\zeta_t=(Z^{\cdot,\zeta}_t)_\#\zeta.
\end{equation}

\begin{assumption}\label{assm:gevrey}
    Suppose that, for some $n \ge 1$, the map $Z^{x,\zeta}_t\in C^n([0,1],\R^{d_1})$
    for every $x\in\R^d$, $\zeta\in\mathcal P(\bar B(R_0))$. Assume that there exist constants $C_Z,R_Z>0$ such that its derivatives are bounded uniformly in $(x,\zeta)\in \mathscr D$ and $t\in(0,1)$ according to
\[
\sup_{t\in[0,1]}\sup_{(x,\zeta)\in \mathscr{D}}\Vert \partial_t^j Z^{x,\zeta}_{t}\Vert \le C_Z R_Z^j j!,\qquad 0\le j \le n.
\]
\end{assumption}
We saw in equation \eqref{eqn:Dzgamma2} that $D_z \gamma(z,\mu)$ could be expressed as the ratio of two functions that are linear in the product measure $\mu\otimes\mu$, which we denote by $\mu^{\otimes 2}$. Notice that the product measure $((Z^{\cdot,\zeta}_t)_\#\zeta)^{\otimes 2}$ can be expressed as the pushforward of $\zeta^{\otimes 2}$ by the map
\begin{equation}\label{eqn:prodmeasurefm}
        \Psi^{\zeta}_t: \R^{2d}\mapsto \R^{4d}, \qquad \Psi_t^{\zeta}(y_1,y_2)=\left(Z^{y_1,\zeta}_t,Z^{y_2,\zeta}_t \right).
\end{equation}
Similarly to $Z^{x,\zeta}_t$, we shall write $\Psi^{w,\zeta}_t$ for $w\in \R^{2d}$. Notice that $\Gamma$ and both terms in $\mathcal K$ are linear in $\nu$. To reflect this in our framework, we consider functions of the following form. Let $ f: \R^{2d}\times \mathcal P_2(\R^{2d})\times \mathcal P(\mathfrak{S})\mapsto \R^d$ be continuous in each variable and define $F$ according to
\begin{align}\label{eqn:F}
    F: \R^{2d}\times \mathcal P_2(\R^{2d})\times\mathfrak{S}\mapsto \R^d, \qquad F(x,m,\nu) = \int f(x,m,\theta)d\nu(\theta),
\end{align}
for any $\nu \in \mathcal P(\mathfrak{S})$. The functions $\Gamma$ and $\nabla_x\mathcal H$ have a very specific structure, which we encapsulate by the following assumptions.
\begin{assumption}\label{assm:StructuredF}
For each $j=1,3$, let
    $$f_j:\R^{2d}\times \R^{4d}\times \mathfrak{S}\mapsto \R, \qquad f_2:\R^{2d}\times \R^{4d}\times \mathfrak{S}\mapsto \R^d$$
 be functions that are jointly continuous in each variable such that for any $z\in \R^{2d}$, $\rho\in \mathcal P_c(\R^{2d})$ and $\theta \in \mathfrak{S}$, we have
\begin{align}\label{eqn:f}
    f(z,\rho,\theta) &= \frac{\int  \exp(f_1(z,w,\theta)) f_2(z,w,\theta) d\rho^{\otimes 2}(w)}{F_2(z,\rho,\theta)},\\
F_2(z,\rho,\theta) &= \int \exp(f_{3}(z,w,\theta))d\rho^{\otimes 2}(w).
\end{align}
Let $C_Z$ be as in Assumption \ref{assm:gevrey}. Further assume that there exists a constant $f_\mathrm{max}>0$ such that for any $z\in \bar B_{2d}(C_Z),w\in (\bar B_{2d}(C_Z))^2$, $u\in \mathbb S^{d-1}$ and $\theta \in \mathfrak{S}$, we have
    \[
    \Vert f_j(z,w,\theta)\Vert \le f_\mathrm{max}, \qquad j=1,2,3.
    \]
\end{assumption}
\begin{assumption}\label{assm:LipCont}
Fix $R>0$. Then, there exists constants $\Lambda_F,B_F>0$, that depend only on $R,R_\theta$, such that for any $z_1,z_2\in \bar B_{2d}(R)$, $m_1,m_2\in \mathcal P(\bar B_{2d}(R))$, $\theta\in \mathfrak{S}$ and $\mathfrak n_1,\mathfrak n_2\in \mathcal P(\mathfrak{S})$, we have
    \begin{align*}
    \Vert f(z_1,m_1,\theta)\Vert &\le B_F, \\
         \Vert F(z_1,m_1,\mathfrak{n}_1)- F(z_2,m_2,\mathfrak{n}_2)\Vert &\le \Lambda_F\left(\Vert z_1-z_2\Vert +  W_2(m_1,m_2)+  W_2(\mathfrak n_1,\mathfrak n_2) \right).
    \end{align*}
\end{assumption}
    \section{A Maximal $L^2$ Inequality Uniform in the Initial Condition}\label{App:H}
Let $M^{x,\zeta}_{0,\tau}=0$ and, for $r\in[0:L]$, define
\begin{equation}\label{eqn:Martingale}
M^{x,\zeta}_{l,\tau}=\int_0^{\frac{l}{L}} F\left(Z^{x,\zeta}_t,\mathfrak{r}_t^\zeta,\hat \nu^{N,L}_{r_t,\tau}\right)-F\left(Z^{x,\zeta}_t,\mathfrak{r}^\zeta_t,\nu_{r_t,\tau}\right)dt,
\end{equation}
 where $\mathfrak{r}^\zeta_t$ is given by \eqref{eqn:mathfrak_r}. The key idea is that the expected value, under $\mathbb P^1$, of a test function integrated against the random measure $\hat \nu^{N,L}_{r/L,\tau}$, defined in \eqref{eqn:hatnu}, is the same test function integrated against $\nu_{r/L,\tau}$. Furthermore, $M_{l,\tau}^{x,\zeta}$ is adapted to the filtration defined by
\begin{equation}\label{eqn:FiltrationG}
        \mathcal G_{r}=\sigma\left(\mathcal F^0 \cup \sigma\left(\mathcal \theta^h_{r',0}:0\le r'<r,\, 1\le h\le H\right)\right), \qquad \mathcal G_{0}=\mathcal F^0.
\end{equation}
 As a result, $M^{x,\zeta}_{l,\tau}$ is a $\mathcal G_{r}$-martingale for every $x,\zeta$. Before undertaking any further analysis, we must ensure that the supremum of $M^{x,\zeta}_{t,\tau}$ over the initial data $\mathscr{D}$ is measurable so that the expectations in the following analysis  are well-defined.
\begin{lemma}\label{lemma:MeasurableSupremum}
For every $r\in[0:L]$ and $\tau \ge 0$, the random variable
    $$
    \sup_{(x,\zeta)\in\mathscr{D}} M^{x,\zeta}_{r,\tau}
    $$ is defined on $(\Omega,\mathcal F)$ and is $\mathcal G_{r}$-measurable.
\end{lemma}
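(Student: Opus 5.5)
The plan is to reduce the uncountable supremum over $\mathscr{D}=\bar B(R_0)\times\mathcal P(\bar B(R_0))$ to a supremum over a countable dense subset. This requires two facts: that $(x,\zeta)\mapsto M^{x,\zeta}_{r,\tau}(\omega)$ is continuous for every fixed $\omega$, and that $\mathscr{D}$ is separable. The second is immediate. $\bar B(R_0)$ is separable. $\mathcal P(\bar B(R_0))$ with the $W_2$ metric is compact, and in particular separable, with the countable dense set $\mathscr{D}_\mathbb{Q}$ of finitely supported measures having rational weights and rational atoms in $\bar B(R_0)$. Once continuity is in hand, we have
\[
\sup_{(x,\zeta)\in\mathscr{D}} M^{x,\zeta}_{r,\tau} = \sup_{(x,\zeta)\in\mathscr{D}_\mathbb{Q}} M^{x,\zeta}_{r,\tau}
\]
pointwise on $\Omega$, understood componentwise or applied to $\Vert M^{x,\zeta}_{r,\tau}\Vert$ as it is used later. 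A countable supremum of $\mathcal G_r$-measurable random variables is $\mathcal G_r$-measurable. So it remains to show, for each fixed $(x,\zeta)$, that $M^{x,\zeta}_{r,\tau}$ is $\mathcal G_r$-measurable, and that it is continuous in $(x,\zeta)$.

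For continuity, fix $\omega$. By Assumption \ref{assm:cont}, $(t,x,\zeta)\mapsto Z^{x,\zeta}_t$ is Lipschitz. Pushing forward under the coupling $(Z^{\cdot,\zeta_1}_t\times Z^{\cdot,\zeta_2}_t)_\#\pi^*$, where $\pi^*$ is an optimal coupling of $\zeta_1,\zeta_2$, gives
\[
W_2(\mathfrak r^{\zeta_1}_t,\mathfrak r^{\zeta_2}_t)\le C\,W_2(\zeta_1,\zeta_2),
\]
with $C$ built from the Lipschitz constant in $x$ and in $\zeta$. Assumption \ref{assm:LipCont} then makes the integrand of \eqref{eqn:Martingale} Lipschitz in $(x,\zeta)$, uniformly in $t$. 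All arguments stay in a fixed ball because $Z$ is bounded on the compact set $[0,1]\times\mathscr{D}$, and $\hat\nu^{N,L}_{r_t,\tau},\nu_{r_t,\tau}$ are supported in $\mathfrak{S}$ by Lemma \ref{lemma:GlobalWellDefined}. Integrating over $[0,l/L]$ gives continuity.

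For measurability at fixed $(x,\zeta)$, note that $t\mapsto r_t$ is piecewise constant, so the integral splits into a finite sum over layers $r'<r$. Each summand is
\[
\int_{r'/L}^{(r'+1)/L}\big(F(Z^{x,\zeta}_t,\mathfrak r^\zeta_t,\hat\nu^{N,L}_{r'/L,\tau})-F(Z^{x,\zeta}_t,\mathfrak r^\zeta_t,\nu_{r'/L,\tau})\big)\,dt.
\]
The object $\hat\nu^{N,L}_{r'/L,\tau}=\frac1H\sum_h\delta_{\Phi_{r'/L,\tau}[\bm\nu](\theta^h_{r'})}$ depends only on $\mathcal F^0$-measurable data, through $\Phi$ (Lemma \ref{lemma:GlobalWellDefined}), and on $\theta^h_{r'}$ with $r'<r$. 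Since $(\omega^0,\theta)\mapsto\Phi_{r'/L,\tau}[\bm\nu](\omega^0)(\theta)$ is jointly measurable (Carath\'eodory, as in that lemma), the composition with $\theta^h_{r'}$ is $\mathcal G_r$-measurable. Consequently
\[
F(z,m,\hat\nu^{N,L}_{r'/L,\tau})=\frac1H\sum_h f(z,m,\Phi(\theta^h_{r'}))
\]
is measurable, using the continuity of $f$. The term $F(\cdot,\cdot,\nu_{r'/L,\tau})$ is $\mathcal F^0$-measurable by Criterion \ref{criterion:measure} together with Lemma 1.28 of \cite{Kallenberg2021}. The map $\omega\mapsto(Z^{x,\zeta}_t,\mathfrak r^\zeta_t)$ is $\mathcal F^0$-measurable, because $\mathfrak r^\zeta_t(D)=\int\mathbbm 1_D(Z^{y,\zeta}_t)\,d\zeta(y)$ and we may again apply Criterion \ref{criterion:measure}. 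The integrand is continuous in $t$ and jointly measurable in $(\omega,t)$, so Fubini makes the time integral measurable; alternatively, write it as a limit of Riemann sums.

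The main obstacle I expect is the joint measurability bookkeeping. The key point is composing the $\mathcal F^0$-random maps $\Phi$ and $F(\cdot,\cdot,\nu)$, which are continuous in their deterministic arguments, with random inputs drawn from $\mathcal F^1$ or $\mathcal G_r$. The Carath\'eodory theorem (Theorem 4.1 in \cite{AZHMYAKOV201987}) handles each such composition, and everything is defined on $(\Omega,\mathcal F^0\otimes\mathcal F^1)$, so that sections behave as in Lemma \ref{lemma:MVRODE_wd}.
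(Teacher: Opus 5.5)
Your proposal is correct and follows the paper's proof essentially step for step. Each $M^{x,\zeta}_{r,\tau}$ is measurable via joint (Carath\'eodory) measurability of $\Phi$ and $Z$, Criterion~\ref{criterion:measure} together with Lemma 1.28 of \cite{Kallenberg2021}, and measurability of the time integral; continuity in $(x,\zeta)$ comes from $W_2(\mathfrak r^{\zeta_1}_t,\mathfrak r^{\zeta_2}_t)\le 2\Lambda_Z W_2(\zeta_1,\zeta_2)$ and Assumption~\ref{assm:LipCont}, after which separability of $\mathscr D$ reduces the supremum to a countable one, with your layer-by-layer splitting also making explicit the indexing the paper glosses over (the layer-$r'$ term involves $\theta^h_{r'}$, which is $\mathcal G_{r'+1}\subseteq\mathcal G_r$-measurable for $r'<r$).
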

\begin{proof}
By Assumption \ref{assm:cont} and applying Theorem 4.1 in \cite{AZHMYAKOV201987}, $(\omega^0,x)\mapsto Z^{x,\zeta}_t(\omega^0)$ is $(\mathcal F^0\otimes \mathcal B(\R^d))$-measurable and continuous in $x$ for each $\zeta\in \mathcal P(\bar B(R_0)$ and $t\in[0,1]$. As seen in the proof of Lemma, $(\omega^0,\theta) \mapsto \Phi_{t,\tau}[\bm\nu](\omega^0)(\theta)$ is $(\mathcal F^0\otimes \mathcal B(\R^{k\times 4d}))$-measurable. Then, Criterion \ref{criterion:measure} can be verified by applying Lemma 1.28 in \cite{Kallenberg2021} to the jointly measurable maps, as in the proof of Lemma \ref{lemma:GlobalWellDefined}. As a result, $\nu_{t,\tau}$ and $\mathfrak{r}^\zeta_t$ are $\mathcal F^0$-measurable and $\hat \nu^{N,L}_{\frac{r}{L},\tau}$ is $\mathcal G_{r}$-measurable, for every $t\in[0,1]$ and $0\le r <L$. By the Lipschitz continuity of $F$, Assumption \ref{assm:LipCont}, the integrand of $M^{x,\zeta}_{l,\tau}$ in \eqref{eqn:Martingale} is $\mathcal G_l$-measurable for every $0\le l<L$. Furthermore, for each $(\omega^0,\omega^1)\in \Omega$, the continuity in $t$ of $Z^{x,\zeta}_{t}(\omega^0)$ and $\mathfrak{r}^\zeta_t(\omega^0)$, Assumption \ref{assm:cont}, combined with the Lipschitz continuity of $F$, Assumption \ref{assm:LipCont}, implies that the map
\[
t\mapsto F(Z_t^{x,\zeta},\mathfrak{r}^\zeta_t,\hat\nu^{N,L}_{r_t,\tau})-F(Z_t^{x,\zeta},\mathfrak{r}^\zeta_t,\nu_{r_t,\tau})
\]
 is continuous on $(r/L,{r+1}/L)$. Therefore, the integrand of $M^{x,\zeta}_{r,\tau}$ is a Carath\'eodory function in $(\omega,t)$, and thus is $(\mathcal G_r\otimes \mathcal B([0,1]))$-measurable by Theorem 4.1 in \cite{AZHMYAKOV201987}. By invoking Lemma 1.28 in \cite{Kallenberg2021}, $M^{x,\zeta}_{r,\tau}$ is $\mathcal G_{r}-$measurable for every $x\in \bar B(R_0),\zeta\in \mathcal P(\bar B(R_0))$. 
 
 The metric spaces $(\bar B(R_0),\Vert.\Vert)$, where $\Vert.\Vert$ denotes the metric induced by the Euclidean norm, and $(\mathcal P(\bar B(R_0)), W_2)$ are separable. The latter is due to Theorem 6.18 in \cite{villani2008optimal}. Therefore, let $\tilde B(R_0), \tilde{\mathcal P}$ be countable dense subsets of $ \bar B(R_0), \mathcal P(\bar B(R_0))$, respectively.  Since $(x,\zeta)\mapsto Z^{x,\zeta}_t$ is Lipschitz by Assumption \ref{assm:cont}, for every $\zeta_1,\zeta_2\in \mathcal P(\bar B(R_0))$, the map $\zeta \mapsto \mathfrak{r}^\zeta$ satisfies the estimate
 \begin{align*}
W_2(\mathfrak{r}^{\zeta_1}_t,\mathfrak{r}^{\zeta_2}_t)&\le W_2((Z^{\cdot,\zeta_1}_t)_\#\zeta_1,(Z^{\cdot,\zeta_1}_t)_\#\zeta_2)+ \left(\int \Vert Z^{x,\zeta_1}_t-Z^{x,\zeta_2}_t\Vert^2 d\zeta_2(x) \right)^{1/2}\\
& \le 2 \Lambda_Z W_2(\zeta_1,\zeta_2),
 \end{align*}
 where $\Lambda_Z$ is the Lipschitz constant of $(x,\zeta)\mapsto Z^{x,\zeta}_t$ on $\mathscr{D}$. Therefore,
 $\zeta\mapsto\mathfrak{r}^\zeta_t= (Z^{\cdot,\zeta}_t)_\#\zeta$ is continuous.
 Combining this with the boundedness and Lipschitz continuity of $F$ in each argument, Assumption \ref{assm:LipCont}, the map $(x,\zeta)\mapsto M^{x,\zeta}_{r,\tau}$ is continuous, by the Dominated Convergence Theorem. As a result, the supremum
  \[
   \sup_{x\in \bar B(R_0)}\sup_{\zeta\in \mathcal P(\bar B(R_0))} M^{x,\zeta}_{r,\tau}= \sup_{x\in \tilde{B}(R_0)}\sup_{\zeta\in \tilde{\mathcal P}} M^{x,\zeta}_{r,\tau}
  \]
  can be taken over the countably dense subset instead. The pointwise supremum over a countable number of functions is measurable, which concludes the claim.
\end{proof}
The martingale $M^{x,\zeta}_{l,\tau}$ is the difference between two models that use parameters $\bm\nu_\tau$ and $\bm{\hat \nu}^{N,L}_\tau$, respectively. Therefore, we want to show that the supremum of $\Vert M^{x,\zeta}_{l,\tau}\Vert$ over $\mathscr{D}$ tends to zero in expectation. One technique we shall use to do this is Rademacher symmetrisation, Lemma $2.3.6$ in \cite{vanderVaart1996}, which requires taking a supremum over a class of mean-zero random variables. We express the Euclidean norm as the supremum over $u\in \mathbb S^{d-1}$ of $u^TM^{x,\zeta}_{l,\tau}$, which allows us to exploit the mean-zero property of $u^TM^{x,\zeta}_{l,\tau}$. Hence, we are interested in the function $J_2(z,w,u,\theta)=u^Tf_2(z,w,\theta)$, where $f_2$ is a function from the decomposition of $F$ in Assumption \ref{assm:StructuredF}.
\begin{definition}\label{def:J}
Let $f_1,f_2,f_3$ be as in Assumption \ref{assm:StructuredF}. Then, for j=1,2,3, define 
\[
    J_j: \R^{2d}\times \R^{4d}\times \mathbb S^{d-1}\times\mathfrak{S}\mapsto \R
\]
by 
\[
J_1(z,w,u,\theta) = f_1(z,w,\theta), \quad J_2(z,w,u,\theta)=\langle u,f_2(z,w,\theta)\rangle, \quad J_3(z,w,u,\theta)=f_3(z,w,\theta).
\]
\end{definition}
Although $J_1,J_3$ are independent of $u$, we include this dependence on $u$ so that each function is defined on the same domain.
\begin{lemma}\label{lemma:L1Mstar}
Suppose that the $f$ in equation \eqref{eqn:F} satisfies Assumptions \ref{assm:StructuredF}, \ref{assm:LipCont}. Further suppose that the map $Z^{x,\zeta}_t$ satisfies Assumptions \ref{assm:cont} and  \ref{assm:gevrey} for some $n\ge 1$, and take $\Psi^{w,\zeta}_t$ defined by equation \eqref{eqn:prodmeasurefm}. 
Then, there exists a constant $C_3=C_3(\Lambda_F,C_Z,R_Z,f_\mathrm{max})$ such that for any $l\in[1:L]$
\begin{align*}
        \mathbb E^1\Bigg[\sup_{(x,\zeta)\in\mathscr D}&\left\Vert M^{x,\zeta}_{l,\tau} \right\Vert \Bigg]\le C_3\sum_{j=1}^3\mathbb E^1 \left[\sup_{(x,\zeta,u,w) \in \mathscr D_2}\sum_{r=0}^{l-1}\sum_{h=1}^H  \frac{\xi_{r,h,j}}{LH}\;J_j(Z^{x,\zeta}_\frac{r}{L},\Psi^{\zeta}_\frac{r}{L}(w),u,\theta^h_{r,\tau})\right]+\frac{C_3 l}{L^2}, 
\end{align*}
$\mathbb P^0$-almost surely, where
\[
\mathscr D_2 =\bar B(R_0)\times \mathcal P(\bar B(R_0))\times \mathbb S^{d-1}\times (\bar B(R_0))^2.
\]
\end{lemma}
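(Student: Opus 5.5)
The plan is to reduce $M^{x,\zeta}_{l,\tau}$ to a sum of conditionally independent, centred terms indexed by $(r,h)$. Then Rademacher symmetrisation and contraction strip the nonlinear structure of $f$ from Assumption \ref{assm:StructuredF}, leaving only the building blocks $J_1,J_2,J_3$. Finally, linearity in $\zeta^{\otimes 2}$ replaces the supremum over $\zeta$ by a supremum over $w\in(\bar B(R_0))^2$. Throughout I work conditionally on $\mathcal F^0$, i.e.\ for fixed $\omega^0$.

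\textbf{Step 1 (time discretisation).} On each interval $[r/L,(r+1)/L)$, replace $Z^{x,\zeta}_t$ and $\mathfrak r^\zeta_t$ by their values at $t=r/L$.
\begin{itemize}
\item Assumption \ref{assm:gevrey} with $j=1$ gives $\Vert Z^{x,\zeta}_t-Z^{x,\zeta}_{r/L}\Vert\le C_ZR_Z/L$ uniformly on $\mathscr D$.
\item The same bound, via the coupling $(Z^{\cdot,\zeta}_t,Z^{\cdot,\zeta}_{r/L})_\#\zeta$, gives $W_2(\mathfrak r^\zeta_t,\mathfrak r^\zeta_{r/L})\le C_ZR_Z/L$.
\item Assumption \ref{assm:LipCont} then bounds the change in each of the two $F$ terms by $2\Lambda_F C_ZR_Z/L$.
\end{itemize}
Integrating over an interval of length $1/L$ and summing over $r<l$ yields the error $C_3 l/L^2$. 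What remains is
\[
\frac1L\sum_{r=0}^{l-1}\frac1H\sum_{h=1}^H\Big(f(Z^{x,\zeta}_{r/L},\mathfrak r^\zeta_{r/L},\theta^h_{r,\tau})-\textstyle\int f(Z^{x,\zeta}_{r/L},\mathfrak r^\zeta_{r/L},\theta)\,d\nu_{r/L,\tau}(\theta)\Big),
\]
where $\theta^h_{r,\tau}=\Phi_{r/L,\tau}[\bm\nu](\theta^h_{r,0})$.

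\textbf{Step 2 (conditional independence).} The flow map $\Phi_{r/L,\tau}[\bm\nu]$ is $\mathcal F^0$-measurable (Lemma \ref{lemma:GlobalWellDefined}). Under $\mathbb P^1$ the $\theta^h_{r,0}$ are i.i.d.\ with law $\pi$ (Assumption \ref{assm:Init}). Hence, given $\mathcal F^0$, the $\theta^h_{r,\tau}$ are independent across $(r,h)$ with law $\nu_{r/L,\tau}$ by \eqref{eqn:nu}. Each summand is therefore conditionally centred, for every $(x,\zeta,u)$.

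\textbf{Step 3 (symmetrisation).} Write $\Vert\cdot\Vert=\sup_{u\in\mathbb S^{d-1}}\langle u,\cdot\rangle$. Apply Rademacher symmetrisation (Lemma 2.3.6 in \cite{vanderVaart1996}) to the class indexed by $(x,\zeta,u)$; measurability of the supremum is handled as in Lemma \ref{lemma:MeasurableSupremum}. This gives, up to a factor 2,
\[
\mathbb E^1\sup_{(x,\zeta,u)}\sum_{r,h}\frac{\xi_{r,h}}{LH}\,\langle u,f(Z^{x,\zeta}_{r/L},\mathfrak r^\zeta_{r/L},\theta^h_{r,\tau})\rangle .
\]

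\textbf{Step 4 (contraction), the main obstacle.} Using $\mathfrak r^{\zeta\otimes2}=(\Psi^\zeta)_\#\zeta^{\otimes2}$, write $\langle u,f\rangle=A/B$ with
\[
A=\int e^{J_1}J_2\,d\zeta^{\otimes 2},\qquad B=\int e^{J_3}\,d\zeta^{\otimes 2},
\]
where the integrands are evaluated at $(Z^{x,\zeta}_{r/L},\Psi^\zeta_{r/L}(w),u,\theta^h_{r,\tau})$. By Assumption \ref{assm:StructuredF} the arguments lie in bounded sets and $B\ge e^{-f_{\max}}$. Consequently $(a,b)\mapsto a/b$ and $(s,v)\mapsto e^{s}v$ are Lipschitz on the relevant boxes, with constants depending only on $f_{\max}$.

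I would first try Maurer's vector-valued contraction inequality. It should bound the Rademacher sum of $A/B$ by a constant times the Rademacher sums of $A$ and of $B$, each with fresh Rademacher families. Applying it once more to $e^{J_1}J_2$ and to $e^{J_3}$ reduces to Rademacher sums of $J_1,J_2,J_3$ integrated against $\zeta^{\otimes2}$, each with its own family $\xi_{r,h,j}$.

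The delicate point is that contraction requires a fixed Lipschitz map applied to an index-dependent quantity. The integral against $\zeta^{\otimes2}$ sits inside the nonlinearity only through $A$ and $B$, which is why I contract at the level of $(A,B)$ first and only then pass the integral outward. If a clean contraction step at the level of $(A,B)$ fails, the fallback is to linearise $a/b$ around its conditional mean. The resulting second-order remainder would need a separate bound, which could also be absorbed into the $l/L^2$ term.

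\textbf{Step 5 (extreme points).} For fixed Rademacher signs, $\zeta^{\otimes2}\mapsto\int S(w)\,d\zeta^{\otimes2}(w)$ is bounded by $\sup_{w}S(w)$, since it is an average of $S$. Thus the supremum over $\zeta$ inside the integral is dominated by the supremum over $w\in(\bar B(R_0))^2$, while $\zeta$ still enters through $Z^{x,\zeta}$ and $\Psi^\zeta$. This produces the supremum over $\mathscr D_2$ in the statement. Collecting the constants from Steps 1 and 4 gives $C_3=C_3(\Lambda_F,C_Z,R_Z,f_{\max})$, and the bound holds $\mathbb P^0$-a.s.
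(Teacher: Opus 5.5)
Your outline is sound and shares Steps 1--3 and 5 with the paper. The contraction step is organised differently, and one ordering inside it must be fixed.

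Steps 1--3 and 5 coincide with the paper's proof. These are the $\mathcal O(l/L^2)$ time discretisation, the conditional centring given $\mathcal F^0$ via $\Phi_{r/L,\tau}[\bm\nu]_\#\pi=\nu_{r/L,\tau}$, and symmetrisation over $\mathscr D_1$. They also include the bound $\int S\,d\zeta^{\otimes 2}\le\sup_w S$, which works because the single measure $\zeta^{\otimes 2}$ integrates every layer through $\Psi^\zeta_{r/L}$.

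The difference is in how the nonlinearity of $f$ is removed. The paper performs the extreme-point step \emph{first}. Since $F_2$ does not depend on $w$, one can write $\langle u,f\rangle=\int e^{J_1}J_2/F_2\,d\zeta^{\otimes 2}$. After passing to $\sup_w$, each summand is one fixed Lipschitz map $\widetilde{\mathcal Q}$ of the three scalars $(J_1,J_2,F_2)$, and a single multivariate contraction gives Gaussian sums of $J_1,J_2,F_2$. Then $F_2=\int e^{J_3}\,d\zeta^{\otimes 2}$ is linearised once more, and the exponential is stripped with a Sudakov--Fernique comparison because the multipliers are now Gaussian.

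You instead contract twice with Rademacher signs. This replaces Sudakov--Fernique by the standard scalar contraction for $s\mapsto e^s$, at the cost of one extra contraction. The resulting Rademacher multipliers are harmless: $\mathbb E\sup_t\sum_i\epsilon_ia_i(t)\le\sqrt{\pi/2}\,\mathbb E\sup_t\sum_ig_ia_i(t)$, and in any case Lemma \ref{lemma:TaylorApproxE} only uses independence, zero mean and unit variance.

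The paper's ordering buys something yours loses. It applies verbatim to the second term of $\mathcal K$ (Remark \ref{remark:mathcalK}), where the denominator depends on $w$ and there is no $A/B$ factorisation.

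The ordering to fix is that your second contraction cannot be carried out inside the integral against $\zeta^{\otimes 2}$. Because $\zeta$ is part of the index, $\int e^{J_1}J_2\,d\zeta^{\otimes 2}$ is not a fixed Lipschitz function of a finite-dimensional index-dependent quantity. The intermediate bound by ``Rademacher sums of $J_1,J_2,J_3$ integrated against $\zeta^{\otimes 2}$'' is therefore false in general. For symmetric $\zeta$ and integrands odd in $w$, the right-hand side vanishes, while $\int e^{J_3}\,d\zeta^{\otimes 2}$ can still vary with the index.

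The valid sequence is:
\begin{itemize}
\item contract on $(A,B)$;
\item apply Step 5 to each of the two resulting sums, which is exactly where linearity in $\zeta^{\otimes 2}$ is used;
\item only then contract $(s,v)\mapsto e^sv$ and $s\mapsto e^s$ over the index set $\mathscr D_2$.
\end{itemize}
This yields precisely the right-hand side of the statement.

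Drop the linearisation fallback as well. Each summand depends on a single parameter $\theta^h_{r,\tau}$, so $(A,B)$ deviates from its conditional mean by $\mathcal O(1)$. There is no small second-order remainder to absorb into $l/L^2$.
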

\begin{proof}
The $\Lambda_F-$Lipschitz continuity of $F$, Assumption \ref{assm:LipCont}, and structure of $F$ given in \eqref{eqn:F} implies that
\begin{align*}
    \Vert M^{x,\zeta}_{l,\tau}\Vert &\le \frac{1}{LH}\left\Vert \sum_{r=0}^{l-1} \sum_{h=1}^H \left(f\left(Z^{x,\zeta}_\frac{r}{L},\mathfrak{r}^\zeta_\frac{r}{L},\Phi_{\frac{r}{L},\tau}[\bm\nu](\theta_{r,0}^h)\right)-F\left(Z^{x,\zeta}_\frac{r}{L},\mathfrak{r}^\zeta_\frac{r}{L},\nu_{\frac{r}{L},\tau}\right)\right)\right\Vert\notag\\
    &+2\Lambda_F\sum_{r=0}^{l-1}\int_\frac{r}{L}^\frac{r+1}{L}\left\Vert Z^{x,\zeta}_t-Z^{x,\zeta}_\frac{r}{L} \right\Vert+ W_2\left(\mathfrak{r}^\zeta_t,\mathfrak{r}^\zeta_\frac{r}{L}\right)dt.
\end{align*}
Then, the $ C_Z R_Z-$Lipschitz continuity of the flow map $Z^\zeta_t$ in $t$,  Assumption \ref{assm:gevrey}, applied to the final term yields
\[
W_2\left(\mathfrak{r}^\zeta_t,\mathfrak{r}^\zeta_\frac{r}{L}\right)\le\sqrt{\int \left\Vert Z^{x,\zeta}_t-Z_\frac{r}{L}^{x,\zeta}\right\Vert^2d\zeta(x)} \le \frac{C_Z R_Z}{L}.
\]
The same bound holds on $\left\Vert Z^{x,\zeta}_t-Z^{x,\zeta}_\frac{r}{L} \right\Vert $.
After applying this time discretisation error, we get
\begin{align}\label{eqn:ToDoRadSym}
       \Vert M^{x,\zeta}_{l,\tau}\Vert &\le \sup_{u\in \mathbb S^{d-1}}\frac{1}{LH} \sum_{r=0}^{l-1} \sum_{h=1}^H u^T\left(f\left(Z^{x,\zeta}_\frac{r}{L},\mathfrak{r}^\zeta_\frac{r}{L},\Phi_{\frac{r}{L},\tau}[\bm\nu](\theta_{r,0}^h)\right)-F\left(Z^{\xi,\zeta}_\frac{r}{L},\mathfrak{r}^\zeta_\frac{r}{L},\nu_{\frac{r}{L},\tau}\right)\right)\\
    &+\frac{4\Lambda_FC_Z R_Z l}{L^2}. \notag
\end{align} 
Recall the product decomposition of $(\Omega,\mathcal F,\mathbb P)$ that was outlined in Section \ref{sec:MainTheorem}. As seen in the proof of Lemma \ref{lemma:GlobalWellDefined}, the AdamW flow map $\Phi_{t,\tau}[\bm\nu](\theta)$ is $\mathcal{F}^0
$-measurable for any deterministic $\theta$. Furthermore, the maps $Z^{x,\zeta}_t,\Psi^{w,\zeta}_t$ are $\mathcal F^0$-measurable by Assumption \ref{assm:cont}. In fact, $(\omega^0,x)\mapsto Z^{x,\zeta}_t(\omega^0)$ is a Carath\'eodory function, Definition 4.1 in \cite{AZHMYAKOV201987}, and so is $(\mathcal F^0\otimes \mathcal B(\R^{d}))$-measurable by Theorem 4.1 in \cite{AZHMYAKOV201987}. Then, for any $D \in \mathcal B(\R^{2d})$, Lemma 1.28 in \cite{Kallenberg2021} yeilds that the map
\[
\omega^0\mapsto \mathfrak{r}^\zeta_{t}(\omega^0)(D) = \int_{\R^d} \mathbbm{1}_{D}(Z^{x,\zeta}_t(\omega^0)) d\zeta(x)
\]
is $\mathcal F^0$-measurable. Therefore, Criterion \ref{criterion} states that $\mathfrak{r}^\zeta_{t}$ is $(\mathcal F^0,\mathcal B(\mathcal P_2(\R^{2d}))$-measurable. Accordingly, each of $Z^{x,\zeta}_\frac{r}{L},\mathfrak{r}^\zeta_\frac{r}{L},\Phi_{\frac{r}{L},\tau}[\bm\nu](\theta)$ is independent of $\mathcal F^1$. Therefore, for any $r\in[0:l]$, training step $\tau \in[0:T]$, and $\mathbb P^0$-a.e. $\omega^0$, the expectation with respect to $\mathbb P^1$ acts only on $\theta^h_r$. Hence, we obtain
\begin{align*}
    \mathbb E^1 \left[f\left(Z^{x,\zeta}_\frac{r}{L},\mathfrak{r}^\zeta_\frac{r}{L},\Phi_{\frac{r}{L},\tau}[\bm\nu](\theta_{r,0}^h)\right)
\right](\omega^0)&=\int f\left(Z^{x,\zeta}_\frac{r}{L}(\omega^0),\mathfrak{r}^\zeta_\frac{r}{L}(\omega^0),\Phi_{\frac{r}{L},\tau}[\bm\nu(\omega^0)](\theta_{r,0}^h(\omega^1))\right) d\mathbb P^1(\omega^1)\\&= \int f\left(Z^{x,\zeta}_\frac{r}{L}(\omega^0),\mathfrak{r}^\zeta_\frac{r}{L}(\omega^0),\Phi_{\frac{r}{L},\tau}[\bm\nu(\omega^0)](\theta)\right)d\pi(\theta)\\&=F\left(Z^{x,\zeta}_\frac{r}{L}(\omega^0),\mathfrak{r}^\zeta_\frac{r}{L}(\omega^0),\nu_{\frac{r}{L},\tau}(\omega^0)\right), 
\end{align*}
where $\pi = \mathbb P^1\circ (\theta^h_{r,0})^{-1}$ is the law of the parameters $\theta^h_{r,0}$ at initialisation, Assumption \ref{assm:Init}. From now on, we omit the dependence on $\omega^0$, and all equalities are understood to hold $\mathbb P^0$-a.s. The above expression implies that the first term on the right-hand side of equation \eqref{eqn:ToDoRadSym} is the average over $LH$ independent mean-zero random variables, when conditioned on $\mathcal F^0$. As a result, we may apply the Rademacher symmetrization given in Lemma $2.3.6$ in \cite{vanderVaart1996}, that only requires independence of the stochastic processes, and not identical distribution. Therefore, take $i.i.d.$ random variables $\epsilon_{0,1},\ldots,\epsilon_{l-1,H}$, defined on $(\Omega^1,\mathcal F^1,\mathbb P^1)$, and independent of $((\theta^h_r)_{r=0}^{L-1})_{h=1}^H$ that take the values $\pm 1$ with probability one half. By applying Rademacher symmetrization, Lemma $2.3.6$ in \cite{vanderVaart1996}, under $\mathbb P^1$, we deduce that 
    \begin{align}
            \mathbb E^1\bigg[\sup_{(x,\zeta)\in \mathscr D}\Vert M^{x,\zeta}_{l,\tau}\Vert \bigg]\le 2\mathbb E^1\left[ \sup_{(x,\zeta,u)\in \mathscr D_1}\frac{1}{LH}\sum_{h=1}^H\sum_{r=0}^{l-1}\epsilon_{r,h}\;u^Tf\left(Z^{x,\zeta}_\frac{r}{L},\mathfrak{r}^\zeta_\frac{r}{L},\theta_{r,\tau}^h\right)\right]+ \frac{4\Lambda_FC_Z R_Z l}{L^2},
    \end{align}
where the supremum is over the set $\mathscr D_1=\mathscr D\times \mathbb S^{d-1}$. Then, writing $f$ in the form given in Assumption \ref{assm:StructuredF}, we obtain
    \begin{align*}
    \mathbb E^1\bigg[ &\sup_{(x,\zeta,u)\in \mathscr D_1}\frac{1}{LH}\sum_{h=1}^H\sum_{r=0}^{l-1}\epsilon_{r,h}\;u^Tf\left(Z^{x,\zeta}_\frac{r}{L},\mathfrak{r}^{\zeta}_\frac{r}{L},\theta_{r,\tau}^h\right)\bigg]\\
    &=  \mathbb E^1\bigg[ \sup_{(x,\zeta,u)\in \mathscr D_1}\int\sum_{h=1}^H\sum_{r=0}^{l-1}\frac{\epsilon_{r,h}}{LH}\frac{ \exp\left(f_1(Z^{x,\zeta}_\frac{r}{L},\Psi^{w,\zeta}_\frac{r}{L},\theta^h_{r,\tau}))\right)J_2(Z^{x,\zeta}_\frac{r}{L},\Psi^{w,\zeta}_\frac{r}{L},u,\theta^h_{r,\tau})}{F_2(Z^{x,\zeta}_\frac{r}{L},\mathfrak{r}^\zeta_\frac{r}{L},\theta^h_{r,\tau})}d\zeta^{\otimes 2}(w)\bigg].\notag
    \end{align*}
Let us introduce the notation $\psi=(x,\zeta,u,w)\in \mathscr{D}_2$, where $\mathscr D_2=\mathscr D_1\times (\bar B(R_0))^2$, and define
\begin{equation}\label{eqn:shorthand}
            J^{\psi}_{j,r}(\theta^h_{r,\tau}) =J_j(Z^{x,\zeta}_\frac{r}{L},\Psi^{w,\zeta}_\frac{r}{L},u,\theta^h_{r,\tau}), \qquad j=1,2,3.
\end{equation}
     Since an integral with respect to the measure $\zeta^{\otimes 2}$ is bounded above by the supremum of the integrand over the support of $\zeta^{\otimes 2}$, it follows that
    \begin{align}\label{eqn:EnormK}
\mathbb E^1\bigg[ \sup_{(x,\zeta,u)\in \mathscr D_1}\frac{1}{LH}&\sum_{h=1}^H\sum_{r=0}^{l-1}\epsilon_{r,h}\;u^Tf\left(Z^{x,\zeta}_\frac{r}{L},\mathfrak{r}^{\zeta}_\frac{r}{L},\theta_{r,\tau}^h\right)\bigg]
\\&\le \mathbb E^1\left[ \sup_{(x,\zeta,u,w)\in \mathscr D_2}\frac{1}{LH}\sum_{h=1}^{H}\sum_{r=0}^{l-1}\epsilon_{r,h}\frac{ \exp\left(J^{x,\zeta,u,w}_{1,r}(\theta^h_{r,\tau})\right)J^{x,\zeta,u,w}_{2,r}(\theta^h_{r,\tau})}{F_2(Z^{x,\zeta}_\frac{r}{L},\mathfrak{r}^{\zeta}_\frac{r}{L},\theta^h_{r,\tau})}\right].\notag
\end{align}
 Define the map $\mathcal{Q}:\R^3\mapsto \R$ by
\begin{align*}
&C'=\left(f_{\mathrm{max}}\vee 1 \right)e^{3f_\mathrm{max}},\\ &\mathcal{Q}(z_1,z_2,z_3)= \frac{1}{C'}\begin{cases}
    \frac{ exp(z_1)z_2}{z_3} &\text{if } z_3>\exp(-f_\mathrm{max})\\
    \exp(z_1) z_2z_3 &\text{otherwise}
\end{cases},
\end{align*}
which is zero when $z_2$ is zero and $1$-Lipchitz, with respect to the $1$-norm, on $[-f_\mathrm{max},f_\mathrm{max}]\times[-f_\mathrm{max},f_\mathrm{max}]\times[0,\infty)$. Accordingly, the map given by 
\[
\widetilde{\mathcal Q}(z_1,z_2,z_3) =\mathcal Q\left((z_1\wedge f_\mathrm{max})\vee(-f_\mathrm{max}),\ (z_2\wedge f_\mathrm{max})\vee(-f_\mathrm{max}), \ z_3\vee 0\right)
\]
is 1-Lipschitz continuous with respect to the $1$-norm on $\R^3$, by composition of Lipschitz functions. Notice that $\vert J_1^\psi(\theta)\vert$ and $\vert J_2^\psi(\theta)\vert$ are bounded by $f_\mathrm{max}$ for all $\psi\in \mathscr{D}_2$ and $\theta \in\mathfrak{S}$ by Assumption \ref{assm:StructuredF}, which applies due to the boundedness of $Z^{x,\zeta}_t$ from Assumption \ref{assm:gevrey}. Similarly, \\$F_2(Z^{x,\zeta}_\frac{r}{L},\mathfrak{r}^\zeta_\frac{r}{L}, \theta)>\exp(-f_{\mathrm{max}})$ for all $(x,\zeta)\in \mathscr{D}$ and $\theta \in \mathfrak{S}$. Then, the term inside the summation on the right-hand side of \eqref{eqn:EnormK} coincides with $\widetilde{\mathcal  Q}$ with $z_1,z_2$ evaluated at $J_1^{\psi}(\theta^h_{r,\tau}),J_2^\psi(\theta^h_{r,\tau})$ and $z_3$ at $F_2$. Therefore, the multivariate Talagrand-Ledoux contraction inequality, Theorem 16.3 in \cite{vandeGeer2016}, applies to equation \eqref{eqn:EnormK}. We take the reference function $f^*$ in Theorem 16.3 in \cite{vandeGeer2016} to be identically zero as $\widetilde{\mathcal Q}(0)=0$. As a result, there exists a constant $C''>0$ such that, for any family of $i.i.d.$ random variables $\{\xi_{r,h,i}\}$, independent of $((\theta_{r,0}^h)_{r=0}^{L-1})_{h=1}^H$ defined on $(\Omega^1,\mathcal F^1)$ with $\mathcal \xi_{r,h,i}\sim \mathcal N(0,1)$ for $i\in\{ 1,2,3\},\,0\le r <L$ and $1\le h\le H$, we have 
    \begin{align}
           \mathbb E^1\Bigg[& \sup_{(x,\zeta,u,w)\in \mathscr D_2}\frac{1}{LH}\sum_{h=1}^H\sum_{r=0}^{l-1}\epsilon_{r,h}\frac{ \exp\left(J^{x,\zeta,u,w}_{1,r}(\theta^h_{r,\tau})\right) J^{x,\zeta,u,w}_{2,r}(\theta^h_{r,\tau})}{F_2(Z^{x,\zeta}_\frac{r}{L},\mathfrak{r}^{\zeta}_\frac{r}{L},\theta^h_{r,\tau})}\Bigg]\notag\\&=  C' \mathbb E^1\left[ \sup_{(x,\zeta,u,w)\in \mathscr D_2}\sum_{h=1}^H\sum_{r=0}^{l-1}\frac{\epsilon_{r,h}}{LH}\; \widetilde{\mathcal{Q}}\left(J^{x,\zeta,u,y}_{1,r}(\theta^h_{r,\tau}), J^{x,\zeta,u,y}_{2,r}(\theta^h_{r,\tau}),F_2(Z^{\zeta}_\frac{r}{L},\mathfrak{r}^{\zeta}_\frac{r}{L},\theta^h_{r,\tau})\right)\right]\notag\\&\le C''\mathbb E^1\left[\sup_{\psi \in \mathscr D_2}\frac{1}{LH}\sum_{h=1}^H\sum_{r=0}^{l-1} \xi_{r,h,1} J^{\psi}_{1,r}(\theta^h_{r,\tau}) +  \xi_{r,h,2}J^{\psi}_{2,r}(\theta^h_{r,\tau}) \right]\notag\\& +C''\mathbb E^1\left[\sup_{(x,\zeta)\in \mathscr D}\frac{1}{LH}\sum_{h=1}^H\sum_{r=0}^{l-1}\xi_{r,h,3} F_2(Z^{x,\zeta}_\frac{r}{L},\mathfrak{r}^{\zeta}_\frac{r}{L},\theta^h_{r,\tau})\right].\label{eqn:J4}
    \end{align}
 Although Theorem 16.3 in \cite{vandeGeer2016} is for deterministic $\theta^h_{r,\tau}$, we proceed by taking the expectation conditioned on $\theta^h_r$ first, then applying Theorem 16.3 in \cite{vandeGeer2016} and finally taking the expectation over the parameters. A more rigorous version of this argument is given at the end of this proof. Again, $F_2$ is linear in its measure argument, and so
\begin{align}\label{eqn:J5}
     \mathbb E^1\bigg[\sup_{(x,\zeta)\in \mathscr D}\sum_{h=1}^H\sum_{r=0}^{l-1}\frac{\xi_{r,h,3}}{LH}\,& F_2(Z^{x,\zeta}_\frac{r}{L},\mathfrak{r}^{\zeta}_\frac{r}{L},\theta^h_{r,\tau})\bigg]\\&= \mathbb E^1\left[\sup_{(x,\zeta,u)\in \mathcal D_1}\int\sum_{h=1}^H\sum_{r=0}^{l-1}\frac{\xi_{r,h,3}}{LH}\, \exp\left( J_{3,r}^{x,\zeta,u,w}(\theta^h_{r,\tau})\right)d\zeta^{\otimes 2}(w)\right]
     \notag\\&\le \mathbb E^1\left[\sup_{\psi\in \mathscr D_2}\frac{1}{LH}\sum_{h=1}^H\sum_{r=0}^{l-1}\xi_{r,h,3}\, \exp\left( J_{3,r}^{\psi}(\theta^h_{r,\tau})\right)\right].\notag
\end{align}
We would like to apply the Ledoux-Talagrand contraction inequality to the above expression. However, this only applies when the noise is Rademacher. Instead, we construct two collections of Gaussian random variables and apply the Sudakov-Fernique Theorem to compare their maximum, Corollary 3.14 in \cite{Ledoux1991}. Let $\varphi\in\mathfrak{S}^{l\times H}$ and define $\mathcal J^\psi,\mathscr{J}^\psi:\mathfrak{S}^{l\times H}\to \R $ by
\[
\mathcal J^\psi(\phi) = \sum_{h=1}
^H\sum_{r=0}^{l-1}\frac{\xi_{r,h,3}}{LH}\, \exp\left( J^\psi_{3,r}(\phi_{r,h})\right), \qquad \mathscr J^\psi(\phi) = \exp(f_\mathrm{max})\sum_{h=1}
^H\sum_{r=0}^{l-1} \frac{\xi_{r,h,3}}{LH}\,  J^\psi_{3,r}(\phi_{r,h}). \]
For any deterministic $\phi\in \mathfrak{S}^{l\times H}$ and $\psi_1,\psi_2\in \mathscr D_2$, by applying the independence and zero mean of $\xi_{r,h,3}$, we get 
\begin{align*}
    \E^1[|\mathcal J^{\psi_1}(\varphi)- \mathcal J^{\psi_2}(\varphi)|^2]&=\mathbb E^1\Bigg[\bigg(\sum_{h=1}^H\sum_{r=0}^{l-1}\frac{\xi_{r,h,3}}{LH}\;\left\{ \exp\left( J^{\psi_1}_{3,r}(\varphi_{r,h})\right)- \exp\left( J^{\psi_2}_{3,r}(\varphi_{r,h})\right)\right\}
\bigg)^2\Bigg]\\ & = \mathbb E^1\Bigg[\frac{1}{L^2H^2}\sum_{h=1}^H\sum_{r=0}^{l-1}\xi^2_{r,h,3}\;\left\{ \exp\left( J^{\psi_1}_{3,r}(\varphi_{r,h})\right)- \exp\left(J^{\psi_2}_{3,r}(\varphi_{r,h})\right)\right\}
^2\Bigg]
\end{align*}
As a result of Assumptions \ref{assm:StructuredF} and the boundedness of $Z^{x,\zeta}_t$ from Assumption \ref{assm:gevrey}, $|J^\psi_3(\theta)|\le f_{\mathrm{max}}$ for every $\psi\in \mathscr{D}_2$ and $\theta\in \mathfrak{S}$. Therefore, the function $\exp$ is $\exp(f_\mathrm{max})$-Lipschitz on the range of $J^\psi_3(\theta)$. This yields
\begin{align*}
\mathbb E^1\Bigg[\bigg(\sum_{h=1}^H\sum_{r=0}^{l-1}&\frac{\xi_{r,h,3}}{LH}\;\left\{ \exp\left( J^{\psi_1}_{3,r}(\varphi_{r,h})\right)- \exp\left( J^{\psi_2}_{3,r}(\varphi_{r,h})\right)\right\}
\bigg)^2\Bigg]
    \\& \le e^{2f_{\mathrm{max}}}\mathbb E^1\Bigg[\frac{1}{L^2H^2}\sum_{h=1}^H\sum_{r=0}^{l-1}\xi^2_{r,h,3}\;\left\{J^{\psi_1}_{3,r}(\varphi_{r,h})-J^{\psi_2}_{3,r}(\varphi_{r,h})\right\}
^2\Bigg]\\
& = \mathbb E^1\Bigg[\bigg( \mathscr{J}^{\psi_1}(\varphi)-\mathscr{J}^{\psi_2}(\varphi)\bigg)
^2\Bigg].
\end{align*}
Since $\mathscr{D}_2$, equipped with the product metric induced by the Euclidean and $W_2$ metrics, is separable, there exists a countable dense subset $\widetilde{\mathscr{D}}_2\subseteq \mathscr{D}_2$. The separability of $\mathcal P(\bar B(R_0))$ is due to Theorem 6.18 in \cite{villani2008optimal}. Let $\widetilde{\mathscr{D}}^n_2$ be an increasing sequence of subsets of $\mathscr{D}_2$ with $|\widetilde{\mathscr{D}}^n_2|=n$, such that
\[
\bigcup_{n\ge 0}\widetilde{\mathscr{D}}^n_2= \widetilde{\mathscr{D}}_2.
\]
Then, for $\mathbb P^0$-a.e. $\omega^0\in \Omega^0$, and any deterministic $\varphi\in \mathfrak{S}^{l\times H}$, and for $\bm\psi=(\psi_1,\ldots\psi_n)$ an enumeration of $\mathscr{D}^n_2$, the vectors in $\R^n$ given by
\[
\underline{\mathcal J}^{\bm{\psi}}(\varphi):=(\mathcal J^{\psi_1}(\varphi),\ldots,\mathcal J^{\psi_n}(\varphi)),\qquad \underline{\mathscr{J}}^{\bm{\psi}}(\varphi):= (\mathscr J^{\psi_1}(\varphi),\ldots,\mathscr J^{\psi_n}(\varphi)),
\]
are mean-zero Gaussians on $(\Omega^1,\mathcal F^1,\mathbb P^1)$, since each component is a linear combination of independent Gaussian random variables. The above calculation shows that $\underline{\mathcal J}^{\bm{\psi}}(\varphi), \underline{\mathscr J}^{\bm{\psi}}(\varphi)$ satisfies the conditions for the Corollary 3.14 of \cite{Ledoux1991}, which is a Corollary to the Sudakov-Fernique Theorem. By applying this Corollary, we obtain
\[
\E^1\left[\max_{i\in[1:n]}\mathcal J^{\psi_i}(\varphi)\right]\le2 \E^1\left[ \max_{i\in[1:n]}\mathscr J^{\psi_i}(\varphi)\right]\le 2\E^1\left[\sup_{\psi\in \widetilde{\mathscr{D}}_2}\mathscr J^{\psi}(\varphi)\right].
\]
By applying the Monotone Convergence Theorem to the left-hand side, we get
\[
\E^1\left[\sup_{\psi\in \widetilde{\mathscr{D}}_2}\mathcal J^{\psi}(\varphi)\right]\le2 \E^1\left[\sup_{\psi\in \widetilde{\mathscr{D}}_2}\mathscr J^{\psi}(\varphi)\right].
\]
Recall that $f_3=J_3$ is jointly continuous in each variable, Assumption \ref{assm:StructuredF}, and the maps \\$(t,x,\zeta)\mapsto Z_t^{x,\zeta}$ and $(t,w,\zeta)\mapsto \Psi_t^{w,\zeta}$ are jointly continuous by Assumption \ref{assm:cont} for every $\omega^0\in \Omega^0$. Therefore, $\mathcal J^{\psi}(\varphi)$ and $\mathscr J^{\psi}(\varphi)$ are continuous in $\psi$ on $\mathscr{D}$. This implies that the supremum of $\mathcal J^{\psi}(\varphi)$ and $\mathscr J^{\psi}(\varphi)$ over the countable dense subset $\widetilde{\mathscr D}_2$ coincides with that over $\mathscr{D}_2$. Since $\Theta =\{\theta_{0}^1,\ldots,\theta^H_{L-1} \}$is $\mathcal G_l$-measurable, while $((\xi_{r,h,3})_{r=0}^{L-1})_{h=1}^H$ are independent of $\mathcal G_l$, it follows from Example 4.1.7 in \cite{Durrett_2019} and the tower property for conditional expectation that
\begin{align*}
    \mathbb E^1\left[\sup_{\psi \in \mathscr D_2}\sum_{h=1}^H\sum_{r=0}^{l-1}\frac{\xi_{r,h,3}}{LH}\,\exp\left( J^{\psi}_{3,r}(\theta^h_{r,\tau})\right)\right]&= \E^1\left[\E^1\left[\sup_{\psi\in \widetilde{\mathscr{D}}_2}\mathcal J^\psi(\Theta)\middle|\mathcal G_l\right]\right]\\
    &= \E^1\left[\E^1\left[\sup_{\psi\in \widetilde{\mathscr{D}}_2}\mathcal J^\psi(\phi)\right]\middle|_{\phi=\Theta}\right].
\end{align*}
By applying the bound on the expected surpemum of $\mathcal J^\psi$ by that on $\mathscr{J}^\psi$ derived above, we get
\begin{align*}
    \mathbb E^1\left[\sup_{\psi \in \mathscr D_2}\sum_{h=1}^H\sum_{r=0}^{l-1}\frac{\xi_{r,h,3}}{LH}\,\exp\left( J^{\psi}_{3,r}(\theta^h_{r,\tau})\right)\right]&\le 2\E^1\left[\E^1\left[\sup_{\psi\in \widetilde{\mathscr{D}}_2}\mathscr J^\psi(\Theta)\middle|\mathcal G_l\right]\right]
    \\&=2 \mathbb E^1\left[\sup_{\psi \in \mathscr D_2}\frac{e^{f_{\mathrm{max}}}}{LH}\sum_{h=1}^H\sum_{r=0}^{l-1}\xi_{r,h,3}\;  J^{\psi}_{3,r}(\theta^h_{r,\tau})\right].
\end{align*}
Substituting this bound back into equation \eqref{eqn:J5} and then into \eqref{eqn:J4} concludes the Lemma.
\end{proof}
Example choices for $J_2$, which we will make in Appendix \ref{app:J}, are
\[
J_2^{\gamma}(z,w,u,\theta) = \langle\theta_O u,\theta_V w_1\rangle, \qquad J_2^{\gamma z}(z,w,u,\theta)  = \beta \langle\theta_Q u,\theta_K(w_1-w_3)\rangle\langle\theta_O z_2,\theta_V w_1\rangle,
\]
 corresponding to $F=\Gamma$ and $\nabla_x \mathcal H$, respectively, where the subscript $j$ indexes the $j$-th block of $d$ components of  $z\in \R^{2d}$ and $w\in \R^{4d}$. The structures of $J_1,J_3$ are similar. Note that, for any $u\in \R^d$, $w\in \R^{4d}$ and $\phi^1,\ldots,\phi^H\in \mathfrak{S}$,
\[
\sum_{h=1}^H J_2^\gamma(z,w,u,\phi^h) = u^T\left(\sum_{h=1}^H(\phi_O^h)^T\phi^h_V \right)w_1.
\]
Hence, when $(\phi^h_O)^T\phi_V^h$ are $i.i.d.$ with zero mean, the expected Frobenius norm of the sum scales like $\sqrt H$. However, such an interchange between  $J_2^{\gamma z}$ and summation is no longer true. Instead, we informally have that
\[
J_2^{\gamma z}(z,w,u,\theta) \;\text{``=''}\;\beta (u\otimes z_2)^T\left(\theta_Q^T\theta_K\otimes \theta_O^T\theta_V \right)((w_1-w_3) \otimes w_1).
\] 
As a result, a higher-order tensor analog occurs rather than a matrix sum, which we summarise by Assumption \ref{assm:MultiLinear}. This structure allows us to compute explicit bounds on the suprema rather than rely on Theorems, such as Dudley's Metric Integral Theorem (5.22 in \cite{Wainwright_2019}), to bound the expected suprema. Therefore, the argument will not require any coverings of the space the suprema is taken over, which makes the bounds independent of the number of tokens and dimension.
\begin{assumption}\label{assm:MultiLinear}
For some $\bm q\in\{0,1,2\}^3$, there exists a tensor-valued map
\[
\widetilde J:\mathfrak S \mapsto  (\mathbb R^{2d})^{\otimes q_1}\otimes(\mathbb R^{4d})^{\otimes q_2}\otimes (\R^d)^{\otimes q_3}
\]
 such that for all $(z,w,u,\theta)\in \mathbb R^{2d}\times \mathbb R^{4d}\times \mathbb R^d\times \mathfrak S$, the map $J$ takes the form
\begin{align*}
J(z,w,u,\theta)
=
    \left\langle \widetilde J(\theta),\, z^{\otimes q_1} \otimes w^{\otimes q_2}\otimes u^{\otimes q_3}\right\rangle_{\mathrm{HS}},
\end{align*}
where $\langle .,.\rangle_\mathrm{HS}$ is the Hilbert-Schmidt inner product. Here, to account for the cases when $J$ is independent of $z$ or $u$, we use the convention $(\R^d)^{\otimes 0}=\R$ and for any $u \in \R^d$ we define $u^{\otimes 0}=1$. Furthermore, there exists $ C_4=C_4(\beta,R_\theta)$ such that for any $\theta\in \mathfrak{S}$
\[
\left\Vert\widetilde J(\theta) \right\Vert_\mathrm{HS}\le C_4 .
\]
\end{assumption}
\begin{corollary}\label{corr:MultiLinearBound} 
Given $l,n\ge 1$ and $M,R>0$, take $u \in \mathbb S^{d-1},z_j\in \R^{2d},w_j\in\R^{4d}$, $\theta_s\in \mathfrak{S}$ and $e_s, c_{s} \in \R$ such that
\[
\Vert z_j\Vert,\Vert w_j\Vert \le MR^j
\]
for $j\in[0:n-1], s\in[1:l]$. Suppose that $J$ satisfies Assumption \ref{assm:MultiLinear} for some $\bm q\in \{0,1,2\}^3$. Then, with $q'=q_1+q_2$, we have
\begin{align*}
    \left|\sum_{s=1}^{l} e_s J\left(\sum_{j=0}^{n-1} z_j c^j_{s} ,\ \sum_{j=0}^{n-1} w_jc^j_{s},\  u,\theta_s\right) \right|\le M^{q'} \sum_{j_{1},\ldots, j_{q'}=0}^{n-1} \left\Vert \sum_{s=1}^l (Rc_s)^{j_1+\ldots +j_{q'}} \,e_s \widetilde J(\theta_s) \right\Vert_\mathrm{HS},
\end{align*}
where the superscript $j$ on $c_s$ denote the $j$-th power of $c_s$.
\end{corollary}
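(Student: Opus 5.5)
The plan is to expand $J$ multilinearly and then apply the triangle inequality together with Cauchy--Schwarz for the Hilbert--Schmidt inner product. By Assumption \ref{assm:MultiLinear},
\[
J\Big(\sum_j z_j c_s^j,\sum_j w_j c_s^j,u,\theta_s\Big)=\Big\langle \widetilde J(\theta_s),\Big(\sum_j z_jc_s^j\Big)^{\otimes q_1}\otimes\Big(\sum_j w_jc_s^j\Big)^{\otimes q_2}\otimes u^{\otimes q_3}\Big\rangle_{\mathrm{HS}} .
\]
Since the tensor product is multilinear, the right-hand argument expands as a sum over multi-indices $(j_1,\dots,j_{q'})\in[0:n-1]^{q'}$. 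The first $q_1$ slots carry $z_{j_1},\dots,z_{j_{q_1}}$ and the next $q_2$ slots carry $w_{j_{q_1+1}},\dots,w_{j_{q'}}$. Each term has the scalar coefficient $c_s^{j_1+\dots+j_{q'}}$.

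The key observation is that, for a fixed multi-index, the elementary tensor
\[
T_{j_1,\dots,j_{q'}}:=z_{j_1}\otimes\cdots\otimes z_{j_{q_1}}\otimes w_{j_{q_1+1}}\otimes\cdots\otimes w_{j_{q'}}\otimes u^{\otimes q_3}
\]
does not depend on $s$. This allows us to interchange the sums over $s$ and over the multi-index, and to pull $T$ out of the $s$-sum by linearity of the HS inner product in its first argument:
\[
\sum_s e_s J(\cdots)=\sum_{j_1,\dots,j_{q'}}\Big\langle \sum_{s=1}^l c_s^{j_1+\dots+j_{q'}}e_s\widetilde J(\theta_s),\,T_{j_1,\dots,j_{q'}}\Big\rangle_{\mathrm{HS}} .
\]

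Next we take absolute values, use the triangle inequality over the multi-index, and apply Cauchy--Schwarz to each inner product. The HS norm of an elementary tensor is the product of the Euclidean norms of its factors, and $\Vert u\Vert=1$. Together with the hypothesis $\Vert z_j\Vert,\Vert w_j\Vert\le MR^j$, this gives
\[
\Vert T_{j_1,\dots,j_{q'}}\Vert_{\mathrm{HS}}\le M^{q'}R^{j_1+\dots+j_{q'}} .
\]
The factor $R^{j_1+\dots+j_{q'}}$ is a nonnegative scalar independent of $s$, so it can be absorbed inside the HS norm of the $s$-sum, which turns $c_s^{j_1+\dots+j_{q'}}$ into $(Rc_s)^{j_1+\dots+j_{q'}}$. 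This yields exactly the stated bound.

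The only genuinely delicate point is bookkeeping rather than analysis. One has to check that the expansion is consistent with the ordering of tensor factors in Assumption \ref{assm:MultiLinear}, and that the degenerate cases $q_1=0$, $q_2=0$ or $q_3=0$ are covered. These are handled by the convention $v^{\otimes0}=1$: the corresponding factor is simply absent, and an empty multi-index sum with $q'=0$ reduces to a single term with $M^0=1$.
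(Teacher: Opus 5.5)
Your proposal is correct and follows essentially the same route as the paper: expand $(\sum_j z_jc_s^j)^{\otimes q_1}\otimes(\sum_j w_jc_s^j)^{\otimes q_2}$ multilinearly and move the $s$-sum into the first slot of the Hilbert--Schmidt inner product. Then apply the triangle inequality over multi-indices, Cauchy--Schwarz, and multiplicativity of the HS norm under tensor products, absorbing $R^{j_1+\cdots+j_{q'}}$ into the norm, with the degenerate cases handled by the same convention $v^{\otimes 0}=1$ that the paper uses.
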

\begin{proof}
For $v_0,\ldots, v_{n-1}\in \R^d$, and $q\in \mathbb Z_{
>0}$, we have the following identity
\[
\left(\sum_{j=0}^{n-1}v_j\right)^{\otimes q} =\sum_{j_1,\ldots, j_q=0}^{n-1} \bigotimes_{i=1}^qv_{j_i}.
\]
Since $J$ satisfies Assumption \ref{assm:MultiLinear}, and using the above identity, we get
\begin{align*}
     \bigg|\sum_{s=1}^{l} e_s J&\bigg(\sum_{j=0}^{n-1} z_j (c_{s})^j,\sum_{j=0}^{n-1} w_j(c_{s})^j, u,\theta_s\bigg) \bigg|\\&= \left|\sum_{s=1}^{l} e_s \left\langle \widetilde J(\theta_s),\left(\sum_{j=0}^{n-1} z_j (c_{s})^j\right)^{\otimes q_1}\otimes\left(\sum_{j=0}^{n-1} w_j(c_{s})^j\right)^{\otimes q_2}\otimes u^{\otimes q_3}\right\rangle_\mathrm{HS} \right|\\&=
     \left|\sum_{s=1}^{l} e_s \left\langle \widetilde J(\theta_s),\left(\sum_{j_1,\ldots ,j_{q_1}=0}^{n-1} \bigotimes_{i=1}^{q_1} z_{j_i} (c_{s})^{j_i}\right)\otimes\left(\sum_{j_1,\ldots, j_{q_2}=0}^{n-1} \bigotimes_{i=1}^{q_2} z_{j_i} (c_{s})^{j_i}\right)\otimes u^{\otimes q_3}\right\rangle_\mathrm{HS} \right|
     \\
     & \le\sum_{j_1,\ldots,j_{q'}=0}^{n-1} \left| \left\langle\sum_{s=1}^{l} e_s c^{j_1+\ldots+j_{q'}}_{s}\widetilde J(\theta_s),\bigotimes_{i=1}^{q_1}z_{j_i}\otimes\bigotimes_{i=q_1+1}^{q'}w_{j_i}\otimes u^{\otimes q_3}\right\rangle_\mathrm{HS} \right|,
\end{align*}
with the convention that $\bigotimes_{i=1}^0z_{j_i}=1$.
Then, by the Cauchy-Schwarz inequality, the fact that the Hilbert-Schmidt norm is multiplicative under tensor products and the assumption that the norms of $w_j,z_j$ scale with $R^j$, we have
\begin{align*}
     \bigg|\sum_{s=1}^{l} e_s J\bigg(\sum_{j=0}^{n-1} z_j c^j_{s}&,\sum_{j=0}^{n-1} w_jc^j_{s}, u,\theta_s\bigg) \bigg| \le M^{q'}\sum_{j_1,\ldots,j_{q'}=0}^{n-1}  \left\Vert\sum_{s=1}^{l} e_s (Rc_s)^{j_1+\ldots+j_{q'}}\widetilde J(\theta_s)\right\Vert_\mathrm{HS} .
\end{align*}
\end{proof}
\begin{corollary}\label{corr:LipJ}
    Suppose that Assumption \ref{assm:MultiLinear} holds for some $\bm q\in \{0,1,2\}^3$ and fix $R>0$. Then, there exists a constant $\Lambda_J=\Lambda_J(\bm q,R)$ such that for any $z_1,z_2\in \bar B_{2d}(R), w_1,w_2\in (\bar B_{2d}(R))^2$, $u \in \mathbb S^{d-1}$, $ e_1,\ldots ,e_l \in \R$ and $\theta_1 ,\ldots,\theta_l\in\ \mathfrak{S}$, we have 
\begin{equation*}
    \left|\sum_{s=1}^l e_s \left( J(z_1,w_1,u,\theta_s)-J(z_2,w_2,u,\theta_s)\right)\right|\le\Lambda_J\left( \Vert z_1-z_2 \Vert + \Vert w_1-w_2 \Vert \right)\left\Vert \sum_{s=1}^l e_s \widetilde J(\theta_s)\right\Vert_\mathrm{HS}.
\end{equation*}
\end{corollary}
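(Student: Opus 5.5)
The plan is to expand the multilinear form of Assumption \ref{assm:MultiLinear} and telescope the difference one tensor slot at a time. With $q'=q_1+q_2$ we have
\[
\sum_{s} e_s\big(J(z_1,w_1,u,\theta_s)-J(z_2,w_2,u,\theta_s)\big)=\Big\langle \sum_s e_s\widetilde J(\theta_s),\; z_1^{\otimes q_1}\otimes w_1^{\otimes q_2}\otimes u^{\otimes q_3}-z_2^{\otimes q_1}\otimes w_2^{\otimes q_2}\otimes u^{\otimes q_3}\Big\rangle_{\mathrm{HS}}.
\]
This identity holds because $J$ is linear in $\widetilde J(\theta)$ and the right-hand test tensor does not depend on $s$. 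That independence is the whole point: the sum over $s$ can be pulled inside the inner product before anything is estimated.

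Cauchy--Schwarz for the Hilbert--Schmidt inner product then bounds the absolute value by $\Vert\sum_s e_s\widetilde J(\theta_s)\Vert_{\mathrm{HS}}$ times the HS norm of the difference of the two rank-one tensors. It remains to control that difference.

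I write the difference of pure tensors as a telescoping sum over the $q'\le 4$ slots, replacing one factor at a time:
\[
a_1\otimes\cdots\otimes a_{q'}-b_1\otimes\cdots\otimes b_{q'}=\sum_{i=1}^{q'} b_1\otimes\cdots\otimes b_{i-1}\otimes(a_i-b_i)\otimes a_{i+1}\otimes\cdots\otimes a_{q'},
\]
with the factor $u^{\otimes q_3}$ appended unchanged to every term. Because the HS norm is multiplicative under tensor products, $\Vert u\Vert=1$, and every factor has norm at most $2R$ (the hypothesis $w_i\in(\bar B_{2d}(R))^2$ gives $\Vert w_i\Vert\le\sqrt2 R$), each term is bounded by $(2R)^{q'-1}$ times $\Vert z_1-z_2\Vert$ or $\Vert w_1-w_2\Vert$. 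Summing the terms gives
\[
\Lambda_J=q'(2R)^{q'-1}\le 4(2R)^3\vee 1,
\]
which depends only on $\bm q$ and $R$. The degenerate cases $q'=0$ or $q_1=0$ are trivial: either the difference vanishes or the corresponding slots are simply absent.

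I expect no genuine obstacle. The only point needing care is making sure the constant does not depend on $d$, $k$ or $l$. This holds because Cauchy--Schwarz is applied once, to the summed tensor $\sum_s e_s\widetilde J(\theta_s)$, rather than term by term.
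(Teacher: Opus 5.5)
Your proposal is correct and is essentially the paper's argument. You pull the $s$-sum inside the Hilbert--Schmidt inner product (the test tensor does not depend on $s$), apply Cauchy--Schwarz once, and bound the difference of rank-one tensors by telescoping; the paper does the same, first splitting into a $z$-part and a $w$-part and then using $\Vert v_1^{\otimes q}-v_2^{\otimes q}\Vert_{\mathrm{HS}}\le qR^{q-1}\Vert v_1-v_2\Vert$. The only slip is cosmetic: $q'(2R)^{q'-1}\le 4(2R)^3\vee 1$ fails for some $R<1/2$ (e.g.\ $q'=2$, $2R=0.6$ gives $1.2>1$), so state the uniform bound as $4(1\vee 2R)^3$ instead.
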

\begin{proof}
Since $J$ satisfies \ref{assm:MultiLinear}, we get that
\begin{align*}
  \bigg|\sum_{s=1}^l &e_s \bigg( J(z_1,w_1,u,\theta_s)-J(z_2,w_2,u,\theta_s)\bigg)\bigg|\\&\le  \left\vert \left\langle \sum_{s=1}^m  e_s \widetilde J(\theta_s), \left(\left(z_1^{\otimes q_1}-z_2^{\otimes q_1}\right)\otimes w_1^{q_2}+z_2^{\otimes q_1}\otimes \left(w_1^{q_2}-w_2^{\otimes q_2}\right)\right)\otimes u^{\otimes q_3}\right\rangle_\mathrm{HS}\right\vert. 
\end{align*}
Therefore, by the Cauchy-Schwarz inequality and using that the Hilbert-Schmidt norm is multiplicative under tensor products, we get
\begin{align*}
  \bigg|\sum_{s=1}^l &e_s \bigg( J(z_1,w_1,u,\theta_s)-J(z_2,w_2,u,\theta_s)\bigg)\bigg|\\&\le   \left((\sqrt{2}R)^{q_2}\left\Vert z_1^{\otimes q_1}-z_2^{\otimes q_1}\right\Vert_{\mathrm{HS}}+R^{q_1} \Vert w_1^{\otimes  q_2}-w_2^{\otimes q_2}\Vert_{\mathrm{HS}}\right)\left\Vert  \sum_{s=1}^m  e_s \widetilde J(\theta_s)\right\Vert_\mathrm{HS}. 
\end{align*}
The Corollary follows from the fact that, for any $v_1,v_2\in \bar B(R)$ and $q\ge1 $,
\[
\Vert v_1^{\otimes q}-v_2^{\otimes q}\Vert_\mathrm{HS}\le qR^{q-1}\Vert v_1-v_2\Vert.
\]
If $q= 0$, we use the convention that  $v_1^{\otimes 0}=v_2^{\otimes 0}=1$, so that $\Vert v_1^{\otimes 0}-v_2^{\otimes 0}\Vert_\mathrm{HS}=0\le \Vert v_1-v_2\Vert$.
\end{proof}

\begin{lemma}\label{lemma:TaylorApproxE}
Suppose that the flow map $Z_t^{x,\zeta}$, with $ \Psi_t^{w,\zeta}$ defined in \eqref{eqn:prodmeasurefm}, satisfies Assumptions \ref{assm:cont} and \ref{assm:gevrey} for some integer $n\ge 1$ and that $J$ satisfies Assumption \ref{assm:MultiLinear} for some $\bm q\in\{0,1,2\}^3$.
Let $\xi_{r,h}$ be $i.i.d.$ samples from $\mathcal N(0,1)$, independent of $((\theta^h_{r,0})_{r=0}^{L-1})_{h=1}^H$ defined on $(\Omega^1,\mathcal F^1,\mathbb P^1)$, for $r \in[0:l-1]$ and $h \in[1:H]$. Then, there exists a constant $C_{M,J}= C_{M,J}(\bm q,\beta,C_Z,R_Z,n)$ such that for any training step $\tau \in[0:T]$ and layer number $r'\in[1:L]$, we have
    \begin{align}
        \mathbb E^1\left[ \sup_{(x,\zeta,u,w)\in\mathscr D_2}\frac{1}{LH}\sum_{r=0}^{r'-1}\sum_{h=1}^H \xi_{r,h} J(Z^{x,\zeta}_\frac{r}{L}, \Psi^{w,\zeta}_\frac{r}{L} ,u, \theta^h_{r,\tau})\right]&\le C_{M,J}\left(\frac{1}{L}+ \frac{1}{L^{\frac{n}{2n+1}}H^{1/2}} \right),\label{eqn:EsupMathcalJ}
    \end{align}
$\mathbb P^0$-almost surely. 
\end{lemma}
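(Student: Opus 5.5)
The plan follows the blocking-and-Taylor-expansion strategy from the sketch proof. Fix a block length $m\in[1:L]$ and split the layers $r\in[0:r'-1]$ into $\lceil r'/m\rceil$ consecutive blocks $\mathcal B_b=\{bm,\dots,(b+1)m-1\}\cap[0:r'-1]$.

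\textbf{Step 1 (Taylor split).} On each block, expand $Z^{x,\zeta}_{r/L}$ and $\Psi^{w,\zeta}_{r/L}$ about the block's left endpoint $t_b=bm/L$ to order $n-1$:
\[
Z^{x,\zeta}_{r/L}=\sum_{j=0}^{n-1}\frac{\partial_t^jZ^{x,\zeta}_{t_b}}{j!}\Big(\frac{r-bm}{L}\Big)^j+\mathcal E^{x,\zeta}_{r},
\]
and similarly for $\Psi$, whose components are just $Z^{y_i,\zeta}$. By Assumption \ref{assm:gevrey}, the coefficients $z_j=\partial^j_tZ_{t_b}/j!$ satisfy $\Vert z_j\Vert\le C_ZR_Z^j$, and $\Vert w_j\Vert\le\sqrt2C_ZR_Z^j$. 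The remainder satisfies $\Vert\mathcal E_r\Vert\le C_Z(R_Zm/L)^n$, uniformly on $\mathscr D_2$.

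\textbf{Step 2 (remainder).} Apply Corollary \ref{corr:LipJ} with $e_s=\xi_{r,h}/(LH)$, taking one $(r,h)$ at a time. Replacing the true arguments by the Taylor polynomials costs at most
\[
\Lambda_J\,C_Z(R_Zm/L)^n\,\frac{1}{LH}\sum_{r,h}|\xi_{r,h}|\,\Vert\widetilde J(\theta^h_{r,\tau})\Vert_{\mathrm{HS}},
\]
whose expectation is $\lesssim (m/L)^n$. This uses $\Vert\widetilde J\Vert_{\mathrm{HS}}\le C_4$, which holds because the supports lie in $\mathfrak S$ by Lemma \ref{lemma:GlobalWellDefined}. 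The supremum over $\psi$ enters only through a deterministic multiplier, so no covering argument is needed. A priori the Taylor polynomial may leave the ball on which Corollary \ref{corr:LipJ} is stated, but it stays within radius $C_Z\sum_j (R_Zm/L)^j$, which is bounded once $m/L$ is small. For $m/L$ not small, the bound is trivial anyway.

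\textbf{Step 3 (factor out $\psi$).} For the polynomial part, apply Corollary \ref{corr:MultiLinearBound} blockwise with $s=(r,h)\in\mathcal B_b\times[1:H]$, $c_s=(r-bm)/L$, $e_s=\xi_{r,h}/(LH)$, $M\simeq C_Z$, $R=R_Z$. This bounds the supremum over $\psi$ by
\[
M^{q'}\sum_b\sum_{j_1,\dots,j_{q'}}\Big\Vert\sum_{r\in\mathcal B_b,h}(R_Zc_r)^{|j|}\frac{\xi_{r,h}}{LH}\widetilde J(\theta^h_{r,\tau})\Big\Vert_{\mathrm{HS}},
\]
which no longer depends on $(x,\zeta,u,w)$; this is the crux of the uniformity. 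Conditional on the parameters, the $\xi$'s are independent and centred. Jensen's inequality together with $\Vert\widetilde J\Vert_{\mathrm{HS}}\le C_4$ gives each norm expectation $\le C_4(mH)^{1/2}/(LH)$, since $R_Zc_r\le R_Zm/L\le1$ in the relevant regime. Summing over the $\lceil r'/m\rceil\le L/m+1$ blocks and the $n^{q'}$ index tuples gives $\lesssim (L/m)\sqrt{m}/(L\sqrt H)=1/\sqrt{mH}$. This conditional-expectation step is justified as at the end of Lemma \ref{lemma:L1Mstar}.

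\textbf{Step 4 (optimise $m$).} The total bound is $\lesssim (m/L)^n+(mH)^{-1/2}$. The naive choice $m\sim L^{2n/(2n+1)}H^{-1/(2n+1)}$ would give an $H$-dependence better than claimed. The stated form arises instead from $m=\lceil L^{2n/(2n+1)}\rceil$, which gives $L^{-n/(2n+1)}+L^{-n/(2n+1)}H^{-1/2}$. That is not the stated $L^{-1}+L^{-n/(2n+1)}H^{-1/2}$, so the remainder must be handled more carefully.

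The main obstacle is therefore the remainder in Step 2. The plan is to exploit that the remainder term is itself a centred Gaussian sum in $\xi$. Its expectation is then $\lesssim (m/L)^n(LH)^{-1/2}$ when bounded via the multilinear structure rather than by absolute values. That does not work directly, because the remainder is not multilinear in a $\psi$-independent way. The fallback is to Taylor expand to order $n$ with an $n$-th derivative remainder and treat the top-order term by Lipschitz and Doob-type arguments, absorbing the part that is not centred into the $C/L$ term through the $L^{-1}$ discretisation. The choice $m\approx L^{2n/(2n+1)}$ then gives the claimed rate once the remainder contributes at most $H^{-1/2}(m/L)^n$. Making this remainder estimate uniform is the step expected to require the most work.
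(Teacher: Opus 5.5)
Your Steps 1 and 3 match the paper's proof: the same blocking, the same Taylor polynomials centred at the block start, and the same use of Corollary~\ref{corr:MultiLinearBound} plus conditional Jensen to get $(mH)^{-1/2}$.

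\textbf{The gap is in Step 2}, and it is fatal as written, as you yourself diagnose. You apply Corollary~\ref{corr:LipJ} to one pair $(r,h)$ at a time and then take $|\xi_{r,h}|$. That bounds the remainder by $(m/L)^n$ with no factor $H^{-1/2}$, and no choice of $m$ then yields $L^{-1}+L^{-n/(2n+1)}H^{-1/2}$. As an aside, the choice $m\sim L^{2n/(2n+1)}H^{-1/(2n+1)}$ gives $(LH)^{-n/(2n+1)}$. This is \emph{worse} in $H$ than the claim, not better, since $n/(2n+1)<1/2$.

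Your proposed repair is an order-$n$ expansion combined with Lipschitz and Doob-type arguments. It is not carried out, and it does not target the actual loss. Assumption~\ref{assm:gevrey} gives no control beyond $\partial_t^n Z$, so re-expanding cannot improve the remainder.

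\textbf{The missing idea.} You do not need cancellation across layers (let alone a $(LH)^{-1/2}$ gain), only across heads, and that cancellation is free. For a fixed layer $r$, the arguments $Z^{x,\zeta}_{r/L}$, $\Psi^{w,\zeta}_{r/L}$ and their Taylor polynomials do not depend on $h$. So Corollary~\ref{corr:LipJ} applies with $s$ running over the $H$ heads and $e_s=\xi_{r,h}$:
\begin{align*}
&\Big|\sum_{h=1}^H \xi_{r,h}\Big(J\big(Z^{x,\zeta}_{r/L},\Psi^{w,\zeta}_{r/L},u,\theta^h_{r,\tau}\big)-J\big(\hat Z^{x,\zeta,(n-1)}_{r/L},\hat \Psi^{w,\zeta,(n-1)}_{r/L},u,\theta^h_{r,\tau}\big)\Big)\Big|\\
&\qquad\le \Lambda_J\Big(\big\Vert Z^{x,\zeta}_{r/L}-\hat Z^{x,\zeta,(n-1)}_{r/L}\big\Vert+\big\Vert \Psi^{w,\zeta}_{r/L}-\hat \Psi^{w,\zeta,(n-1)}_{r/L}\big\Vert\Big)\Big\Vert\sum_{h=1}^H\xi_{r,h}\widetilde J(\theta^h_{r,\tau})\Big\Vert_{\mathrm{HS}}.
\end{align*}
The first factor is bounded uniformly on $\mathscr D_2$ by $(1+\sqrt2)\,C_Z\big(R_Z(r-lm)/L\big)^n$, by Taylor's theorem and Assumption~\ref{assm:gevrey}. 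So the supremum falls on a deterministic multiplier. The second factor is independent of $\psi$, and by Jensen its conditional expectation given the parameters is at most $C_4\sqrt H$.

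Summing over $r$ with the plain triangle inequality gives a remainder of order
\[
H^{-1/2}\big(\lfloor r'/m\rfloor+1\big)(m/L)^{n+1}\lesssim H^{-1/2}(m/L)^n .
\]
The total is then of order $L^{-1}+H^{-1/2}\big(m^{-1/2}+(m/L)^n\big)$, and $m\approx L^{2n/(2n+1)}$ gives the claimed bound. This is exactly how the paper closes the argument.
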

\begin{remark}\label{remark:subadditivity}
By the subadditivity of the supremum, Lemma \ref{lemma:TaylorApproxE} extends to the case where $J$ is a linear combination of functions that satisfy Assumption \ref{assm:MultiLinear} with possibly different $\bm{q}$.
\end{remark}
\begin{proof}
Take $m\in \mathbb Z_{>0}$ such that $m\le \sqrt{L}$.
 Partition the layers into $\lfloor r'/m\rfloor$ contiguous blocks, and let $\hat Z^{x,\zeta,(n-1)}_t$ denote the $(n-1)^{th}$ order Taylor approximation to $Z^{x,\zeta}_t$ centered at $m\lfloor Lt/m\rfloor/L$ so that
\begin{equation}\label{eqn:TaylowApprox}
    \hat Z^{x,\zeta,(n-1)}_\frac{r}{L}: =  \sum_{j=0}^{n-1}\partial_t^j Z^{x,\zeta}_t\big|_{t=\frac{lm}{L}}\frac{(r-lm)^j}{L^jj!}, \qquad  \text{for}\;\;r \in [lm:(l+1)m-1].
\end{equation}
Define $\hat \Psi^{w,\zeta,(n-1)}_t$ similarly.
Then, approximating $Z^{x,\zeta}_\frac{r}{L}, \Psi^{w,\zeta}_\frac{r}{L}$ using $ \hat Z^{x,\zeta,(n-1)}_{\frac{r}{L}}, \hat \Psi^{w,\zeta,(n-1)}_{\frac{r}{L}}$, the expected supremum of $J$ is bounded by two parts corresponding to 
\begin{align}  \sum_{r=0}^{r'-1}\frac{\xi_{r,h}}{LH}J^{\psi}_r(\theta^h_{r,\tau})\notag&=
     \sum_{l=0}^{\lfloor\frac{r'}{m} \rfloor }\sum_{r=lm}^{(l+1)m-1} \mathbbm{1}_{r<r'}\frac{\xi_{r,h}}{LH}\left(J^{\psi}_r(\theta^h_{r,\tau})-J(\hat Z^{x,\zeta,(n-1)}_\frac{r}{L},\hat \Psi^{w,\zeta,(n-1)}_\frac{r}{L},u,\theta^h_{r,\tau})\right)\\
    &+  \sum_{l=0}^{\lfloor\frac{r'}{m} \rfloor }\sum_{r=lm}^{(l+1)m-1}\mathbbm{1}_{r<r'} \frac{\xi_{r,h}}{LH}J(\hat Z^{x,\zeta,(n-1)}_\frac{r}{L},\hat \Psi^{w,\zeta,(n-1)}_\frac{r}{L},u,\theta^h_{r,\tau}),\label{eqn:mathcalJ1}
\end{align}
where for ease of notation, similarly to equation \eqref{eqn:shorthand}, we have introduced $\psi = (x,\zeta,u,w)\in \mathscr{D}_2$ and
\[
J^\psi_r(\theta^h_{r,\tau})= J(Z^{x,\zeta}_\frac{r}{L}, \Psi^{w,\zeta}_\frac{r}{L} ,u, \theta^h_{r,\tau}).
\]
The indicator function has been introduced to account for the case when $r'$ is not divisible by $m$.
Then, substituting in the definition of $\hat Z^{x,\zeta,(n-1)}_\frac{r}{L}, \Psi^{w,\zeta,(n-1)}_\frac{r}{L}$, the second term on the right-hand side of equation \eqref{eqn:mathcalJ1} satisfies
\begin{align*}
    \bigg|\sum_{r=lm}^{(l+1)m-1}\sum_{h=1}^H &\xi_{r,h}J(\hat Z^{x,\zeta,(n-1)}_\frac{r}{L},\hat \Psi^{w,\zeta,(n-1)}_\frac{r}{L},u,\theta^h_{r,\tau})\bigg| \\&= 
    \left| \sum_{r=lm}^{(l+1)m-1}\sum_{h=1}^H \xi_{r,h}J\left(\sum_{j=0}^{n-1}\partial_t^j Z^{x,\zeta}_\frac{lm}{L}\frac{(r-lm)^j}{L^jj!},\sum_{j=0}^{n-1}\partial_t^j \Psi^{w,\zeta}_\frac{lm}{L}\frac{(r-lm)^j}{L^jj!},u,\theta^h_{r,\tau}\right)\right|.
\end{align*}
Then, since Assumptions \ref{assm:gevrey} and \ref{assm:MultiLinear} hold, we apply Corollary \ref{corr:MultiLinearBound} with $q=q_1+q_2$ to get
\begin{align}
\bigg|\sum_{r=lm}^{(l+1)m-1}\sum_{h=1}^H &\xi_{r,h}J(\hat Z^{x,\zeta,(n-1)}_\frac{r}{L},\hat \Psi^{w,\zeta,(n-1)}_\frac{r}{L},u,\theta^h_{r,\tau})\bigg| \label{eqn:mathcalJ1linearity}\\&\le C_Z^{q}\sum_{j_{1},\ldots, j_q=0}^{n-1} R_Z^{j_1+ \ldots +j_q} \left\Vert \sum_{r=lm}^{(l+1)m-1} \sum_{h=1}^H \xi_{r,h}\left(\frac{r-lm}{L} \right)^{j_1+ \ldots + j_q}\widetilde J\left( \theta^h_{r,\tau}\right) \right\Vert_\mathrm{HS}.\notag
\end{align}
Notice that the right-hand side of equation \eqref{eqn:mathcalJ1linearity} is independent of $x,\zeta,u,w$. Therefore, the expectation with respect to $\mathbb P^1$ of the supremum over $\mathscr D_2$ of the left-hand side of equation \eqref{eqn:mathcalJ1} is $\mathbb P^0$-a.s. controlled by 
\begin{align*}
    \mathbb E^1\bigg[&\sup_{(x,\zeta,u ,y )\in \mathscr D_2} \frac{1}{LH}\sum_{l=0}^{\lfloor \frac{r'}{m}\rfloor}\sum_{r=lm}^{(l+1)m-1}\sum_{h=1}^H \mathbbm{1}_{r<r'}\xi_{r,h}J(\hat Z^{x,\zeta,(n-1)}_\frac{r}{L},\hat \Psi^{w,\zeta,(n-1)}_\frac{r}{L},u,\theta^h_{r,\tau})\bigg]\\&\le  \mathbb E^1 \left[ \frac{C_Z^q}{LH}\sum_{l=0}^{\lfloor \frac{r'}{m}\rfloor}\sum_{j_1,\ldots,j_q=0}^{n-1}R_Z^{j_1+\ldots+j_q}\left\Vert\sum_{r=lm}^{(l+1)m-1}\sum_{h=1}^H \mathbbm{1}_{r<r'}\xi_{r,h}\widetilde J\left(\theta^h_{r,\tau}\right)\left(\frac{r-lm}{L}\right)^{j_1+\ldots+j_q}\right\Vert_\mathrm{HS}\right].
\end{align*}
Then, by applying Jensen's inequality followed by the independence and zero mean of the random variables $\xi_{r,h}$, we obtain
\begin{align}
\mathbb E^1 \bigg[&\sup_{(x,\zeta,u ,y )\in \mathscr D_2} \frac{1}{LH}\sum_{l=0}^{\lfloor \frac{r'}{m} \rfloor}\sum_{r=lm}^{(l+1)m-1}\sum_{h=1}^H \mathbbm{1}_{r<r'}\xi_{r,h}J(\hat Z^{x,\zeta,(n-1)}_\frac{r}{L},\hat\Psi^{w,\zeta,(n-1)}_\frac{r}{L},u,\theta^h_{r,\tau})\bigg]
\notag\\& \le\frac{C_Z^q}{LH}\sum_{l=0}^{\lfloor \frac{r'}{m}\rfloor}\sum_{j_1,\ldots,j_q=0}^{n-1}R_Z^{j_1+\ldots+j_q}\sqrt{\sum_{r=lm}^{(l+1)m-1}\sum_{h=1}^H \mathbbm{1}_{r<r'}\mathbb E^1\left[\xi^2_{r,h}\left\Vert \widetilde J(\theta^h_{r,\tau})\right\Vert^{2}_\mathrm{HS}\right]\left(\frac{r-lm}{L}\right)^{2(j_1+\ldots+j_q)}}\notag\\
    & \le\frac{C_4 C_Z^{q}}{\sqrt{LH}}\left(\left\lfloor\frac{r'}{m}\right\rfloor+1\right)\sum_{j_1,\ldots,j_q=0}^{n-1}R_Z^{j_1+\ldots+j_q}\sqrt{ \frac{1}{L}\sum_{i=0}^{m-1}\left(\frac{i}{L} \right)^{2(j_1+\ldots+j_q)} }\notag.
\end{align}
The final inequality uses the boundedness assumption on $\widetilde J$, see Assumption \ref{assm:MultiLinear}. Note that for any $c>0$
\begin{equation}\label{eqn:intapproxsum}
      \frac{1}{L}\sum_{i=0}^{m-1}\left(\frac{i}{L} \right)^{c}\le \int_0^{\frac{m}{L}}x^{c}dx = \frac{1}{c+1}\left(\frac{m}{L}\right)^{c+1}.
\end{equation}
Therefore, by applying bound \eqref{eqn:intapproxsum} and simplifying the resulting geometric series, we get
\begin{align}
\mathbb E^1 \bigg[\sup_{(x,\zeta,u ,y )\in \mathscr D_2} \frac{1}{LH}&\sum_{l=0}^{\lfloor \frac{r'}{m} \rfloor}\sum_{r=lm}^{(l+1)m-1}\sum_{h=1}^H \mathbbm{1}_{r<r'}\xi_{r,h}J(\hat Z^{x,\zeta,(n-1)}_\frac{r}{L},\hat\Psi^{w,\zeta,(n-1)}_\frac{r}{L},u,\theta^h_{r,\tau})\bigg]
\notag\\
&\le\frac{C_4 C_Z^{q}}{\sqrt{LH}}\left(\left\lfloor\frac{r'}{m}\right\rfloor+1\right)\sum_{j_1,\ldots,j_q=0}^{n-1}\left(\frac{R_Z m}{L}\right)^{j_1+\ldots+j_q}\sqrt{ \frac{m}{L} }\notag
\\
    & \le \frac{2 C_4 C_Z^{q}}{\sqrt{mH}} \left(\frac{1-\left(\frac{R_Z m}{L} \right)^{n}}{1-\frac{R_Z m}{L}} \right)^{q}.\label{eqn:bound_mathcalJ1_1}
\end{align}
The final inequality uses the fact $\lfloor r'/m\rfloor+1\le 2L/m$. For $0\le r< L$ with $l = \lfloor r/m\rfloor$, by Taylor's Theorem and Assumption \ref{assm:gevrey} on $\Vert \partial_t^j Z_t^{x,\zeta}\Vert$,  for any $(x,\zeta ) \in \mathscr D $ there exists some $s\in (lm/L,r/L)$ such that
\begin{align}\label{eqn:TA1}
    \Vert Z^{x,\zeta}_{\frac{r}{L}}-\hat Z^{x,\zeta,(n-1)}_{\frac{r}{L}} \Vert  &= \left\Vert \partial_t^{n}Z_t^{x,\zeta}|_{t=s}\frac{(r-lm)^n}{n!L^n}\right\Vert\le C_Z \left(R_Z \frac{r-lm}{L} \right)^n, \\ \Vert \hat Z^{x,\zeta ,(n-1)}_\frac{r}{L}\Vert &\le C_Z\sum_{i=0}^{n-1}\left(R_Z \frac{r-lm}{L} \right)^i\le C_Z\frac{1- \left(  \frac{R_Z}{\sqrt{L}}\right)^{n}}{1- \frac{R_Z}{\sqrt{L}}} ,\label{eqn:TA2}
\end{align}
since $(1-x^n)/(1-x)$ is increasing in $x>0$ for any $n\ge 1$ and $R_Zm/L\le R_Z/\sqrt{L}$. Since $\Psi$ is given by \eqref{eqn:prodmeasurefm}, identical bounds hold for $\Psi$ with $C_Z$ replaced by $\sqrt{2}C_Z$. Since Assumption \ref{assm:MultiLinear} holds and $ Z^{x,\zeta}_t ,\Psi^{w,\zeta}_t,\hat Z^{x,\zeta,(n-1)}_t$,and $\hat \Psi^{w,\zeta,(n-1)}_t$ are bounded $\mathbb P^0$-a.s., Corollary \ref{corr:LipJ} applies, giving a constant $\Lambda_J= \Lambda_J(R_Z,C_Z,n,\bm q)$ such that
\begin{align*}
    \Bigg\vert\sum_{h=1}^H \xi_{r,h}&\left(J(Z^{x,\zeta}_{\frac{r}{L}},\Psi^{w,\zeta}_{\frac{r}{L}},u,\theta^h_{r,\tau})-J(\hat Z^{x,\zeta,(n-1)}_\frac{r}{L},\hat \Psi^{w,\zeta, (n-1)}_\frac{r}{L},u,\theta^h_{r,\tau})\right)\Bigg\vert\\&\le \Lambda_J \left(\Vert Z^{x,\zeta}_{\frac{r}{L}}-\hat Z^{x,\zeta,(n-1)}_{\frac{r}{L}}\Vert + \Vert \Psi^{w,\zeta}_{\frac{r}{L}}-\Psi^{w,\zeta,(n-1)}_{\frac{r}{L}}\Vert\right)\Bigg\Vert \sum_{h=1}^H \xi_{r,h}\widetilde J\left(\theta^h_{r,\tau}\right)\Bigg\Vert_\mathrm{HS}.
\end{align*}
Therefore, by the Lipschitz continuity of $J$, given above, and the Taylor approximation error \eqref{eqn:TA1}, the first term on the right-hand side of equation \eqref{eqn:mathcalJ1} satisfies
\begin{align*}
        \mathbb E^1 \bigg[\sup_{(x,\zeta,u ,w )\in \mathscr D_2}& \frac{1}{LH}\sum_{l=0}^{\lfloor \frac{r'}{m}\rfloor}\sum_{r=lm}^{(l+1)m-1}\sum_{h=1}^H \mathbbm{1}_{r<r'}\xi_{r,h}\left(J^{x,\zeta,u,w}_{r}(\theta^h_{r,\tau})-J(\hat Z^{x,\zeta,(n-1)}_\frac{r}{L},\hat \Psi^{w,\zeta,(n-1)}_\frac{r}{L},\theta^h_{r,\tau})\right)\bigg]\notag\\
        & \le \frac{3\Lambda_J}{LH}\mathbb E^1 \left[\sum_{l=0}^{\lfloor \frac{r'}{m}\rfloor}\sum_{r=lm}^{(l+1)m-1} \mathbbm{1}_{r<r'}C_Z \left(R_Z \frac{r-lm}{L} \right)^n\Bigg\Vert \sum_{h=1}^H \xi_{r,h}\widetilde J\left(\theta^h_{r,\tau}\right)\Bigg\Vert_\mathrm{HS} \right].
\end{align*}
Since $\{\xi_{0,1},\ldots \xi_{L-1,H}\}$ are independent $\mathcal N(0,1)$ random variables, Jensen's inequality together with Assumption \ref{assm:MultiLinear} imply that
\begin{align}
        \mathbb E^1 \bigg[\sup_{(x,\zeta,u ,w )\in \mathscr D_2}& \frac{1}{LH}\sum_{l=0}^{\lfloor \frac{r'}{m}\rfloor}\sum_{r=lm}^{(l+1)m-1}\sum_{h=1}^H \mathbbm{1}_{r<r'}\xi_{r,h}\left(J^{x,\zeta,u,w}_{r}(\theta^h_{r,\tau})-J(\hat Z^{x,\zeta,(n-1)}_\frac{r}{L},\hat \Psi^{w,\zeta,(n-1)}_\frac{r}{L},\theta^h_{r,\tau})\right)\bigg]\notag\\
        &  \le \frac{3C_Z \Lambda_J}{LH}\sum_{l=0}^{\lfloor \frac{r'}{m}\rfloor}\sum_{r=lm}^{(l+1)m-1}\mathbbm{1}_{r<r'}\left(R_Z\frac{r-lm}{L} \right)^{n} \sqrt{\mathbb E^1\left[  \sum_{h=1}^H \xi_{r,h}^2\left\Vert \widetilde J\left(\theta^h_{r,\tau}\right)\right\Vert_\mathrm{HS}^{2} \right]}\notag\\
        & \le \frac{3\Lambda_J C_4 C_Z R_Z^{n}}{(n+1)\sqrt{H}}\left(\left\lfloor \frac{r'}{m}\right\rfloor +1\right)\left(\frac{m}{L} \right)^{n+1}.\label{eqn:bound_mathcalJ1_2}
\end{align} 
The last step follows from \eqref{eqn:intapproxsum}. Then, by combining equations \eqref{eqn:bound_mathcalJ1_1} and \eqref{eqn:bound_mathcalJ1_2},
there exists a constant $C_{M,J}'$ such that
\begin{align}\label{eqn:EsupJ1final}
    \mathbb E^1 \bigg[\sup_{(x,\zeta,u ,w)\in \mathscr D_2}&\frac{1}{LH}\sum_{h=1}^H \sum_{r=0}^{L-1}\xi_{r,h}J^{x,\zeta,u,w}_r(\theta^h_{r,\tau})\bigg] \le C_{M,J}'\left( \frac{1}{L}+\frac{1}{ \sqrt {mH}}+ \frac{1}{n\sqrt{H}}\left(\frac{m}{L} \right)^{n}\right).
\end{align}
Notice that the expression
\[
\frac{1}{\sqrt{m}}+ \frac{1}{n}\left( \frac{m}{L}\right)^{n}
\]
is maximised at $m= L^{\frac{n}{n+1/2}}2^{-\frac{1}{n+1/2}}$, over $m\le \sqrt{L}$. At this value of $m$, after absorbing any constants into $C'_{M,J}$, we have
\[
\mathbb E^1 \bigg[\sup_{(x,\zeta,u, w) \in \mathscr D_2}\frac{1}{LH}\sum_{h=1}^H \sum_{r=0}^{L-1}\xi_{r,h}J(Z^{x,\zeta}_\frac{r}{L},\Psi^{w,\zeta}_\frac{r}{L},u,\theta^h_{r,\tau})\bigg] \le C'_{M,\Gamma} \left( \frac{1}{L}+\frac{1}{H^{1/2}L^{\frac{n}{2n+1}}}\right).
\]
\end{proof}
\begin{lemma}\label{lemma:doobL2}
Suppose that Assumptions  \ref{assm:StructuredF},\ref{assm:LipCont} and \ref{assm:MultiLinear} hold for $F$ and its associated $J_1,J_2,J_3$ for some $\bm q\in \{0,1,2\}^3$, and that the map $Z^{x,\zeta}_t$ satisfies Assumptions \ref{assm:cont} and \ref{assm:gevrey} for some $n\in \mathbb Z_{>0}$, with $\Psi^{w,\zeta}_t$ defined in \eqref{eqn:prodmeasurefm}. Then, for $M^{x,\zeta}_{l,\tau}$ defined in \eqref{eqn:Martingale},
there exists a constant $C_M^*=C_M^* (B_F,R_Z,C_Z,\bm q,\beta,n,\Lambda_F)$ such that for any $l\in[0:L]$, we have
    \[
    \mathbb E^1 \left[\max_{r\in[0:l]}\sup_{(x,\zeta)\in \mathscr D} \Vert M^{x,\zeta}_{r,\tau}\Vert^2\right]\le C^*_{M}\left(L^{-2} + L^{-\frac{2n}{2n+1}}H^{-1}\right), \qquad \mathbb P^0-a.s.
    \]
\end{lemma}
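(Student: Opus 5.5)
The plan follows the sketch in Section 3.5: Doob's maximal inequality reduces the problem to the last layer, and McDiarmid's inequality upgrades the $L^1$ bound already available to an $L^2$ bound. Write $M^*_{r}:=\sup_{(x,\zeta)\in\mathscr D}\Vert M^{x,\zeta}_{r,\tau}\Vert$, which is $\mathcal G_r$-measurable by Lemma \ref{lemma:MeasurableSupremum}.

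\textbf{Step 1 (submartingale structure).} Split $M^{x,\zeta}_{r,\tau}=A^{x,\zeta}_r+E^{x,\zeta}_r$. Here $A^{x,\zeta}_r$ is the sum over layers $r''<r$ of $L^{-1}$ times the increments
\[
F(Z^{x,\zeta}_{r''/L},\mathfrak r^\zeta_{r''/L},\hat\nu^{N,L}_{r''/L,\tau})-F(Z^{x,\zeta}_{r''/L},\mathfrak r^\zeta_{r''/L},\nu_{r''/L,\tau}),
\]
and $E^{x,\zeta}_r$ is the time-discretisation remainder. As in the first display of the proof of Lemma \ref{lemma:L1Mstar}, Assumptions \ref{assm:LipCont} and \ref{assm:gevrey} give $|E^{x,\zeta}_r|\le 4\Lambda_FC_ZR_Z r/L^2\le C/L$ deterministically. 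This contributes at most $C L^{-2}$ to the squared maximum.

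For fixed $(x,\zeta)$, the $\mathcal G_{r''+1}$-measurable increment of $A^{x,\zeta}$ has zero conditional mean given $\mathcal G_{r''}$. This follows because $\theta^h_{r'',0}\sim\pi$ are independent of $\mathcal G_{r''}$, the flow $\Phi$ is $\mathcal F^0$-measurable, and the identity $\mathbb E^1[f(\cdot,\cdot,\Phi(\theta^h))]=F(\cdot,\cdot,\nu)$ holds as in Lemma \ref{lemma:L1Mstar}. So each $A^{x,\zeta}_r$ is a $(\mathcal G_r)$-martingale. Since $\Vert\cdot\Vert$ is convex and a supremum of submartingales is a submartingale, $A^*_r:=\sup_{\mathscr D}\Vert A^{x,\zeta}_r\Vert$ is a nonnegative submartingale. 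Integrability holds because $|f|\le B_F$. Doob's $L^2$ inequality then gives $\mathbb E^1[\max_{r\le l}(A^*_r)^2]\le 4\,\mathbb E^1[(A^*_l)^2]$, applied conditionally on $\mathcal F^0$, $\mathbb P^0$-a.s.

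\textbf{Step 2 (first moment).} By Lemma \ref{lemma:L1Mstar}, $\mathbb E^1[A^*_l]$ is bounded by $C_3\sum_j\mathbb E^1[\sup\cdots]$ plus $O(l/L^2)$. Each Gaussian supremum is bounded by Lemma \ref{lemma:TaylorApproxE}, using Assumption \ref{assm:MultiLinear} and Remark \ref{remark:subadditivity}. This gives $\mathbb E^1[A^*_l]\le C(L^{-1}+L^{-n/(2n+1)}H^{-1/2})$. A minor bookkeeping point: Lemma \ref{lemma:L1Mstar} bounds $M$ rather than $A$, but the two differ by $E$, which is $O(L^{-1})$.

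\textbf{Step 3 (from $L^1$ to $L^2$), the main obstacle.} Condition on $\mathcal F^0$ and view $A^*_l$ as a function of the independent vectors $(\theta^h_{r,0})_{r<l,h\le H}$. Changing one $\theta^h_{r,0}$ changes each $A^{x,\zeta}_l$ by at most $2B_F/(LH)$, uniformly in $(x,\zeta)$, since $\hat\nu$ is an empirical measure and $\Vert f\Vert\le B_F$. Hence the supremum of norms has bounded differences $c_{r,h}=2B_F/(LH)$. McDiarmid's inequality (or the Efron--Stein inequality) then gives
\[
\mathrm{Var}^1(A^*_l)\le \tfrac14\sum c_{r,h}^2\le B_F^2\, l/(L^2H)\le B_F^2/(LH).
\]
Combining, $\mathbb E^1[(A^*_l)^2]=(\mathbb E^1A^*_l)^2+\mathrm{Var}^1(A^*_l)\le C(L^{-2}+L^{-2n/(2n+1)}H^{-1}+L^{-1}H^{-1})$. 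Since $2n/(2n+1)<1$, the last term is absorbed into the second.

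The delicate points in this step are that measurability of the supremum and conditional independence under $\mathbb P^1$ must both hold for $\mathbb P^0$-a.e. $\omega^0$, which follows from Lemma \ref{lemma:MeasurableSupremum}, and that the bounded-difference constant must be uniform over $\mathscr D$. Finally, Steps 1--3 together with $(a+b)^2\le 2a^2+2b^2$ yield the claimed bound, with $C^*_M$ depending only on $B_F,R_Z,C_Z,\bm q,\beta,n,\Lambda_F$.
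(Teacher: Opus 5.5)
Your proposal is correct and follows the paper's proof. Both use Doob's $L^2$ inequality for the nonnegative submartingale $\sup_{\mathscr D}\Vert\cdot\Vert$, the $L^1$ bound from Lemma \ref{lemma:L1Mstar} combined with Lemma \ref{lemma:TaylorApproxE}, and a bounded-differences variance bound of order $(LH)^{-1}$ that is absorbed into $L^{-2n/(2n+1)}H^{-1}$. The differences are cosmetic: you use Efron--Stein where the paper integrates McDiarmid's tail bound, and your split $M=A+E$ is not needed, since $M^{x,\zeta}_{r,\tau}$ is already a $\mathcal G_r$-martingale (for every $t\in[r/L,(r+1)/L)$ the integrand involves only $\Phi_{r_t,\tau}[\bm\nu](\theta^h_{r,0})$, with $\theta^h_{r,0}$ independent of $\mathcal G_r$ and $Z_t,\mathfrak r_t$ being $\mathcal F^0$-measurable).
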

\begin{proof}
Recall the filtration $\mathcal G_{r}$ defined in equation \eqref{eqn:FiltrationG}.
    Then, $M^{x,\zeta}_{r,\tau}$ is $\mathcal G_{r}$-measurable, as seen in the proof of Lemma \ref{lemma:MeasurableSupremum}, and it is bounded by $2B_F$ by Assumption \ref{assm:LipCont}. Since, $\theta^h_{r,0}$ is independent of $\mathcal G_{r',\tau}$ for any $r'\le r$, the martingale differences satisfy
    \begin{align*}
    \mathbb E \big[ &M^{x,\zeta}_{r+1,\tau}-M^{x,\zeta}_{r,\tau}\big| \mathcal G_{r,\tau}\big]\\&= \frac{1}{H}\sum_{h=1}^H\int_{\frac{r}{L}}^\frac{r+1}{L} \mathbb E\left[ f(Z^{\xi,\zeta}_t,\mathfrak{r}^{\zeta}_t,\Psi_{\frac{r}{L},\tau}[\bm\nu](\theta^h_{r,0}))- F(Z^{\xi,\zeta}_t,\mathfrak{r}^{\zeta}_t,\nu_{\frac{r}{L},\tau})\middle|\mathcal G_{r}\right]dt\\&=0, 
    \end{align*}
after following the same argument as seen in the proof of Lemma \ref{lemma:L1Mstar} to compute the expectation. Therefore, $M_{r,\tau}^{x,\zeta}$ is a $\mathcal G_{r}-$martingale for every $(x,\zeta)\in \mathscr D$. Let us consider
\begin{equation}\label{eqn:Mstar}
    M^*_{r,\tau}:=\sup_{(x,\zeta)\in \mathscr D} \Vert M_{r,\tau}^{x,\zeta}\Vert,
\end{equation}
which is bounded almost surely by $2B_F$ according to Assumption \ref{assm:LipCont}. Furthermore, $M^{*}_{r,\tau}$ is $\mathcal G_{r}$-measurable by Lemma \ref{lemma:MeasurableSupremum}.
As a result, $M^{*}_{r,\tau}$
is a non-negative $\mathcal G_{r}-$sub-martingale, and so applying Doob's $L^2$ inequality, Chapter II Theorem 1.7 in \cite{revuz2013continuous}, yields
\[
\mathbb E^1 \left[\max_{r\in[0:l]}
 \left( M^*_{r,\tau}\right)^2
 \right]\le  4\mathbb E^1 \left[
 \left( M^*_{l,\tau}\right)^2
 \right], \qquad\mathbb P^0-a.s.
\]
Since Assumption \ref{assm:StructuredF} holds, Lemma \ref{lemma:L1Mstar} deduces an $L^1$ upper bound on $M^*_{L,\tau}$ in terms of the functions $J_1, J_2,J_3$. Therefore, applying Lemma \ref{lemma:L1Mstar} followed by Lemma \ref{lemma:TaylorApproxE} on each of the resulting terms, we see that there exists a constant $C_{M,1}$ such that
\begin{equation}\label{eqn:EMstar1}
    \mathbb E^1 \left[ \Vert M^*_{l,\tau}\Vert\right]\le C_{M,1}\left(L^{-1}+ L^{-\frac{n}{2n+1}}H^{-1/2} \right),
\end{equation}
for every $l\in[0:L]$.
However, our claim is an $L^2$ bound on $M^*_{L-1,\tau}$ rather than $L^1$. Therefore, for a fixed $l\in[0:L-1]$, consider $\Upsilon^{x,\zeta}:(\R^{k \times 4d})^{(l+1)H}\to \R$ defined by
\begin{align*}
    \Upsilon^{x,\zeta}( \phi_{0,1},\ldots,\phi_{l,H}) &= \frac{1}{H}\sum_{r=0}^{l}\sum_{h=1}^H \int_\frac{r}{L}^\frac{r+1}{L} \left(F_{3,t,\tau}^{x,\zeta}( \phi_{r,h})-\int F_{3,t}^{x,\zeta}( \theta)\;d\pi(\theta)\right) dt,\\
    F_{3,t,\tau}^{x,\zeta}(\theta) &= F(Z^{x,\zeta}_t,\mathfrak{r}^\zeta_t, \Phi_{t,\tau}[\bm\nu](\theta)),
\end{align*}
and denote the supremum of the norm of $\Upsilon ^{x,\zeta}$ over $(x,\zeta)\in \mathscr D$ by
\begin{align*}
\Upsilon(\phi_{0,1},\ldots,\phi_{l,H})& = \sup_{(x,\zeta)\in \mathscr D} \Vert \Upsilon^{x,\zeta}( \phi_{0,1},\ldots,\phi_{l,H})\Vert.
\end{align*}
 Then, Assumption \ref{assm:LipCont} states that $F$ is bounded by $B_F$, and so
\begin{align*}
     \Vert F_{3,t}^{x,\zeta}(\phi_{r,h})-F_{3,t}^{x,\zeta}(\phi'_{r,h}) \Vert\le 2B_F,
\end{align*}
 for every $\phi_{r,h},\phi_{r,h}'\in \mathfrak{S}$,
which immediately gives
\[
\Vert\Upsilon^{x,\zeta}(\phi_{0,1},\ldots,\phi_{r,h},\ldots, \phi_{l,H}) -\Upsilon^{x,\zeta}(\phi_{0,1},\ldots,\phi'_{r,h},\ldots, \phi_{l,H})\Vert \le \frac{2B_F}{LH}
.\]
Therefore, the triangle inequality implies that for any $(x,\zeta)\in \mathscr D$
\begin{align*}
    \Vert \Upsilon^{x,\zeta}(\theta_{0,1},\ldots,\theta_{r,h},\ldots, \theta_{l,H})\Vert &\le \Vert \Upsilon^{x,\zeta}(\theta_{0,1},\ldots,\theta'_{r,h},\ldots, \theta_{l,H})\Vert+ \frac{2B_F}{LH}\\
    &\le \Upsilon(\theta_{0,1},\ldots,\theta'_{r,h},\ldots, \theta_{l,H})+ \frac{2B_F}{LH}.
\end{align*}
 Since the above expression holds for every $(x,\zeta)\in \mathscr D$, by the definition of the supremum we have that
\[
\Upsilon(\phi_{0,1},\ldots,\phi_{r,h},\ldots, \phi_{l,H}) \le  \Upsilon(\phi_{0,1},\ldots,\phi'_{r,h},\ldots, \phi_{l,H})+ \frac{2B_F}{LH}.
\]
The same bound holds with $\phi_{r,h}, \phi_{r,h}'$ interchanged, and so $\Upsilon$ satisfies \textit{the bounded differences property}
\begin{align*}
    |\Upsilon( \phi_{0,1},\ldots ,\phi_{r,h},\ldots,\phi_{l,H})-  \Upsilon( \phi_{0,1},\ldots,\phi'_{r,h},\ldots,\phi_{l,H})| \le \frac{2B_F}{LH}.
\end{align*}
Notice that
\[
\Upsilon(\theta^1_{0,0}, \ldots , \theta^H_{l,0}) = M^*_{l+1,\tau},
\]
where $\theta^1_{0,0}, \ldots , \theta^H_{l,0}$ are the parameters of the discrete-time model at initialisation and so are $i.i.d.$.
Therefore, McDiarmid's inequality, Lemma 1.2 in \cite{McDiarmid_1989} , applies
and yields the tail bound
\begin{equation}\label{eqn:tailbound}
   \mathbb P^1\left( \left|M^*_{l+1,\tau}-\mathbb E^1 M_{l+1,\tau}^*\right|\ge t \right)\le 2\ exp\left(-\frac{LHt^2}{2B_F^2} \right),
\end{equation}
for any $t>0$. Denote the variance of $M^*_{l+1,\tau}$ under $\mathbb P^1$ by $\mathbb{V}^1(M^*_{l+1,\tau})$. This can be bounded above by integrating the tail bound over $t\in(0,\infty)$ to obtain
\begin{align}\label{eqn:Varbound}
    \mathbb{V}^1(M^*_{l+1,\tau}) &= \int_{0}^\infty \mathbb P^1\left(|M^*_{l+1,\tau} -\mathbb E^1 M^*_{l+1,\tau}|^2\ge t\right) dt
    \le \int_0^\infty 2e^{-\frac{LH t}{2 B^2_F}} dt 
     = \frac{4 B_{F}^2}{LH} .
\end{align}
The Lemma concludes by substituting bounds \eqref{eqn:EMstar1} and \eqref{eqn:Varbound} into
\[
\mathbb E^1 \left[\max_{r\in[0:l]}
 \left( M^*_{r,\tau}\right)^2
 \right]\le  4
 \left(\mathbb E^1\left[ M^*_{l,\tau}\right]\right)^2+ 4 \mathbb{V}^1(M^*_{l,\tau}).
\]
\end{proof}
\begin{remark}\label{remark:mathcalK}
The second term in $\mathcal K$ is not of the form given in equation \eqref{eqn:F}. Instead, it can be written as
\begin{equation}\label{eqn:Fstar} f(z,\rho,\theta)= \int \frac{ \exp(f_1(z,w,\theta)) \; f_2(z,w,\theta)}{F_2(w,\rho,\theta)}d\rho^{\otimes 2}( w).
\end{equation}
The difference is that the first argument of the denominator $F_2$ is $w$ rather than $z$, and thus is integrated over. However, this discrepancy corresponds to replacing $Z^{x,\zeta}_t$ with $\Psi^{w,\zeta}_t$ in the denominator of $F_2$ appearing in equation \eqref{eqn:EnormK}. Since $\Psi_{t}^{(w_1,w_2),\zeta}=(Z_t^{w_1,\zeta},Z_t^{w_2,\zeta})$, it inherits the same properties as $Z_t^{x,\zeta}$. Thus, we may replace $Z^{x,\zeta}_t$ by $\Psi^{w,\zeta}_t$ without effecting the subsequent analysis. This implies that Lemma \ref{lemma:doobL2} still holds if Assumption \ref{assm:StructuredF} is changed so that $f$ satisfies \eqref{eqn:Fstar} instead of \eqref{eqn:f}.
\end{remark}
    \section{Stochastic Approximation Lemma}\label{App:I}
Let $\bm{y}=(y_1,\ldots,y_N)\in (\bar B(R_0))^N$ and denote the empirical measure on $y_1,\ldots ,y_N$ by $\zeta^N$. Then, according to \eqref{eqn:IPSvsFM}, the solution $(X^{i,\bm{y}}_{t,\tau},a^{i,\bm{y}}_{t,\tau})$ to \eqref{eqn:ansatz} corresponds to the solution to \eqref{eqn:MVRODE1}-\eqref{eqn:MVRODE2}, with initial condition $x=y_i$ and $\zeta=\zeta^N$. Thus, for all $t\in[0,1]$, we define 
\[
Z^{i,\bm{y}}_t := Z^{y_i,\zeta^N}_t, \qquad \mathfrak{m}^{N,\bm{y}}_t := (Z^{\cdot,\zeta^N}_t)_\#\zeta^{N}= \frac{1}{N}\sum_{i=1}^N\delta_{Z^{i,\bm{y}}_t},
\]
where $Z^{x,\zeta}_t$ is the map defined in Appendix \ref{App:G}. 

Furthermore, let us introduce a piecewise constant approximation to $Z^{i,\bm{y}}_t$ that corresponds to the discrete-time transformer \eqref{eqn:IPS}. For each $i\in[1:N]$, define $\bar Z_t^{i,L,\bm{y}}:\Omega\times [0,1]\mapsto \R^{2d}$ by 
\[
\bar Z^{i,L,\bm{y}}_t := \sum_{r=0}^L\mathbbm{1}_{Lt\in[r,r+1)}z^{i,L,\bm{y}}_r,
\]
where each $z^{i,L,\bm{y}}_r$ is a random variable taking values in $\R^{2d}$ and defined on the complete probability space $(\Omega,\mathcal F,\mathbb P)$ that was introduced in Section \ref{sec:MainTheorem}. Since $\bar Z^{i,L,\bm{y}}_t$ is a finite sum of $(\mathcal F \otimes \mathcal B([0,1]))$-measurable functions, $(\omega,t)\mapsto \bar Z^{i,L,\bm{y}}_t(\omega)$ is  $(\mathcal F\otimes \mathcal B([0,1]))$-measurable. We define the empirical measure associated to $((\bar Z^{i,L,\bm{y}})_{i=1}^N)_{t\in[0,1]}$ by
\[
\bar m^{N,L,\bm{y}}_t:= \frac{1}{N}\sum_{i=1}^N\delta_{\bar Z^{i,L,\bm{y}}_t}.
\]
Furthermore, we assume that the map $\bm{y}\mapsto (\bar Z^{i,L,\bm{y}}_t(\omega))_{i=1}^N$ is continuous for every $\omega\in \Omega$ and $t\in[0,1]$.
The following Lemma combines the estimates from Appendices \ref{App:F} and \ref{App:H} to bound the difference between the continuous-time mean-field model and its discrete counterpart. 
\begin{lemma}\label{lemma:StochasticApprox}
Suppose that $F$ with its associated $f,J_1,J_2,J_3$ satisfy Assumptions \ref{assm:StructuredF},\ref{assm:LipCont}, and \ref{assm:MultiLinear}. Suppose further that Assumptions \ref{assm:cont} and \ref{assm:gevrey} hold for the flow map $Z^{x,\zeta}_t$ for some $n \ge 1$. Then, for \\$r\in[0:L-1],i\in[1:N]$ and $ \tau \in [1:T]$, we define 
\begin{align}\label{eqn:Delta_r'_i}
\Delta_{r,\tau}^{i,\bm{y}} &= \left\Vert \int_0^\frac{r+1}{L}F(Z^{i,\bm{y}}_s,\mathfrak{m}^{N,\bm{y}}_s,\nu_{s,\tau})-F(\bar Z^{i,L,\bm{y}}_s,\bar m^{N,L,\bm{y}}_s,\bar\nu^{N,L}_{s,\tau})ds\right\Vert^2,
\end{align}
where $\bm\nu_\tau,\bm{\bar \nu}^{N,L}_\tau$ are the parameters of models \eqref{eqn:ansatz} and \eqref{eqn:CTE2} at training step $\tau$, respectively. 
Then, there exists a constant $C_5=C_5(\Lambda_F,D_1,D_2,C_Z,R_Z, C^*_M)$ such that for any training step \\$\tau \in [1: T]$ and $l \in [0:L-1]$
\begin{align*}
    \mathbb E^1\left[\max_{r\in[0:l]}\sup_{\bm{y}\in (\bar B(R_0))^N}\max_{i\in[1:N]} \Delta_{r,\tau}^{i,\bm{y}} \right] &\le \frac{C_5}{L}\sum_{r'=0}^l\mathbb E^1\left[\max_{ r \in[0:r']}\max_{\bm{y}\in (\bar B(R_0))^N}\max_{i \in[1:N]} \Vert Z^{i,\bm{y}}_{\frac{r}{L}}-\bar Z^{i,L,\bm{y}}_{\frac{r}{L}}\Vert^2\right]\\&+ C_5\left(\frac{1}{L^2}+\frac{1}{HL^{\frac{2n}{2n+1}}}+\sum_{j=0}^{\tau-1}\kappa^0_{j,\tau}\mathscr{L}_j\right),
\end{align*}
$\mathbb P^0$-almost surely, where $\kappa^0_{i,T}$ is defined in \eqref{eqn:kappa} and $\mathscr{L}_j$ is defined by
\begin{equation}\label{eqn:mathscr_L}
    \mathscr{L}_{\tau} =\mathbb E^1 \left[\sup_{t\in [0,1]}\sup_{\bm{y}\in (\bar B(R_0))^N}\max_{i\in[1:N]} \Vert X^{i,\bm{y}}_{t,\tau}-\bar X^{i,L,\bm{y}}_{t,\tau}\Vert^2+\Vert a^{i,\bm{y}}_{t,\tau}-\bar a^{i,L,\bm{y}}_{t,\tau}\Vert^2\right].
\end{equation}
\end{lemma}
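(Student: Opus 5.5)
We decompose the integrand of $\Delta^{i,\bm y}_{r,\tau}$ by adding and subtracting intermediate terms. We write $r_s=\lfloor Ls\rfloor/L$ and split
\begin{align*}
F(Z^{i,\bm{y}}_s,\mathfrak{m}^{N,\bm{y}}_s,\nu_{s,\tau})-F(\bar Z^{i,L,\bm{y}}_s,\bar m^{N,L,\bm{y}}_s,\bar\nu^{N,L}_{s,\tau})
&=\big[F(Z^{i,\bm{y}}_s,\mathfrak{m}^{N,\bm{y}}_s,\nu_{s,\tau})-F(Z^{i,\bm{y}}_s,\mathfrak{m}^{N,\bm{y}}_s,\nu_{r_s,\tau})\big]\\
&\quad+\big[F(Z^{i,\bm{y}}_s,\mathfrak{m}^{N,\bm{y}}_s,\nu_{r_s,\tau})-F(Z^{i,\bm{y}}_s,\mathfrak{m}^{N,\bm{y}}_s,\hat\nu^{N,L}_{s,\tau})\big]\\
&\quad+\big[F(Z^{i,\bm{y}}_s,\mathfrak{m}^{N,\bm{y}}_s,\hat\nu^{N,L}_{s,\tau})-F(\bar Z^{i,L,\bm{y}}_s,\bar m^{N,L,\bm{y}}_s,\bar\nu^{N,L}_{s,\tau})\big].
\end{align*}
We then apply $(a+b+c)^2\le 3(a^2+b^2+c^2)$ and take $\max_r\sup_{\bm y}\max_i$ and $\E^1$ of each piece. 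Since $\bar\nu^{N,L}_{s,\tau}$ and $\hat\nu^{N,L}_{s,\tau}$ are constant on $[r/L,(r+1)/L)$, the grid time in the definition of $\hat\nu^{N,L}$ agrees with the layer index.

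\emph{First term.} Assumption \ref{assm:LipCont} together with Lemma \ref{lemma:Lipt} gives a pointwise bound $\Lambda_F D_2/L$. Integrating over $s\le 1$ and squaring yields $\Lambda_F^2D_2^2L^{-2}$.

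\emph{Second term.} Its integral is exactly $-M^{y_i,\zeta^N}_{r+1,\tau}$ from \eqref{eqn:Martingale}, with $\mathfrak{m}^{N,\bm y}_s=\mathfrak r^{\zeta^N}_s$. Since $(y_i,\zeta^N)\in\mathscr D$, we bound $\sup_{\bm y}\max_i$ by $\sup_{(x,\zeta)\in\mathscr D}$, which removes all dependence on $N$. Lemma \ref{lemma:doobL2} (with Remark \ref{remark:mathcalK} if needed) then gives $C_M^*(L^{-2}+L^{-2n/(2n+1)}H^{-1})$, $\mathbb P^0$-a.s.

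\emph{Third term.} By Cauchy--Schwarz in $s$ and Lipschitz continuity of $F$,
\[
\Big\Vert\int_0^{\frac{r+1}{L}}\cdots\Big\Vert^2\le 3\Lambda_F^2\int_0^{\frac{r+1}{L}}\Big(\Vert Z^{i,\bm y}_s-\bar Z^{i,L,\bm y}_s\Vert^2+W_2^2(\mathfrak m^{N,\bm y}_s,\bar m^{N,L,\bm y}_s)+W_2^2(\hat\nu^{N,L}_{s,\tau},\bar\nu^{N,L}_{s,\tau})\Big)ds.
\]
The index coupling gives $W_2^2(\mathfrak m^{N,\bm y}_s,\bar m^{N,L,\bm y}_s)\le\max_i\Vert Z^{i,\bm y}_s-\bar Z^{i,L,\bm y}_s\Vert^2$. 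We replace $Z_s$ by $Z_{r_s}$ at cost $C_Z^2R_Z^2L^{-2}$, using the time-Lipschitz bound from Assumption \ref{assm:gevrey}. The integral then becomes $\frac1L\sum_{r'\le r}\max_i\Vert Z^{i,\bm y}_{r'/L}-\bar Z^{i,L,\bm y}_{r'/L}\Vert^2$. Taking $\max_r\sup_{\bm y}$ inside the sum and bounding each summand by its running maximum over $r''\le r'$ produces the first displayed term of the claim.

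For the parameter discrepancy, we use the coupling of $\bar\nu^{N,L}_{s,0}$ under the two flow maps, as in the sketch proof, giving
\[
W_2^2(\hat\nu_{s,\tau},\bar\nu^{N,L}_{s,\tau})\le\sup_{\theta\in\mathfrak S}\Vert\Phi_{r_s,\tau}[\bm\nu](\theta)-\Phi_{r_s,\tau}[\bm{\bar\nu}^{N,L}](\theta)\Vert_\mathrm F^2.
\]
Lemma \ref{lemma:GradCont} verifies the hypothesis of Lemma \ref{lemma:AdamLip} with
\[
A_{t,j}=\tfrac1B\textstyle\sum_b W_\infty^2(\bar\rho^{N,L,b}_{t,j},\rho^{N,b}_{t,j}).
\]
The index coupling bounds $A_{t,j}$ by $\sup_t\sup_{\bm y}\max_i\big(\Vert X-\bar X\Vert^2+\Vert a-\bar a\Vert^2\big)$ at step $j$. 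Applying Lemma \ref{lemma:AdamLip} with $\theta=\vartheta$ then bounds the parameter discrepancy by $D_1\sum_{j<\tau}\kappa^0_{j,\tau}$ times that quantity. Taking $\E^1$ yields $D_1\sum_j\kappa^0_{j,\tau}\mathscr L_j$, since the bound is uniform in $s$ and the remaining factors are deterministic given $\mathcal F^0$.

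\emph{Main obstacle.} The main difficulty is the parameter-discrepancy step. The coupling inequality involves $\sup_\theta$ and the batch data, while $\mathscr L_j$ is an expectation of a supremum over all of $(\bar B(R_0))^N$. We therefore need the training-batch $A_{t,j}$ to be dominated by this uniform supremum before $\E^1$ is taken. We must also check measurability of these suprema, which we handle via the continuity in $\bm y$ and the countable-dense argument of Lemma \ref{lemma:MeasurableSupremum}. Collecting the three contributions gives $C_5$.
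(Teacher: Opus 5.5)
Your proposal is correct and follows essentially the same route as the paper. It uses the same three-way split: a time-discretisation term for $\nu$ controlled by Lemma~\ref{lemma:Lipt}, the martingale term bounded through $\sup_{(x,\zeta)\in\mathscr D}$ and Lemma~\ref{lemma:doobL2}, and a Lipschitz term. In that last term the parameter discrepancy is handled by coupling through the two AdamW flow maps together with Lemmas~\ref{lemma:GradCont} and~\ref{lemma:AdamLip}, and then dominated by $\mathscr L_j$ via the index coupling. The differences are only cosmetic: you bound the coupling by $\sup_{\theta\in\mathfrak S}$ instead of integrating against $\bar\nu^{N,L}_{r/L,0}$, and you use $W_2$ in place of $W_\infty$ for the empirical measures.
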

\begin{proof}
Since $(t,\bm{y})\mapsto Z^{i,\bm{y}}_{t,\tau}$ and $\bm{y}\mapsto\bar Z^{i,L,\bm{y}}_{\tau}$ are continuous for every $\omega\in\Omega$, by assumption, and all subsequent variables depend continuously on $Z^{i,\bm{y}}_{t,\tau}$ and $\bar Z^{i,L,\bm{y}}_{t,\tau}$, it follows that all suprema over $(t,\bm{y})\in[0,1]\times (\bar B(R_0))^N$ are $\mathcal F$-measurable. This is because $[0,1]\times (\bar B(R_0))^N$ is separable and so the suprema can be taken over a countable dense subset.

As discussed in Section 2.1.3 in \cite{carmona2018probabilistic} Vol. II., for any $\mathcal F$-measurable random variable $\mathcal Z$, the section $\mathcal Z(\omega^0,.)$ is $\mathcal F^1$-measurable $\mathbb P^0$-a.e. in $\omega^0$. Therefore, there exists a set subset $\Omega^0_{\mathrm{w.d.,2}}\subseteq \Omega^0$ with full $\mathbb P^0$-measure such that the expectation with respect to $\mathbb P^1$ on all subsequent variables is well-defined for all $\omega^0\in \Omega^0_{\mathrm{w.d.,2}}$.   \\

Let us expand $\Delta^{i,\bm{y}}_{r',\tau}$ into different sources of error according to
\begin{align}\label{eqn:AllErrors}
     \Delta^{i,\bm{y}}_{r',\tau}&\le 3\underbrace{\left\Vert \int_0^\frac{r'+1}{L} F(Z^{i,\bm{y}}_s,\mathfrak{m}^{N,\bm{y}}_s,\nu_{s,\tau})-F( Z^{i,\bm{y}}_s, \mathfrak{m}^{N,\bm{y}}_s,\nu_{r_s,\tau})ds\right\Vert^2}_{(c)^{i,\bm{y}}_{r'}}\\&+3\underbrace{\left\Vert \int_0^{\frac{r'+1}{L}} F( Z^{i,\bm{y}}_s, \mathfrak{m}^{N,\bm{y}}_s,\nu_{r_s,\tau})-F(Z^{i,\bm{y}}_s, \mathfrak{m}^{N,\bm{y}}_s,\hat\nu^{N,L}_{r_s,\tau})ds\right\Vert^2}_{(d)^{i,\bm{y}}_{r'}}\notag\\
     &+3\underbrace{\left\Vert \int_0^{\frac{r'+1}{L}} F\left( Z^{i,\bm{y}}_s,\mathfrak{m}^{N,\bm{y}}_s,\hat\nu^{N,L}_{r_s,\tau}\right)-F(\bar Z^{i,L,\bm{y}}_s,\bar m^{N,L,\bm{y}}_s,\bar\nu^{N,L}_{r_s,\tau})ds\right\Vert^2}_{(e)^{i,\bm{y}}_{r'}}\notag.
 \end{align}
 For the term labeled $(c)$, applying the Cauchy-Schwarz inequality and the Lipschitz continuity of $F$, Assumption \ref{assm:LipCont}, gives
 \begin{equation*}
     (c)^{i,\bm{y}}_{r'}\le \Lambda_F^2  \frac{r'+1}{L}\int_0^\frac{r'+1}{L}  W^2_2(\nu_{s,\tau},\nu_{r_s,\tau})ds, \qquad \mathbb P-a.s.
 \end{equation*}
 Lemma \ref{lemma:GradCont} states that $ W_2(\nu_{s,\tau},\nu_{t,\tau})$ is $D_2$-Lipchitz continuous uniformly in training step $\tau$, and so  we have that
\begin{equation}\label{eqn:(c)}
       (c)^{i,\bm{y}}_{r'}\le \Lambda_F^2 D_2^2L^{-2}, \qquad \mathbb P-a.s.
 \end{equation}
 For the term labelled $(e)$, we apply the Cauchy-Schwarz inequality and the Lipschitz continuity of $F$ to get
\begin{align*}
     (e)^{i,\bm{y}}_{r'}\le  \frac{3\Lambda_F^2(r'+1)}{L}\int_{0}^{\frac{r'+1}{L}} \Vert Z^{i,\bm{y}}_t-\bar Z^{i,L,\bm{y}}_{t}\Vert^2+ W_\infty^2\left(\mathfrak{m}^{N,\bm{y}}_t,\bar m^{N,L,\bm{y}}_t \right)+ W_2^2(\hat \nu^{N,L}_{r_t,\tau}, \bar \nu^{N,L}_{r_t,\tau})dt.
 \end{align*}
  Since $\hat \nu^{N,L}_{t,0}=\bar \nu^{N,L}_{t,0}$, the Wasserstein distance between $\hat \nu^{N,L}_{\frac{r}{L},\tau}$ and $ \bar \nu^{N,L}_{\frac{r}{L},\tau}$ can be bounded using the coupling induced by pushing forward $\bar \nu^{L,H}_{t,0}$ by their respective flow maps. This coupling yields
 \begin{align*}
   W_2^2(\hat \nu^{N,L}_{\frac{r}{L},\tau}, \bar \nu^{N,L}_{\frac{r}{L},\tau}) \le \int\Vert \Phi_{\frac{r}{L},\tau}[\bm\nu](\theta)-\Phi_{\frac{r}{L},\tau}[\bm{\bar\nu}^{N,L}](\theta)\Vert_\mathrm{F}^2 d\bar\nu^{N,L}_{\frac{r}{L},0}(\theta).
\end{align*}
By applying the Lipschitz continuity of the flow map $\Phi[\bm\nu]$ in the measure argument, Lemma \ref{lemma:AdamLip} whose hypotheses are satisfied because of Lemma \ref{lemma:GradCont}, we obtain
\begin{align}\label{eqn:boundparams}
    W_2^2(\hat \nu^{N,L}_{\frac{r}{L},\tau}, \bar \nu^{N,L}_{\frac{r}{L},\tau})\le \frac{D_1}{B} \sum_{j=0}^{\tau-1}\sum_{b=1}^B \kappa^0_{j,\tau}W^2_{\infty}(\bar\rho^{N,L,b}_{\frac{r}{L},j},\rho^{N,b}_{\frac{r}{L},j}),
\end{align}
where $\kappa^0_{j,\tau}$ is defined in \eqref{eqn:kappa} and $D_1$ is the constant from Lemma \ref{lemma:AdamLip}. Therefore, $(e)$ is $\mathbb P$-a.s. bounded above by
\begin{align}\label{eqn:(e)intermediate}
 (e)^{i,\bm{y}}_{r'}&\le 3\Lambda_F^2\left(\int_{0}^{\frac{r'+1}{L}} 2 W_\infty^2\left(\mathfrak{m}^{N,\bm{y}}_t,\bar m^{N,L,\bm{y}}_t \right)dt+\frac{D_1}{BL}\sum_{r=0}^{r'}\sum_{j=0}^{\tau-1}\sum_{b=1}^B \kappa^0_{j,\tau}  W^2_{\infty}(\bar\rho^{N,L,b}_{\frac{r}{L},j},\rho^{N,b}_{\frac{r}{L},j})\right).
\end{align}
 Since $\bar Z^{i,L,\bm{y}}_t$ is constant over the interval $[r_t,t]$ and $Z^{i,\bm{y}}_{t}$ is $C_Z R_Z$-Lipschitz continuous in $t$ due to Assumption \ref{assm:gevrey}, Young's inequality implies that
\[
\Vert Z^{i,\bm{y}}_{t}-\bar Z^{i,L,\bm{y}}_{t}\Vert^2\le 2\Vert Z^{i,\bm{y}}_{r_t}-\bar Z^{i,L,\bm{y}}_{r_t}\Vert^2+2\frac{(C_Z R_Z)^2}{L^2}, \qquad \mathbb P-a.s.
\]
Notice that the expectation under $\mathbb P^1$, averaged over the batch of training data, of the $\infty$-Wasserstein distance between $\bar\rho^{N,L,b}_{\frac{r}{L},j}$ and $\rho^{N,b}_{\frac{r}{L},j}$ is less than or equal to $\mathscr L_j$ defined in equation \eqref{eqn:mathscr_L} for every $r\in[0:L-1]$.
Therefore, for $\mathbb P^0$-a.e. $\omega^0$ and adjusting for the time discretisation error, the expectation of the right-hand side of equation \eqref{eqn:(e)intermediate} under $\mathbb P^1$ is bounded according to
\begin{align}
    \mathbb E^1\bigg[ \max_{r'\in[0: l]} \sup_{\bm{y}\in (\bar B(R_0))^N}\max_{ i \in [1:N]}(e)^{i,\bm{y}}_{r'}\bigg] &\le \frac{12\Lambda_F^2}{L} \sum_{r=0}^l\mathbb E^1 \left[\max_{r'\in[0:r]}\sup_{\bm{y}\in(\bar B(R_0))^N}\max_{i\in[1:N]}\Vert Z^{i,\bm{y}}_{\frac{r'}{L}}-\bar Z^{i,L,\bm{y}}_{\frac{r'}{L}}\Vert^2\right]\notag\\& +3\Lambda_F^2 \left(\frac{4(C_Z R_Z)^2}{L^2}+ D_1\sum_{j=0}^{\tau-1}\kappa^0_{j,\tau} \mathscr{L}_j \right).\label{eqn:(e)}
 \end{align} 
 Recall the definition of $M^{x,\zeta}_{r,\tau}$ given by equation \eqref{eqn:Martingale}. Notice that, for every $i\in[1:N]$, \\$\bm{y}\in(B(R_0))^N$, $(d)^{i,\bm{y}}_{r'}$ is $\mathbb P$-a.s. bounded above by the supremum of $\Vert M^{x,\zeta}_{r'+1,\tau}\Vert^2$ over $(x,\zeta )\in \mathscr D$, denoted $M^{*}_{r'+1,\tau}$.
The current setting satisfies the assumptions for Lemma \ref{lemma:doobL2} (by design), thus Lemma \ref{lemma:doobL2} applies and gives
\begin{align}\label{eqn:(d)}
    \mathbb E^1 \left[\max_{r\in[0:l]}\sup_{\bm{y}\in (\bar B(R_0))^N}\max_{i\in[1:N]}(d)^{i,\bm{y}}_{r'}\right] \le C_M^* \left( L^{-2}+ H^{-1}L^{-\frac{2n}{2n+1}}\right),
\end{align}
where $n$ is the constant in Assumption \ref{assm:gevrey}. The Lemma concludes by combining the bounds \eqref{eqn:(c)}, \eqref{eqn:(e)} and \eqref{eqn:(d)} according to equation \eqref{eqn:AllErrors}.
\end{proof}
    \section{Proof of Theorem \ref{theorem:WTS}}\label{app:J}
Suppose that $(X^{i,\bm{y}}_{t,\tau},a^{i,\bm{y}}_{t,\tau})_{i=1}^N,(\bar X^{i,L,\bm{y}}_{t,\tau},\bar a^{i,L,\bm{y}}_{t,\tau})_{i=1}^N$ are the solutions to \eqref{eqn:ansatz} and  \eqref{eqn:CTE1}-\eqref{eqn:CTE2}, given the initial condition $\bm y\in (\bar B(R_0))^N$. The first of which exists by Lemma \ref{lemma:GlobalWellDefined}. A standard Gr\"onwall argument using the Lipschitz continuity of $\Gamma,\mathcal K$ and $\partial_\mu\ell$ yields that, after modifying it on a $\mathbb P$-null set if necessary, $\bm{y}\mapsto (\bar X^{i,L,\bm{y}}_{t,\tau},\bar a^{i,L,\bm{y}}_{t,\tau})$ is continuous for every $\omega\in \Omega$. This can be done as $\mathcal F$ is complete. Furthermore, $(\bar X^{i,L,\bm{y}}_{t,\tau},\bar a^{i,L,\bm{y}}_{t,\tau})$ is $\mathcal F$-measurable by Lemma \ref{lemma:GlobalWellDefined}. Therefore, the assumptions on $\bar Z^{i,L,\bm{y}}_{t}$ detailed at the start of Appendix \ref{App:I} are satisfied by $(\bar X^{i,L,\bm{y}}_{t,\tau},\bar a^{i,L,\bm{y}}_{t,\tau})$ .

Recall that the continuous time extensions \eqref{eqn:CTE1}-\eqref{eqn:CTE2} to the hidden state and adjoint variables round $t$ up and down, respectively, so that
\[
\bar Z^{i,L,\bm{y}}_{t,\tau} = (\bar X^{i,L,\bm{y}}_{t,\tau},\bar a^{i,L,\bm{y}}_{t,\tau}) = (\bar X^{i,L,\bm{y}}_{r_t,\tau},\bar a^{i,L,\bm{y}}_{r_t+\frac{1}{L},\tau}).
\]
In Lemma \ref{lemma:StochasticApprox}, the time component is discretised by approximating $Z^{i,\bm{y}}_t$ with $Z^{i,\bm{y}}_{r_t}$. However, an identical bound would occur if we rounded time up and down in half of the components each.

By Young's inequality, we have that
\begin{align*}
    \Vert a^{i,\bm{y}}_{\frac{r}{L},\tau}-\bar a^{i,L,\bm{y}}_{\frac{r}{L},\tau}\Vert^2&\le 2\Vert a^{i,\bm{y}}_{1,\tau}-\bar a^{i,L,\bm{y}}_{1,\tau}\Vert^2\\& +2\left\Vert \int_\frac{r}{L}^1  \mathcal K(X^{i,\bm{y}}_{t,\tau},\rho^{N,\bm{y}}_{t,\tau},\nu_{t,\tau},a^{i,\bm{y}}_{t,\tau})- \mathcal K(\bar X^{i,L,\bm{y}}_{t,\tau},\bar \rho^{N,L,\bm{y}}_{t,\tau},\bar\nu^{N,L}_{t,\tau},\bar a^{i,L,\bm{y}}_{t,\tau})dt\right\Vert^2.
\end{align*}
 The function $\mathcal K$ is bounded and Lipschitz as a result of Lemma \ref{lemma:BoundedGamma}, Corollary \ref{cor:LipmathcalK} and Remark \ref{remark:GlobalLip}, thus $\mathcal K$ satisfies Assumption \ref{assm:LipCont}. Recall that equation \eqref{eqn:IPSvsFM} identifies $X^{i,\bm{y}}_{t,\tau},a^{i,\bm{y}}_{t,\tau}$ with the solution $Z^{x,\zeta}_{t,\tau}=(X^{x,\zeta}_{t,\tau},a^{x,\zeta}_{t,\tau})$ to equations \eqref{eqn:FMY}-\eqref{eqn:FMp}, with $\bm\nu$ taken as $\bm\nu_\tau$, $x=y^i$ and $\zeta$ taken as the empirical measure on $y^1,\ldots y^N$. Corollary \ref{cor:FlowMap_wd} demonstrates that the map $(x,\zeta)\mapsto Z^{x,\zeta}_{t,\tau}$ is Lipschitz, which can be extended to include Lipschitz in time by the boundedness of $\mathcal K$ and $\Gamma$. After modifying $Z^{x,\zeta}_t$ on a $\mathbb P^0$, if necessary, this Lipschitz estimate holds for all $\omega^0\in \Omega^0$. Corollary \ref{cor:FlowMap_wd} also proves that $Z^{x,\zeta}_{t,\tau}$ is $\mathcal F^0$-measurable for fixed $(t,x,\zeta)\in [0,1]\times \mathscr{D}$. Therefore, $Z^{x,\zeta}_{t,\tau}$ satisfies \ref{assm:cont}. Furthermore, Lemmas \ref{lemma:BoundedGamma} and \ref{lemma:GlobalWellDefined} combined imply that $\Gamma$ and $\mathcal K$ are bounded by $R_\theta^2R_X$ and $ R_a\tilde{B}_\mathcal{K}$, respectively. Hence, for any $(x,\zeta)\in \mathscr D$
 \[
 \Vert \partial_t Z^{x,\zeta}_{t,\tau}\Vert \le R_\theta^2R_X+R_a\tilde{B}_\mathcal{K}.
 \]
 Therefore, the flow map $Z^{x,\zeta}_{t,\tau}$ satisfies Assumption \ref{assm:gevrey} with $n=1$.\\
 
  Let $z=(x,a)\in\R^{2d}$, $w=(x,a,\tilde{x},\tilde{a})\in\R^{4d}$. We use subscript $j$ to index the $j$-th block of $d$ components so that odd subscripts correspond to the hidden states $X^{i,b}_{t,\tau}$, while even subscripts correspond to the adjoint variables $a^{i,b}_{t,\tau}$. Then, $\nabla_x\mathcal H(x,\rho,\nu,a)$ satisfies \ref{assm:StructuredF} with
\begin{align}\label{eqn:fgz}
    \begin{cases}
    &f^{\gamma z}_1(z,w,\theta) =f^{\gamma z}_3(z,w,\theta) = \beta \langle \theta_Q z_1,\theta_K(w_1+w_3) \rangle,  \\&f^{\gamma z}_2(z,w,\theta) =\beta (\theta_Q )^T\theta_K \,(w_1-w_3)\langle\theta_V w_1, \theta_O z_2\rangle.
    \end{cases}
\end{align}
These functions $f_1,f_2,f_3$ are clearly jointly continuous in each variable and satisfy the boundedness requirement of Assumption \ref{assm:StructuredF}.
These choices of $f_1,f_2,f_3$ ensure that the corresponding $J_1,J_2,J_3$, defined in \ref{def:J}, are linear in each variable or linear in $ww^T$. Therefore, each $J_1,J_2,J_3$ satisfies Assumption \ref{assm:MultiLinear}, where the associated $\widetilde J$ will be bounded by a constant that only depends on $R_\theta,\beta$ due to Remark \ref{remark:GlobalLip}. See Section \ref{sec:Jtilde} for further details.\\

The second term in $\mathcal K$ takes the form
\[
\int \partial_\mu \mathcal H(y,\rho|_x,\nu,p)(x)d\rho(y,p) =  \iint \frac{\exp( f^{\gamma \mu}_1((x,a),w,\theta))\, f^{\gamma \mu}_2((x,a),w,\theta)}{\int \exp( f^{\gamma \mu}_3(w,\tilde w, \theta))d\rho^{\otimes 2}(\tilde w)}d\rho^{\otimes 2}(w) d\nu(\theta),
\]
where
\begin{align}\label{eqn:fgmu}
\begin{cases}
    f_1^{\gamma \mu}(z,w,\theta) &=  \beta \langle \theta_Q w_1, \theta_K (z_1+w_3)\rangle,\\
    f_2^{\gamma \mu}(z,w,\theta) &= \beta (\theta_K)^T  \theta_Q w_1 \langle \theta_V (z_1-w_3), \theta_O w_2 \rangle+ \theta_V^T\theta_Ow_2,\\
     f_3^{\gamma \mu}(w,\tilde w,\theta) &=  \beta \langle \theta_Q w_1, \theta_K (\tilde w_1+\tilde w_3)\rangle.
\end{cases}
\end{align}
These choices of $f_1,f_2,f_3$ only satisfy the continuity and boundedness requirement for Assumption \ref{assm:StructuredF}.  
Nevertheless, as noted in Remark \ref{remark:mathcalK}, the conclusion of Lemma \ref{lemma:L1Mstar} still holds for the second term of $\mathcal K$.
Furthermore, there does not exist a singular $\widetilde{J}$ such that $J_1^{\gamma\mu}$ or $J^{\gamma\mu}_2$ satisfy Assumption \ref{assm:MultiLinear}, where these functions correspond to $f_1^{\gamma\mu}$ and $f_2^{\gamma\mu}$ according to definition \eqref{def:J}. However, as shown in Section \ref{sec:Jtilde}, $J_1^{\gamma\mu}, J^{\gamma\mu}_2$ and $J_3^{\gamma\mu}$ can be expressed as a sum of at most three terms of the form of $\widetilde{J}$ in Assumption \ref{assm:MultiLinear}. As a result, Remark \ref{remark:subadditivity} states that Lemma \ref{lemma:TaylorApproxE} still applies for the given $J^{\gamma\mu}_j$, $j=1,2,3$. Since these are the only instances in which these assumptions are used, the conclusion of Lemma \ref{lemma:doobL2} remains valid. These Assumptions \ref{assm:StructuredF} and \ref{assm:MultiLinear} are only required in Lemma \ref{lemma:StochasticApprox} so that  Lemma \ref{lemma:doobL2} can be applied.

Accordingly, Lemma \ref{lemma:StochasticApprox} can be applied to both terms of $\mathcal K$, and so, combining them by Young's inequality, we deduce that there exists a constant $C_6$ such that
\begin{align}\label{eqn:boundaintermediate}
&\mathbb E^1 \bigg[\max_{r\in[l:L]}\sup_{\bm{y}\in (\bar B(R_0))^N}\max_{i\in[1:N]}\Vert a^{i,\bm{y}}_{\frac{r}{L},\tau} - \bar a^{i,L,\bm{y}}_{\frac{r}{L},\tau}\Vert^2\bigg] \\&\le \mathbb E^1 \bigg[\sup_{\bm{y}\in (\bar B(R_0))^N}\max_{i\in[1:N]} \Vert a^{i,\bm{y}}_{1,\tau} - \bar a^{i,L,\bm{y}}_{1,\tau}\Vert^2 \bigg]  +\frac{C_6}{L}\sum_{r'=l}^{L-1} \mathscr{W}_{r',\tau} +C_6\left(\frac{1}{L^2}+\frac{1}{HL^{\frac{2}{3}}}+\sum_{j=0}^{\tau-1}\kappa^0_{j,\tau}\mathscr{L}_j\right), \notag
\end{align}
where $\mathscr{L}_j$ is defined in \eqref{eqn:mathscr_L} and $\mathscr{W}_{l,\tau}$ is defined by
\begin{equation}
    \mathscr{W}_{l,\tau} = \mathbb E^1\left[\max_{r\in[l:L-1]}\sup_{\bm{y}\in (\bar B(R_0))^N}\max_{i\in[1:N]}\Vert X^{i,\bm{y}}_{\frac{r}{L},\tau}-\bar X^{i,L,\bm{y}}_{\frac{r}{L},\tau}\Vert^2+\Vert a^{i,\bm{y}}_{\frac{r+1}{L},\tau}-\bar a^{i,L,\bm{y}}_{\frac{r+1}{L},\tau}\Vert^2 \right].
\end{equation}
Here, we use the convention $\sum_{r'=L}^{L-1} \mathscr{W}_{r',\tau}=0$. Note that the time steps on the hidden state and adjoint variables have been rounded up and down, respectively, as discussed at the start of this section.\\

The argument for the difference in hidden states follows similarly. Since $X^{i,\bm{y}}_{0,\tau} = \bar X^{i,L,\bm{y}}_{0,\tau}$, for every $r\in[0:L-1]$
\[
\Vert X^{i,\bm{y}}_{\frac{r+1}{L},\tau} - \bar X^{i,L,\bm{y}}_{\frac{r+1}{L},\tau}\Vert^2 = \left\Vert \int_0^\frac{r+1}{L}\Gamma(X^{i,\bm{y}}_{t,\tau}, \mu^{N,\bm{y}}_{t,\tau},\nu_{t,\tau})-\Gamma(\bar X^{i,L,\bm{y}}_{t,\tau},\bar \mu^{N,L,\bm{y}}_{t,\tau},\bar \nu^{N,L}_{t,\tau})dt\right\Vert^2.
\]
By Lemmas \ref{lemma:BoundedGamma},\ref{lemma:Lipderivs}  and Remark \ref{remark:GlobalLip}, the velocity field $\Gamma$ is Lipschitz and bounded and so satisfies Assumption \ref{assm:LipCont}. Furthermore, $\Gamma$ satisfies Assumption \ref{assm:StructuredF} with
\begin{align*}
    f_1^\gamma(z,w,\theta) &=f_3(z,w,\theta)= \beta \langle \theta_Q z_1,\theta_K w_1 \rangle,\\
    f_2^\gamma(z,w,\theta) &=(\theta_O)^T\theta_V w_1 ,
\end{align*}
which are all linear in $z,w$. Hence, the associated functions $J^\gamma_1,J^\gamma_2,J^\gamma_3$, as defined in \ref{def:J}, satisfy Assumption \ref{assm:MultiLinear} for some $\widetilde J(\theta)$ that is bounded in terms of $\beta$, $R_\theta$, as a result of Remark \ref{remark:GlobalLip}.
Therefore, Lemma \ref{lemma:StochasticApprox} applies with $Z^{i,\bm{y}}_{t,\tau}, \bar Z^{i,L,\bm{y}}_{t,\tau}$ taken as $X^{i,\bm{y}}_{t,\tau}$ and $\bar X^{i,L,\bm{y}}_{t,\tau}$, respectively, to yield that there exists a constant $C_7$ such that
\begin{align*}
    \mathbb E^1\bigg[&\max_{i\in[1:N]}\sup_{\bm{y}\in (\bar B(R_0))^N}\max_{r\in[0:l+1]}\Vert X^{i,\bm{y}}_{\frac{r}{L},\tau} - \bar X^{i,L,\bm{y}}_{\frac{r}{L},\tau}\Vert^2 \bigg] \\&\le  \frac{C_7}{L}\sum_{r'=0}^l\mathbb E^1\left[\max_{ r \in[0:r']}\sup_{\bm{y}\in (\bar B(R_0))^N}\max_{i \in[1:N]} \Vert X^{i,\bm{y}}_{\frac{r}{L},\tau}-\bar X^{i,L,\bm{y}}_{\frac{r}{L},\tau}\Vert^2\right]\notag+C_7\left(\frac{1}{L^2}+\frac{1}{HL^{\frac{2}{3}}}+\sum_{j=0}^{\tau-1}\kappa^0_{j,\tau}\mathscr{L}_j\right).
\end{align*}
By applying the discrete Gr\"onwall's Lemma, we deduce that
\begin{equation}\label{eqn:boundXfinal}
       \mathbb E^1\bigg[\max_{i\in[1:N]}\sup_{\bm{y}\in (\bar B(R_0))^N}\max_{r\in[0:L]}\Vert X^{i,\bm{y}}_{\frac{r}{L},\tau} - \bar X^{i,L,\bm{y}}_{\frac{r}{L},\tau}\Vert^2\bigg] \le  C_7\left(\frac{1}{L^2}+\frac{1}{HL^{\frac{2}{3}}}+\sum_{j=0}^{\tau-1}\kappa^0_{j,\tau}\mathscr{L}_j\right),
\end{equation}
where we have absorbed the $\exp(C_7)$ factor from the Gr\"ownall Lemma back into $C_7$. By using the Lipschitz continuity of the terminal condition for the adjoint variable, Assumption \ref{assm:Obj}, we obtain
\begin{align}\label{eqn:diffininitialcondition}
\mathbb E^1 \bigg[\max_{i\in[1:N]}\sup_{\bm{y}\in (\bar B(R_0))^N} \Vert a^{i,\bm{y}}_{1,\tau} - \bar a^{i,L,\bm{y}}_{1,\tau}\Vert^2 \bigg] &\le 2\Lambda_\ell^2\;\mathbb E^1 \bigg[\max_{i\in[1:N]} \sup_{\bm{y}\in (\bar B(R_0))^N}\Vert X^{i,\bm{y}}_{1,\tau} - \bar X^{i,L,\bm{y}}_{1,\tau}\Vert^2 \bigg]\notag\\&\le 2C_7 \Lambda_\ell^2\left(\frac{1}{L^2}+\frac{1}{HL^{\frac{2}{3}}}+\sum_{j=0}^{\tau-1}\kappa^0_{j,\tau}\mathscr{L}_j\right) .
\end{align}
The second inequality uses equation \eqref{eqn:boundXfinal}.
Since $\mathscr{W}_{r',\tau}$ can be bounded above by the contributions from $(X,a)$ separately, combining equations \eqref{eqn:boundaintermediate}, \eqref{eqn:boundXfinal} and \eqref{eqn:diffininitialcondition} gives for any $l\in[1:L]$
\begin{align*}
\mathscr{W}_{l-1,\tau}\le \frac{C_6}{L}\sum_{r'=l}^{L-1} \mathscr{W}_{r',\tau}+(C_6+C_7(1+2\Lambda_\ell^2))\left(\frac{1}{L^2}+\frac{1}{HL^{\frac{2}{3}}}+\sum_{j=0}^{\tau-1}\kappa^0_{j,\tau}\mathscr{L}_j\right),
\end{align*}
where by convention $ \sum_{r=L}^{L-1}\mathscr{W}_{r,\tau}=0$. 
Therefore, by the discrete G\"onwall inequality
\begin{equation}\label{eqn:boundmathscrW}
    \mathscr{W}_{0,\tau}\le C_8\left(\frac{1}{L^2}+\frac{1}{HL^{\frac{2}{3}}}+\sum_{j=0}^{\tau-1}\kappa^0_{j,\tau}\mathscr{L}_j\right),
\end{equation}
for some constant $C_8=C_8(C_6,C_7,\Lambda_\ell)$.
Notice that $\bar X^{i,L,\bm{y}}_{t,\tau}, \bar a^{i,L,\bm{y}}_{t,\tau}$ are constant on each interval $[\frac{r}{L},\frac{r+1}{L})$. Also, the boundedness of $\Gamma$ and $\mathcal K$, Lemma \ref{lemma:BoundedGamma}, implies that $$\Vert X^{i,\bm{y}}_{t,\tau}-X^{i,\bm{y}}_{r_t,\tau}\Vert\le \frac{R_\theta^2R_X}{L},\qquad \Vert a^{i,\bm{y}}_{t,\tau}- a^{i,\bm{y}}_{r_t,\tau}\Vert \le \frac{ R_a\tilde{B}_\mathcal{K}}{L}. $$ Hence, the time discretisation error between $\mathscr{L}_j$ and $\mathscr{W}_{0,j}$ is of order $\mathcal O(L^{-2})$, and using Young's inequality, we get
\[
\mathscr{L}_\tau\le 2\mathscr{W}_{0,\tau}+ \frac{2 R_\theta^4R_X^2+2R_a^2\tilde{B}_\mathcal{K}^2}{L^2}.
\]
Hence, substituting \eqref{eqn:boundmathscrW} in to the above equation and redefining $C_8$ to absorb the $\mathcal O(L^{-2})$ time discretisation error, we deduce that for every $\tau\in [1:T]$ 
\begin{equation}\label{eqn:mathscrLbeforeGronwall}
    \mathscr{L}_{\tau}\le C_8\left(\frac{1}{L^2}+\frac{1}{HL^{\frac{2}{3}}}+\sum_{j=0}^{\tau-1}\kappa^0_{j,\tau}\mathscr{L}_j\right).
\end{equation}
\textbf{Base Case: $\tau=0$}\\
The validity of Lemma \ref{lemma:StochasticApprox} is restricted to $\tau \ge 1$, since it relies on Lemmas \ref{lemma:AdamLip} and \ref{lemma:GradCont}, which only hold for $\tau \in[1:T]$. These Lemmas are used to control the expression
\[
\int \Vert \Phi_{\frac{r}{L},\tau}[\bm\nu](\theta)-\Phi_{\frac{r}{L},\tau}[\bm{\bar\nu}^{N,L}](\theta)\Vert_\mathrm{F}^2 \,d\bar\nu^{N,L}_{\frac{r}{L},0}(\theta).
\]
However, at $\tau=0$, $\Phi_{\frac{r}{L},0}$ is the identity map, and so this expression is identically zero. As a result, the argument in the proof of Lemma \ref{lemma:StochasticApprox} applies at $\tau=0$, with any term that comes from the above expression set to zero. Therefore, the same conclusion to Lemma \ref{lemma:StochasticApprox} holds, except without the $\mathscr{L}_j$ term. Applying the preceding Gr\"onwall argument yields that \eqref{eqn:mathscrLbeforeGronwall} holds for $0\le \tau\le T$ with the convention $\sum_{j=0}^{-1}\kappa_{j,0}\mathscr{L}_j=0$.

Notice that $\kappa^0_{j,T}$ in \eqref{eqn:kappa} coincides with $\kappa^\lambda_{i,T}$ from \eqref{eqn:kappa_} when $\lambda=0$, so the factors $\alpha_{\tau+1,T}$ reduce to one. Furthermore, since the proof of Lemma \ref{lemma:RecursiveSum} does not involve division by $\lambda$, its conclusion still holds with $\kappa^\lambda_{i,T}$ replaced by $\kappa^0_{i,T}$. Therefore, there exists a constant $C_9$ that scales exponential in the total step length $\sum_{j=0}^T\eta_j$ such that for every $\tau\in[1:T]$
\begin{equation}\label{eqn:boundmathscrL}
    \mathscr{L}_\tau \le C_9\left( \frac{1}{L^2}+ \frac{1}{L^{2/3}H}\right),
\end{equation}
holds $\mathbb P^0$-almost surely. This is precisely the conclusion to Theorem \ref{theorem:WTS}.\\

 Since the maximum of $W^2_{\infty}(\bar\rho^{N,L,b}_{\frac{r}{L},j},\rho^{N,b}_{\frac{r}{L},j})$ over $0\le r < L, 1\le b \le B$ is bounded above in expectation by $\mathscr{L}_j$ for all $0\le j<T$, substituting this into \eqref{eqn:boundparams} yields
\[
\E^1\left[\max_{0\le r < L}W_2^2(\hat \nu^{N,L}_{\frac{r}{L},\tau},\bar \nu^{N,L}_{\frac{r}{L},\tau})\right]\le D_1\sum_{j=0}^{\tau-1}\kappa^0_{j,\tau}\mathscr{L}_j.
\]
Applying \eqref{eqn:boundmathscrL} to bound $\mathscr{L}_\tau$ then gives
\[
\max_{0\le \tau\le T}\E^1\left[\max_{0\le r< L} W_2^2(\hat \nu^{N,L}_{\frac{r}{L},\tau},\bar \nu^{N,L}_{\frac{r}{L},\tau})\right]\le D_1C_9\left(\frac{1}{L^2}+\frac{1}{L^{2/3}H}\right) \max_{0\le \tau \le T}\sum_{j=0}^{\tau-1}\kappa^0_{j,\tau}, \qquad \mathbb P^0-a.s.
\]
Lemma \ref{lemma:ParamDiv} now follows by bounding the sum of $\kappa^0_{i,T}$ defined in \eqref{eqn:kappa}. By interchanging the order of summation, we obtain
\[
\sum_{j=0}^{\tau-1}\kappa^0_{j,\tau} = \sum_{i=1}^T\sum_{j=0}^{i-1}\frac{(1-\beta_1)\eta_i}{1-\beta_1^i}(\beta_1^{i-j-1}+2\beta_2^{i-j-1})= C_\kappa\sum_{i=1}^\tau\eta_i,
\]
where the geometric series is evaluated to give the constant $C_\kappa$defined in \eqref{eqn:ckappa}.
\subsection{Existence of $\widetilde{J}$}\label{sec:Jtilde}
The existence of $\widetilde{J}$ is clear when $f_1,f_2,f_3$ are either linear or independent of $z,w$. Therefore, the only problematic $f$ are $f^{\gamma z}_2, f_1^{\gamma\mu}$ and $f^{\gamma\mu}_2$ given in the previous section. For ease of notation, we shall write $A=\beta \theta_K^T\theta_Q$, $U=\theta_O^T\theta_V$ and denote the indices of the $j$-th block of $d$ components by $I_j=[(j-1)d+1:jd]$. We state the relevant $\widetilde J$ in block form, where most of the blocks are identically zero.

Using Definition \ref{def:J}, we see that the $J^{\gamma z}_2$ corresponding to $f^{\gamma z}_2$ in \eqref{eqn:fgz} satisfies Assumption \ref{assm:MultiLinear} with $\widetilde{J}$ given by
\begin{align*}
J^{\gamma z}_2(z,w,u,\theta) &= \left\langle \widetilde{J}_2^{\gamma z} ,z\otimes w\otimes w\otimes u\right\rangle_\mathrm{HS},\\ (\widetilde{J}_2^{\gamma z})_{ijkl} &= \begin{cases}
    A_{kl}\, U_{ij} & \text{if }i\in I_2,\, j\in I_1, \,k\in I_1,\, l\in I_1\\
    -A_{kl}\, U_{ij} & \text{if }i\in I_2,\, j\in I_1,\, k\in I_3,\, l\in I_1\\
    0 & \text{otherwise}
\end{cases},
\end{align*}
for $i\in[1:2d],j,k\in[1:4d]$ and $l\in[1:d]$. Since $\langle \theta_Q w_1,\theta_K z_1\rangle$ is linear in both $z,w$, only the term  $\langle \theta_Q w_1,\theta_Kw_3\rangle$ in $f_1^{\gamma\mu}$, defined in \eqref{eqn:fgmu}, needs further study. Notice that it can be expressed as 
\[
\beta\langle \theta_Q w_1,\theta_Kw_3\rangle = \left\langle \widetilde{J}^{\gamma\mu}_{1,2}, w w^T \right\rangle_\mathrm{F}, \qquad (\widetilde{J}_{1,2}^{\gamma \mu})_{ij} = \begin{cases}
    A_{ij} &\text{if }i\in I_3,\, j \in I_1\\
    0 & \text{otherwise}
\end{cases},
\]
for $i,j\in[1:4d]$. Thus, $f_1^{\gamma\mu}$ is the sum of two terms that satisfy Assumption \ref{assm:MultiLinear}. Likewise, the final term in $f_2^{\gamma\mu}$ defined in \eqref{eqn:fgmu} is linear in $w_2$ and so doesn't require further study. The other terms of $J_2^{\gamma\mu}$ satisfy Assumption \ref{assm:MultiLinear} with $\widetilde{J}$ given by 
\begin{align*}
        \beta\langle \theta_K u,\theta_Q w_1\rangle\langle \theta_V z_1, \theta_O w_2 \rangle &= \left\langle \widetilde{J}^{\gamma\mu}_{2,2}, z\otimes w^{\otimes 2}\otimes u\right\rangle_{\mathrm{HS}},\\ (\widetilde{J}_{2,2}^{\gamma z})_{ijkl} &= \begin{cases}
    A_{lk}\, U_{ji} & \text{if } i\in I_1,\, j\in I_2,\, k\in I_1,\, l\in I_1\\
    0 & \text{otherwise}
\end{cases},\\
        -\beta\langle \theta_K u,\theta_Q w_1\rangle\langle \theta_V w_3, \theta_O w_2 \rangle &= \left\langle \widetilde{J}^{\gamma\mu}_{2,3}, w^{\otimes 3}\otimes u\right\rangle_{\mathrm{HS}},\\ (\widetilde{J}_{2,3}^{\gamma z})_{ijkl} &= \begin{cases}
    -A_{lk}\, U_{ji} & \text{if } i\in I_3,\, j\in I_2,\, k\in I_1,\, l\in I_1\\
    0 & \text{otherwise}
\end{cases}.
\end{align*}

    \section{Preliminary Lemmas on AdamW}\label{App:K}
\subsection{Properties of $\kappa_{i,T}$}
\begin{lemma}\label{lemma:RecursiveSum}
Let $D\ge0$,  $\kappa^\lambda_{i,T}$ be defined by \eqref{eqn:kappa_}, with $\beta_1\le \beta_2$, and $(A_j)_{j=0}^T$ a non-negative sequence such that $A_j\ge(1-\eta_j\lambda)A_{j-1}$. Suppose $u_j$ satisfies 
    \begin{equation}\label{eqn:recursion}
            u_j \le A_j+D\sum_{i=0}^{j-1}\kappa^\lambda_{i,j}u_i, 
    \end{equation}
for all $j\ge 0$, where by convention $\sum_{i=0}^{-1}=0$. Then, there exists a constant $\widetilde D=\widetilde D(\lambda,\beta_1,\beta_2,D)$ such that
 \begin{align}\label{eqn:cgamma}
  u_T &\le \left(A_0+\sum_{j=1}^T\left(A_j-(1-\eta_j\lambda)A_{j-1}\right)\right)exp\left[\widetilde DS_T\right],\qquad S_T=\sum_{j=0}^T\eta_j.
 \end{align}
 When $(A_j)^T_{j=0}$ is non-decreasing, the bound simplifies to $u_T\le A_T(1+\lambda S_T)\exp(\widetilde{D}S_T)$.
\end{lemma}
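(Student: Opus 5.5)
The approach is a discrete Gr\"onwall argument on a dominating sequence that satisfies a linear recursion with decay factor $(1-\eta_j\lambda)$. The first step is to exploit the recursive structure of $\kappa^\lambda_{i,j}$ noted in the proof of Lemma \ref{lemma:AdamLip}. For $0\le i\le j-2$ it reads
\[
\kappa^\lambda_{i,j}=(1-\eta_j\lambda)\kappa^\lambda_{i,j-1}+\frac{(1-\beta_1)\eta_j}{1-\beta_1^j}\bigl(\beta_1^{j-i-1}+2\beta_2^{j-i-1}\bigr),
\]
and $\kappa^\lambda_{j-1,j}=3(1-\beta_1)\eta_j/(1-\beta_1^j)$ is the same formula with $\kappa^\lambda_{j-1,j-1}:=0$. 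Using this, define the candidate upper bound
\[
v_j:=A_j+D\sum_{i=0}^{j-1}\kappa^\lambda_{i,j}u_i,
\]
so that $u_j\le v_j$ by \eqref{eqn:recursion}.

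Next, substitute the recursion for $\kappa^\lambda$ into $v_j$, which gives
\[
v_j=(1-\eta_j\lambda)v_{j-1}+\bigl(A_j-(1-\eta_j\lambda)A_{j-1}\bigr)+D\eta_j\sum_{i=0}^{j-1}\frac{1-\beta_1}{1-\beta_1^j}\bigl(\beta_1^{j-i-1}+2\beta_2^{j-i-1}\bigr)u_i.
\]
The weights in the last sum are controlled uniformly: since $\beta_1\le\beta_2<1$ they sum to at most $3(1-\beta_1)/((1-\beta_1^j)(1-\beta_2))\le 3/(1-\beta_2)$, because $1-\beta_1^j\ge 1-\beta_1$. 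Replacing each $u_i$ by $\max_{i<j}v_i$ then gives
\[
v_j\le (1-\eta_j\lambda)v_{j-1}+\delta_j+\widetilde D\eta_j\max_{i<j}v_i,\qquad \delta_j:=A_j-(1-\eta_j\lambda)A_{j-1}\ge 0,\qquad \widetilde D:=\frac{3D}{1-\beta_2}.
\]

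The main obstacle is the $\max_{i<j}v_i$, because $v$ need not be monotone when $(1-\eta_j\lambda)<1$. To handle it, introduce the running maximum $V_j:=\max_{i\le j}v_i$ and drop the decay factor using $1-\eta_j\lambda\le 1$. This yields
\[
V_j\le (1+\widetilde D\eta_j)V_{j-1}+\delta_j.
\]
Iterating and using $1+x\le e^x$ gives
\[
V_T\le \Bigl(V_0+\sum_{j=1}^T\delta_j\Bigr)\exp\bigl(\widetilde D S_T\bigr),
\]
and $V_0=v_0=A_0$, which is exactly \eqref{eqn:cgamma} since $u_T\le v_T\le V_T$.

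Finally, for the monotone case, if $A_j$ is non-decreasing then
\[
\sum_{j=1}^T\delta_j=A_T-A_0+\lambda\sum_{j=1}^T\eta_jA_{j-1}\le A_T-A_0+\lambda S_TA_T,
\]
which gives $u_T\le A_T(1+\lambda S_T)\exp(\widetilde DS_T)$. I would double-check the boundary handling of the index $i=j-1$ and the $j=1$ step, but this is routine.
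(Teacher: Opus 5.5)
Your proof is correct and is essentially the paper's argument. Both use the recursion for $\kappa^\lambda_{i,j}$ to extract the term $(1-\eta_j\lambda)(v_{j-1}-A_{j-1})$, bound the geometric weights uniformly, pass to the running maximum, iterate a discrete Gr\"onwall inequality, and then telescope in the monotone case. The differences are cosmetic: you put $u_i$ rather than $v_i$ on the right-hand side when defining $v_j$, so $u_j\le v_j$ is immediate instead of needing the paper's induction. You also discard the decay factor, which gives $\widetilde D=3D/(1-\beta_2)$ in place of the paper's $(DC_\kappa-\lambda)\mathbbm{1}_{DC_\kappa>\lambda}$. Dropping that factor is legitimate because $V_{j-1}\ge v_0=A_0\ge 0$, even if some $u_i$ were negative.
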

\begin{proof}
Let $v_j$ be defined by the sequence that satisfies \eqref{eqn:recursion} with equality and $v_0=A_0$.  Notice that $\kappa^\lambda_{i,T}$, defined in equation, \eqref{eqn:kappa_} satisfies the recursion relationship
\[
\kappa^\lambda_{j,T} = (1-\eta_T\lambda)\kappa^\lambda_{j,T-1}+\eta_T\frac{(1-\beta_1)}{1-\beta_1^{T}}(\beta_1^{T-j-1}+2\beta_2^{T-j-1}),\qquad \kappa^\lambda_{T-1,T} = 3\eta_{T} \frac{1-\beta_1}{1-\beta_1^{T}}, 
\]
for any $j\in[0:T-2]$. Therefore, substituting this recursion formula into the series expression for $v_T$ gives
\begin{align*}
v_T = A_T&+ D\sum_{i=0}^{T-2}\left((1-\eta_T\lambda)\kappa^\lambda_{i,T-1}v_i+(1-\beta_1)\frac{\eta_T(\beta_1^{T-i-1}+2\beta_2^{T-i-1})}{1-\beta_1^T}v_i \right)+ 3Dv_{T-1}\eta_T\frac{1-\beta_1 }{1-\beta_1^T},
\end{align*}
for $T\ge 1$. The second term is precisely $(1-\eta_T\lambda)(v_{T-1}-A_{T-1})$, and so 
\begin{align}
    v_T&=A_T+ (1-\eta_T\lambda)(v_{T-1}-A_{T-1})+ D\eta_T (1-\beta_1)\sum_{i=0}^{T-1}\frac{\beta_1^{T-i-1}+2\beta_2^{T-i-1}}{1-\beta_1^T}v_i\notag\\
    &\le(1+\eta_T\left( DC_\kappa-\lambda)\mathbbm{1}_{DC_{\kappa}>0}\right)\max_{0\le j\le T-1}v_j+A_T-(1-\eta_T\lambda)A_{T-1},\label{eqn:conv_gronwall}
\end{align}
having simplified the geometric series via the bound
\begin{equation}\label{eqn:ckappa}
    \frac{1-\beta_1}{1-\beta_1^T}\left(\sum_{i=0}^{T-1}\beta_1^{T-i-1}+2\beta_2^{T-i-1}\right)=1+2\frac{1-\beta_2^T}{1-\beta_1^T}\frac{1-\beta_1}{1-\beta_2}\le 1+2B_\beta^2 =:C_\kappa,
\end{equation}
where $B_\beta$ is defined in Lemma \ref{lemma:BoundDelta}. Since either $v_T<v_j$ for some $j\in[0:T-1]$, or $v_T$ is bounded by equation \eqref{eqn:conv_gronwall}, we obtain
\[
\max_{0\le j \le T} v_j\le(1+\eta_T\left( DC_\kappa-\lambda)\mathbbm{1}_{DC_\kappa>\lambda}\right)\max_{0\le j\le T-1}v_j+\left(A_T-(1-\eta_T\lambda)A_{T-1}\right),\]
for $T\ge 1$. Here, we have used the assumed inequality $A_j\ge (1-\eta_j\lambda)A_{j-1}$ for every $j\in[1:T]$. Hence,  using the convention $\prod_{i=T+1}^TA_i=1$ and applying the discrete Gr\"onwall inequality with $v_0=A_0$ gives
\begin{align*}
    \max_{0\le i \le T}v_i &\le A_0\prod_{j=1}^T(1+\eta_j(DC_\kappa-\lambda)\mathbbm{1}_{DC_\kappa>\lambda})\\&+
    \sum_{l=1}^T\left(A_l-(1-\eta_l\lambda)A_{l-1}\right)\prod_{j=l+1}^{T}(1+\eta_j(DC_\kappa-\lambda)\mathbbm{1}_{DC_\kappa>\lambda})\\
    &\le \left(A_0+\sum_{j=1}^T\left(A_j-(1-\eta_j\lambda)A_{j-1}\right)\right)\exp\left[(DC_\kappa-\lambda)S_T\mathbbm{1}_{DC_\kappa>\lambda}\right],
\end{align*}
where $S_T=\sum_{j=1}^T\eta_j$. If the sequence $(A_j)_{j=0}^T$ is non-decreasing, the telescoping sum simplifies to give
\[
\max_{0\le i \le T}v_i\le A_T(1+\lambda S_T)\exp\left[(DC_\kappa-1)S_T\mathbbm{1}_{DC_\kappa>\lambda}\right].
\]
By induction, we see that $u_i\le v_i$ for every $i\in[0:T]$ and so $u_T\le \max_{1\le i \le T}v_i$, which completes the proof.
\end{proof}
\begin{proposition}\label{prop:KappaSum}
Take $\kappa^\lambda_{j,T}$ defined by equation \eqref{eqn:kappa_}. Then, with  $C_\kappa$ defined in \eqref{eqn:ckappa}, we have that 
\[
\sum_{j=0}^{T-1}\kappa^\lambda_{j,T}\le C_\kappa\lambda^{-1}.
\]
\end{proposition}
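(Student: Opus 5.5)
The claim is the last statement, Proposition \ref{prop:KappaSum}: $\sum_{j=0}^{T-1}\kappa^\lambda_{j,T}\le C_\kappa\lambda^{-1}$. The plan is to swap the order of summation, bound the resulting inner geometric sum by $C_\kappa$, and finish with the telescoping identity for the weight-decay factors already used in Lemma \ref{lemma:AdamLip}.

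First, insert the definition \eqref{eqn:kappa_} and exchange the sums over $j$ and $\tau$. This gives
\[
\sum_{j=0}^{T-1}\kappa^\lambda_{j,T}=\sum_{\tau=1}^T \eta_\tau\alpha_{\tau+1,T}\,\frac{1-\beta_1}{1-\beta_1^\tau}\sum_{j=0}^{\tau-1}\bigl(\beta_1^{\tau-j-1}+2\beta_2^{\tau-j-1}\bigr).
\]

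Second, evaluate the inner sum as a finite geometric series. The factor multiplying $\eta_\tau\alpha_{\tau+1,T}$ is exactly the left-hand side of \eqref{eqn:ckappa} with $T$ replaced by $\tau$. That display already bounds it by $1+2B_\beta^2=C_\kappa$, uniformly in $\tau$.

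This step is the only delicate one. It requires $\frac{(1-\beta_2^\tau)(1-\beta_1)}{(1-\beta_1^\tau)(1-\beta_2)}\le B_\beta^2$ for every $\tau\ge1$. I would take this from the definition of $B_\beta$ in Lemma \ref{lemma:BoundDelta}, as the paper does in \eqref{eqn:ckappa}. As an independent check: for $\beta_1\le\beta_2$ the ratio $\frac{1-\beta_2^\tau}{1-\beta_2}=\sum_{i<\tau}\beta_2^i$ is at most $\tau$, while $\frac{1-\beta_1^\tau}{1-\beta_1}\ge1$. A $\tau$-uniform bound therefore really does depend on the specific form of $B_\beta$, namely something like $\sqrt{(1-\beta_1)/(1-\beta_2)}$ times a constant. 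So I would rely on Lemma \ref{lemma:BoundDelta} here.

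Third, use $\lambda\eta_\tau\alpha_{\tau+1,T}=\alpha_{\tau+1,T}-\alpha_{\tau,T}$, as in the proof of Lemma \ref{lemma:AdamLip}. The remaining sum then telescopes:
\[
\sum_{\tau=1}^T\eta_\tau\alpha_{\tau+1,T}=\lambda^{-1}\bigl(\alpha_{T+1,T}-\alpha_{1,T}\bigr)\le\lambda^{-1},
\]
since $\alpha_{T+1,T}=1$ and $\alpha_{1,T}\ge0$ because $\eta_\tau<\lambda^{-1}$. Multiplying the two bounds gives $\sum_{j=0}^{T-1}\kappa^\lambda_{j,T}\le C_\kappa\lambda^{-1}$.
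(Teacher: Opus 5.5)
Your proposal is correct and follows the paper's proof essentially step for step. Like the paper, you interchange the sums, bound the inner geometric factor by $C_\kappa$ via \eqref{eqn:ckappa}, and telescope using $\lambda\eta_\tau\alpha_{\tau+1,T}=\alpha_{\tau+1,T}-\alpha_{\tau,T}$. The step you flag as delicate is in fact immediate: $\beta_1\le\beta_2$ gives $1-\beta_2^\tau\le 1-\beta_1^\tau$, hence $\frac{(1-\beta_2^\tau)(1-\beta_1)}{(1-\beta_1^\tau)(1-\beta_2)}\le\frac{1-\beta_1}{1-\beta_2}=B_\beta^2$ for every $\tau$, so you do not need to lean on Lemma \ref{lemma:BoundDelta} for it.
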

\begin{proof}
Recall the expression for $\kappa_{i,T}$ given in \eqref{eqn:kappa}. By interchanging the order of summation, we obtain
\begin{align*}
    \sum_{i=0}^{T-1}\kappa^\lambda_{i,T}= (1-\beta_1)\sum_{j=1}^T\frac{\eta_{j}\alpha_{j+1,T}}{1-\beta_1^j}\sum_{i=0}^{j-1}(\beta_1^{j-i-1}+2\beta_2^{j-i-1}).
\end{align*}
Then, simplifying the geometric series via equation \eqref{eqn:ckappa} gives
\[
\sum_{i=0}^{T-1}\kappa^\lambda_{i,T} = C_\kappa  \sum_{j=1}^T \eta_j\alpha_{j+1,T}.
\]
Since $-\eta_j\alpha_{j+1,T} =\lambda^{-1}(\alpha_{j,T}-\alpha_{j+1,T})$ and $\alpha_{T+1,T}=1$ by convention, we have that
\begin{align*}
    \sum_{i=0}^{T-1}\kappa^\lambda_{i,T}
    & = \frac{C_\kappa}{\lambda}\sum_{j=1}^T (\alpha_{j+1,T}-\alpha_{j,T})\le \frac{C_\kappa}{\lambda}.
\end{align*}
\end{proof}
\subsection{Stability of AdamW}\label{App:K2}
\begin{lemma}\label{lemma:BoundDelta}
Let $\hat m^\mathcal R_k, \hat v^\mathcal R_k$ be the mean and variance accumulators for the gradients $(g_1,\ldots,g_k)$, $g_i \in \mathscr{V}$ defined in Algorithm \ref{alg:AdamW} for $\mathcal R:\mathscr{V}\mapsto \mathscr{V}$ either the identity map or given by equation \eqref{eqn:R}. Suppose that the constants $\beta_1,\beta_2\in (0,1)$ in Algorithm \ref{alg:AdamW} are such that $\beta_1\le \beta_2$. Then, we have
\[
\left\Vert \mathcal R \left(\frac{\hat m^{\mathcal R}_j}{\sqrt{\hat v^{\mathcal R}_j}+\varepsilon} \right)\right\Vert_\infty \le \sqrt{\frac{(1-\beta_2^j)(1-\beta_1)}{(1-\beta_1^j)(1-\beta_2)}}\le B_\beta,
\]  
where $B_\beta:=(1-\beta_1)^{1/2}(1-\beta_2)^{-1/2}$.
\end{lemma}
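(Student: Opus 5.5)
Write $u_j := \hat m^{\mathcal R}_j/(\sqrt{\hat v^{\mathcal R}_j}+\varepsilon)$. The plan is to bound each coordinate (for $\mathcal R=\mathrm{Id}$) or each block's Frobenius norm (for $\mathcal R=\mathcal R_1$) of $u_j$ by an elementary Cauchy--Schwarz estimate. This compares the exponentially weighted first moment with the square root of the exponentially weighted second moment, and then checks that $\mathcal R$ applied to $u_j$ inherits the bound.

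For $\mathcal R=\mathrm{Id}$, fix a coordinate $c$. By \eqref{eqn:AdamWalg}, $|(\hat m_j)_c|\le \frac{1-\beta_1}{1-\beta_1^j}\sum_{i=1}^j\beta_1^{j-i}|(g_i)_c|$. Splitting $\beta_1^{j-i}=\beta_2^{(j-i)/2}\cdot(\beta_1/\sqrt{\beta_2})^{j-i}$ and applying Cauchy--Schwarz gives
\[
\sum_{i=1}^j\beta_1^{j-i}|(g_i)_c|\le\Big(\sum_{i=1}^j\beta_2^{j-i}(g_i)_c^2\Big)^{1/2}\Big(\sum_{i=1}^j(\beta_1^2/\beta_2)^{j-i}\Big)^{1/2}.
\]
Since $\beta_1\le\beta_2$, we have $\beta_1^2/\beta_2\le\beta_1$, so the second factor is at most $\big((1-\beta_1^j)/(1-\beta_1)\big)^{1/2}$. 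The first factor equals $\big((1-\beta_2^j)(\hat v_j)_c/(1-\beta_2)\big)^{1/2}$. Combining these,
\[
|(\hat m_j)_c|\le\sqrt{\frac{(1-\beta_1)(1-\beta_2^j)}{(1-\beta_1^j)(1-\beta_2)}}\,\sqrt{(\hat v_j)_c},
\]
and dividing by $\sqrt{(\hat v_j)_c}+\varepsilon$ gives the first inequality coordinatewise. (A cruder variant that keeps $\sum\beta_1^{j-i}$ on one side should also work; I will use whichever split gives exactly the stated constant.)

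For $\mathcal R=\mathcal R_1$, $\hat v_j$ is constant on each block $g^{(k)}$, with value $\frac{1-\beta_2}{1-\beta_2^j}\sum_i\beta_2^{j-i}\Vert g^{(k)}_i\Vert_\mathrm F^2$. The quantity to control is $\Vert u_j^{(k)}\Vert_\mathrm F=\Vert\hat m^{(k)}_j\Vert_\mathrm F/(\sqrt{\hat v^{(k)}}+\varepsilon)$. The same argument applies with $|(g_i)_c|$ replaced by $\Vert g^{(k)}_i\Vert_\mathrm F$, after first using the triangle inequality for the Frobenius norm on $\hat m^{(k)}_j$. Applying $\mathcal R_1$ then returns this Frobenius norm in every entry, so the $\infty$-norm bound follows.

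The final bound $\le B_\beta$ only needs $(1-\beta_2^j)/(1-\beta_1^j)\le 1$, which holds since $\beta_1\le\beta_2$ implies $\beta_1^j\le\beta_2^j$. The main obstacle is getting the exact constant rather than a looser one: this depends on choosing the weight split in Cauchy--Schwarz correctly and on using $\beta_1^2/\beta_2\le\beta_1$. Everything else is bookkeeping.
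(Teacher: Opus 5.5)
Your proof is correct and gives exactly the stated constant. The paper also rests on Cauchy--Schwarz, but it splits the weights differently.

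\textbf{The paper's route.} It puts the weights $\beta_1^{j-i}$ on both factors, $\bigl(\sum_i\beta_1^{j-i}\mathcal R(g_i)\bigr)^{\odot 2}\le\bigl(\sum_i\beta_1^{j-i}\bigr)\sum_i\beta_1^{j-i}\mathcal R(g_i)^{\odot 2}$. It then shows $\sum_i\beta_1^{j-i}\mathcal R(g_i)^{\odot 2}\le v^{\mathcal R}_j/(1-\beta_2)$ by substituting $(1-\beta_2)\mathcal R(g_i)^{\odot 2}=v^{\mathcal R}_i-\beta_2 v^{\mathcal R}_{i-1}$ and summing by parts. The leftover terms carry the nonpositive factor $\beta_1-\beta_2$; this is adapted from Lemma 4.2 of Xie and Li. That inequality is just the termwise comparison $\beta_1^{j-i}\le\beta_2^{j-i}$ in disguise. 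It is exactly the ``cruder variant'' in your parenthesis and gives the same constant, so your hedge is unnecessary.

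The paper also treats both choices of $\mathcal R$ in one pass. It uses only componentwise $1$-homogeneity and subadditivity of $\mathcal R$, together with block-constancy of $\hat v^{\mathcal R}_j$. You handle $\mathrm{Id}$ and $\mathcal R_1$ separately, but the content is the same: your Frobenius triangle inequality is that subadditivity.

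\textbf{What your split buys.} Writing $\beta_1^{j-i}=\beta_2^{(j-i)/2}(\beta_1/\sqrt{\beta_2})^{j-i}$ makes the second-moment factor exactly $v^{\mathcal R}_j$, so no summation by parts is needed. The hypothesis $\beta_1\le\beta_2$ then enters only through $(\beta_1^2/\beta_2)^{j-i}\le\beta_1^{j-i}$. If you stop before that comparison and assume only $\beta_1^2<\beta_2$, you obtain the bound $\frac{1-\beta_1}{1-\beta_1^j}\bigl(\frac{(1-\beta_2^j)(1-\gamma^j)}{(1-\beta_2)(1-\gamma)}\bigr)^{1/2}$ with $\gamma=\beta_1^2/\beta_2<1$. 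This is bounded uniformly in $j$. The paper's step needs $\beta_1\le\beta_2$, since $\sum_i\beta_1^{j-i}a_i\le\sum_i\beta_2^{j-i}a_i$ fails for general nonnegative $a_i$ otherwise.
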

\begin{proof}\label{proof:BoundUpdtae} This proof adapts that given in Lemma 4.2 from \cite{pmlr-v235-xie24e}. 
Write $\varepsilon_j = \varepsilon \sqrt{1-\beta_2^j}$ and notice that each choice of $\mathcal R$ is component-wise $1$-homogeneous and sub-additive. Furthermore, the output of $\mathcal R$ is constant within blocks of $\R^{k\times 4d}$, and so the variance accumulator $\hat v^\mathcal R$ is constant within these blocks. As a result, the $1$-homogeneity of $\mathcal R$ allows the division by the variance accumulators to be factored out, yielding
\begin{align*}
   \left\Vert \mathcal R\left(\frac{\hat m^\mathcal R_j}{\sqrt{\hat v^\mathcal R_j}+\varepsilon}\right) \right\Vert_\infty &= \frac{\sqrt{1-\beta_2^j}}{1-\beta_1^j} \left\Vert\frac{(1-\beta_1)\mathcal R\left(\sum_{i=1}^j \beta_1^{j-i}g_i\right)}{\sqrt{v^\mathcal R_j}+\varepsilon_j} \right\Vert_\infty\\&\le \frac{\sqrt{1-\beta_2^j}}{1-\beta_1^j} \left\Vert\frac{(1-\beta_1)\sum_{i=1}^j \beta_1^{j-i}\mathcal R\left(g_i\right)}{\sqrt{v^\mathcal R_j}+\varepsilon_j} \right\Vert_\infty.
\end{align*}
The final inequality results for $\mathcal R$ either being the identity or satisfying a componentwise triangle inequality.
Then, by the Cauchy-Schwarz inequality, we have that
\begin{align}\label{eqn:BoundUpdate1}
   \left\Vert \frac{\mathcal R\left(\hat m^\mathcal R_j\right)}{\sqrt{\hat v^\mathcal R_j}+\varepsilon} \right\Vert_\infty &\le \frac{\sqrt{1-\beta_2^j}}{1-\beta_1^j}\left\Vert(1-\beta_1)\sum_{i=1}^j \frac{\beta_1^{j-i}\mathcal R( g_i)^{\odot 2}}{\left(\sqrt{v^\mathcal R_j}+\varepsilon_j\right)^{\odot 2}}\right\Vert_\infty^{\frac{1}{2}}\left\vert(1-\beta_1)\sum_{i=1}^j\beta_1^{j-i}\right\vert^{\frac{1}{2}}.
\end{align}
The final geometric series simplifies to $\sqrt{1-\beta_1^j}$.
By substituting in the recursion relation for $v^\mathcal R_j$ given in Algorithm \ref{alg:AdamW}, we obtain
\begin{align*}
   \sum_{i=1}^j \frac{\beta_1^{j-i}(\mathcal R (g_i))^{\odot 2}}{(\sqrt{v^\mathcal R_j}+\varepsilon_j)^{\odot 2}}&=\frac{1}{1-\beta_2}\sum_{i=1}^j \beta_1^{j-i}\frac{v^\mathcal R_i-\beta_2v^\mathcal R_{i-1}}{(\sqrt{v^\mathcal R_j}+\varepsilon_j)^{\odot 2}}\\
    &= \frac{1}{1-\beta_2}\frac{v^\mathcal R_j}{(\sqrt{v^\mathcal R_j}+\varepsilon_j)^{\odot 2}}+\frac{1}{1-\beta_2}\sum_{i=1}^{j-1}(\beta_1-\beta_2) \frac{\beta_1^{j-i-1} v^\mathcal R_i}{(\sqrt{v^\mathcal{R}_j}+\varepsilon_j)^2}.
\end{align*}
The second equality uses the initial condition $v^\mathcal R_0=0$. Since $(\beta_1-\beta_2)<0$ and each component of $v^\mathcal R_i$ is greater than zero, the final sum in the penultimate line is componentwise less than or equal to zero. Therefore, substituting the preceding expression into  \eqref{eqn:BoundUpdate1} gives
\[
\left\Vert \mathcal R\left(\frac{\hat m^\mathcal R_j}{\sqrt{\hat v^\mathcal R_j}+\varepsilon}\right) \right\Vert_\infty\le \sqrt{\frac{1-\beta_2^j}{1-\beta_1^j}}\left\Vert \frac{1-\beta_1}{1-\beta_2}\frac{v^\mathcal R_j}{\left(\sqrt{v^\mathcal R_j}+\varepsilon_j\right)^{\odot 2}} \right\Vert_\infty^{\frac{1}{2}}\le \sqrt{\frac{1-\beta_2^j}{1-\beta_1^j}\frac{1-\beta_1}{1-\beta_2}}.
\]
\end{proof}
\begin{lemma}\label{lemma:UpdateStability}
Given sequences of gradients $G^{(1)}=\{g^{(1)}_i\}_{i=1}^j,G^{(2)}=\{ g^{(2)}_i\}_{i=1}^j $ in $\mathscr{V}$, and let $\hat m^{\mathcal R,(i)}_j$, $\hat v^{\mathcal R,(i)}_j$, $i=1,2$ be the momentum and variance accumulators defined in equation \eqref{eqn:AdamWalg} computed using $G^{(i)}$. Then, with $\mathcal R$ either the identity or given by \eqref{eqn:R}, the Adam updates satisfy the Lipchitz estimate
\begin{align*}
    \left\Vert\frac{\hat m^{\mathcal R,(1)}_j}{\sqrt{\hat v^{\mathcal R,(1)}_j}+\varepsilon} -\frac{\hat m^{\mathcal R,(2)}_j}{\sqrt{\hat v^{\mathcal R,(2)}_j}+\varepsilon} \right\Vert^2_\mathrm{F} &\le \frac{2(1-\beta_1)}{\varepsilon^2(1-\beta_1^j)}\left[\sum_{i=1}^j\left(\beta_1^{j-i}+2 \beta_2^{j-i}\right)\Delta^2_i \right],
\end{align*}
where $ \Delta_i = \Vert g^{(1)}_i-g^{(2)}_i\Vert_\mathrm{F}$.
\end{lemma}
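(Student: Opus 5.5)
We have to prove Lemma \ref{lemma:UpdateStability}, the Lipschitz estimate for the Adam update map with respect to the gradient sequence. The plan is to split the difference of the two updates into a momentum part and a variance part:
\[
\frac{\hat m^{(1)}}{\sqrt{\hat v^{(1)}}+\varepsilon}-\frac{\hat m^{(2)}}{\sqrt{\hat v^{(2)}}+\varepsilon}
=\frac{\hat m^{(1)}-\hat m^{(2)}}{\sqrt{\hat v^{(1)}}+\varepsilon}
+\hat m^{(2)}\odot\frac{\sqrt{\hat v^{(2)}}-\sqrt{\hat v^{(1)}}}{(\sqrt{\hat v^{(1)}}+\varepsilon)(\sqrt{\hat v^{(2)}}+\varepsilon)}.
\]
We then apply Young's inequality $\|a+b\|^2\le 2\|a\|^2+2\|b\|^2$, which produces the overall factor $2$. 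The two contributions will give the $\beta_1^{j-i}$ and $2\beta_2^{j-i}$ terms respectively.

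\textbf{Momentum term.} We bound the denominator below by $\varepsilon$ componentwise. Then $\hat m^{(1)}-\hat m^{(2)}=\frac{1-\beta_1}{1-\beta_1^j}\sum_i\beta_1^{j-i}(g^{(1)}_i-g^{(2)}_i)$. Cauchy--Schwarz with weights $\beta_1^{j-i}$, together with $(1-\beta_1)\sum_i\beta_1^{j-i}=1-\beta_1^j$, gives
\[
\|\hat m^{(1)}-\hat m^{(2)}\|_\mathrm{F}^2\le \frac{1-\beta_1}{1-\beta_1^j}\sum_i\beta_1^{j-i}\Delta_i^2 .
\]
This yields the $\beta_1$ part after dividing by $\varepsilon^2$.

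\textbf{Variance term.} This is the main obstacle. We need to control $\hat m^{(2)}$ against $\sqrt{\hat v^{(2)}}+\varepsilon$ and $\sqrt{\hat v^{(1)}}-\sqrt{\hat v^{(2)}}$ against $\varepsilon$.

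For the first ratio, we reuse the argument of Lemma \ref{lemma:BoundDelta}. The componentwise bound $|\hat m^{(2)}|\le B_\beta\,\sqrt{\hat v^{(2)}}$ holds in each block, because $\mathcal R$ dominates each coordinate's magnitude within its block, i.e. $|g|\le\mathcal R(g)$ componentwise. Hence $|\hat m^{(2)}|/(\sqrt{\hat v^{(2)}}+\varepsilon)$ is bounded, and we hope to get the sharper bound $\sqrt{(1-\beta_1)(1-\beta_2^j)/((1-\beta_1^j)(1-\beta_2))}$.

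For the second factor, we write $\sqrt{\hat v}$ as a weighted $\ell^2$-norm of the sequence $(\mathcal R(g_i))_i$ with weights $\frac{1-\beta_2}{1-\beta_2^j}\beta_2^{j-i}$, computed componentwise. The reverse triangle inequality in this weighted $\ell^2$ space, combined with the $1$-Lipschitz property $|\mathcal R(g)-\mathcal R(g')|\le\mathcal R(g-g')$ (valid for the identity and for blockwise Frobenius norms), gives
\[
|\sqrt{\hat v^{(1)}}-\sqrt{\hat v^{(2)}}|^2\le \frac{1-\beta_2}{1-\beta_2^j}\sum_i\beta_2^{j-i}\mathcal R(g^{(1)}_i-g^{(2)}_i)^{\odot 2}.
\]
Summing over components, $\|\mathcal R(\delta)\|_\mathrm{F}^2$ equals $kd\,\|\delta\|_\mathrm{F}^2$ in the blockwise case. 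This is a scaling issue. To avoid it, we instead keep the variance-side factor paired with the momentum factor blockwise: within a block the ratio $\|\hat m^{(2)}_\text{block}\|_\mathrm{F}/\sqrt{\hat v^{(2)}_\text{block}}$ is bounded by $B_\beta$, and then we use the $\ell^2$ bound on the block scalar.

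Multiplying the pieces gives a contribution of the form
\[
\frac{(1-\beta_1)(1-\beta_2^j)}{(1-\beta_1^j)(1-\beta_2)}\cdot\frac{1-\beta_2}{1-\beta_2^j}\sum_i\beta_2^{j-i}\Delta_i^2\,\varepsilon^{-2}
=\frac{1-\beta_1}{\varepsilon^2(1-\beta_1^j)}\sum_i\beta_2^{j-i}\Delta_i^2 .
\]
Up to the constant $2$, this matches the claimed $2\beta_2^{j-i}$ term. We then combine it with the momentum term.
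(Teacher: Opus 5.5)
Your proposal is correct and is essentially the paper's proof. It uses the same add-and-subtract splitting followed by Young's inequality, the same weighted Cauchy--Schwarz bound on $\hat m^{(1)}-\hat m^{(2)}$, and the same blockwise pairing of the block Frobenius norm of $\hat m^{(2)}$ against $\sqrt{\hat v^{(2)}}+\varepsilon$ via the step-dependent bound of Lemma~\ref{lemma:BoundDelta}. The paper states that step-dependent bound explicitly, so your ``hope'' is justified. It is also genuinely needed: $B_\beta$ alone would leave $1/(1-\beta_2^j)$ in place of $1/(1-\beta_1^j)$.

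The only difference is how you bound $|\sqrt{\hat v^{(1)}}-\sqrt{\hat v^{(2)}}|$. You use the reverse triangle inequality in the weighted $\ell^2$ space directly. The paper instead rationalises to $\hat v^{(1)}-\hat v^{(2)}$ and applies Cauchy--Schwarz with $(a+b)^2\le 2(a^2+b^2)$. Your route therefore gives the slightly sharper coefficient $\beta_1^{j-i}+\beta_2^{j-i}$, which implies the stated bound.
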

\begin{proof}
Since the division in the Adam updates is applied componentwise, adding and subtracting $\hat m^{\mathcal{R},(2)}_j$ divided by $ \sqrt{\hat{v}_j^{\mathcal{R},(1)}}+\varepsilon$  gives
\begin{align}\label{eqn:AdamLip1}
   \left\Vert\frac{\hat m^{\mathcal R,(1)}_j}{\sqrt{v^{\mathcal R,(1)}_j}+\varepsilon} -\frac{\hat m^{\mathcal R,(2)}_j}{\sqrt{v^{\mathcal R,(2)}_j}+\varepsilon} \right\Vert_\mathrm{F} &\le \left\Vert\frac{\hat m^{\mathcal R,(1)}_j-\hat m^{\mathcal R,(2)}_j}{\sqrt{\hat v^{\mathcal R,(1)}_j}+\varepsilon}\right\Vert_\mathrm{F} \\&+ \left\Vert\frac{\left(\hat m^{\mathcal R,(2)}_j\right) \odot\left( \hat v^{\mathcal R,(1)}_j-\hat v^{\mathcal R,(2)}_j\right)}{\left(\sqrt{\hat v^{\mathcal R,(1)}_j}+\varepsilon\right)\left(\sqrt{\hat v^{\mathcal R,(2)}_j}+\varepsilon\right)\left(\sqrt{\hat v^{\mathcal R,(1)}_j}+\sqrt{\hat v^{\mathcal R,(2)}_j}\right)}\right\Vert_\mathrm{F},\notag
   \end{align}
   where all multiplication and division is understood to hold componentwise.
  The squared Frobenius norm is equal to the sum of the squared Frobenius norms on each block that $\mathcal R$ and $\hat v^{\mathcal R,(2)}_j$ are constant on. Write $\hat v^{\mathcal R,(1)}_{j,i}$ for a representative of the $i$-th out of $N(\mathcal R)$ blocks that $\mathcal R$ is constant on. With this notation, we get that
\begin{align}\label{eqn:bound_v1}
    \Bigg\Vert&\frac{\big(\hat m^{\mathcal R,(2)}_j\big) \odot\big( \hat v^{\mathcal R,(1)}_j-\hat v^{\mathcal R,(2)}_j\big)}{\big(\sqrt{\hat v^{\mathcal R,(1)}_j}+\varepsilon\big)\big(\sqrt{\hat v^{\mathcal R,(2)}_j}+\varepsilon\big)\big(\sqrt{\hat v^{\mathcal R,(1)}_j}+\sqrt{\hat v^{\mathcal R,(2)}_j}\big)}\Bigg\Vert^2_\mathrm{F}\\& \qquad\qquad\qquad\le \left\Vert \frac{\mathcal R\left(\hat m^{\mathcal R,(2)}_j\right)}{\sqrt{\hat v^{\mathcal R,(2)}_j}+\varepsilon}\right\Vert_\infty^2\sum_{i=1}^{N(\mathcal R)} \frac{\left( \hat v^{\mathcal R,(1)}_{j,i}-\hat v^{\mathcal R,(2)}_{j,i}\right)^2}{\left(\sqrt{\hat v^{\mathcal R,(1)}_{j,i}}+\varepsilon\right)^2\left(\sqrt{\hat v^{\mathcal R,(1)}_{j,i}}+\sqrt{\hat v^{\mathcal R,(2)}_{j,i}}\right)^2}\notag\\&\qquad\qquad\qquad\le\frac{(1-\beta_2^j)(1-\beta_1)}{(1-\beta_1^j)(1-\beta_2)}\sum_{i=1}^{N(\mathcal R)} \frac{\left( \hat v^{\mathcal R,(1)}_{j,i}-\hat v^{\mathcal R,(2)}_{j,i}\right)^2}{\left(\sqrt{\hat v^{\mathcal R,(1)}_{j,i}}+\varepsilon\right)^2\left(\sqrt{\hat v^{\mathcal R,(1)}_{j,i}}+\sqrt{\hat v^{\mathcal R,(2)}_{j,i}}\right)^2},\notag
\end{align}
  where we have used the training step dependent bound in Lemma \ref{lemma:BoundDelta}. Then, writing $g_{\tau,i}$ for the $i$-th block of gradients at training step $\tau$,  the Cauchy-Schwarz inequality implies that each block satisfies the following  Lipschitz continuity estimate
   \begin{align*}
         \left( v^{\mathcal R,(1)}_{j,i}- v^{\mathcal R,(2)}_{j,i}\right)^2 &\le \left((1-\beta_2)\sum_{\tau=1}^j\beta_2^{j-\tau}\left(\Vert g^{(1)}_{\tau,i}\Vert -\Vert g^{(2)}_{\tau,i}\Vert \right)\left(\Vert g^{(1)}_{\tau,i}\Vert+\Vert g^{(2)}_{\tau,i}\Vert\right)\right)^2\\
         &\le \left((1-\beta_2)\sum_{\tau=1}^j\beta_2^{j-\tau}(\Vert g^{(1)}_{\tau,i}\Vert -\Vert g^{(2)}_{\tau,i}\Vert)^2\right) \left( (1-\beta_2)\sum_{\tau=1}^j\beta_2^{j-\tau}(\Vert g^{(1)}_{\tau,i}\Vert+\Vert g^{(2)}_{\tau,i}\Vert)^2\right)
        . 
    \end{align*}
  Applying the reverse triangle inequality, Young's inequality and restoring the bias correction by dividing by $(1-\beta_2^j)^2$, on the above expression, yields
    \begin{align}\label{eqn:bound_v}
    \left( \hat v^{\mathcal R,(1)}_{j,i}- \hat v^{\mathcal R,(2)}_{j,i}\right)^2&\le  2\left(\hat v^{\mathcal R,(1)}_{j,i}+\hat v^{\mathcal R,(2)}_{j,i}\right) \left( \frac{(1-\beta_2)}{(1-\beta_2^j)}\sum_{\tau=1}^j\beta_2^{j-\tau}\Vert g^{(1)}_{\tau,i}-g^{(2)}_{\tau,i}\Vert^2\right)
   \end{align}
 Therefore, by substituting this bound back into equation \eqref{eqn:bound_v1} and applying the fact that $(\sqrt{x}+\sqrt{y})^2\ge x+y$ for $x,y\ge 0$ to the denominator, we obtain
 \begin{align}\label{eqn:change_in_v}
     \Bigg\Vert&\frac{\big(\hat m^{\mathcal R,(2)}_j\big) \odot\big( \hat v^{\mathcal R,(1)}_j-\hat v^{\mathcal R,(2)}_j\big)}{\big(\sqrt{\hat v^{\mathcal R,(1)}_j}+\varepsilon\big)\big(\sqrt{\hat v^{\mathcal R,(2)}_j}+\varepsilon\big)\big(\sqrt{\hat v^{\mathcal R,(1)}_j}+\sqrt{\hat v^{\mathcal R,(2)}_j}\big)}\Bigg\Vert^2_\mathrm{F}\le  \frac{2(1-\beta_1)}{\varepsilon^2(1-\beta_1^j)}\sum_{\tau=1}^j\beta_2^{j-\tau}\Delta^2_\tau.
 \end{align}
Furthermore, the Cauchy-Schwarz inequality implies that 
\begin{align}\label{eqn:change_in_m}
  \left\Vert \hat m^{\mathcal R,(1)}_j- \hat m^{\mathcal R,(2)}_j\right\Vert^2_\mathrm{F} &\le \left((1-\beta_1)\sum_{\tau=1}^j\frac{\beta_1^{j-\tau}}{(1-\beta_1^j)^2}\Delta_\tau^2 \right)\left((1-\beta_1)\sum_{\tau=1}^j \beta_1^{j-\tau}\right)\\& \le  \left((1-\beta_1)\sum_{\tau=1}^j\beta_1^{j-\tau}\Delta_\tau^2 \right)(1-\beta_1^j)^{-1}\notag.
\end{align}
By applying Young's inequality on equation \eqref{eqn:AdamLip1} and substituting in bounds \eqref{eqn:change_in_v} and \eqref{eqn:change_in_m} into the resulting expression gives the desired result.
\end{proof}

\end{appendices}

\end{document}